\documentclass{article}

\usepackage{microtype}
\usepackage{graphicx}
\usepackage{subcaption}
\usepackage{booktabs} % for professional tables

\usepackage{hyperref}

\usepackage[accepted]{icml2026}

\usepackage{amsmath}
\usepackage{amssymb}
\usepackage{mathtools}
\usepackage{amsthm}
\usepackage{dsfont}
\usepackage{thm-restate}
\usepackage{cases}
\usepackage[most]{tcolorbox}
\newtcolorbox{nbox}[1][]{
	enhanced,
	fonttitle=\scshape,
	#1
}
\usepackage[nameinlink,capitalize,noabbrev]{cleveref}
\usepackage{enumitem}
\usepackage{placeins}
\setlist{nolistsep}

\theoremstyle{plain}
\newtheorem{theorem}{Theorem}[section]
\newtheorem{proposition}[theorem]{Proposition}
\newtheorem{lemma}[theorem]{Lemma}

\theoremstyle{definition}

\newtheorem{assumption}[theorem]{Assumption}
\theoremstyle{remark}
\newtheorem{remark}[theorem]{Remark}

\usepackage[textsize=tiny]{todonotes}

\icmltitlerunning{A Direct Approach for Handling Contextual Bandits with Latent State Dynamics}

\makeatletter
\newcommand{\Iftwocolumn}[2]{\if@twocolumn #1 \else #2 \fi}
\makeatother

\newcommand{\Sdiffp}[1]{S'_{\mbox{\scriptsize \rm diff},{#1}}}
\newcommand{\Sbeliefp}[1]{S'_{\mbox{\scriptsize \rm belief},{#1}}}
\newcommand{\Setap}[1]{S'_{\mbox{\scriptsize \rm eta},{#1}}}

\newcommand{\Sdiff}[1]{S_{\mbox{\scriptsize \rm diff},{#1}}}
\newcommand{\Seta}[1]{S_{\mbox{\scriptsize \rm eta},{#1}}}

\newcommand{\hr}{\hat{r}}

\renewcommand{\leq}{\leqslant}
\renewcommand{\geq}{\geqslant}
\renewcommand{\P}{\mathbb{P}}
\renewcommand{\L}{\mathbb{L}}
\renewcommand{\phi}{\varphi}
\renewcommand{\hat}{\widehat}
\newcommand{\defeq}{\stackrel{\mbox{\scriptsize \rm def}}{=}}
\newcommand{\eqdef}{\defeq}
\newcommand{\IndEv}[1]{\mathds{1}_{#1}}
\newcommand{\Ind}[1]{\mathds{1}_{\{#1\}}}
\newcommand{\cA}{\mathcal{A}}
\newcommand{\cB}{\mathcal{B}}
\newcommand{\cU}{\mathcal{U}}

\newcommand{\cF}{\mathcal{F}}
\newcommand{\cG}{\mathcal{G}}

\newcommand{\cH}{[H]}

\newcommand{\cS}{\mathcal{S}}

\newcommand{\cX}{\mathcal{X}}

\newcommand{\E}{\mathbb{E}}

\newcommand{\M}{\mathbb{M}}

\newcommand{\R}{\mathbb{R}}
\newcommand{\btheta}{\boldsymbol{\theta}}

\newcommand{\bA}{\boldsymbol{A}}
\newcommand{\bM}{\boldsymbol{M}}
\newcommand{\bE}{\boldsymbol{E}}
\newcommand{\be}{\boldsymbol{e}}

\newcommand{\bx}{\boldsymbol{x}}

\newcommand{\bv}{\boldsymbol{v}}
\newcommand{\bu}{\boldsymbol{u}}

\newcommand{\bb}{\boldsymbol{b}}
\newcommand{\bbbnk}{\boldsymbol{b}^{\mbox{\rm \tiny bnk \!\!}}}

\newcommand{\bp}{\boldsymbol{p}}
\newcommand{\bw}{\boldsymbol{w}}

\newcommand{\by}{\boldsymbol{y}}
\newcommand{\bz}{\boldsymbol{z}}
\newcommand{\bpi}{\boldsymbol{\pi}}

\newcommand{\bone}{\boldsymbol{1}}

\newcommand{\hbb}{\hat{\boldsymbol{b}}}
\newcommand{\hbtheta}{\hat{\btheta}}
\newcommand{\hbM}{\hat{\boldsymbol{M}}}
\newcommand{\hM}{\hat{M}}

\newcommand{\hnu}{\hat{\nu}}
\newcommand{\eps}{\varepsilon}

\newcommand{\bphi}{\boldsymbol{\phi}}
\newcommand{\transp}{{\! \top}}
\newcommand{\e}{\mathrm{e}}

\renewcommand{\O}{\mathcal{O}}

\renewcommand{\d}{\,\mathrm{d}}

\newcommand{\diag}{\mathop{\mathrm{diag}}}
\newcommand{\var}{\textit}

\newcommand{\id}[1]{\boldsymbol{I}_{\! {#1}}}
\DeclareMathOperator*{\argmax}{argmax}
\DeclareMathOperator*{\argmin}{argmin}
\newcommand{\midl}{\,\middle|\,}
\newcommand{\midB}{\,\Big|\,}
\newcommand{\an}{\ \,\, \mbox{\small and} \ \,\,}
\newcommand{\Ubelief}{U_{\mbox{\rm \tiny belief}}}
\newcommand{\tO}{\widetilde{\mathcal{O}}}
\newcommand{\sX}{\arrowvert\cX\arrowvert}
\renewcommand{\epsilon}{\varepsilon}
\newcommand{\Gsum}{G_{{\mbox{\rm \tiny sum}}}}
\newcommand{\cFall}{\mathcal{F}^{\mbox{\rm \tiny all}}}
\newcommand{\cFallp}{\mathcal{F}^{'\mbox{\rm \tiny all}}}
\newcommand{\cFobs}{\mathcal{F}^{\mbox{\rm \tiny obs}}}
\newcommand{\cFobsp}{\mathcal{F}^{'\mbox{\rm \tiny obs}}}
\newcommand{\cFbench}{\mathcal{F}^{\mbox{\rm \tiny bnk}}}

\newcommand{\cplx}{\mbox{\tiny cplx}}
\newcommand{\simpl}{\mbox{\tiny simpl}} 

\begin{document}

\twocolumn[
  \icmltitle{A Direct Approach for Handling Contextual Bandits with Latent State Dynamics}

  % It is OKAY to include author information, even for blind submissions: the
  % style file will automatically remove it for you unless you've provided
  % the [accepted] option to the icml2026 package.

  % List of affiliations: The first argument should be a (short) identifier you
  % will use later to specify author affiliations Academic affiliations
  % should list Department, University, City, Region, Country Industry
  % affiliations should list Company, City, Region, Country

  % You can specify symbols, otherwise they are numbered in order. Ideally, you
  % should not use this facility. Affiliations will be numbered in order of
  % appearance and this is the preferred way.
  \icmlsetsymbol{equal}{*}

  \begin{icmlauthorlist}
    \icmlauthor{Zhen Li}{equal,bnp}
    \icmlauthor{Gilles Stoltz}{equal,lmo,hec}
  \end{icmlauthorlist}

  \icmlaffiliation{bnp}{BNP Paribas Corporate and Institutional Banking, Paris, France}
  \icmlaffiliation{lmo}{Université Paris-Saclay, CNRS, Inria, Laboratoire de mathématiques d'Orsay, 91405, Orsay, France}
  \icmlaffiliation{hec}{HEC Paris, Jouy-en-Josas, France}

  \icmlcorrespondingauthor{Zhen Li}{zhen.li@bnpparibas.com}
  \icmlcorrespondingauthor{Gilles Stoltz}{gilles.stoltz@universite-paris-saclay.fr}

  % You may provide any keywords that you find helpful for describing your
  % paper; these are used to populate the "keywords" metadata in the PDF but
  % will not be shown in the document
  \icmlkeywords{Machine Learning, ICML}

  \vskip 0.3in
]

% this must go after the closing bracket ] following \twocolumn[ ...

% This command actually creates the footnote in the first column listing the
% affiliations and the copyright notice. The command takes one argument, which
% is text to display at the start of the footnote. The \icmlEqualContribution
% command is standard text for equal contribution. Remove it (just {}) if you
% do not need this facility.

% Use ONE of the following lines. DO NOT remove the command.
% If you have no special notice, KEEP empty braces:
\printAffiliationsAndNotice{}  % no special notice (required even if empty)
% Or, if applicable, use the standard equal contribution text:
% \printAffiliationsAndNotice{\icmlEqualContribution}

\begin{abstract}
We consider a linear contextual bandit model where contexts and rewards are governed by a finite hidden Markov chain.
We first revisit the simplified model by \citet{nelson2022ContextLatent},
in which rewards are linear functions of the posterior probabilities over the hidden states given the observed contexts (called beliefs), rather than functions of the hidden states themselves. This simplified model may be handled through a direct reduction to standard linear contextual bandits.
We extend the theoretical analysis of this reduction to take into account the estimation of the parameters of the hidden Markov model [HMM] in the regret bound
and to provide high-probability bounds
not depending anymore on the reward functions and only depending on the model through the estimation of the HMM parameters.
Second, and most importantly, we instead study the more natural and more complex model incorporating direct dependencies in the hidden states (on top of dependencies on the observed contexts, as is natural for contextual bandits).
Under a classic HMM forgetting condition,
the main algorithmic tool introduced to cope with the
various statistical dependencies that the reward structure introduces is to only periodically
update reward-model parameters.
\end{abstract}

\section{Introduction and Related Works}
\label{sec:intro}

We consider a linear contextual bandit model where contexts and rewards are governed by a finite hidden Markov chain.
Before we compare in detail our work to the earlier one by \citet{nelson2022ContextLatent},
we position the problem within the broader context of stochastic bandits, and more particularly,
of stochastic bandits in changing environments.

In finitely-armed stochastic bandits (introduced by \citealp{thompson1933MAB} and \citealp{robbins1952};
see also the survey monograph by \citealp{lattimore2020bandit}),
rewards are drawn i.i.d.\ from fixed but unknown distributions indexed by arms,
and the learner must perform some trade-off between exploration (to estimate
the distributions) and exploitation (to pull more often better-performing arms).
A first extension of interest is called linear contextual bandits (see \citealp{Chu2011ContextualBW} and \citealp{abbasi2011improved},
where the celebrated LinUCB strategy was introduced),
where the learner observes a context (possibly chosen adversarially), selects an action, and receives a reward modeled as a linear function of (some function of) the context and action.

A second extension of interest is when this linear contextual model depends on some latent state changing
over time.
Two modelings and approaches were considered: first, some change-point detection approaches,
relying on infrequent changes, with regret bounds typically functions of the root number of changes
(see \citealp{wu2018ContextNonStationary} or \citealp{austin2025ContextNonStationary});
second, a modeling of the latent state as following some partially known dynamic, typically a Markov chain.
Contexts are then assumed to follow a hidden Markov model [HMM]: they are drawn
independently at random given the latent state.
Much of this literature focuses on context-free reward models, where rewards depend on the actions and latent states but not on the observed contexts
(when present, the latter are used only for state inference). For example, \citet{spectral-pomdps-2016} and \citet{RegimeBandits-2021} study such a setting
and combine a LinUCB-style exploration with a spectral method for reward estimation. A common limitation due to the spectral method is to have to consider
finite reward spaces, with \citet{RegimeBandits-2021} focusing, in particular, on binary rewards.
In terms of guarantees, \citet{spectral-pomdps-2016} obtained an $\sqrt{T}$--high-probability regret bound against a memoryless policy benchmark, whereas \citet{RegimeBandits-2021} provided a $T^{2/3}$ regret bound against a stronger oracle that knows the true belief
(the posterior distribution over latent states given past and present contexts)
and the state-dependent expected rewards.

\paragraph{Motivation.}
We are interested in a general setting of
linear contextual bandits in which the latent state governs the context distribution and, jointly with context and action, determines the rewards.
If one is ready to believe that classic linear contextual bandits form a setting of practical interest, then the extension considered
here to some latent-state dynamic wishes to cover the cases where rewards are not functions of actions and contexts only.
For instance, in economic problems (see our case study in Section~\ref{sec:simu}),
underlying economic states correspond to crises or growth periods and directly influence both the contexts and the rewards.

\paragraph{Specific literature review.}
To the best of our knowledge,
only a few works have studied linear contextual bandits with a latent-state dynamic.
\citet{nelson2022ContextLatent} do so with a HMM modeling of contexts, but further assume that rewards are linear in (some function of) the action and in the belief, rather than being linear in (some function of) the action and the actual latent state. This is
a seemingly harmless but actually major simplification of the problem, as we explain throughout this article.
\citet{nelson2022ContextLatent} propose Thompson-sampling and LinUCB-style algorithms and also provide some partial elements for
a theoretical analysis (however, not discussing the estimation of HMM parameters). Finally, in their setting, rewards do not depend directly on the contexts, they do so only indirectly through beliefs.

Two recent works include \citet{hong2020ContextLatent} and \citet{galozy2025stateevobandit}: they evaluate their algorithms against a strong
benchmark that knows the realized latent states and the latent-state-dependent reward-model parameters. However, their regret guarantees
require a sublinear number of latent-state changes and degrade to linear when switches occur at a linear rate. \citet{hong2020latentbandits},
on the contrary, assumes that the latent state remains constant over time (and does not tackle the estimation of the reward-model parameters). 

\subsection*{Main Contributions; Comparison to \citet{nelson2022ContextLatent}}

The contributions of this article are twofold: first, being able to obtain sublinear high-probability regret bounds
in a complex model more challenging than existing models; second,
achieving an elementary, more direct, and more efficient treatment
of the simpler model by \citet{nelson2022ContextLatent}
as a special case of the methodological developments made to tackle the more general problem considered.

\paragraph{Contribution 1: General model.}
First, we introduce around \Cref{eq:reward-our}
a general setting of linear contextual bandits with latent-state dynamics, where expected rewards depend linearly on (functions of) the contexts and actions, as well as on unobserved states (that follow a HMM). The rewards are continuously-valued. We explain why the seemingly similar
dependence of rewards in \citet{nelson2022ContextLatent} on beliefs, rather than directly on states, is actually an important simplification
of the model. Therefore,
prior work either assumed infrequent state changes, or considered reward models depending only on states and actions but not on contexts
(with contexts only used for inference), or depending on beliefs and actions but not directly on states, or
restricted the rewards to a finite space. None provided sublinear regret bounds in our setting where rewards depends jointly on states, contexts, and actions, where states may switch arbitrarily often, and where these rewards are continuously-valued.

Our solution relies on an extremely careful analysis of the statistical dependencies induced by the reward model,
leading to a strategy proceeding in stages to carefully balance decent estimation of the reward-model parameters and of the beliefs 
and limited dependencies in the past.

\paragraph{Contribution 2: Comparison.}
We formally compare the belief-dependent linear reward model of \citet{nelson2022ContextLatent} to the state-dependent model studied here, showing in that the former reduces to linear contextual bandits whereas the latter does not.

We further show in Appendix~\ref{appendix:simplified-analysis} that, under the belief-dependent linear reward model of \citet{nelson2022ContextLatent},
extended to action-context-belief-dependent rewards,
the regret may be handled through a direct LinUCB-like analysis, up to an extra belief error term. Namely, we obtain a high-probability regret bound of order $T^{3/4}$, where the extra $T^{1/4}$ factor is shown to only come from belief estimation.
The analysis proposed in Appendix~\ref{appendix:simplified-analysis} is simple (much simpler
than in the original reference) and directly exploits the reduction to linear contextual bandits.

We also discuss in \Cref{sec:regret-nelson} how the obtained regret bound is sharper and more general
than the one by \citet{nelson2022ContextLatent}: in particular,
it holds with high probability, not only in expectation, and is reward-model free.

\subsection*{Outline}

\Cref{sec:problem-formulation} introduces the setting and the general
model of HMM-generated contextual bandits with rewards being state-dependent linear functions of
the contexts and actions, as well as the simplified version
considered by \citet{nelson2022ContextLatent}.
In particular, this section introduces the corresponding
notions of regret.

Because of technicalities discussed in detail in Appendix~\ref{sec:challenge-overcome},
including intricate dependencies on observed quantities
to the hidden states, we resort to a staged algorithm.
Namely, a staged LinUCB-like strategy is formally stated in \Cref{sec:main-algorithm};
it relies on belief-estimation subroutines, for which
reminders are provided in Appendix~\ref{sec:belief-estimation-error}.
A special case of this strategy, when stages only contain a single round,
is able to handle the simplified model of \citet{nelson2022ContextLatent}.

\Cref{sec:regret-bds} states and provides sketches of proofs
of the regret bounds of the strategies considered:
a $T^{3/4}$ regret bound in the simplified model by \citet{nelson2022ContextLatent},
with full details in Appendix~\ref{appendix:simplified-analysis},
and a $T^{7/8}$ regret bound in the general model,
with full details provided in Appendix~\ref{appendix:complex-analysis}.
The second regret bound relies on forgetting properties
of HMMs, for which reminders are provided in Appendix~\ref{appendix:general}.

Section~\ref{sec:simu} and Appendix~\ref{appendix:simulation} provide some numerical simulations,
with the mere aim to illustrate the theory developed.

\section{Settings, Notation, and Regret Definitions}
\label{sec:problem-formulation}

We first describe the considered finite-armed contextual bandit setting with latent-state dynamic
and then state two versions of the reward functions:
our own, more complex, version and the original, simplified, version
by \citet{nelson2022ContextLatent}.
After highlighting some issues arising from the statistical dependencies at stake,
we discuss two notions of regret: the same notion of pseudo-regret
as in \citet{nelson2022ContextLatent}, and regret in terms
of actual rewards.

\textbf{Notation.}
The short-hand $\bx_{s:t}$ stands for the sequence $\bx_s,\ldots,\bx_t$.
We let $[H] = \{1,\ldots,H\}$.
The vectors $(1,\ldots,1)$ with all elements equal to~$1$
are denoted, independently of the lengths, by $\bone$.
The identity matrix of size $s \times s$ is denoted by $\id{s}$.
We denote the tensor product
of two vectors $\bu = (u_h)_{h \in [H]} \in \R^H$ and $\bv \in \R^d$ by
$\bu \otimes \bv = (u_h \bv)_{h \in [H]} \in \R^{dH}$.
The Euclidean norm is denoted by $\Arrowvert \,\cdot\, \Arrowvert_2$,
and the $\ell^1$--norm by $\Arrowvert \,\cdot\, \Arrowvert_1$.
For matrices $M$, the norm considered is the Frobenius norm, i.e.,
the Euclidean norm of the coefficients all written into a column vector;
this is why we use the same notation $\Arrowvert M \Arrowvert_2$
for this matrix norm.
The norm induced on $\R^s$ by a symmetric definite positive matrix $G$ of
size $s \times s$ is defined by
\[
\forall \bu \in \R^s, \quad \Arrowvert \bu \Arrowvert_{G} = \sqrt{\bu^{\transp} G \bu}\,.
\]
For two symmetric matrices $G,G'$, we write $G \succeq G'$
when $G-G'$ is a symmetric positive semi-definite matrix.

\subsection{Latent Dynamic and Learning Protocol}

We consider a finite-armed contextual bandit problem,
with a finite action set $\cA$ and with a $m$--dimensional context space $\cX \subseteq \R^m$,
equipped with the Borel $\sigma$--algebra. At each round $t \geq 1$, the learner observes some
context $\bx_t \in \cX$, generated by a hidden Markov model [HMM].

\textbf{HMM modeling.}
More formally, there exists an underlying state $h_t \in [H]$,
where $[H]$ denotes the finite latent state space;
this space $[H]$ is known to the learner.
The first state $h_1$ is distributed according to some
initial distribution denoted by $\bpi$.
At each round $t \geq 1$,
the context $\bx_t$ is drawn independently at random given $h_t$, according
to an emission distribution over $\cX$ denoted by $\nu_{h_t}$.
The next latent state $h_{t+1}$ is then drawn according to a Markov
model indexed by the transition matrix $\bM = (M_{h,h'})_{(h,h') \in [H]^2}$,
where $M_{h,h'}$ is the probability from moving from state $h$ to state~$h'$.

The (homogeneous) HMM is thus parameterized by the initial distribution $\bpi$, the transition matrix $\bM$, and
the emission distributions $\nu = (\nu_h)_{h \in [H]}$, all unknown to the learner.

\textbf{Reward model---most complex one.}
We consider the following linear model: there exist a known transfer function $\bphi: \cA \times \cX \to \R^d$ and some
unknown parameters $\btheta^{\star}_{h} \in \R^d$, where $h \in [H]$, such that the reward $r_t(a)$ obtained with action $a \in \cA$ at round $t$
equals
\begin{equation}
\label{eq:reward-our}
r_t(a) = \bphi(a, \bx_t)^{\transp} \btheta^{\star}_{h_t} + \eta_t(a)\,,
\end{equation}
where $\eta_t(a)$ is a noise term, which we discuss below.
It is handy to assume some boundedness.

\begin{assumption}[bounds on the reward model]
\label{assumption:bound-R-linear}
There exists $C_{\btheta^{\star}} \in (0,+\infty)$ such that
for all $a \in \cA$, \,\, $\bx \in \cX$, \,\, $h \in [H]$,
\[
| \bphi(a, \bx)^{\transp} \,  \btheta^{\star}_{h} | \leq 1\,, \quad
\Vert \bphi(a,\bx) \Vert_2 \leq 1\,, \quad
\Vert \btheta^{\star}_{h} \Vert_2 \leq C_{\btheta^{\star}}\,.
\]
\end{assumption}

\textbf{Learning protocol and information available.}
At each round $t \geq 1$, the learner observes the context $\bx_t$
(but not the latent state $h_t$), picks an action $a_t \in \cA$ based
on $\bx_t$ and on the information available from past rounds, and obtains
and observes the reward $r_t(a_t)$, but not the rewards $r_t(a)$
for actions $a \ne a_t$.

The information available when picking $a_t$ consists therefore of
the past and present contexts $(\bx_\tau)_{1 \leq \tau \leq t}$
and of the past rewards $\bigl( r_\tau(a_\tau) \bigr)_{1 \leq \tau \leq t-1}$.
We denote the filtration generated by this information by
\[
\cFobs_t = \sigma \Bigl( \bigl( \bx_\tau, \, r_\tau(a_\tau) \bigr)_{\tau \leq t-1}, \, \bx_t \Bigr)
\]
(where ``obs'' stands for observed): the action $a_t$ is thus $\cFobs_t$--measurable.

\textbf{Assumptions on the noise term.}
A classic assumption in linear bandits (e.g., \citealp{abbasi2011improved}) on the noise terms $\eta_t(a)$ is
that these terms are conditionally sub-Gaussian, see Assumption~\ref{ass:SGnoise}.
It turns out that following \citet{nelson2022ContextLatent}, a milder
assumption on conditional first and second moments may be enough, see
Assumption~\ref{assumption:idio-noise}.
We will consider the second assumption for stating our main results,
though the stronger Assumption~\ref{ass:SGnoise} will be useful for discussions
and comparison to prior results.

For both assumptions, conditionings are taken with respect to all
priori random variables, whether they are observed or not: we
consider the filtration
\[
\cFall_{t} = \sigma\biggl( \Bigl(  h_\tau,\, \bx_\tau, \, \bigl( \eta_\tau(a) \bigr)_{a \in \cA} \Bigr)_{\tau \leq t-1}, \, h_t, \, \bx_t \biggr)\,.
\]

\begin{assumption}[conditionally sub-Gaussian noise]
\label{ass:SGnoise}
There exists $v_\eta$ such that for all $a \in \cA$,
\[
\E \bigl[ \e^{\lambda \eta_t(a)} \mid \cFall_t \bigr] \leq \e^{\lambda^2 v_\eta^2/2} \,.
\]
Note that this entails that $\E \bigl[\eta_t(a) \mid \cFall_t \bigr] = 0$.
\end{assumption}

\begin{assumption}[Bounded conditional second-order moment]
\label{assumption:idio-noise}
There exists $C_{\eta}$ such that for all $a \in \cA$,
\[
\E \bigl[\eta_t(a) \mid \cFall_t \bigr] = 0
\quad \mbox{and} \quad
\E \bigl[\eta_t(a)^2 \mid \cFall_t \bigr] \leq C_{\eta} \,.
\]
\end{assumption}

\subsection{The Simplified Model by \citet{nelson2022ContextLatent}}
\label{sec:simplified}

Consider the beliefs (the posterior probabilities over the hidden states given the observed contexts)
\[
\bb_t : h \in [H] \longmapsto \bb_t(h) = \P(h_t = h \mid \bx_{1:t})\,.
\]
\citet{nelson2022ContextLatent} consider the same latent dynamics and learning protocol
as above but rather study the following reward model:
there exist scalars $(\theta^{\star}_{h,a})_{h \in [H], a \in \cA}$ such that
\begin{equation}
\label{eq:nelson-model-orig}
r'_t(a) = \sum_{h \in [H]} \bb_t(h) \, \theta^{\star}_{h,a} + \eta'_t(a)\,,
\end{equation}
where the noise terms $\eta'_t(a)$ satisfy
Assumption~\ref{assumption:idio-noise}.

We rather consider an immediate generalization
where expected rewards can depend directly also on contexts and where
general transfer functions are considered as in~\eqref{eq:reward-our}:
\begin{equation}
\label{eq:nelson-model-gener}
r'_t(a) = \sum_{h \in [H]} \bb_t(h) \, \bphi(a, \bx_t)^{\transp} \btheta^{\star}_h + \eta'_t(a)\,,
\end{equation}
The original model~\eqref{eq:nelson-model-orig} corresponds to the special case
where $\btheta^{\star}_h = (\theta^{\star}_{h,a})_{a \in \cA}$
and $\bphi(a, \bx_t) \in \{0,1\}^{\cA}$ with the $a$--th component equal to~1
and all other components being null.

The difference between the model~\eqref{eq:reward-our} we study in
this article and
the immediate generalization~\eqref{eq:nelson-model-orig} of
the model by \citet{nelson2022ContextLatent} lies in replacing
$\bphi(a, \bx_t)^{\transp} \btheta^{\star}_{h_t}$ by
\[
\smash{\E\bigl[ \bphi(a, \bx_t)^{\transp} \btheta^{\star}_{h_t} \mid \bx_{1:t} \bigr]
= \bphi(a, \bx_t)^{\transp} \sum_{h \in [H]} \bb_t(h) \, \btheta^{\star}_h \, }.
\]

This substitution looks harmless at first sight but has important consequences:
the problem can be reduced to contextual bandits, as exploited
by \citet{nelson2022ContextLatent}. Without this substitution, and when keeping
the direct dependencies on the hidden states $h_t$, no such reduction holds
and an improved analysis is required. We now detail these claims,
as the technical discussions that follow will clarify
why and how we consider two notions of regret in \Cref{sec:regret-def}.

\textbf{Reduction of model~\eqref{eq:nelson-model-gener} to linear contextual bandits.}
Introduce
\[
\smash{\cFobsp_t = \sigma \Bigl( \bigl( \bx_\tau, \, r'_\tau(a_\tau) \bigr)_{\tau \leq t-1}, \, \bx_t \Bigr)}\,.
\]
The action $a_t$ picked in the model~\eqref{eq:nelson-model-gener} is $\cFobsp_t$--measurable.
Also, the assumptions on the noise entail, by the tower rule,
that for all $a \in \cA$,
\[
\E\bigl[ \eta'_t(a) \mid \cFobsp_t \bigr] = 0\,,
\quad \mbox{thus} \quad \E\bigl[ \eta'_t(a_t) \mid \cFobsp_t \bigr] = 0\,.
\]
Because of the specific form of the reward model~\eqref{eq:nelson-model-gener},
these equalities translate into:
\begin{align*}
\forall a \in \cA, \quad \E\bigl[ r'_t(a) \mid \cFobsp_t \bigr] & = \bphi(a, \bx_t)^{\transp} \sum_{h \in [H]} \bb_t(h) \, \btheta^{\star}_h \\
\mbox{and} \qquad \smash{\E\bigl[ r'_t(a_t) \mid \cFobsp_t \bigr]} & \smash{= \bphi(a_t, \bx_t)^{\transp} \sum_{h \in [H]} \bb_t(h) \, \btheta^{\star}_h\,}\,.
\end{align*}
The vectors $\bigl( \bphi(a, \bx_t) \bb_t(h) \bigr)_{h \in [H]}$ act as contexts in linear contextual bandits.
\citet{nelson2022ContextLatent} only provide an analysis when these contexts are known (because the HMM parameters
are assumed to be known in their theoretical analysis) but with the techniques introduced in this article, these contexts may
be estimated and the reduction to linear bandits can be saved.

See Appendix~\ref{appendix:simplified-analysis} for details and a regret analysis taking care of estimation errors: under \Cref{ass:SGnoise}, we obtain a high probability regret bound of $\tO(T^{3/4})$, where the extra $\tO(T^{1/4})$ term is due to belief estimation (recovering $\tO(T^{1/2})$ if the belief were known).

\textbf{No such reduction for model~\eqref{eq:reward-our}.}
There is no such reduction in the reward model~\eqref{eq:reward-our} primarily
studied in this article, where reward depend directly on the hidden states $h_t$.
For this model, for all $a \in \cA$,
\begin{multline}
\label{eq:core-difficulty-ante}
\E\bigl[ r_t(a) \mid \cFobs_t \bigr] \\ = \bphi(a, \bx_t)^{\transp} \sum_{h \in [H]} \P(h_t = h \mid \cFobs_t) \, \btheta^{\star}_h\,,
\end{multline}
but $\P(h_t = h \mid \cFobs_t)$ is a complex quantity, depending on the strategy implemented (as the actions played
are $\cFobs_t$--measurable),
that cannot be easily estimated,
and that is in general different from the belief $\bb_t(h)$.
Appendix~\ref{sec:challenge-overcome} further details the issues that arise.

\subsection{Two Notions of Regret}
\label{sec:regret-def}

\paragraph{Pseudo-regret based on beliefs.}
The literature of bandits with latent space dynamics
considers benchmarks involving posterior probabilities over the states of the form
\[
\bbbnk_t : h \in [H] \longmapsto \bbbnk_t(h) = \P(h_t = h \mid \cFbench_t)
\]
for filtrations $\sigma(\bx_{1:t}) \subseteq \cFbench_t \subseteq \cFobs_t$ discussed below;
these posterior probabilities rely on the knowledge of the HMM parameters.
The associated benchmarks are of the form of sums of
\[
\max_{a \in \cA} \! \sum_{h \in [H]} \!\! \E\bigl[ r_t(a) \mid \cFbench_t \bigr]
= \max_{a \in \cA} \! \sum_{h \in [H]} \!\! \bbbnk_t(h) \, \bphi(a, \bx_t)^{\transp} \btheta^{\star}_{h}\,,
\]
where the equality holds by the tower rule.

\citet{RegimeBandits-2021} consider a model with $\{0,1\}$--valued rewards
and (only) because of that, may take $\cFbench_t = \cFobs_t$. This choice
however is somewhat unnatural, as the benchmark is not intrinsic and depends
on the strategy used.

\citet{nelson2022ContextLatent} consider a more intrinsic choice, which also does not constrain
rewards to take finitely many values: $\cFbench_t = \sigma(\bx_{1:t})$,
i.e., the posterior probabilities equal the beliefs $\bb_t$ and are based only on contexts. Thus, no additional information
from the complex dependencies of rewards on the hidden states is exploited.
More formally, they consider associated pseudo-regret defined by
\begin{multline}
\label{eq:def-regret}
R_T = \sum_{t=1}^{T} \max_{a \in \cA} \sum_{h \in [H]} \bb_t(h) \, \bphi(a, \bx_t)^{\transp} \btheta^{\star}_{h} - \\
\sum_{t=1}^{T} \sum_{h \in [H]} \bb_t(h) \, \bphi(a_t, \bx_t)^{\transp} \btheta^{\star}_{h} \, .
\end{multline}

\paragraph{Regret based on actual rewards.}
The first sum in the definition~\eqref{eq:def-regret} admits some natural interpretation
as the sum of actual rewards achieved, up to some high-probability $\sqrt{T}$--deviation terms,
by an oracle that would know the HMM parameters and the reward-model parameters $\btheta^{\star}_{h}$, and would pick its actions based on the contexts observed.
Indeed, for all $a \in \cA$,
\[
\E\bigl[ r_t(a) \mid \bx_{1:t} \bigr] = \bphi(a, \bx_t)^{\transp} \sum_{h \in [H]} \bb_t(h) \, \btheta^{\star}_h\,.
\]
However, it is actually difficult to interpret the second sum in~\eqref{eq:def-regret}, because in general,
it is difficult to relate
\[
\bphi(a_t, \bx_t)^{\transp} \sum_{h \in [H]} \bb_t(h) \, \btheta^{\star}_h
\]
to conditional expectations like
$\E\bigl[ r_t(a_t) \mid \bx_{1:t},\,a_t \bigr]$
or $\E\bigl[ r_t(a_t) \mid \bx_{1:t} \bigr]$.
This is due, exactly as in \Cref{eq:core-difficulty-ante},
to the complex dependencies between the actions taken and the hidden states, through the rewards observed.
See Appendix~\ref{sec:challenge-overcome} for details.

However, Appendix~\ref{app:true-regret}
proves, by adapting the proof of \Cref{theorem:total-regret-bound} (and in particular,
the one of~\Cref{lm:true-vs-belief}), that
for the strategy considered in Box~A (which proceeds in stages),
the second sum in~\Cref{eq:def-regret} is close
to the sum $\sum_{t \in [T]} r_t(a_t)$
of actual rewards, with high-probability
and up to an additive term of order $T^{5/8}$ up to poly-logarithmic factors.
Put differently, the regret bounds on $R_T$ stated later in this article
also yield bounds on the actual regret
\begin{multline*}
R_T^{\mbox{\rm \tiny actual}} = \sum_{t=1}^{T} r_t(a^{\star}_t) - r_t(a_t)\,, \\
\mbox{where} \qquad a^{\star}_t  \in \argmax_{a \in \cA} \sum_{h \in [H]} \bb_t(h) \, \bphi(a, \bx_t)^{\transp} \btheta^{\star}_{h}\,.
\end{multline*}

\section{Algorithm(s): \\ ~~~~Staged LinUCB on Estimated Beliefs}
\label{sec:main-algorithm}

In this section, we both present our main algorithm (Box~A)
addressing the most complex reward model of \Cref{eq:reward-our},
as well as a special case thereof addressing
the simplified model of \Cref{eq:nelson-model-gener}
but in a more generic way than in \citet{nelson2022ContextLatent},
as we do not fix a specific belief estimation subroutine
(online expectation-maximization in their case)
but consider any efficient such subroutine (see \Cref{assumption:bound-belief}).
We discuss these subroutines first (in \Cref{sec:belief-estimation})
and then state the strategies (in \Cref{sec:stamt-strat}).

\subsection{Belief Estimation Subroutines}
\label{sec:belief-estimation}

As justified in Appendix~\ref{sec:challenge-overcome}
and as in \citet{nelson2022ContextLatent},
due to the complex dependencies between rewards and hidden states,
we estimate beliefs only based on contexts.
We therefore define a belief estimation subroutine $\cB$ as a sequence of functions
where the $t$--th function
\[
\smash{\bx_1, \ldots, \bx_t \longmapsto \hbb_t = \bigl(\hbb_t(h)\bigr)_{h \in [H]}}
\]
associates
with the contexts $\bx_1, \ldots, \bx_t$ a probability distribution $\hbb_t$ over the hidden state spaces $[H]$.

We provide no methodological development on the estimation of beliefs
and instead resort to known results, up to one addition.
The estimation of HMM parameters, and thus of beliefs, requires knowing the
number $H$ of hidden states but only provides estimates that are correct
up to permutations of the hidden states (as the latter have no specific ordering).
This is why estimation guarantees are only formulated in norms.
However, the strategy considered (see Box~A) must keep track of
specific states, as it will maintain estimators for each parameter $\btheta^\star$.
That the labeling of hidden states is consistent throughout time
will be vital. We achieve this through an additional alignment
step. See details on this issue and on the solution in
Appendix~\ref{sec:belief-estimation-error}

To make our arguments generic,
we consider the following assumption on the belief-estimation subroutine
$\cB$; examples and pointers below
explain why it is a reasonable assumption (and to which large classes of hidden
Markov chains it applies).

\begin{restatable}[belief estimation error]{assumption}{boundbelief}
	\label{assumption:bound-belief}
	The belief estimation procedure $\cB$ is such that
    for all hidden Markov chains $(\bpi,\bM,\nu)$ in a wide class,
    there exist
    \begin{itemize}
    \item a constant $T_{\cB,\bM,\nu}$ not necessarily known to the learner,
    \item a fully known belief error function
	$\Ubelief$ on $\{1,2,\ldots\} \times (0,1)$,
    where $\Ubelief(t, \delta)$ depends logarithmically on $\delta$
	and, up to poly-log factors, for each $\delta \in (0,1)$,
	\[
	\sum_{t \in [T]} \Ubelief(t, \delta) = \tO\bigl(T^{1/2}\bigr)\,,
	\]
    \end{itemize}
    such that
    for all $\delta \in (0,1)$, with probability at least $1-\delta$,
    the following statements hold for all $t \geq T_{\cB,\bM,\nu}\bigl(1 + \ln(1/\delta) \bigr)$:
    \begin{itemize}
    \item first, the labeling of hidden states is consistent over the rounds considered;
    \item second, $\bigl\Arrowvert \hbb_t - \bb_t \bigr\Arrowvert_1 \leq \Ubelief(t, \delta)$.
    \end{itemize}
\end{restatable}

In the HMM literature, belief estimation is more commonly referred to as the estimation of the filtering distributions. The hidden state space is typically assumed
to be finite, while the context space may be finite or continuous.
For the sake of exposition, and since the belief estimation is used here only as an independent subroutine, we will mostly focus on the case of a finite context set.

\paragraph{Example 1: Spectral method for finite context sets $\cX$.}
The so-called spectral method was proposed by \citet{spectral-2012} and further developed by \citet{anandkumar-2012-spectral-hmm} and \citet{anandkumar-2014-tensor-mixture}. It provides estimates of the HMM transition matrix $\bM$ and of the emission distributions $\nu_h$.
\citet{de2017consistent} show how the performance of these estimates, combined with the Bayes' update rule, transfers
into a performance bound on estimated beliefs of the form of \Cref{assumption:bound-belief}.
This is formally stated in \Cref{lm:beliefestimationerror} below.

Assume that the context set $\cX$ is finite, so that each emission distribution $\nu_h$
on $\cX$ may be seen as a column vector, and let $\bE$ denote the emission matrix, indexed by $\cX \times [H]$, obtained by concatenating the vectors $\nu_h$ as $h \in [H]$.
We assume below that $\bE$ has full column rank: this imposes, in particular,
that $H$ is smaller than the cardinality $X$ of $\cX$.

Recall that $\sigma$ is a singular value of $\bE$ if $\sigma^2$ is an eigenvalue of the square matrix $\bE^{\transp}\bE$.

\begin{restatable}{assumption}{regularityhmm}
	\label{assumption:regularity-hmm}
	The context set $\cX$ is finite, with cardinality denoted by $\sX = X$. \smallskip \\
	The emission matrix $\bE$ has full column rank with smallest singular value $\sigma_{\min}(\bE) > 0$,
	and its smallest element
	satisfies
	\[
    e_{\nu, \min} \eqdef \min_{h \in [H]} \min_{x \in \cX} \nu_h(x) > 0\,.
    \]
	The transition matrix $\bM$ is invertible, with smallest eigenvalue denoted by $\sigma_{\min}(\bM) > 0$, and the smallest element of $\bM$
	is positive: $\epsilon_{\bM} = \displaystyle{\min_{h,h'} M_{h,h'} > 0}$. \smallskip \\
    Finally, the initial distribution $\bpi$ is the (unique)
    stationary distribution of $\bM$.
\end{restatable}

Appendix~\ref{sec:belief-estimation-error} reviews the literature necessary
to obtain the guarantee stated in \Cref{lm:beliefestimationerror} (whose proof may be found in Appendices~\ref{sec:proof-beliefestimationerror} and~\ref{sec:align}), and also provides
more details on the underlying belief estimation procedure
(namely, the spectral method combined with a Bayes' update rule).

\begin{restatable}{lemma}{beliefestimationerror}
	\label{lm:beliefestimationerror}
	\Cref{assumption:bound-belief} is satisfied for all hidden Markov chains of \Cref{assumption:regularity-hmm},
	for the spectral method (followed by an alignment step) combined with the Bayes' update rule,
    \Iftwocolumn{with the known belief error function $\Ubelief(t, \delta) =$
	\[
	 \ln(t) \Biggl( H \sqrt{X} \sqrt{\frac{2 \ln\bigl(6 X t(t+1)/\delta\bigr)}{t}}
    + \e^{-\sqrt{t-1}} \Biggr)
	\]}{with the known belief error function
    \[
	\Ubelief(t, \delta) = \ln(t) \Biggl( H \sqrt{X} \sqrt{\frac{2 \ln\bigl(6 X t(t+1)/\delta\bigr)}{t}}
    + \e^{-\sqrt{t-1}} \Biggr)
	\]}
	and the unknown threshold $T_{\cB,\bM,\nu}$ whose closed-form expression is
    provided in \Cref{eq:closedformT}.
\end{restatable}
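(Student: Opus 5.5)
The proof combines three ingredients: a concentration bound for the spectral estimates, a stability (forgetting) bound for the Bayes filter, and an alignment argument that fixes a consistent labeling of the hidden states across rounds.

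\textbf{Step 1: concentration.} At round $t$, the spectral method of \citet{anandkumar-2012-spectral-hmm} is run on the contexts $\bx_{1:t}$. It uses the empirical low-order moments, that is, the frequencies of single contexts, pairs and triples of consecutive contexts. Since $\bpi$ is stationary and $\epsilon_{\bM}>0$, the chain is uniformly ergodic. A concentration inequality for Markov chains (or a blocking argument) then shows that each empirical moment is within about $\sqrt{2\ln(6Xt(t+1)/\delta)/t}$ of its expectation, entrywise, with probability $1-\delta/(t(t+1))$. The factor $X$ inside the log and the factor $6$ come from a union bound over entries and over the three moment tensors. A further union bound over $t$, using $\sum_t 1/(t(t+1))\leq 1$, makes the bound hold for all $t$ simultaneously.

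\textbf{Step 2: perturbation to parameters.} The perturbation analysis of the spectral method (via $\sigma_{\min}(\bE)$ and $\sigma_{\min}(\bM)$) turns the moment bound into bounds on $\hbM_t-\bM$ and $\hat{\bE}_t-\bE$. Up to an unknown permutation $P_t$, these are of order $H\sqrt{X}\sqrt{\ln(\cdot)/t}$, times constants depending on the singular values. This holds once $t$ is large enough that the moment error falls below a threshold of order $\sigma_{\min}$ (and below $e_{\nu,\min}$ and $\epsilon_{\bM}$, so that the estimates stay positive). That requirement determines $T_{\cB,\bM,\nu}(1+\ln(1/\delta))$.

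\textbf{Step 3: belief error via forgetting.} This is the step I expect to be the main obstacle. Following \citet{de2017consistent}, I compare the filter run with the estimated parameters to the true filter. The error decomposes as a sum over past steps $s\leq t$ of the one-step parameter error, each term multiplied by a contraction factor $(1-\epsilon_{\bM}/\dots)^{t-s}$. The contraction comes from the Doeblin-type mixing guaranteed by $\epsilon_{\bM}>0$ and $e_{\nu,\min}>0$.

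A subtlety is that the parameters change with $t$. I would freeze the current estimates and re-run the Bayes recursion only over a window of length about $\sqrt{t-1}$. The truncation then costs $\e^{-\sqrt{t-1}}$, while the per-step perturbation errors sum to at most a $\ln t$ factor, which produces the stated $\Ubelief$. Checking that the constants fold into $\ln t$ for $t$ beyond the threshold is where most of the care goes.

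\textbf{Step 4: alignment and conclusion.} The alignment step picks the permutation $\hat P_t$ that matches the current estimate $\hat{\bE}_t$ to a reference estimate fixed at the threshold time, for instance by minimizing the distance between columns. Once the estimation error is below half the minimal column separation of $\bE$ (positive because $\bE$ has full column rank), the matching is unique, so $\hat P_t=P_t$ is the same for all later $t$. This gives consistent labeling. The claim $\sum_{t\in[T]}\Ubelief(t,\delta)=\tO(T^{1/2})$ follows from summing $t^{-1/2}$ and the summable terms $\e^{-\sqrt{t-1}}$.
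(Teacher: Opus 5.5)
Your overall architecture (spectral concentration, then the filter-stability bound of \citet{de2017consistent}, then alignment) matches the paper's. Steps~1--2 are compatible with what the paper does, since it simply cites \Cref{proposition:bound-hmm-parameters}.

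\textbf{Step~3.} This is where the lemma does its own work, and your mechanism for both factors of $\Ubelief$ is wrong. The subtlety you raise does not arise. The Bayes' update rule \eqref{eq:Bayes-est-1}--\eqref{eq:Bayes-est-3} already re-runs the recursion from $\tau=1$ with the frozen round-$t$ estimates and an arbitrary $\hat{\bpi}$, and the lemma concerns that procedure, not a windowed variant. So \Cref{proposition:bound-estimated-belief} applies directly. Its parameter-error terms carry $t$-independent constants $L_1,L_2$ (your geometric sum against the contraction $\kappa = 1-\epsilon_{\bM}/(1-\epsilon_{\bM})$ converges), not a $\ln t$ factor.

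Truncating to a window of length $\sqrt{t-1}$ actually breaks the bound. The initial-condition term is then only controlled at order $\kappa^{\sqrt{t-1}}$. Moreover $\kappa^{\sqrt{t-1}}/\bigl(\ln(t)\,\e^{-\sqrt{t-1}}\bigr)\to\infty$ as soon as $\kappa>1/\e$. This holds, e.g., for every chain with $H\geq 3$, since then $\epsilon_{\bM}\leq 1/3$ and so $\kappa\geq 1/2$.

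The missing idea is that \Cref{assumption:bound-belief} requires $\Ubelief$ to be fully known. The combined bound \eqref{eq:Ubelief-almost}, however, involves the unknown $L_0,L_1,L_2,C_1,C_2,\kappa$. The paper keeps the full recursion and pays with a larger unknown threshold:
\begin{itemize}
\item requiring $t\geq\exp(L_1C_1+L_2C_2)$ and $t\geq\exp(\sqrt{2}L_0)$ makes these constants at most $\ln t$ (also using $\sqrt{H}\leq H\sqrt{X}$);
\item requiring $t\geq 1+1/(\ln\kappa)^2$ gives $(t-1)\ln(1/\kappa)\geq\sqrt{t-1}$, i.e., $\kappa^{t-1}\leq\e^{-\sqrt{t-1}}$.
\end{itemize}
These are among the terms of \Cref{eq:closedformT}. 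Your last sentence of Step~3 gestures at folding constants into $\ln t$, but the derivation you actually give does not produce the stated function.

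\textbf{Alignment.} As stated, it is not implementable. The threshold time depends on unknown HMM quantities, so the learner cannot fix a reference estimate there. A reference taken at a known but too-early time may be inaccurate, and matching against it then need not stabilise.

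The paper instead chains, choosing $\rho_{t+1}\in\argmin_{\pi}\max_{h}\Arrowvert\hat{\nu}_{t,h}-\tilde{\nu}_{t+1,\pi(h)}\Arrowvert_2$. This runs from $t=1$ without knowing the threshold, and an induction from the unknown time $t_0$ shows the labelling is constant afterwards. The margin needed is $\Delta/4$, with $\Delta=\min_{h\neq h'}\Arrowvert\nu_h-\nu_{h'}\Arrowvert_2$, not half the separation, because both compared estimates carry error.

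\textbf{Threshold check.} Your union bound over $t$ inflates the per-round requirement to $t\geq\widetilde{T}_{\cB,\bM,\nu}\bigl(1+\ln(t(t+1)/\delta)\bigr)$. You must still verify that this follows from $t\geq T_{\cB,\bM,\nu}\bigl(1+\ln(1/\delta)\bigr)$; this is \eqref{eq:Tttgeq}.
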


\paragraph{Example 2: More general context sets $\cX$.}
\citet{de2017consistent} extended the spectral method and its analysis
to the case of continuously-valued contexts, under an assumption that contexts are continuously projectable into a finite-dimensional feature space via basis functions such as splines, trigonometric functions, or wavelets.

\subsection{LinUCB Strategies on Estimated Beliefs}
\label{sec:stamt-strat}

For any probability distribution $\bb$ over $[H]$,
we use the short-hand notation, for all $a \in \cA$ and $\bx \in \cX$,
\begin{align*}
	\bb \otimes \bphi(a,\bx) = \bigl( \bb(h) \bphi(a, \bx) \bigr)_{h \in [H]} & \in \R^{dH}\,, \\
\mbox{so that} \qquad
		\sum_{h \in \cH} \bb(h) \, \bphi(a, \bx)^{\transp} \btheta^{\star}_{h} & =
			\bigl( \bb \otimes \bphi(a, \bx) \bigr)^{\transp} \btheta^{\star}\,.
\end{align*}
Note that by \Cref{assumption:bound-R-linear}, which considers the Euclidean norm in $\R^d$,
and the fact that $\bb$ is a probability distribution, we also have, for the Euclidean norm in $\R^{dH}$,
\begin{equation}
	\label{eq:bbbphi-norm}
	\forall a \in \cA, \ \ \forall \bx \in \cX, \qquad
	\bigl\Arrowvert \bb \otimes \bphi(a, \bx) \bigr\Arrowvert_2 \leq 1\,.
\end{equation}
We estimate the stacked vector
$\btheta^{\star} = (\btheta^{\star}_h)_{h \in [H]} \in \R^{dH}$ through a LinUCB-style (\citealp{abbasi2011improved}) approach: let $\lambda > 0$
and introduce, for $t \geq 1$, the (symmetric definite positive thus invertible) Gram matrix
\[
G_t \defeq \smash{\sum_{\tau=1}^{t}} \bigl(\hbb_\tau \otimes \bphi(a_\tau, \bx_\tau)\bigr)
\bigl(\hbb_\tau \otimes \bphi(a_\tau, \bx_\tau)\bigr)^{\!\transp} + \lambda \id{dH}\,,
\]
based on which we define the estimates
\begin{equation}
	\label{eq:estimate-theta}
	\hbtheta_t  = G_t^{-1} \smash{\sum_{\tau=1}^{t}} \bigl(\hbb_\tau \otimes \bphi(a_\tau, \bx_\tau)\bigr) \, r_\tau(a_\tau)\,.
\end{equation}

\paragraph{Strategy in the most complex reward model~\eqref{eq:reward-our}.}
As justified in Appendix~\ref{sec:challenge-overcome}, we consider
a strategy that works in stages of lengths $\ell \geq 1$ and only performs the estimations~\eqref{eq:estimate-theta}
periodically, at rounds $t$ multiple of $\ell$. This defines stages, where stage $s \geq 1$ gather rounds
$(s-1)\ell+1$ to $s\ell$. Within a stage, rewards are estimated by estimates of their conditional means
$\bphi(a, \bx_t)^{\transp} \btheta^{\star}_{h_t}$, of the form
\[
\sum_{h \in \cH} \hbb_{t}(h) \bphi(a, \bx_t)^{\transp} \hbtheta_{(s-1) \ell, h} + \epsilon_{t,a}\,,
\]
where the $\epsilon_{t,a}$ are confidence bonuses.
The strategy considered is optimistic and plays arms that maximize the upper confidence
estimates defined above.

The resulting strategy, called \emph{staged LinUCB on estimated beliefs}, is formally stated in
Box~A.
\begin{figure}[!ht]
	\begin{nbox}[title={\small Box A: Staged LinUCB on estimated beliefs}]
		\label{nbox:main-algorithm}
		\textbf{Known parameters:} finite action set $\cA$; context set $\cX$; transfer function $\bphi: \cA \times \cX \to \R^d$;
        finite state space $\cH$

		\textbf{Unknown parameters:}
		HMM parameters, given by a transition matrix $\bM = (M_{h,h'})_{(h,h') \in [H]}$
		and emission distributions $(\nu_h)_{h \in [H]}$ over $\cX$;
        reward parameters $\btheta^{\star}_{h} \in \R^d$, for $h \in [H]$ \smallskip
		
		\textbf{Inputs:} risk $\delta \in (0,1)$; belief estimation subroutine~$\cB$;
		stage length $\ell \geq 1$;
        regularization parameter $\lambda > 0$; closed-form
		expression for the confidence bonuses $\eps_{t,a}$, possibly depending on $\delta$, $\lambda$, and $\ell$ \smallskip
		
		\textbf{Initialization:} set $\hbtheta_0 = (1/\lambda) \, \bone \in \R^{dH}$ \medskip

		\textbf{For} stages $s = 1, 2, \dots$\textbf{:} \smallskip \newline
		\textbf{For} rounds $t = (s - 1) \ell + 1, \ldots,  s \ell$, \textbf{the learner:}
		\begin{enumerate}
			\item Observes the context $\bx_{t}$, drawn independently by the environment from $\nu_{h_{t}}$;
			\item Obtains the belief estimate $\hbb_{t}$ by feeding $\bx_1,\ldots,\bx_{t}$ to the subroutine $\cB$;
			\item Computes estimated rewards: for all $a \in \cA$,
			\[
			\hat{r}_{t}(a) = \smash{\sum_{h \in \cH} \hbb_{t}(h) \bphi(a, \bx_t)^{\transp} \hbtheta_{(s-1) \ell, h}}\,; \smallskip
			\]
			\item Picks an action $\displaystyle{a_{t} \in \argmax_{a \in \cA} \bigl\{ \hat{r}_{t}(a) + \eps_{t, a}} \bigr\}$\,; \smallskip
			\item \label{item:5} Obtains and observes the reward \vspace{-.25cm} \\
			\[
			\smash{r_t(a_t) = \bphi(a_t, \bx_t)^{\transp} \btheta^{\star}_{h_t} + \eta_t(a_t)\,}; \vspace{-.2cm}
			\]
		\end{enumerate}
		\quad \textbf{end}

		\quad Computes $\hbtheta_{s\ell}$ as in \Cref{eq:estimate-theta}.
		
		\textbf{end}
	\end{nbox}
\end{figure}

\paragraph{Strategy in the simplified model~\eqref{eq:nelson-model-gener}.}
Our generic version of the strategy by \citet{nelson2022ContextLatent}
is given by the Box-A strategy run with $\ell = 1$, i.e.,
updating the LinUCB estimates of $\btheta^\star$ at each round,
and with the reward-obtention
step (numbered~\ref{item:5} in Box~A) of course replaced by \Cref{eq:nelson-model-gener}.
For the sake of clarity, we state separately this strategy
in Box B of Appendix~\ref{appendix:simplified-analysis}.

\section{Regret Bounds}
\label{sec:regret-bds}

In this section, we present regret analyses
both for the main strategy of Box~A
addressing the most complex reward model~\eqref{eq:reward-our},
as well as its special case addressing
the simplified model of \Cref{eq:nelson-model-gener}
(see the paragraph above).
We start with the latter as it can be performed with no
additional assumption.

\subsection{Regret Bound for the Simplified Model~\eqref{eq:nelson-model-gener}}
\label{sec:regret-nelson}

In Appendix~\ref{appendix:simplified-analysis},
we state (Theorem~\ref{theorem:total-regret-bound-simplified})
and show that under \Cref{ass:SGnoise} (sub-Gaussian noise),
\Cref{assumption:bound-belief} (on the belief estimation subroutine),
and \Cref{assumption:bound-R-linear} (boundedness of rewards),
with proper inputs,
the strategy in the simplified model~\eqref{eq:nelson-model-gener}
described above satisfies, with probability at least $1-\delta$,
up to poly-log factors,
\[
R_T = \tO \bigl(T^{3/4} \bigr)\,,
\]
where a closed-form expression of the regret bound may be found in
the proof, see \Cref{eq:regret-bound-final-simplified}.

\paragraph{Comparison to \citet[Theorem~2]{nelson2022ContextLatent}.}
First, \citet[Theorem~2]{nelson2022ContextLatent} do not take
into account the belief estimation error into account in
their regret bound, which, in addition, only holds in expectation;
they obtain a $\sqrt{T}$ rate and the proof of
Theorem~\ref{theorem:total-regret-bound-simplified} shows that the worsened
rate $T^{3/4}$ is only due to the belief estimation error.

Second, \citet[Theorem~2]{nelson2022ContextLatent}
consider a milder noise condition (\Cref{assumption:idio-noise}
instead of \Cref{ass:SGnoise}) but to do so,
require a forgetting condition (as \Cref{assumption:strong-mixing-hmm}
below). We instead provide a more direct analysis, close to the standard LinUCB analysis
and not requiring this forgetting condition; see Appendix~\ref{appendix:simplified-analysis}.

Third, the bound of \citet[Theorem~2]{nelson2022ContextLatent}
is an expected bound, and not a bound in high probability; it
involves constants that heavily depend on the problem, in particular, on
the reward gaps, while the bound achieved in
Theorem~\ref{theorem:total-regret-bound-simplified} is model-free
for the part not linked to the estimation of HMM parameters, see
\Cref{eq:regret-bound-final-simplified}.

Fourth, \citet[Theorem~1]{nelson2022ContextLatent} also impose a non-degeneracy assumption on its population design matrix, which can
be stated as follows in our extended setting, denoting by $\lambda_{\min}$ the smallest eigenvalue:
for all actions $a \in \cA$,
\begin{multline*}
\liminf_{T \to \infty} \ \lambda_{\min}\bigl( \tilde{G}_t^{(a)} \bigr) > 0\,, \qquad \mbox{where} \\
\tilde{G}_t^{(a)} =
\frac{1}{T} \sum_{t=1}^{T} \E \Bigl[\Ind{a=a^{\star}_t}\bigl(\hbb_t \otimes \bphi(a, \bx_t)\bigr)
\bigl(\hbb_t \otimes \bphi(a, \bx_t)\bigr)^{\!\transp}\Bigr] \,.
\end{multline*}
Note that the sum in the definition of $\tilde{G}_t^{(a)}$ is restricted to rounds $t$ such that $a^{\star}_t = a$, where
\[
a^{\star}_t \in \argmax_{a \in \cA} \sum_{h \in [H]} \bb_t(h) \, \bphi(a, \bx_t)^{\transp} \btheta^{\star}_{h}\,.
\]
This implies that the population design matrix grows linearly in all directions. We do not impose such a coverage assumption; instead, we only use $1/\lambda_{\min}(G_t) \leq 1/\lambda$, which leads to a larger regret rate but avoids this additional condition.

In a nutshell, we leverage the reduction to linear contextual bandits
proposed by~\citet{nelson2022ContextLatent}
a in a more direct and more efficient way.

\subsection{Regret Bound for the Most Complex Model~\eqref{eq:reward-our}}
\label{eq:epsilonta-mostcomplex}

We require a final, classic (see \citealp{olivier-2005-inf-hmm}),
assumption on the HMM: that it satisfies some fast forgetting property.
Details, exemples, and further references (including
the alternative forgetting condition assumed by \citealp{nelson2022ContextLatent})
are provided in Appendix~\ref{appendix:HMM-forgetting}.

\begin{restatable}[exponentially fast forgetting of initial condition]{assumption}{assexpfastmixing}
	\label{assumption:strong-mixing-hmm}
There exists a constant $\gamma \in [0,1)$ so that, for all $s \leq t$, for all pairs $h, h' \in [H]$ of	hidden states,
	\begin{equation*}
		\begin{aligned}
			\Iftwocolumn{\smash{\sum_{j \in [H]}} \Bigl| \P_{\{h_s=h\}}&(h_t=j \mid \bx_{s+1:t}) \\
			& - \P_{\{h_s=h'\}}(h_t=j \mid \bx_{s+1:t}) \Bigr| \leq 2 \gamma^{t-s} \,.}{\sum_{j
            \in [H]} \Bigl| \P_{\{h_s=h\}}(h_t=j \mid \bx_{s+1:t})
            - \P_{\{h_s=h'\}}(h_t=j \mid \bx_{s+1:t}) \Bigr| \leq 2 \gamma^{t-s} \,.}
		\end{aligned}
	\end{equation*}
\end{restatable}

We may now state our main result. The $T^{7/8}$ rate achieved therein
must be contrasted with the $T^{3/4}$ rate discussed in \Cref{sec:regret-nelson}
above: the price to pay for facing the actual latent model
(and not an overly simplified version thereof) is a $T^{1/8}$ factor
with our method, mostly due to of proceeding in stages.
While Appendix~\ref{sec:challenge-overcome}
explains how handy it is to proceed in stages,
this might be avoidable and the regret bound might be improvable.
In particular, we do not provide any matching regret lower bound.

\begin{restatable}{theorem}{tregretbound}
	\label{theorem:total-regret-bound}
	Assume the horizon $T$ is known to the learner and fix $\delta \in (0, 1)$.
	Consider the strategy of Box~A with a belief
	estimation subroutine satisfying \Cref{assumption:bound-belief},
	with parameters $\lambda = T^{3/4}$ and $\ell = \lceil T^{3/4} \rceil$,
	as well as the confidence bonuses $\eps_{t,a} = 1 + \sqrt{d}/\lambda$
    for $t \in [1,\ell]$
	\Iftwocolumn{and for $t \geq \ell+1$,
	\begin{equation*}
		\begin{aligned}
			& \eps_{t,a} = \Ubelief(t, \delta/2)
            + f_t \biggl\Arrowvert G_{(s_t -1) \ell}^{-1} \Bigl(\hbb_t \otimes \bphi(a, \bx_t)\Bigr) \! \biggr\Arrowvert_2 \\
			& \mbox{where} \quad f_t = \lambda \sqrt{H} \, C_{\btheta^{\star}}
            + 4 \sqrt{ \frac{s_T(s_t -1) (1+s_t \gamma) \ell}{\delta(1 - \gamma)} } \ + \\
			& \sqrt{\frac{4 s_T}{\delta} C_{\eta} (s_t-1) \ell } + \frac{2 (s_t-1) \gamma}{1 - \gamma}
			+ \!\!\! \sum_{\tau=1}^{(s_t-1)\ell} \!\! \Ubelief(\tau,\delta/2)
		\end{aligned}
	\end{equation*}
    and where $s_t = \lceil t/ \ell \rceil$ denotes the stage to which round $t$ belongs.}{and for $t \geq \ell+1$,
	\begin{multline*}
			\eps_{t,a} = \Ubelief(t, \delta/2)
            + \biggl\Arrowvert G_{(s_t -1) \ell}^{-1} \Bigl(\hbb_t \otimes \bphi(a, \bx_t)\Bigr) \! \biggr\Arrowvert_2
			\smash{\overbrace{\Biggl( \lambda \sqrt{H} \, C_{\btheta^{\star}}
            + 4 \sqrt{ \frac{s_T(s_t -1) (1+s_t \gamma) \ell}{\delta(1 - \gamma)} }
			+ \sqrt{\frac{4 s_T}{\delta} C_{\eta} (s_t-1) \ell }}^{= f_t}} \\ + \frac{2 (s_t-1) \gamma}{1 - \gamma}
			+ \sum_{\tau=1}^{(s_t-1)\ell} \!\! \Ubelief(\tau,\delta/2)\Biggr) \,,
	\end{multline*}
    where $s_t = \lceil t/ \ell \rceil$ denotes the stage to which round $t$ belongs.}
	Then, under \Cref{assumption:bound-R-linear} (boundedness of rewards),
    \Cref{assumption:idio-noise} (noise with bounded conditional second-order moments), \Cref{assumption:bound-belief} (controlled belief estimation error),
    and \Cref{assumption:strong-mixing-hmm} (exponentially fast forgetting),
	with probability at least $1-\delta$,
	up to poly-log factors,
	\[
	R_T = \tO \bigl( T^{7/8} \bigr)\,.
	\]
	A closed-form expression of the regret bound may be found in
	the proof, see \Cref{eq:3terms,eq:Gsum}.
\end{restatable}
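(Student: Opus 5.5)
The plan is to follow the standard optimism-based LinUCB template, organized around a single high-probability event. The new ingredient is a confidence bound on $\hbtheta_{(s-1)\ell}$ that holds despite the rewards depending on $h_t$ and not on $\bb_t$.

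\textbf{Step 1: good event.} We intersect the event of \Cref{assumption:bound-belief}, at level $\delta/2$, with the event that the deviation bounds below hold, at total level $\delta/2$. On the belief event, the first stage already exceeds $T_{\cB,\bM,\nu}(1+\ln(2/\delta))$ for $T$ large, since $\ell = \lceil T^{3/4}\rceil$. Hence, from stage~2 on, labels are consistent and $\|\hbb_t-\bb_t\|_1\le \Ubelief(t,\delta/2)$. The first stage contributes at most $2\ell = O(T^{3/4})$ to the regret, because rewards are bounded by~1; this is why the bonus there is trivial.

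\textbf{Step 2: confidence bound (main obstacle).} Write $\bz_\tau=\hbb_\tau\otimes\bphi(a_\tau,\bx_\tau)$ and $S=(s-1)\ell$. Then
\[
\hbtheta_S-\btheta^\star=G_S^{-1}\Bigl(-\lambda\btheta^\star+\sum_{\tau\le S}\bz_\tau\bigl(r_\tau(a_\tau)-\bz_\tau^\transp\btheta^\star\bigr)\Bigr).
\]
We decompose $r_\tau(a_\tau)-\bz_\tau^\transp\btheta^\star$ into three pieces:
\begin{itemize}
\item the noise $\eta_\tau(a_\tau)$;
\item the belief error $(\bb_\tau-\hbb_\tau)\otimes\bphi$, which summed gives $\sum_\tau \Ubelief(\tau,\delta/2)$;
\item the key term $\bphi(a_\tau,\bx_\tau)^\transp(\btheta^\star_{h_\tau}-\sum_h \bb_\tau(h)\btheta^\star_h)$.
\end{itemize}
Because we only control $\|G_S^{-1}\bz\|_2$ (we use $\lambda_{\min}\ge\lambda$ rather than self-normalized martingales), it suffices to bound the Euclidean norm of each vector sum. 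The noise sum has second moment at most $C_\eta S$ by orthogonality of martingale increments, since $\bz_\tau$ is $\cFall_\tau$-measurable and $\|\bz_\tau\|\le1$. A Chebyshev/Markov inequality with a union bound over the $s_T$ stages then gives the $\sqrt{4s_T C_\eta S/\delta}$ term.

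The key term is the hard one. Its summand is not a martingale difference, because $a_\tau$ depends on past rewards, which carry information on $h_\tau$ beyond $\bx_{1:\tau}$. The staging is what rescues us: within stage $s'$, $a_\tau$ depends on rewards only up to round $(s'-1)\ell$, so the dependence is on $\bx_{1:\tau}$ together with information about hidden states at lags of at least $\tau-(s'-1)\ell$. By \Cref{assumption:strong-mixing-hmm}, conditioning on these old states changes the posterior of $h_\tau$ given $\bx$ by at most $2\gamma^{\text{lag}}$. Summing over the rounds of each stage gives the $2(s_t-1)\gamma/(1-\gamma)$ bias. For the centered remainder, we compute a second moment: cross terms between rounds $\tau<\tau'$ decay like $\gamma^{\tau'-\tau}$, again by forgetting, which yields variance of order $S(1+s\gamma)/(1-\gamma)$. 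Markov's inequality with a union over stages then produces the $4\sqrt{s_T S(1+s\gamma)\ell/(\delta(1-\gamma))}$-type term. Getting these conditional-independence and forgetting computations right, including the ones involving actions chosen with information from previous stages, is where I expect the real work to lie.

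\textbf{Step 3: regret.} On the good event, we get $|\hr_t(a)-(\bb_t\otimes\bphi(a,\bx_t))^\transp\btheta^\star|\le\eps_{t,a}$: Step~2 handles the parameter error, and the $\Ubelief(t,\delta/2)$ term handles replacing $\hbb_t$ by $\bb_t$. Optimism then bounds the per-round regret by $2\eps_{t,a_t}$. It remains to sum $f_t\|G_{(s_t-1)\ell}^{-1}\bz\|_2\le f_t\|\bz\|_{G^{-1}}/\sqrt\lambda$. We use the elliptical potential lemma together with the fact that, within a stage, $G_{(s_t-1)\ell}$ is stale. Because $G_{(s_t-1)\ell}\succeq\lambda\id{dH}$ with $\lambda=T^{3/4}\ge\ell$, staleness costs at most a constant factor, since $G_{t-1}\preceq 2G_{(s_t-1)\ell}$. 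Therefore $\sum_t\|\bz_t\|_{G^{-1}}\lesssim\sqrt{T\,dH\ln T}$.

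The dominant contribution of $f_t$ is of order $\sqrt{s_T\,T/\delta}\sim T^{5/8}$, together with $\lambda\sqrt H C_{\btheta^\star}=T^{3/4}$ and the belief sum $\tO(\sqrt T)$. Multiplying by $\sqrt{T}/\sqrt{\lambda}=T^{1/8}$, each of these terms gives at most $\tO(T^{7/8})$. Adding $\sum_t\Ubelief=\tO(\sqrt T)$ and the first-stage $O(T^{3/4})$ yields $R_T=\tO(T^{7/8})$.
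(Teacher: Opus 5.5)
Your overall structure matches the paper's. You split $r_\tau(a_\tau)-\bz_\tau^{\transp}\btheta^\star$ into noise, belief error and the state-versus-posterior term, and you peel the bias $\sum_{\tau\in\text{stage}}\|\bb_\tau-\bar{\bb}_\tau\|_1\le 2\gamma/(1-\gamma)$ off the last one, where $\bar{\bb}_\tau(h)=\P(h_\tau=h\mid\cU_\tau)$. You then bound Euclidean norms by $\L^2$--Markov with $\delta_s=\delta/(4s_T)$ and finish with the staged elliptical potential. The gap is in the step you flagged as the real work. Take the centered scalar $z_\tau=\bphi(a_\tau,\bx_\tau)^{\transp}\bigl(\btheta^\star_{h_\tau}-\sum_h\bar{\bb}_\tau(h)\btheta^\star_h\bigr)$. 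When $\tau$ lies in an earlier stage than $\tau'$, the cross terms do \emph{not} decay like $\gamma^{\tau'-\tau}$, and that claim does not follow from \Cref{assumption:strong-mixing-hmm}.

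Here is why. To handle the weight $\bz_\tau^{\transp}\bz_{\tau'}$ and the centering $\bar{\bb}_{\tau'}$, you condition on $\cU_{\tau'}$. That $\sigma$-field contains $\hbtheta_{(s_{\tau'}-1)\ell}$, which depends on $h_\tau$ through $r_\tau(a_\tau)$ and on all hidden states of earlier stages. So, given $\cU_{\tau'}$, learning $h_\tau$ can shift the posterior of $h_{(s_{\tau'}-1)\ell}$ by an amount the forgetting assumption does not control, since that assumption only conditions on contexts.

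What does go through is the following. Decompose both $\P_{\{h_\tau=h\}}(h_{\tau'}=\cdot\mid\cU_{\tau'})$ and $\P(h_{\tau'}=\cdot\mid\cU_{\tau'})$ over $h_{(s_{\tau'}-1)\ell}$. Then use $\P_{\{h_{(s_{\tau'}-1)\ell}=j\}}(h_{\tau'}=\cdot\mid\cU_{\tau'})=\P_{\{h_{(s_{\tau'}-1)\ell}=j\}}(h_{\tau'}=\cdot\mid\bx_{(s_{\tau'}-1)\ell+1:\tau'})$, which also holds with $\{h_\tau=h\}$ added to the conditioning event. This yields only $\bigl|\E[z_\tau z_{\tau'}\mid\cU_{\tau'}]\bigr|\le 2\gamma^{\tau'-(s_{\tau'}-1)\ell}$. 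Each $\tau'$ in stage $s'$ has $(s'-1)\ell$ earlier-stage partners, so summing gives a second-moment contribution of order $s^2\ell\gamma/(1-\gamma)$. That is exactly your $(1+s\gamma)$ factor. With your claimed decay you would instead get $O(S/(1-\gamma))$, so your justification and your stated variance contradict each other.

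This feeds into your Step~3 order count. The key-term part of $f_t$ is of order $\sqrt{s_T\cdot T\cdot s_T}\sim s_T\sqrt{T}\sim T^{3/4}$, not $T^{5/8}$; $T^{5/8}$ is the order of the noise term. So it is co-dominant with $\lambda\sqrt{H}C_{\btheta^\star}$, and it is what forces $\ell\sim\lambda\sim T^{3/4}$ and the exponent $7/8$. Under your $\gamma^{\tau'-\tau}$ decay, re-tuning $\ell$ and $\lambda$ would give $T^{5/6}$, which is a sign the claim is too strong. For the stated parameters the final $\tO(T^{7/8})$ is unaffected.

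A smaller point concerns your claim that $|\hr_t(a)-(\bb_t\otimes\bphi(a,\bx_t))^{\transp}\btheta^\star|\le\eps_{t,a}$ holds on the good event. The bonus contains $\sum_{\tau\le(s_t-1)\ell}\Ubelief(\tau,\delta/2)$, which need not dominate $\|\bb_\tau-\hbb_\tau\|_1$ for $\tau<T_0$, even when $T_0\le\ell$. For the spectral instantiation, for example, $\Ubelief(1,\cdot)=0$. You therefore need an extra slack $2(T_0-1)\|G^{-1}_{(s_t-1)\ell}\bz\|_2\le 2(T_0-1)/\lambda$ per round, costing $O(T\,T_0/\lambda)$ in total, as the paper does.
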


\subsection{Proof Sketch for \Cref{theorem:total-regret-bound}}

The full proof of \Cref{theorem:total-regret-bound} may be found in Appendix~\ref{appendix:complex-analysis}.

We introduce a filtration augmented by the algorithmic updates, by considering the estimates $\hbtheta_{s\ell}$
computed at the end of past complete stages $s \leq s_t -1$ on top of contexts $\bx_{1:t}$:
\[
\cU_t = \sigma\Bigl( \bx_{1:t}, \, \bigl( \hbtheta_{s \ell} \bigr)_{s \leq s_t -1} \Bigr)\,.
\]
The key of the proof, as discussed in Appendix~\ref{sec:challenge-overcome},
is that this filtration is such that $a_t$ is $\cU_t$--measurable
(by design, thanks to staging in Box A)
while
\begin{equation}
\label{eq:Ut-close-enough}
\overline{\bb}_t(h) = \P(h_t = h \mid \cU_t)
\quad \mbox{and} \quad \bb_t(h)
\end{equation}
are close enough;
this may be guaranteed by \Cref{assumption:strong-mixing-hmm} (exponentially fast forgetting
condition).

\paragraph{Summing confidence bounds.}
The core of the proof is to show that the confidence bonuses $\epsilon_{t,a}$
in \Cref{theorem:total-regret-bound} satisfy, with high probability,
uniformly over $T_0 \leq t \leq T$ and $a \in \cA$, that
\begin{multline}
\label{eq:core-proof-main}
\smash{\biggl|} \sum_{h \in \cH} \bb_t(h) \bphi(a, \bx_t)^{\transp} \btheta^{\star}_{h} \\
	- \sum_{h \in \cH} \hbb_t(h) \bphi(a, \bx_t)^{\transp} \hbtheta_{(s_t-1) \ell,h} \smash{\biggr|} \leq \eps_{t,a} + 2 T_0/\lambda\,,
\end{multline}
where $T_0$ is essentially the unknown constant threshold of \Cref{assumption:bound-belief}.
Based on that, classic manipulations entail that the pseudo-regret $R_T$ is
essentially bounded by
\[
2 \sum_{t \in [T]} \epsilon_{t,a_t} \quad \mbox{which is seen} \quad = \tO \bigl( T^{7/8} \bigr)
\]
by substituting classic linear-algebra bounds (the so-called elliptic potential lemma,
adapted to stages, see \citealp[Section~C]{abbasi2011improved})
and by carefully picking $\lambda$ and $\ell$ to optimize the bound.

Thus, the core of the proof is to show~\eqref{eq:core-proof-main}.

\paragraph{Three sums, including a difficult one.}
The left-hand side of~\eqref{eq:core-proof-main} is bounded by
the sum of two terms; first, $\Arrowvert \bb_t - \hbb_t \Arrowvert_1$, which is manageable
thanks to the estimation \Cref{assumption:bound-belief};
and second,
\begin{align*}
& \Bigl|\bigl(\hbb_t \otimes \bphi(a, \bx_t)\bigr)^{\transp} \bigl(\btheta^{\star} - \hbtheta_{(s_t-1) \ell} \bigr) \Bigr| \\
= & \ \Bigl(\hbb_t \otimes \bphi(a, \bx_t)\Bigr)^{\transp} \, G_{(s_t -1) \ell}^{-1} \, \times \, \\
& \quad  \bigl( \Sdiffp{(s_t-1) \ell} + \Sbeliefp{(s_t-1) \ell} + \Setap{(s_t-1) \ell} - \lambda \id{dH} \btheta^{\star} \bigr)
\end{align*}
where the equality follows
by substituting the very definition of $\hbtheta_{(s_t-1) \ell}$ and where
the exact definitions of the three $S$ terms are in Appendix~\ref{appendix:complex-analysis}.
We handle the term in the display above by a Cauchy-Schwarz inequality and the
boundedness \Cref{assumption:bound-R-linear}, together with
the fact the Euclidean norms
\[
\Arrowvert \Sdiffp{(s_t-1) \ell} \Arrowvert, \quad
\Arrowvert\Sbeliefp{(s_t-1) \ell}\Arrowvert, \quad
\Arrowvert\Setap{(s_t-1) \ell}\Arrowvert
\]
behave respectively as the absolute values of
\begin{align*}
S^\Delta = & \sum_{\tau=1}^{(s-1) \ell} \bphi(a_\tau,\bx_\tau)^{\transp} \biggl( \btheta^\star_{h_\tau}
- \sum_{h' \in [H]} \bar{\bb}_\tau(h') \btheta^\star_{h'} \biggr) \,, \\
S^b = & \sum_{\tau=1}^{(s-1)\ell} \sum_{h' \in [H]}  \bphi(a_\tau,\bx_\tau)^{\transp} \btheta^\star_{h'} \bigl( \bar{\bb}_\tau(h')
	- \hbb_\tau(h')  \bigr) \,, \\
S^\eta = & \sum_{\tau=1}^{(s-1)\ell} \eta_{\tau}(a_\tau) \,.
\end{align*}
Now, the term $S^b$ may be bounded by
\[
\sum_{\tau=1}^{(s-1)\ell} \bigl\Arrowvert \bar{\bb}_\tau
- \bb_\tau \bigr\Arrowvert_1
+ \sum_{\tau=1}^{(s-1)\ell} \bigl\Arrowvert \bb_\tau
- \hbb_\tau \bigr\Arrowvert_1\,,
\]
where \Cref{eq:Ut-close-enough} and \Cref{assumption:bound-belief}
take respective care of each sum.
The term $S^\eta$ could be bounded by resorting to martingale arguments (like
in the LinUCB analysis, though we rather mimic for it
in Appendix~\ref{appendix:complex-analysis} the proof scheme
used for $S^\Delta$ and described next).

The term $S^\Delta$ is the term that is difficult to control, see
the discussions in Appendix~\ref{sec:challenge-overcome}:
LinUCB-type analyses are not applicable.
Indeed, denoting
\[
z_\tau = \bphi(a_\tau,\bx_\tau)^{\transp} \biggl( \btheta^\star_{h_\tau}
- \sum_{h \in [H]} \bar{\bb}_\tau(h) \btheta^\star_h \biggr),
\]
where $|z_\tau| \leq 2$ by \Cref{assumption:bound-R-linear};
we have that
\[
\E \bigl[ z_\tau \mid U_{\tau} \bigr] = 0
\]
but $z_{\tau}$ is not $U_{\tau}$--measurable (as it explicitly depends on $h_\tau$).
However, we follow instead an approach by \citet{nelson2022ContextLatent}, which consists of
controlling $S^\Delta$ in $\L^2$--norm and applying Markov's inequality:
thanks to \Cref{assumption:strong-mixing-hmm} (exponentially fast forgetting
condition), $\E [ z_\tau z_{\tau'}]$ is exponentially small when
$\tau$ and $\tau'$ are separated, so that
\[
\E \bigl[ (S^\Delta)^2 \bigr] \quad \mbox{is of order} \quad s^2\ell\,,
\]
hence, $|S^\Delta|$ is smaller than $s\sqrt{\ell/\delta_s}$ with probability
at least $1-\delta_s$.

Collecting all elements, together with careful union bounds (taking $\delta_s = \delta/s_T$, where
$s_T$ denotes the stage of $T$), concludes the proof.
Again, the complete proof of \Cref{theorem:total-regret-bound} may be found in Appendix~\ref{appendix:complex-analysis}.

\section{Numerical Simulations}
\label{sec:simu}

We consider a partially simulated but realistic data set derived from the UCI ``Default of Credit Card Clients'' dataset (\citealp{UCI2016DefaultCR,Yeh2009creditcard}), in a banking marketing setup with three actions (calling a client;
emailing a client; not reaching out). Two latent states, inflation and recession, affect both context distributions and the rewards.

\Cref{fig:simulation-final} reports empirically estimated pseudo-regrets of the Box~A strategy (with stages of length $\ell=37$
or without stages, i.e., for $\ell =1$) versus a baseline formed by the LinUCB strategy by \citet{abbasi2011improved} in its standard form
(referred to as \emph{Plain LinUCB} in the picture). This baseline ignores the latent-state
dynamics altogether and therefore does not exploit either the HMM structure or the belief estimates;
as a result, it suffers linear pseudo-regret.
By contrast, the strategies developed achieve sublinear pseudo-regrets. 

Full simulation details, hyperparameter definitions, and robustness checks with respect to hyperparameter grids are provided in \Cref{appendix:simulation}.

\begin{figure}[hbt!]
\center \includegraphics[width=0.47\textwidth]{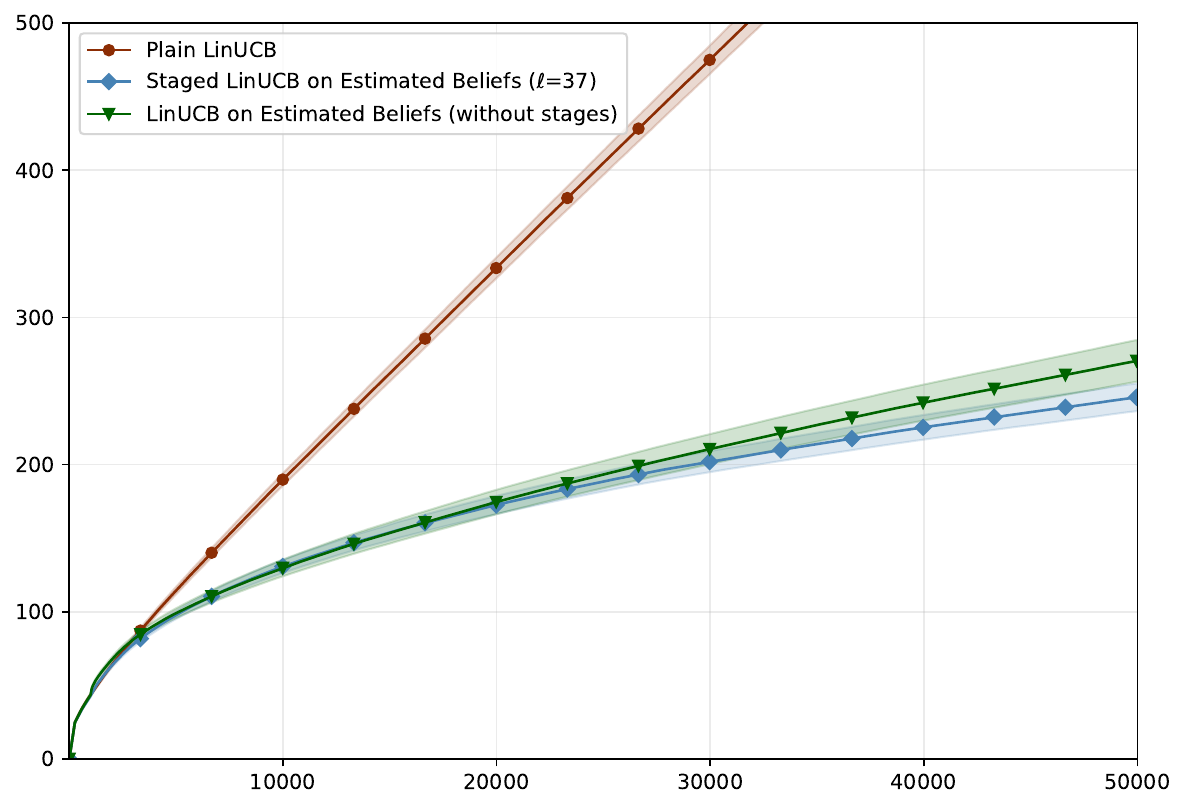}
\caption{\label{fig:simulation-final} Pseudo-regrets averaged over $100$ runs. Solid lines correspond to averages and 
shaded areas to $\pm 2$ standard errors.}
\end{figure}

\section{Limitations and Future Work}

The main open questions are around optimality:
first, showing that a $T^{3/4}$ rate on the pseudo-regret is inevitable,
even in the simplified reward model, due to belief estimation;
second, possibly improving the $T^{7/8}$ rate in the most complex reward model
into a $T^{3/4}$ rate by finding a more efficient theoretical argument than the $\mathbb{L}^2$--Markov
exhibited or, on the algorithmic front, by avoiding proceeding in stages. Indeed, the $\mathbb{L}^2$--Markov
argument entails dependencies on the probabilities of failure
as $1/\sqrt{\delta_s}$, instead of typical $\sqrt{\ln(1/\delta_s)}$ dependencies under
exponential-martingale arguments, and this worsened dependency
comes at a polynomial cost in the final regret bound.

\clearpage

\section*{Impact Statement}

This paper presents work whose goal is to advance the field of Machine
Learning. There are many potential societal consequences of our work, none
which we feel must be specifically highlighted here.

\bibliography{Li-Stoltz--HMM-Bandits--Bib}

\begin{thebibliography}{28}
\providecommand{\natexlab}[1]{#1}
\providecommand{\url}[1]{\texttt{#1}}
\expandafter\ifx\csname urlstyle\endcsname\relax
  \providecommand{\doi}[1]{doi: #1}\else
  \providecommand{\doi}{doi: \begingroup \urlstyle{rm}\Url}\fi

\bibitem[Abbasi-Yadkori et~al.(2011)Abbasi-Yadkori, P{\'a}l, and
  Szepesv{\'a}ri]{abbasi2011improved}
Abbasi-Yadkori, Y., P{\'a}l, D., and Szepesv{\'a}ri, C.
\newblock Improved algorithms for linear stochastic bandits.
\newblock \emph{Advances in Neural Information Processing Systems
  (NeurIPS'11)}, 24, 2011.

\bibitem[Anandkumar et~al.(2012)Anandkumar, Hsu, and
  Kakade]{anandkumar-2012-spectral-hmm}
Anandkumar, A., Hsu, D., and Kakade, S.~M.
\newblock A method of moments for mixture models and hidden {M}arkov models.
\newblock In \emph{Proceedings of the 25th Annual Conference on Learning Theory
  (COLT'2012)}, volume 23 of PMLR, pp.\  33.1--33.34, 2012.

\bibitem[Anandkumar et~al.(2014)Anandkumar, Ge, Hsu, Kakade, and
  Telgarsky]{anandkumar-2014-tensor-mixture}
Anandkumar, A., Ge, R., Hsu, D.~J., Kakade, S.~M., and Telgarsky, M.
\newblock Tensor decompositions for learning latent variable models.
\newblock \emph{Journal of Machine Learning Research}, 15\penalty0
  (1):\penalty0 2773--2832, 2014.

\bibitem[Austin \& Morgan(2025)Austin and
  Morgan]{austin2025ContextNonStationary}
Austin, E. and Morgan, L.~E.
\newblock Detecting changes and anomalies in nonstationary contextual bandits
  with an application to task categorisation.
\newblock \emph{Information Sciences}, 717:\penalty0 122270, 2025.

\bibitem[Azizzadenesheli et~al.(2016)Azizzadenesheli, Lazaric, and
  Anandkumar]{spectral-pomdps-2016}
Azizzadenesheli, K., Lazaric, A., and Anandkumar, A.
\newblock Reinforcement learning of {POMDPs} using spectral methods.
\newblock In \emph{Proceedings of the 29th Annual Conference on Learning Theory
  (COLT'2016)}, volume 49 of PMLR, pp.\  193--256, 2016.

\bibitem[Boyen \& Koller(1998)Boyen and Koller]{Boyen1998Stochastic}
Boyen, X. and Koller, D.
\newblock Tractable inference for complex stochastic processes.
\newblock In \emph{Proceedings of the Fourteenth Conference on Uncertainty in
  Artificial Intelligence (UAI'98)}, pp.\  33--42, 1998.

\bibitem[Br{\'e}g{\`e}re et~al.(2019)Br{\'e}g{\`e}re, Gaillard, Goude, and
  Stoltz]{bandtis-target-2019}
Br{\'e}g{\`e}re, M., Gaillard, P., Goude, Y., and Stoltz, G.
\newblock Target tracking for contextual bandits: Application to demand side
  management.
\newblock In \emph{Proceedings of the 36th International Conference on Machine
  Learning (ICML'20)}, volume 97 of PMLR, pp.\  754--763, 2019.

\bibitem[Capp{\'e} et~al.(2005)Capp{\'e}, Moulines, and
  Ryd{\'e}n]{olivier-2005-inf-hmm}
Capp{\'e}, O., Moulines, E., and Ryd{\'e}n, T.
\newblock \emph{Inference in Hidden Markov Models}.
\newblock Springer Series in Statistics. Springer, 2005.

\bibitem[Carpentier et~al.(2020)Carpentier, Vernade, and
  Abbasi-Yadkori]{Elliptical2020}
Carpentier, A., Vernade, C., and Abbasi-Yadkori, Y.
\newblock The elliptical potential lemma revisited, 2020.
\newblock Preprint, arXiv:2010.10182.

\bibitem[Chen \& Guestrin(2016)Chen and Guestrin]{Chen2016XGBoost}
Chen, T. and Guestrin, C.
\newblock {XGBoost}: A scalable tree boosting system.
\newblock In \emph{Proceedings of the 22nd ACM SIGKDD International Conference
  on Knowledge Discovery and Data Mining (KDD'16)}, pp.\  785--794, 2016.

\bibitem[Chu et~al.(2011)Chu, Li, Reyzin, and Schapire]{Chu2011ContextualBW}
Chu, W., Li, L., Reyzin, L., and Schapire, R.
\newblock Contextual bandits with linear payoff functions.
\newblock In \emph{Proceedings of the 14th International Conference on
  Artificial Intelligence and Statistics (AIStats'11)}, volume 15 of PMLR, pp.\
   208--214, 2011.

\bibitem[De~Castro et~al.(2017)De~Castro, Gassiat, and
  Le~Corff]{de2017consistent}
De~Castro, Y., Gassiat, E., and Le~Corff, S.
\newblock Consistent estimation of the filtering and marginal smoothing
  distributions in nonparametric hidden {M}arkov models.
\newblock \emph{IEEE Transactions on Information Theory}, 63\penalty0
  (8):\penalty0 4758--4777, 2017.

\bibitem[Ding \& Zhou(2007)Ding and Zhou]{ding2007eigenvalues}
Ding, J. and Zhou, A.
\newblock Eigenvalues of rank-one updated matrices with some applications.
\newblock \emph{Applied Mathematics Letters}, 20\penalty0 (12):\penalty0
  1223--1226, 2007.

\bibitem[Galozy et~al.(2025)Galozy, Nowaczyk, and
  Ohlsson]{galozy2025stateevobandit}
Galozy, A., Nowaczyk, S., and Ohlsson, M.
\newblock A new bandit setting balancing information from state evolution and
  corrupted context.
\newblock \emph{Data Mining and Knowledge Discovery}, 39\penalty0 (9), 2025.

\bibitem[Hong et~al.(2020{\natexlab{a}})Hong, Kveton, Zaheer, Chow, Ahmed, and
  Boutilier]{hong2020latentbandits}
Hong, J., Kveton, B., Zaheer, M., Chow, Y., Ahmed, A., and Boutilier, C.
\newblock Latent bandits revisited.
\newblock \emph{Advances in Neural Information Processing Systems
  (NeurIPS'20)}, 33, 2020{\natexlab{a}}.

\bibitem[Hong et~al.(2020{\natexlab{b}})Hong, Kveton, Zaheer, Chow, Ahmed,
  Ghavamzadeh, and Boutilier]{hong2020ContextLatent}
Hong, J., Kveton, B., Zaheer, M., Chow, Y., Ahmed, A., Ghavamzadeh, M., and
  Boutilier, C.
\newblock Non-stationary latent bandits, 2020{\natexlab{b}}.
\newblock Preprint, arXiv:2012.00386.

\bibitem[Hsu et~al.(2012)Hsu, Kakade, and Zhang]{spectral-2012}
Hsu, D., Kakade, S.~M., and Zhang, T.
\newblock A spectral algorithm for learning hidden {M}arkov models.
\newblock \emph{Journal of Computer and System Sciences}, 78\penalty0
  (5):\penalty0 1460--1480, 2012.

\bibitem[Kontorovich \& Weiss(2014)Kontorovich and
  Weiss]{kontorovich2014uniform}
Kontorovich, A. and Weiss, R.
\newblock Uniform {C}hernoff and {Dvoretzky-Kiefer-Wolfowitz}-type inequalities
  for {M}arkov chains and related processes.
\newblock \emph{Journal of Applied Probability}, 51\penalty0 (4):\penalty0
  1100--1113, 2014.

\bibitem[Krishnamurthy(2016)]{pomdp-krishnamurthy-2016}
Krishnamurthy, V.
\newblock \emph{Partially Observed Markov Decision Processes}.
\newblock Cambridge University Press, 2016.

\bibitem[Lattimore \& Szepesv{\'a}ri(2020)Lattimore and
  Szepesv{\'a}ri]{lattimore2020bandit}
Lattimore, T. and Szepesv{\'a}ri, C.
\newblock \emph{Bandit Algorithms}.
\newblock Cambridge University Press, 2020.

\bibitem[Li \& Stoltz(2022)Li and Stoltz]{CBwK-LP-2022}
Li, Z. and Stoltz, G.
\newblock Contextual bandits with knapsacks for a conversion model.
\newblock In \emph{Advances in Neural Information Processing Systems
  (NeurIPS'22)}, volume~35, 2022.

\bibitem[Nelson et~al.(2022)Nelson, Bhattacharjya, Gao, Liu, Bouneffouf, and
  Poupart]{nelson2022ContextLatent}
Nelson, E., Bhattacharjya, D., Gao, T., Liu, M., Bouneffouf, D., and Poupart,
  P.
\newblock Linearizing contextual bandits with latent state dynamics.
\newblock In \emph{Proceedings of the Thirty-Eighth Conference on Uncertainty
  in Artificial Intelligence (UAI'22)}, volume 180 of PMLR, pp.\  1477--1487,
  2022.

\bibitem[Robbins(1952)]{robbins1952}
Robbins, H.
\newblock Some aspects of the sequential design of experiments.
\newblock \emph{Bulletin of the American Mathematical Society}, 58\penalty0
  (5):\penalty0 527--535, 1952.

\bibitem[Thompson(1933)]{thompson1933MAB}
Thompson, W.
\newblock On the likelihood that one unknown probability exceeds another in
  view of the evidence of two samples.
\newblock \emph{Biometrika}, 25\penalty0 (3-4):\penalty0 285--294, 1933.

\bibitem[Wu et~al.(2018)Wu, Iyer, and Wang]{wu2018ContextNonStationary}
Wu, Q., Iyer, N., and Wang, H.
\newblock Learning contextual bandits in a non-stationary environment.
\newblock In \emph{Proceedings of the 41st International ACM SIGIR Conference
  on Research \& Development in Information Retrieval}, pp.\  495--504, 2018.

\bibitem[Yeh(2009)]{UCI2016DefaultCR}
Yeh, I.-C.
\newblock {Default of Credit Card Clients}.
\newblock UCI Machine Learning Repository, 2009.
\newblock {DOI}: https://doi.org/10.24432/C55S3H.

\bibitem[Yeh \& Lien(2009)Yeh and Lien]{Yeh2009creditcard}
Yeh, I.-C. and Lien, C.
\newblock The comparisons of data mining techniques for the predictive accuracy
  of probability of default of credit card clients.
\newblock \emph{Expert Systems with Applications}, 36\penalty0 (2):\penalty0
  2473--2480, 2009.

\bibitem[Zhou et~al.(2021)Zhou, Xiong, Chen, and Gao]{RegimeBandits-2021}
Zhou, X., Xiong, Y., Chen, N., and Gao, X.
\newblock Regime switching bandits.
\newblock In \emph{Advances in Neural Information Processing Systems
  (NeurIPS'21)}, volume~34, 2021.

\end{thebibliography}
\bibliographystyle{icml2026}

\newpage

\appendix

\onecolumn
\section{Algorithm and Analysis for the Simplified Reward Model of \citet{nelson2022ContextLatent}}
\label{appendix:simplified-analysis}

\Cref{sec:simplified} indicated that (a generalized version of) the simplified reward model by \citet{nelson2022ContextLatent}
may be stated as
\[
r'_t(a) = \sum_{h \in [H]} \bb_t(h) \, \bphi(a, \bx_t)^{\transp} \btheta^{\star}_h + \eta'_t(a)\,,
\qquad \mbox{where} \qquad \bb_t(h) = \P(h_t = h \mid \bx_{1:t})
\]
and where here, we assume that the noise terms satisfy a sub-Gaussian assumption as in \Cref{ass:SGnoise}:
denoting by
\[
\cFallp_{t} = \sigma\biggl( \Bigl(  h_\tau,\, \bx_\tau, \, \bigl( \eta'_\tau(a) \bigr)_{a \in \cA} \Bigr)_{\tau \leq t-1}, \, h_t, \, \bx_t \biggr)
\]
the filtration with respect to all random variables anterior to the $\eta'_t(a)$,
there exists $v_\eta$ such that for all $a \in \cA$,
\begin{equation}
\label{eq:ass:SG}
\E \bigl[\eta'_t(a) \mid \cFallp_t \bigr] = 0 \qquad \mbox{and} \qquad
\E \bigl[ \e^{\lambda \eta'_t(a)} \mid \cFallp_t \bigr] \leq \e^{\lambda^2 v_\eta^2/2} \,.
\end{equation}

\paragraph{Aim of this appendix.}
This appendix recalls the main claim by \citet{nelson2022ContextLatent},
namely, how
a reduction to standard linear contextual bandits may be performed for the reward model above.
Unlike \citet{nelson2022ContextLatent},
we also provide a straightforward analysis based on the LinUCB analysis,
taking into account the belief estimation error
(see Appendix~\ref{sec:belief-estimation-error} for a description
of a belief estimation routine and its associated guarantees), and yielding high-probability
bounds (not only bounds in expectation); no HMM forgetting
properties are required to that end.

Actually, the more complex analysis by \citet{nelson2022ContextLatent}, which, in particular,
relies on HMM forgetting properties
(as in \Cref{assumption:strong-mixing-hmm}, see more generally Appendix~\ref{appendix:HMM-forgetting}),
is only required because of the relaxation considered on the noise terms:
\citet{nelson2022ContextLatent} only assume that conditional second-order moments are bounded,
as in \Cref{assumption:idio-noise}. We see this relaxation as unimportant.

\paragraph{Algorithm.}
We use the reduction to linear contextual bandits pointed out by \citet{nelson2022ContextLatent},
discussed in \Cref{sec:simplified}, and relying on the rewriting
\[
r'_t(a) = \sum_{h \in [H]} \bb_t(h) \, \bphi(a, \bx_t)^{\transp} \btheta^{\star}_h + \eta'_t(a)
= \bigl( \bb_t \otimes \bphi(a, \bx_t) \bigr)^{\transp} \btheta^{\star} + \eta'_t(a)\,,
\qquad \mbox{where} \qquad \btheta^{\star} = \bigl( \btheta^{\star}_{h} \bigr)_{h \in [H]}\,;
\]
the quantities $\bb_t \otimes \bphi(a, \bx_t)$ act as (unknown) contexts and mean rewards
depend linearly on them, via the $dH$--dimensional parameter $\btheta^{\star}$.
We also consider a belief estimation subroutine $\cB$,
as discussed in \Cref{sec:belief-estimation} (see also Appendix~\ref{sec:belief-estimation-error}),
so as to replace the unknown contexts by known estimated contexts.
Fix a regularization parameter $\lambda > 0$.
At the end of each round $t \geq 1$, the algorithm computes
\begin{multline}
\label{eq:estimate-theta-simplified}
\hbtheta_t = \bigl(\hbtheta_{t,h}\bigr)_{h \in [H]}
\eqdef G_t^{-1} \sum_{\tau=1}^{t} \bigl(\hbb_\tau \otimes \bphi(a_\tau, \bx_\tau)\bigr) \, r'_\tau(a_\tau)\,, \\
\mbox{where} \qquad
G_t \eqdef \sum_{\tau=1}^{t} \bigl(\hbb_\tau \otimes \bphi(a_\tau, \bx_\tau)\bigr)
\bigl(\hbb_\tau \otimes \bphi(a_\tau, \bx_\tau)\bigr)^{\!\transp} + \lambda \id{dH}\,,
\end{multline}
based, in particular, on the estimated belief $\hbb_t$ obtained from $\cB$ at the beginning of round $t$.
Then, in the next round $t+1$,
\[
\mbox{mean rewards} \ \ \sum_{h \in [H]} \bb_{t+1}(h) \, \bphi(a, \bx_{t+1})^{\transp} \btheta^{\star}_h
\qquad \mbox{are estimated by} \qquad
\hr_{t+1}(a) \eqdef \sum_{h \in [H]} \hbb_{t+1}(h) \, \bphi(a, \bx_{t+1})^{\transp} \hbtheta_{t,h}\,.
\]
The action $a_{t+1}$ to be played at round $t+1$ is picked in an optimistic way
as the action maximizing $\hr_{t+1}(a)$ plus some confidence bonus over $a \in \cA$.
The corresponding algorithm is formally stated in Box~B.
It corresponds to a LinUCB approach (\citealp{abbasi2011improved})
with contexts computed based on estimated beliefs; the mere difference
to the main algorithm of Box~A is that it does not proceed in stages.

\begin{figure}[h]
	\begin{nbox}[title={Box B: LinUCB on estimated beliefs (without stages)}]
		\textbf{Known parameters:}
		finite action set $\cA$; finite state space $[H]$; context space $\cX$;
		transfer function $\bphi: \cA \times \cX \to \R^d$ \hspace{-1cm} \, \smallskip
		
		\textbf{Unknown parameters:}
		HMM parameters, given by a transition matrix $\bM = (M_{h,h'})_{(h,h') \in [H]}$
		and emission distributions $(\nu_h)_{h \in [H]}$ over $\cX$;
        reward parameters $\btheta^{\star}_{h} \in \R^d$, for $h \in [H]$ \smallskip
		
		\textbf{Inputs:} risk $\delta \in (0,1)$; belief estimation subroutine $\cB$ (see Section~\ref{sec:belief-estimation});
		regularization parameter $\lambda > 0$; closed-form
		expression for the confidence bonuses $\eps_{t,a}$, possibly depending on $\delta$ and $\lambda$ \smallskip
		
		\textbf{Initialization:} the learner sets $\hbtheta_0 = (1/\lambda) \, \bone \in \R^{dH}$ \medskip
		
		\textbf{For} rounds $t \geq 1$ \textbf{the learner:} \smallskip
		\begin{enumerate}
			\item Observes the context $\bx_{t}$, drawn independently by the environment from $\nu_{h_{t}}$; \smallskip
			\item Obtains the belief estimate $\hbb_{t}$ by feeding $\bx_1,\ldots,\bx_{t}$ to the subroutine $\cB$; \smallskip
			\item Computes the estimated mean rewards \quad $\displaystyle{\hr_{t}(a) = \sum_{h \in \cH} \hbb_{t}(h) \, \bphi(a, \bx_t)^{\transp} \hbtheta_{t-1,h}}$ \quad for all $a \in \cA$;
			\item Picks an action $\displaystyle{a_{t} \in \argmax_{a \in \cA} \bigl\{ \hat{r}_{t}(a) + \eps_{t, a}} \bigr\}$\,;
			\item Obtains and observes the reward \quad $\displaystyle{r'_t(a_t) = \sum_{h \in [H]} \bb_{t}(h) \, \bphi(a_t, \bx_t)^{\transp} \btheta^\star_{h} + \eta'_t(a_t)}$\,;
			\item Computes $\hbtheta_t$ as in \Cref{eq:estimate-theta-simplified}.
		\end{enumerate}
	\end{nbox}
\vspace{-.5cm}
\end{figure}

\paragraph{Analysis.}
At a high level, the analysis adapts the LinUCB proof to handle the substitution of the
true contexts $\bb_t \otimes \bphi(a, \bx_t)$ by estimations thereof.
We follow closely classic analyses of LinUCB (the original reference by \citealp{abbasi2011improved},
the monograph by \citealp[Chapters~19 and~20]{lattimore2020bandit}, as well
as the extension by \citealp{bandtis-target-2019}), with occasional simplifications
or shortcuts---e.g., we avoid stating confidence ellipsoids on the $\btheta^{\star}_{h}$
and rather focus on confidence intervals on the mean payoffs, as studied
in \Cref{lm:est-simple} below. The bounds obtained in the sequel
corresponds to the classic bound when $\hbb_t = \bb_t$, i.e., if there was no estimation error for the beliefs.
The formal aim is to prove the following theorem.

\begin{restatable}{theorem}{totalregretboundsimplified}
	\label{theorem:total-regret-bound-simplified}
	Assume that the horizon $T$ is known to the learner and fix $\delta \in (0, 1)$.
	Consider the strategy of Box~B with a belief
	estimation subroutine satisfying \Cref{assumption:bound-belief},
	with $\lambda = T^{1/2}$ and with the confidence bonuses~\eqref{eq:def-esp-simple}.
	Then, under the boundedness stated in \Cref{assumption:bound-R-linear}
    and under the sub-Gaussian noise assumption~\eqref{eq:ass:SG},
	with probability at least $1-\delta$,
	up to poly-log factors,
	\[
	R_T = \tO \bigl(T^{3/4} \bigr)\,,
	\]
	where a closed-form expression of the regret bound may be found in
	the proof, see \Cref{eq:regret-bound-final-simplified}.
\end{restatable}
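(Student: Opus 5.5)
We follow the standard LinUCB analysis of \citet{abbasi2011improved}, with the unknown contexts $\bb_t \otimes \bphi(a,\bx_t)$ replaced by the estimated ones $\hbb_t \otimes \bphi(a,\bx_t)$, and we track the resulting misspecification as an additive bias. We write $\bu_t(a) = \hbb_t \otimes \bphi(a,\bx_t)$.

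\textbf{Step 1: decomposition of the estimation error.} Substituting $r'_\tau(a_\tau) = \bigl(\bb_\tau \otimes \bphi(a_\tau,\bx_\tau)\bigr)^{\transp}\btheta^\star + \eta'_\tau(a_\tau)$ into \Cref{eq:estimate-theta-simplified}, we get
\[
\hbtheta_t - \btheta^\star = G_t^{-1}\Bigl( \sum_{\tau \leq t} \bu_\tau(a_\tau)\, \eta'_\tau(a_\tau) + \sum_{\tau \leq t} \bu_\tau(a_\tau)\, \beta_\tau - \lambda \btheta^\star \Bigr),
\]
where $\beta_\tau = \bigl((\bb_\tau - \hbb_\tau)\otimes \bphi(a_\tau,\bx_\tau)\bigr)^{\transp}\btheta^\star$. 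By \Cref{assumption:bound-R-linear}, $|\beta_\tau| \leq \|\bb_\tau - \hbb_\tau\|_1$.

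\textbf{Step 2: bounding each term.} For any $a$, we bound $|\bu_{t+1}(a)^{\transp}(\hbtheta_t - \btheta^\star)|$ term by term.
\begin{itemize}
\item \emph{Noise term.} Since $\hbb_\tau$ depends only on contexts, $\bu_\tau(a_\tau)$ is $\cFallp_\tau$-measurable. The self-normalized martingale inequality of \citet{abbasi2011improved} therefore applies directly under \eqref{eq:ass:SG}. Combined with Cauchy--Schwarz, it gives a contribution $\|\bu_{t+1}(a)\|_{G_t^{-1}}\, v_\eta \sqrt{2\ln(2/\delta) + dH \ln(1 + t/(dH\lambda))}$, uniformly in $t$ with probability $1-\delta/2$.
\item \emph{Regularization term.} It gives $\sqrt{\lambda}\,\sqrt{H}\,C_{\btheta^\star}\,\|\bu_{t+1}(a)\|_{G_t^{-1}}$.
\item \emph{Bias term.} We use the crude bound $\|G_t^{-1}\|_{\mathrm{op}} \leq 1/\lambda$ together with $\|\bu\|_2 \leq 1$ from \eqref{eq:bbbphi-norm}. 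This gives at most $\frac1\lambda \sum_{\tau\leq t}\|\bb_\tau - \hbb_\tau\|_1$.
\end{itemize}
On the event of \Cref{assumption:bound-belief} at level $\delta/2$, the bias term is at most $\frac1\lambda\bigl(2T_0 + \sum_{\tau\leq t} \Ubelief(\tau,\delta/2)\bigr)$, where $T_0 = T_{\cB,\bM,\nu}(1+\ln(2/\delta))$ and rounds before $T_0$ are bounded by $2$ each. Finally, replacing $\hbb_{t+1}$ by $\bb_{t+1}$ in the target mean reward costs an extra $\Ubelief(t+1,\delta/2)$.

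\textbf{Step 3: confidence bonuses and regret.} We define the bonus \eqref{eq:def-esp-simple} as
\[
\eps_{t,a} = \Ubelief(t,\delta/2) + \beta_{t-1}(\delta)\,\|\bu_t(a)\|_{G_{t-1}^{-1}} + \frac1\lambda\sum_{\tau<t}\Ubelief(\tau,\delta/2),
\]
where $\beta_{t-1}(\delta)$ collects the noise and regularization radii from Step 2. For $t \leq T_0$ we use the trivial bound of $2$ on the instantaneous regret. Optimism then yields
\[
R_T \leq 2T_0 + 2\sum_t \eps_{t,a_t} + \O(T \cdot T_0/\lambda).
\]
The elliptical potential lemma, applied to the observed vectors $\bu_t(a_t)$, gives
\[
\sum_t \|\bu_t(a_t)\|_{G_{t-1}^{-1}} \leq \sqrt{2T\,dH\ln(1+T/(dH\lambda))}.
\]
With $\lambda=\sqrt T$, we have $\beta_T = \tO(T^{1/4})$, so the LinUCB part is $\tO(T^{3/4})$. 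By \Cref{assumption:bound-belief}, the direct belief part $\sum_t \Ubelief(t,\delta/2)$ is $\tO(T^{1/2})$. The cumulative bias part is $\frac T\lambda\,\tO(T^{1/2}) = \tO(T)$, which is the main obstacle.

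\textbf{Main obstacle and fix.} The crude operator-norm bound on the bias is too lossy. Instead, we bound the bias contribution via Cauchy--Schwarz in the $G_t^{-1}$-geometry:
\[
\Bigl|\bu^{\transp}G_t^{-1}\sum_\tau \bu_\tau\beta_\tau\Bigr| \leq \|\bu\|_{G_t^{-1}}\Bigl\|\sum_\tau \bu_\tau\beta_\tau\Bigr\|_{G_t^{-1}}.
\]
The second factor satisfies $\|\sum_\tau \bu_\tau\beta_\tau\|_{G_t^{-1}} \leq \sqrt{\sum_\tau\beta_\tau^2}$, since $\sum_\tau \bu_\tau\beta_\tau = U^{\transp}\beta$ and $U G_t^{-1} U^{\transp} \preceq \id{}$. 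Hence this factor is at most $\sqrt{\sum_\tau \Ubelief(\tau,\delta/2)^2 + 4T_0}$, and the sum of squared errors is $\tO(1)$ (polylogarithmic) since $\Ubelief(\tau,\delta) \sim 1/\sqrt\tau$. The bias then simply enlarges $\beta_t$ by a polylogarithmic amount plus $2\sqrt{T_0}$. This removes the linear term and keeps the total at $\tO(T^{3/4})$, or even better. The final closed form is collected in \Cref{eq:regret-bound-final-simplified}, and a union bound over the two events of probability $1-\delta/2$ finishes the proof.
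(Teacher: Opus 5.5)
Your architecture matches the paper's (\Cref{lm:est-simple} and the proof that follows it). You use the same decomposition of $\hbtheta_{t-1}-\btheta^\star$ into regularization, belief-bias and noise terms. You apply the self-normalized inequality of \citet{abbasi2011improved} to the noise, which is legitimate because $\hbb_\tau$ depends on contexts only. You charge $2$ per round during a burn-in of length $T_0$ plus an additive $O(TT_0/\lambda)$, and you conclude with the elliptical potential lemma.

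The genuine difference is the bias term $S=\sum_{\tau<t}\bu_\tau(a_\tau)\beta_\tau$. The paper takes exactly the Cauchy--Schwarz step of your fix, then uses $G_{t-1}\succeq\lambda\id{dH}$ to get $\|S\|_{G_{t-1}^{-1}}\le\lambda^{-1/2}\|S\|_2\le\lambda^{-1/2}\sum_{\tau<t}\|\bb_\tau-\hbb_\tau\|_1$. This is precisely what the prescribed bonus \eqref{eq:def-esp-simple} encodes. It contributes $\sqrt{T}\cdot\sqrt{T/\lambda}=T^{3/4}$ at $\lambda=\sqrt{T}$, so no linear term ever arises. Your ``crude'' bound looked linear only because it amounts to replacing the factor $\|\bu_t(a)\|_{G_{t-1}^{-1}}$ by its worst case $\lambda^{-1/2}$; the elliptical potential lemma sums that factor to $\tO(\sqrt{T})$.

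Your hat-matrix bound $\|S\|_{G_{t-1}^{-1}}\le\|\beta\|_2$ is correct, since $U(\lambda\id{dH}+U^{\transp}U)^{-1}U^{\transp}\preceq\id{t-1}$. It is also $\lambda$-free. When belief errors decay pointwise like $\tau^{-1/2}$, as for the spectral subroutine of \Cref{lm:beliefestimationerror}, it makes the bias contribution only $\sqrt{T}$ times poly-logs. The paper's bound is a one-liner, needs only the summed control in \Cref{assumption:bound-belief}, and is what the statement's bonuses are built on.

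Three points need repair, though none changes the $T^{3/4}$ conclusion.

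\emph{The bonus.} The statement fixes the bonuses to \eqref{eq:def-esp-simple}, whereas your final argument analyzes a different bonus, built on $(\sum_\tau\Ubelief(\tau,\delta/2)^2)^{1/2}$. To prove the statement as written, you must show that the prescribed bonus dominates the true width, which is exactly the $\lambda^{-1/2}$ bound above.

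\emph{The squared belief errors.} \Cref{assumption:bound-belief} only controls $\sum_t\Ubelief(t,\delta)$, not $\Ubelief(\tau,\delta)\lesssim\tau^{-1/2}$ pointwise. So $\sum_\tau\Ubelief(\tau,\delta/2)^2=\tO(1)$ is not available in general. Capping $\Ubelief$ at $2$ is harmless since $\|\bb_\tau-\hbb_\tau\|_1\le2$, but it only gives $\sum_\tau\min\{\Ubelief,2\}^2\le2\sum_\tau\Ubelief=\tO(\sqrt{T})$. That is a factor $\tO(T^{1/4})$ and again $T^{3/4}$.

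\emph{The burn-in.} $T_0$ is unknown to the learner, so $2\sqrt{T_0}$ cannot enter the bonus. As an additive correction it must also be paid at the unplayed action $a^{\star}_t$, where the elliptical potential lemma is unavailable. Bounding $\|\bu_t(a)\|_{G_{t-1}^{-1}}\le\lambda^{-1/2}$ then costs $O(T\sqrt{T_0/\lambda})=O(T^{3/4}\sqrt{T_0})$. You can get $O(TT_0/\lambda)$ instead by applying the paper's $\lambda^{-1/2}$ bound to rounds $\tau<T_0$ and your hat-matrix bound only to $\tau\ge T_0$.

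Finally, with $\lambda=\sqrt{T}$ imposed, the regularization term is already $\sqrt{T\lambda}=T^{3/4}$. So your ``or even better'' does not materialize for this theorem. Your route would only pay off under a different choice of $\lambda$ together with pointwise-decaying belief errors.
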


The total regret bound is basically given by $2$ times the sum of the upper confidence bounds
of \Cref{lm:est-simple} below, which we prove first.

\begin{lemma}
\label{lm:est-simple}
Under \Cref{assumption:bound-R-linear} and for sub-Gaussian noise terms as in \Cref{eq:ass:SG},
with probability at least $1-\delta$, for all $t \geq 2$,
\begin{multline*}
\forall a \in \cA, \qquad\qquad \left| \sum_{h \in \cH} \bb_t(h) \, \bphi(a, \bx_t)^{\transp} \btheta^{\star}_{h}
- \sum_{h \in \cH} \hbb_t(h) \, \bphi(a, \bx_t)^{\transp} \hbtheta_{t-1,h} \right| \\ \leq
\bigl\Arrowvert \bb_t - \hbb_t \bigr\Arrowvert_1 + \Bigl\Arrowvert \hbb_t \otimes \bphi(a, \bx_t) \Bigr\Arrowvert_{G_{t-1}^{-1}} \, \Biggl( \frac{1}{\sqrt{\lambda}}\sum_{\tau=1}^{t-1} \Arrowvert \bb_{\tau} - \hbb_{\tau} \Arrowvert_1  + \sqrt{\lambda H} \, C_{\btheta^{\star}}
+ v_\eta \sqrt{2\ln(1/\delta) + dH \ln\bigl(1 + t/(\lambda d H)\bigr)} \Biggr)\,.
\end{multline*}
\end{lemma}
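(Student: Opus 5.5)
We will prove \Cref{lm:est-simple} by following the standard LinUCB confidence-interval argument, with one modification: the true contexts $\bb_\tau \otimes \bphi(a_\tau,\bx_\tau)$ enter the rewards, but the Gram matrix and the estimator use the estimated contexts $\hbb_\tau \otimes \bphi(a_\tau,\bx_\tau)$. Write $\hat{\bz}_\tau = \hbb_\tau \otimes \bphi(a_\tau,\bx_\tau)$ and $\bz_\tau = \bb_\tau \otimes \bphi(a_\tau,\bx_\tau)$, and fix $t \geq 2$ and $a\in\cA$.

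\textbf{Step 1: first split.} Insert the intermediate quantity $(\hbb_t \otimes \bphi(a,\bx_t))^\transp \btheta^\star$. The difference between the true mean and this quantity is $\sum_h (\bb_t(h)-\hbb_t(h))\,\bphi(a,\bx_t)^\transp\btheta^\star_h$. Since $|\bphi^\transp\btheta^\star_h|\leq 1$ by \Cref{assumption:bound-R-linear}, this is at most $\Arrowvert \bb_t-\hbb_t\Arrowvert_1$. It remains to bound $|(\hbb_t\otimes\bphi(a,\bx_t))^\transp(\btheta^\star-\hbtheta_{t-1})|$.

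\textbf{Step 2: decompose the estimation error.} Using $r'_\tau(a_\tau)=\bz_\tau^\transp\btheta^\star+\eta'_\tau(a_\tau)$ and $G_{t-1}\btheta^\star=\sum_{\tau}\hat{\bz}_\tau\hat{\bz}_\tau^\transp\btheta^\star+\lambda\btheta^\star$, we get
\[
\hbtheta_{t-1}-\btheta^\star = G_{t-1}^{-1}\Bigl(\sum_{\tau=1}^{t-1}\hat{\bz}_\tau(\bz_\tau-\hat{\bz}_\tau)^\transp\btheta^\star + \sum_{\tau=1}^{t-1}\hat{\bz}_\tau\eta'_\tau(a_\tau) - \lambda\btheta^\star\Bigr).
\]
Applying Cauchy--Schwarz in the $G_{t-1}^{-1}$ geometry, $|\bu^\transp G_{t-1}^{-1}\bv|\leq \Arrowvert\bu\Arrowvert_{G_{t-1}^{-1}}\Arrowvert\bv\Arrowvert_{G_{t-1}^{-1}}$, reduces the problem to bounding $\Arrowvert\cdot\Arrowvert_{G_{t-1}^{-1}}$ of each of the three vectors. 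We treat them in turn.

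\emph{Regularization term.} Since $G_{t-1}\succeq\lambda\id{dH}$, we have $\Arrowvert\lambda\btheta^\star\Arrowvert_{G_{t-1}^{-1}}\leq\sqrt\lambda\,\Arrowvert\btheta^\star\Arrowvert_2\leq\sqrt{\lambda H}\,C_{\btheta^\star}$.

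\emph{Belief-error term.} By the triangle inequality, each summand satisfies $\Arrowvert\hat{\bz}_\tau\Arrowvert_{G_{t-1}^{-1}}\leq\Arrowvert\hat{\bz}_\tau\Arrowvert_2/\sqrt\lambda\leq1/\sqrt\lambda$, using \eqref{eq:bbbphi-norm}. The scalar factor satisfies $|(\bz_\tau-\hat{\bz}_\tau)^\transp\btheta^\star|=|\sum_h(\bb_\tau(h)-\hbb_\tau(h))\bphi(a_\tau,\bx_\tau)^\transp\btheta^\star_h|\leq\Arrowvert\bb_\tau-\hbb_\tau\Arrowvert_1$. Together these give the term $\lambda^{-1/2}\sum_\tau\Arrowvert\bb_\tau-\hbb_\tau\Arrowvert_1$.

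\emph{Noise term.} Here we invoke the self-normalized martingale bound of \citet[Theorem~1]{abbasi2011improved} together with its determinant--trace bound. This is the step to check carefully, for measurability. The vectors $\hat{\bz}_\tau$ must be predictable and $\eta'_\tau(a_\tau)$ conditionally $v_\eta$--sub-Gaussian with respect to a common filtration. We take $\mathcal{G}_\tau=\sigma(\cFallp_\tau, (\eta'_{\tau'}(a))_{\tau'<\tau,a})$:
\begin{itemize}
\item $a_\tau$ and $\hbb_\tau$ are functions of $\bx_{1:\tau}$ and past rewards, which are themselves functions of $\bx$'s, $h$'s and past noises, so $\hat{\bz}_\tau$ is $\mathcal{G}_\tau$--measurable;
\item \eqref{eq:ass:SG} holds for each fixed $a$, hence also for $a_\tau$, which is $\mathcal{G}_\tau$--measurable (sum over the finitely many actions with indicators).
\end{itemize}
The theorem then gives, with probability at least $1-\delta$ and simultaneously for all $t$, $\Arrowvert\sum_\tau\hat{\bz}_\tau\eta'_\tau\Arrowvert_{G_{t-1}^{-1}}\leq v_\eta\sqrt{2\ln(1/\delta)+\ln(\det G_{t-1}/\lambda^{dH})}$. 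Because $\Arrowvert\hat{\bz}_\tau\Arrowvert_2\leq1$, the AM--GM bound on the determinant gives $\ln(\det G_{t-1}/\lambda^{dH})\leq dH\ln(1+t/(\lambda dH))$.

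\textbf{Conclusion.} Summing the three contributions and adding Step 1 yields the claimed inequality on that single high-probability event, uniformly over $t\geq2$ and $a$. Steps 1 and 2 are deterministic, so no union bound is needed. The main obstacle is therefore the measurability verification in the noise term, which is what allows the martingale argument without any HMM forgetting property.
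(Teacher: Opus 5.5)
Your proof is correct and follows the paper's argument essentially step for step. It uses the same split off of $\Arrowvert \bb_t - \hbb_t \Arrowvert_1$, the same three-term decomposition of $\hbtheta_{t-1} - \btheta^\star$ bounded by Cauchy--Schwarz in the $G_{t-1}^{-1}$ geometry, and the same appeal to Theorem~1 and Lemma~10 of \citet{abbasi2011improved} for the noise term. Your filtration $\mathcal{G}_\tau$ simply coincides with $\cFallp_\tau$, which already contains the past noise terms; up to an index shift, this is exactly the paper's choice $\cF_\tau = \cFallp_{\tau+1}$.
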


In the case $t = 1$, by several triangle inequalities, \Cref{assumption:bound-R-linear}, the fact that $\bb_1$ and $\hbb_1$ are probability vectors, and a Cauchy-Schwarz inequality, we have, with probability~$1$: for all $a \in \cA$,
\[
\left| \sum_{h \in \cH} \bb_1(h) \, \bphi(a, \bx_1)^{\transp} \btheta^{\star}_{h}
- \sum_{h \in \cH} \hbb_1(h) \, \bphi(a, \bx_t)^{\transp} \hbtheta_{0,h} \right|
\leq \smash{\max_{h \in [H]} \overbrace{\bigl| \bphi(a, \bx_t)^{\transp} \btheta^{\star}_{h} \bigr|}^{\leq 1}
+ \max_{h \in [H]} \overbrace{\bigl| \bphi(a, \bx_t)^{\transp} \hbtheta_{0,h} \bigr|}^{\leq
\sqrt{d}/\lambda}
\leq 1 + \frac{\sqrt{d}}{\lambda}}\,,
\]
where we resorted to the Cauchy-Schwarz inequality
$\Arrowvert \bphi(a, \bx_1) \bigr\Arrowvert_2 \, \bigl\Arrowvert \hbtheta_{0,h} \bigr\Arrowvert_2
\leq 1 \times \sqrt{d}/\lambda$, since $\hbtheta_{0,h} = \bone \in \R^d$.

\begin{proof}
	By a triangle inequality and by the boundedness stated in \Cref{assumption:bound-R-linear},
	\begin{multline}
		\left| \sum_{h \in \cH} \bb_t(h) \, \bphi(a, \bx_t)^{\transp} \btheta^{\star}_{h}
			- \sum_{h \in \cH} \hbb_t(h) \, \bphi(a, \bx_t)^{\transp} \hbtheta_{t-1,h} \right| \\
		\leq \smash{\underbrace{\Biggl| \sum_{h \in \cH} \bigl(\bb_t(h) - \hbb_t(h) \bigr) \overbrace{\bphi(a, \bx_t)^{\transp} \btheta^{\star}_{h}}^{|\,\cdot\,| \leq 1} \Biggr|}_{\leq \Arrowvert \bb_t - \hbb_t \Arrowvert_1}} +
		\label{eq:rest-proof-eq-simplified}
		\Biggl| \underbrace{\sum_{h \in \cH} \hbb_t(h) \bphi(a, \bx_t)^{\transp} \bigl(\btheta^{\star}_{h} - \hbtheta_{t-1,h} \bigr)}_{=
			(\hbb_t \otimes \bphi(a, \bx_t))^{\transp} (\btheta^{\star} - \hbtheta_{t-1})} \Biggr|\,.
	\end{multline}
    The rest of the proof bounds the second term in the upper bound of \Cref{eq:rest-proof-eq-simplified}.
    To that end, we rewrite $r'_{\tau}(a_{\tau})$ as
	\[
		r'_{\tau}(a_{\tau})
        = \bigl( (\bb_{\tau} - \hbb_{\tau}) \otimes \bphi(a_{\tau},\bx_{\tau}) \bigr)^{\transp} \btheta^\star
        + \bigl(\hbb_\tau \otimes \bphi(a_\tau, \bx_\tau)\bigr)^{\!\transp} \btheta^{\star} + \eta'_{\tau}(a_{\tau})
	\]
    and also note that by the definition of $G_{t-1}$ in~\Cref{eq:estimate-theta-simplified},
	\begin{align*}
		\btheta^{\star} - \hbtheta_{t-1}
		& = G_{t-1}^{-1} \left( G_{t-1} \, \btheta^{\star} -
		\sum_{\tau=1}^{t-1} \bigl(\hbb_\tau \otimes \bphi(a_\tau, \bx_\tau)\bigr) r'_\tau(a_\tau) \right) \\
		& = G_{t-1}^{-1} \left(\lambda \btheta^{\star} -
		\sum_{\tau=1}^{t-1} \bigl(\hbb_\tau \otimes \bphi(a_\tau, \bx_\tau)\bigr) \Bigl(r'_\tau(a_\tau)  -
		\bigl(\hbb_\tau \otimes \bphi(a_\tau, \bx_\tau)\bigr)^{\!\transp} \btheta^{\star}\Bigr) \right).
	\end{align*}
    Thanks to these two equalities, we may decompose the second term in the upper bound of \Cref{eq:rest-proof-eq-simplified} as
    \begin{multline*}
    \bigl( \hbb_t \otimes \bphi(a, \bx_t) \bigr)^{\transp} (\btheta^{\star} - \hbtheta_{t-1})
    = \bigl( \hbb_t \otimes \bphi(a, \bx_t) \bigr)^{\transp} \, G_{t-1}^{-1}
    \bigl( \lambda \btheta^{\star} - \Sdiff{t-1} - \Seta{t-1} \bigr)
    \\ \mbox{where} \qquad
    \Sdiff{t-1} = \sum_{\tau=1}^{t-1} \bigl(\hbb_\tau \otimes \bphi(a_\tau, \bx_\tau)\bigr)
    \bigl( (\bb_{\tau} - \hbb_{\tau}) \otimes \bphi(a_{\tau},\bx_{\tau}) \bigr)^{\transp} \btheta^\star
    \quad \mbox{and} \quad
    \Seta{t-1} = \sum_{\tau=1}^{t-1} \bigl(\hbb_\tau \otimes \bphi(a_\tau, \bx_\tau)\bigr) \eta'_{\tau}(a_{\tau})\,.
    \end{multline*}
    A Cauchy-Schwarz inequality for the inner product induced by $G_{t-1}^{-1}$, together with a triangle inequality,
    entails
    \[
    \Bigl| \bigl( \hbb_t \otimes \bphi(a, \bx_t) \bigr)^{\transp} (\btheta^{\star} - \hbtheta_{t-1}) \Bigr|
    \leq \Bigl\Arrowvert \hbb_t \otimes \bphi(a, \bx_t) \Bigr\Arrowvert_{G_{t-1}^{-1}} \,
    \Bigl( \lambda \Arrowvert \btheta^{\star} \Arrowvert_{G_{t-1}^{-1}} + \Arrowvert \Sdiff{t-1} \Arrowvert_{G_{t-1}^{-1}}
    + \Arrowvert \Seta{t-1} \Arrowvert_{G_{t-1}^{-1}} \Bigr)\,.
    \]
    Since $G_{t-1} \succeq \lambda \id{dH}$,
    \begin{equation}
    \label{eq:G-norm-bdl}
    \forall \bu \in \R^{dH}\,, \qquad \Arrowvert \bu \Arrowvert_{G_{t-1}^{-1}}
    \leq \frac{\Arrowvert \bu \Arrowvert_2}{\sqrt{\lambda}}\,.
    \end{equation}
    Therefore, using~\Cref{assumption:bound-R-linear},
    \[
    \lambda \Arrowvert \btheta^{\star} \Arrowvert_{G_{t-1}^{-1}} \leq \sqrt{\lambda} \, \Arrowvert \btheta^{\star} \Arrowvert_2 \,,
    \leq \sqrt{\lambda H}\,C_{\btheta^{\star}}
    \]
    and, again by~\Cref{assumption:bound-R-linear}, by~\Cref{eq:bbbphi-norm}, and by a triangular inequality,
    \[
    \Arrowvert \Sdiff{t-1} \Arrowvert_{G_{t-1}^{-1}} \leq \frac{1}{\sqrt{\lambda}}
    \Biggl\Arrowvert \sum_{\tau=1}^{t-1} \sum_{h \in [H]}
	\overbrace{\bphi(a_\tau,\bx_\tau)^{\transp} \btheta^\star_h}^{|\,\cdot\,| \leq 1} \, \bigl( \bb_\tau(h) \,
	- \hbb_\tau(h) \bigr) \, \overbrace{\hbb_\tau \otimes \bphi(a_\tau,\bx_\tau)}^{\Arrowvert\,\cdot\,\Arrowvert_2 \leq 1}
	\Biggr \Arrowvert_2 \leq \frac{1}{\sqrt{\lambda}}
    \sum_{\tau=1}^{t-1} \bigl\Arrowvert \bb_\tau - \hbb_\tau \bigr\Arrowvert_1\,.
    \]
    As for the final term $\smash{\Arrowvert \Seta{t-1} \Arrowvert_{G_{t-1}^{-1}}}$, it is exactly
    of the form discussed in \citet[Theorem~1 and Lemma~10, recalled below as \Cref{lm:linucb-dev}]{abbasi2011improved}, with
    \[
    d' = dH\,, \qquad G = \lambda\id{dH}\,, \qquad X_\tau = \hbb_\tau \otimes \bphi(a_\tau, \bx_\tau)\,, \qquad
    \eta_\tau = \eta'_\tau(a_\tau)\,, \qquad \cF_\tau = \cFallp_{\tau+1}\,;
    \]
    indeed, $a_\tau$ and $X_\tau$ are $\cFallp_{\tau}$--measurable while the $\eta'_t(a)$, and
    thus also $\eta'_t(a_t)$, are $\cFallp_{\tau+1}$--measurable and $v_\eta$--sub-Gaussian
    conditionally to $\cFallp_\tau$, as stated in \Cref{eq:ass:SG}.
    In addition, \Cref{eq:bbbphi-norm} guarantees that $\Arrowvert X_\tau \Arrowvert_2 \leq 1$.
    We get that probability at least $1 - \delta$, for all $t \geq 1$,
	\[
	\Biggl\Arrowvert \sum_{\tau=1}^{t-1} \eta'_{\tau}(a_\tau) \, \hbb_\tau \otimes \bphi(a_\tau,\bx_\tau) \Biggr\Arrowvert_{G_{t-1}^{-1}}
	\leq v_\eta \sqrt{2\ln(1/\delta) + dH \ln\bigl(1 + t/(\lambda d H)\bigr)}\,.
	\]
    The proof is concluded by collecting all the bounds above.
\end{proof}

For the convenience of the reader, we restate
the key classic deviation inequality used above.
We recall that sub-Gaussian random variables are necessarily centered.
Lemma~10 by \citet{abbasi2011improved} is exactly
\Cref{lm:matrix-determinant-bis} stated at the end of this appendix.

\begin{lemma}[{\citealp[Theorem~1 and Lemma~10]{abbasi2011improved}}]
\label{lm:linucb-dev}
Consider a filtration $(\cF_t)_{t \geq 0}$ and two
stochastic processes,
a scalar-valued process $(\eta_t)_{t \geq 1}$ such that $\eta_t$ is $\cF_t$--measurable and
$v_\eta$--sub-Gaussian conditionally to $\cF_{t-1}$,
and a $d'$--vector-valued process $(X_t)_{t \geq 1}$ such that $X_t$ is $\cF_{t-1}$--measurable
and $\Arrowvert X_t \Arrowvert_2 \leq 1$ a.s.
For $\lambda > 0$, let
\[
S_t = \sum_{\tau=1}^{t} \eta_{\tau} X_\tau
\qquad \mbox{and} \qquad
G_t = \lambda \id{d'} + \sum_{\tau=1}^{t} X_\tau X_\tau^{\transp}\,.
\]
Then, with probability at least $1-\delta$, for all $t \geq 1$,
\[
\Arrowvert S_t \Arrowvert_{G_{t}^{-1}}
\leq v_\eta \, \sqrt{2\ln(1/\delta) + d'\ln\bigl(1 + t/(\lambda d')\bigr)} \,.
\]
\end{lemma}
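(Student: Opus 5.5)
The plan is to follow the method-of-mixtures argument of \citet{abbasi2011improved}. It has two parts: a time-uniform self-normalized deviation bound involving $\ln\det G_t$ (their Theorem~1), and a determinant--trace bound (their Lemma~10, i.e., \Cref{lm:matrix-determinant-bis}).

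\textbf{Step 1: exponential supermartingales.} By homogeneity I first reduce to $v_\eta = 1$ by replacing $\eta_\tau$ with $\eta_\tau/v_\eta$ and $S_t$ with $S_t/v_\eta$. Let $V_t = \sum_{\tau \leq t} X_\tau X_\tau^{\transp}$, so that $G_t = \lambda \id{d'} + V_t$. For each fixed $\bx \in \R^{d'}$, define
\[
M_t(\bx) = \exp\Bigl( \bx^{\transp} S_t - \tfrac{1}{2} \Arrowvert \bx \Arrowvert_{V_t}^2 \Bigr).
\]
Since $X_t$ is $\cF_{t-1}$--measurable and $\eta_t$ is $1$--sub-Gaussian conditionally on $\cF_{t-1}$, we get
\[
\E\bigl[ \exp\bigl( \eta_t \, \bx^{\transp} X_t - \tfrac{1}{2} (\bx^{\transp} X_t)^2 \bigr) \mid \cF_{t-1} \bigr] \leq 1\,.
\]
Hence $(M_t(\bx))_t$ is a nonnegative supermartingale with $M_0(\bx) = 1$.

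\textbf{Step 2: Gaussian mixture.} I integrate $M_t(\bx)$ against the density of $\mathcal{N}(0, \lambda^{-1}\id{d'})$ and call the result $\bar M_t$. By Fubini, $\bar M_t$ is still a nonnegative supermartingale with $\bar M_0 = 1$. Completing the square in the Gaussian integral gives the closed form
\[
\bar M_t = \Bigl( \tfrac{\det(\lambda \id{d'})}{\det G_t} \Bigr)^{1/2} \exp\Bigl( \tfrac{1}{2} \Arrowvert S_t \Arrowvert_{G_t^{-1}}^2 \Bigr).
\]

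\textbf{Step 3: uniformity in time.} This is the step I expect to be the main obstacle, because the bound must hold simultaneously for all $t$. I would handle it with Ville's maximal inequality for nonnegative supermartingales, $\P\bigl(\sup_t \bar M_t \geq 1/\delta\bigr) \leq \delta$. Equivalently, one can use the stopping-time argument of \citet{abbasi2011improved}: apply Markov's inequality to $\bar M_\tau$, where $\tau$ is the first time the bound fails, and use Fatou's lemma to get $\E[\bar M_\tau] \leq 1$. Taking logarithms, with probability at least $1-\delta$, for all $t$ (restoring the factor $v_\eta$),
\[
\Arrowvert S_t \Arrowvert_{G_t^{-1}}^2 \leq 2 v_\eta^2 \ln\bigl(1/\delta\bigr) + v_\eta^2 \ln\bigl( \det G_t / \lambda^{d'} \bigr).
\]

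\textbf{Step 4: determinant bound.} By the AM--GM inequality on the eigenvalues of $G_t$, $\det G_t \leq (\mathrm{tr}\, G_t / d')^{d'}$. Since $\Arrowvert X_\tau \Arrowvert_2 \leq 1$, we have $\mathrm{tr}\, G_t \leq d'\lambda + t$. Therefore $\ln(\det G_t / \lambda^{d'}) \leq d' \ln\bigl(1 + t/(\lambda d')\bigr)$. Substituting this into Step 3 and taking square roots gives the stated bound.
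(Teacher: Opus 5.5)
Your proposal is correct and follows the same route as the paper. The paper gives no proof of its own and simply imports Theorem~1 of \citet{abbasi2011improved}, which is your supermartingale $M_t(\bx)$, Gaussian mixture with covariance $\lambda^{-1}\id{d'}$, and stopping-time/Ville step giving $\Arrowvert S_t \Arrowvert^2_{G_t^{-1}} \leq 2 v_\eta^2 \ln(1/\delta) + v_\eta^2 \ln\bigl(\det G_t/\lambda^{d'}\bigr)$, together with their Lemma~10, which you prove correctly via AM--GM on the eigenvalues of $G_t$. One small slip in your pointer: the bound used in Step~4 is the determinant--trace inequality \Cref{lm:determinant-trace-inequality-bis}, not the matrix determinant lemma \Cref{lm:matrix-determinant-bis}; the paper's own sentence introducing this lemma makes the same mix-up.
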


We consider the Box~B strategy with a belief
estimation subroutine satisfying \Cref{assumption:bound-belief}
(for which we recall that the belief error function $\Ubelief$
is known) and the confidence bonuses, for $t \geq 2$ and $a \in \cA$,
\begin{align}
\label{eq:def-esp-simple}
\epsilon_{t,a} =
& \ \Ubelief(t,\delta/2) \\
\nonumber
& \quad + \Bigl\Arrowvert \hbb_t \otimes \bphi(a, \bx_t) \Bigr\Arrowvert_{G_{t-1}^{-1}} \, \Biggl( \frac{1}{\sqrt{\lambda}}\sum_{\tau=1}^{t-1}
\Ubelief(\tau,\delta/2) + \sqrt{\lambda H} \, C_{\btheta^{\star}}
+ v_\eta \sqrt{2\ln(2/\delta) + dH \ln\bigl(1 + t/(\lambda d H)\bigr)} \Biggr)
\end{align}
and $\epsilon_{1,a} = 1 + \sqrt{d}/\lambda$.
The confidence bonuses $\epsilon_{t,a}$
correspond to the upper bounds of
\Cref{lm:est-simple}, denoted by $\epsilon'_{t,a}$ in the proof below, up to the replacements of $\delta$ by $\delta/2$ and
of the unknown
$\Arrowvert \bb_{\tau} - \hbb_{\tau} \Arrowvert_1$
by their high-probability bounds $\Ubelief(\tau,\delta/2)$.
Recall the statement of our theorem, which we may now prove below.

\totalregretboundsimplified*

\begin{proof}
	We denote by
	\begin{align*}
	\epsilon'_{t,a} =
	& \ \bigl\Arrowvert \bb_t - \hbb_t \bigr\Arrowvert_1 \\
	& \quad + \Bigl\Arrowvert \hbb_t \otimes \bphi(a, \bx_t) \Bigr\Arrowvert_{G_{t-1}^{-1}} \, \Biggl( \frac{1}{\sqrt{\lambda}}\sum_{\tau=1}^{t-1}
	\bigl\Arrowvert \bb_\tau - \hbb_\tau \bigr\Arrowvert_1 + \sqrt{\lambda H} \, C_{\btheta^{\star}}
	+ v_\eta \sqrt{2\ln(2/\delta) + dH \ln\bigl(1 + t/(\lambda d H)\bigr)} \Biggr)
	\end{align*}
	the upper bound read in \Cref{lm:est-simple} for the risk $\delta/2$,
	and let $\epsilon'_{1,a} = 1 + \sqrt{d}/\lambda$.
	With this piece of notation,
	\Cref{lm:est-simple} guarantees, in particular, that with probability at least $1-\delta/2$,
	for all $t \geq 1$,
	\begin{align}
    \label{eq:csq-lmsimple-1}
	\max_{a \in \cA} \sum_{h \in \cH} \bb_t(h) \, \bphi(a, \bx_t)^{\transp} \btheta^{\star}_{h}
	& \leq \max_{a \in \cA} \left\{ \epsilon'_{t,a} + \sum_{h \in \cH} \hbb_t(h) \, \bphi(a, \bx_t)^{\transp} \hbtheta_{t-1,h} \right\} \\
    \label{eq:csq-lmsimple-2}	
    \mbox{and} \qquad\qquad
	\sum_{h \in \cH} \hbb_t(h) \, \bphi(a_t, \bx_t)^{\transp} \hbtheta_{t-1,h}
	& \leq \epsilon'_{t,a_t} + \sum_{h \in \cH} \bb_t(h) \, \bphi(a_t, \bx_t)^{\transp} \btheta^{\star}_{h}\,.
	\end{align}
	On the other hand, \Cref{assumption:bound-belief} ensures that
	with probability at least $1-\delta/2$,
	\begin{equation}
    \label{eq:T0-Ubelief-1}
	\forall t \in [ T_0, \,\, T], \qquad
	\bigl\Arrowvert \bb_t - \hbb_t \bigr\Arrowvert_1 \leq \Ubelief(t,\delta/2)\,,
	\qquad \mbox{where} \qquad
	T_0 \eqdef \max\Bigl\{2, \,\, \lceil T_{\cB,\bM,\nu}\bigl(1 + \ln(2 /\delta)\bigr)
	\rceil \Bigr\}\,;
	\end{equation}
	therefore, with probability at least $1-\delta/2$,
	\begin{equation}
    \label{eq:T0-Ubelief-2}
    \forall t \in [ T_0, \,\, T], \qquad
	\sum_{\tau=1}^{t-1}
	\bigl\Arrowvert \bb_\tau - \hbb_\tau \bigr\Arrowvert_1
	\leq 2 (T_0 - 1) + \sum_{\tau=1}^{t-1} \Ubelief(t,\delta/2)
	\quad \mbox{and} \quad
	\epsilon'_{t,a} \leq \epsilon_{t,a} + \frac{2 (T_0 - 1)}{\sqrt{\lambda}}\,
	\Bigl\Arrowvert \hbb_t \otimes \bphi(a, \bx_t) \Bigr\Arrowvert_{G_{t-1}^{-1}}\,.
	\end{equation}
    \Cref{eq:G-norm-bdl,eq:bbbphi-norm} ensure that
	\begin{equation}
    \label{eq:T0-Ubelief-3}
	\forall t \in [ T_0, \,\, T], \qquad
	\Bigl\Arrowvert \hbb_t \otimes \bphi(a, \bx_t) \Bigr\Arrowvert_{G_{t-1}^{-1}} \leq \frac{\Bigl\Arrowvert \hbb_t \otimes \bphi(a, \bx_t)
    \Bigr\Arrowvert_2}{\sqrt{\lambda}} \leq \frac{1}{\sqrt{\lambda}}
	\qquad \mbox{thus, finally,} \qquad
	\epsilon'_{t,a} \leq \epsilon_{t,a} + 2(T_0 - 1)/\lambda\,.
	\end{equation}
    By a union bound, substituting these bounds in \Cref{eq:csq-lmsimple-1,eq:csq-lmsimple-2} and using the definition of $a_t$
    as the argument of some maximum, we get
    that with probability at least $1-\delta$,
	for all $t \in [T_0,T]$,
	\begin{align*}
	\max_{a \in \cA} \sum_{h \in \cH} \bb_t(h) \, \bphi(a, \bx_t)^{\transp} \btheta^{\star}_{h}
	& \leq \frac{2 (T_0 - 1)}{\lambda} + \max_{a \in \cA} \left\{ \epsilon_{t,a} + \sum_{h \in \cH} \hbb_t(h) \, \bphi(a, \bx_t)^{\transp} \hbtheta_{t-1,h} \right\} \\
    & = \frac{2 (T_0 - 1)}{\lambda} + \epsilon_{t,a_t} + \sum_{h \in \cH} \hbb_t(h) \, \bphi(a_t, \bx_t)^{\transp} \hbtheta_{t-1,h} \\
    \mbox{and} \hspace{3.5cm}
	\sum_{h \in \cH} \hbb_t(h) \, \bphi(a_t, \bx_t)^{\transp} \hbtheta_{t-1,h}
	& \leq \frac{2 (T_0-1)}{\lambda} + \epsilon_{t,a_t} + \sum_{h \in \cH} \bb_t(h) \, \bphi(a_t, \bx_t)^{\transp} \btheta^{\star}_{h}\,\,, \\
    \mbox{thus} \hspace{3.4cm}
    \max_{a \in \cA} \sum_{h \in \cH} \bb_t(h) \, \bphi(a, \bx_t)^{\transp} \btheta^{\star}_{h}
    & - \sum_{h \in \cH} \bb_t(h) \, \bphi(a_t, \bx_t)^{\transp} \btheta^{\star}_{h}
    \leq \frac{4 (T_0-1)}{\lambda} + 2 \epsilon_{t,a_t}\,.
	\end{align*}
    For $t \leq T_0-1$, the difference above, called the instantaneous pseudo-regret, is always bounded by~$2$
    due to the boundedness stated in \Cref{assumption:bound-R-linear}.

    Therefore, the regret~\eqref{eq:def-regret} is bounded, with probability at least $1-\delta$, by
	\begin{align}
		\nonumber
        R_T & =  \sum_{t=1}^{T} \left( \max_{a \in \cA} \sum_{h \in [H]} \bb_t(h) \, \bphi(a, \bx_t)^{\transp} \btheta^{\star}_{h} -
			\sum_{h \in [H]} \bb_t(h) \, \bphi(a_t, \bx_t)^{\transp} \btheta^{\star}_{h} \right)  \leq 2 (T_0 - 1) +
			\sum_{t=T_0}^T \bigl(2\eps_{t, a_t} + 4 (T_0 - 1)/\lambda\bigr) \\
		\label{eq:pointer-2-T0REGR}
        & \leq 2 (T_0 - 1) + \frac{4T\,(T_0 - 1)}{\lambda}
		+  2 \sum_{t=1}^T \Ubelief(t, \delta/2)
		+ 2 \sum_{t=1}^{T} \Bigl\Arrowvert \hbb_t \otimes \bphi(a, \bx_t) \Bigr\Arrowvert_{G_{t-1}^{-1}} \\
        \nonumber
		& \qquad \qquad \qquad \qquad \times \, \Biggl( \frac{1}{\sqrt{\lambda}}\sum_{\tau=1}^{t-1}
		\Ubelief(\tau,\delta/2) + \sqrt{\lambda H} \, C_{\btheta^{\star}}
		+ v_\eta \sqrt{2\ln(2/\delta) + dH \ln\bigl(1 + t/(\lambda d H)\bigr)}\Biggr)\,,
	\end{align}
	where we obtained the second inequality by substituting the expression of $\eps_{t, a_t}$
    and by replacing $T_0$ by $1$ in the summation indices.
    We further bound the expression above by upper bounding each $t$ by $T$ in the sum of three terms in parentheses
    and by using
    \[
    \sum_{t=1}^{T} \Bigl\Arrowvert \hbb_t \otimes \bphi(a, \bx_t) \Bigr\Arrowvert_{G_{t-1}^{-1}}
    \leq \sqrt{2 \, d H T \, \ln\Bigl( 1 + T/ (d H \lambda) \Bigr)}\,,
    \]
    which follows from \Cref{lm:bound-cum-sum-simplfied-bis} applied
    to the vectors $\by_\tau = \hbb_\tau \otimes \bphi(a_\tau, \bx_\tau)$,
	of dimension $dH$ and with Euclidean norm smaller than~$1$
	as indicated in \Cref{eq:bbbphi-norm}.
    We get the following final bound:
    with probability at least $1-\delta$,
    \begin{align}
    \label{eq:regret-bound-final-simplified}
		R_T & \leq 2 (T_0 - 1) + \frac{4T\,(T_0 - 1)}{\lambda}
		+  2 \sum_{t=1}^T \Ubelief(t, \delta/2)
		+ 2 \sqrt{2 \, d H T \, \ln\Bigl( 1 + T/ (d H \lambda) \Bigr)} \\
    \nonumber
		& \qquad \qquad \qquad \qquad \times \, \Biggl( \frac{1}{\sqrt{\lambda}}\sum_{\tau=1}^{T}
		\Ubelief(\tau,\delta/2) + \sqrt{\lambda H} \, C_{\btheta^{\star}}
		+ v_\eta \sqrt{
			2\ln(2/\delta) + dH \ln\bigl(1 + T/(\lambda d H)\bigr)}\Biggr)\,.
	\end{align}
    This upper bound is of order $\sqrt{T}$ up to logarithmic factors
    when no belief estimation is required (i.e., when $\Ubelief$ is null and $T_0=1$,
    and for $\lambda$ given by a constant).
	
	Keeping in mind that $T_0$ is of order $\ln T$ as far as its dependency on $T$
    is concerned, the second part of \Cref{assumption:bound-belief} ensures that
    the upper bound of \Cref{eq:regret-bound-final-simplified} is of order,
    up to poly-logarithmic factors,
    \[
    \tO \Bigl( T/\lambda +
	\sqrt{T} + \sqrt{T} \, \bigl( \sqrt{T/\lambda} + \sqrt{\lambda} \bigr)
	\Bigr) = \tO \bigl( T^{3/4} \bigr)\,,
    \]
    where we exploited the choice $\lambda = T^{1/2}$, which is the optimal
    choice of the form $T^a$ as far as orders of magnitude in $T$
    up to poly-logarithmic factors are concerned.
\end{proof}

For the sake of self-completeness,
we state the so-called elliptic potential lemma, which is extracted from \citet[Section~C]{abbasi2011improved};
we actually even re-prove it here because we extend it later to staged updates in Appendix~\ref{sec:bound-cum-sum}
and base our extension on the classic proof below.

\begin{lemma}
	\label{lm:bound-cum-sum-simplfied-bis}
	Consider vectors $\by_t \in \R^d$ with $\Arrowvert \by_t \Arrowvert_2 \leq 1$, a parameter $\lambda \geq 1$, and the Gram matrices
	$V_0 = \lambda \id{d}$ and
	\[
	V_t = \lambda \id{d} + \sum_{\tau=1}^{t} \by_\tau \by_\tau^{\transp} \quad \mbox{for} \quad t \geq 1 \,.
	\]
	For all integers $T \geq 1$, we have:
	\[
		\sum_{t=1}^{T} \bigl\Arrowvert V_{t-1}^{-1/2} \, \by_{t} \bigr\Arrowvert_2
		\leq \sqrt{2 d T \ln \bigl( 1 + T/(d\lambda) \bigr)}\,.
	\]
\end{lemma}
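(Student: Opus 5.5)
The plan is the classic elliptic potential argument of \citet[Section~C]{abbasi2011improved}: a Cauchy--Schwarz step, an elementary inequality turning each squared term into a logarithm, and a determinant telescoping followed by an AM--GM bound on the final determinant.

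\textbf{Reduction to a sum of squares.} By Cauchy--Schwarz,
\[
\sum_{t=1}^{T} \bigl\Arrowvert V_{t-1}^{-1/2} \by_t \bigr\Arrowvert_2 \leq \sqrt{T \sum_{t=1}^{T} \bigl\Arrowvert V_{t-1}^{-1/2} \by_t \bigr\Arrowvert_2^2}\,,
\]
so it suffices to show $\sum_{t} u_t \leq 2 d \ln\bigl(1 + T/(d\lambda)\bigr)$, where $u_t = \by_t^{\transp} V_{t-1}^{-1} \by_t$. Since $V_{t-1} \succeq \lambda \id{d}$ with $\lambda \geq 1$ and $\Arrowvert \by_t \Arrowvert_2 \leq 1$, we have $u_t \leq 1/\lambda \leq 1$. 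On $[0,1]$ the elementary inequality $x \leq 2\ln(1+x)$ holds, which gives $u_t \leq 2 \ln(1+u_t)$. This is the only place the assumption $\lambda \geq 1$ is used.

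\textbf{Determinant telescoping.} Write $V_t = V_{t-1}^{1/2}\bigl(\id{d} + V_{t-1}^{-1/2}\by_t\by_t^{\transp}V_{t-1}^{-1/2}\bigr)V_{t-1}^{1/2}$. The matrix determinant lemma then gives $\det V_t = \det V_{t-1}\,(1+u_t)$. Hence
\[
\sum_{t=1}^{T} \ln(1+u_t) = \ln\bigl(\det V_T / \det V_0\bigr)\,, \qquad \det V_0 = \lambda^d\,.
\]

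\textbf{Bounding the final determinant.} By AM--GM on the eigenvalues, $\det V_T \leq \bigl(\mathrm{tr}(V_T)/d\bigr)^d$. The trace satisfies $\mathrm{tr}(V_T) = d\lambda + \sum_{t} \Arrowvert \by_t \Arrowvert_2^2 \leq d\lambda + T$. Therefore
\[
\ln\bigl(\det V_T / \lambda^d\bigr) \leq d \ln\bigl(1 + T/(d\lambda)\bigr)\,.
\]
Chaining the three steps gives $\sum_t u_t \leq 2 d \ln\bigl(1 + T/(d\lambda)\bigr)$, and plugging this into the Cauchy--Schwarz bound yields the claim. No step is a real obstacle; the only point needing care is keeping $u_t \leq 1$ so that $x \leq 2\ln(1+x)$ applies.
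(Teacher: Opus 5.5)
Your proposal is correct and follows essentially the same route as the paper's proof: Cauchy--Schwarz, the inequality $u \leq 2\ln(1+u)$ on $[0,1]$ (which is where $\lambda \geq 1$ is used), telescoping via the matrix determinant lemma, and the bound $\det(V_T) \leq (\lambda + T/d)^d$. The only minor difference is that you prove this last bound directly by AM--GM on the eigenvalues, whereas the paper cites it as the determinant--trace inequality.
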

\begin{proof}
	The classic proof is extracted from \citet[Section~C]{abbasi2011improved}.
	By the Cauchy-Schwarz inequality,
	\begin{equation}
		\label{eq:sum-per-round-raw}
		\sum_{t=1}^T \bigl\Arrowvert V_{t-1}^{-1/2} \, \by_t \bigr\Arrowvert_2
		\leq \sqrt{T \, \sum_{t=1}^T \bigl\Arrowvert V_{t-1}^{-1/2} \, \by_t \bigr\Arrowvert_2^2}
		= \sqrt{T \, \sum_{t=1}^T \by_t^{\transp} V_{t-1}^{-1} \by_t}\,.
	\end{equation}
	Now, for $t \geq 1$, given that $V_{t-1} \succeq \lambda \id{d}$, where $\lambda \geq 1$,
	and given that $\Arrowvert \by_t \Arrowvert_2 \leq 1$ by assumption, we have
	\[
	0 \leq \by_t^{\transp} \, V_{t-1}^{-1} \, \by_t \leq \frac{1}{\lambda} \, \Arrowvert \by_t \Arrowvert_2^2 \leq 1\,.
	\]
	We resort to the inequality $u \leq 2 \ln(1 + u)$ for $u \in [0,1]$,
	and then to \Cref{lm:matrix-determinant-bis}, to get
	\begin{equation}
		\label{eq:reminder-end}
		\sum_{t=1}^T \by_t^{\transp} V_{t-1}^{-1} \by_t
		\leq 2 \sum_{t=1}^T \ln \bigl(1 + \by_t^{\transp} \, V_{t-1}^{-1} \, \by_t\bigr)
		= 2 \sum_{t=1}^T \ln \frac{\det(V_t)}{\det(V_{t-1})}
		= 2 \ln \frac{\det(V_T)}{\det(V_{0})} \,,
	\end{equation}
	where $\det(V_{0}) = \lambda^d$ and an upper bound on $\det(V_T)$ is given by
	\Cref{lm:determinant-trace-inequality-bis}.
	Collecting all inequalities concludes the proof.
\end{proof}

\begin{lemma}[{Matrix determinant lemma, see, e.g., \citealp[Lemma 1.1]{ding2007eigenvalues}}]
	\label{lm:matrix-determinant-bis}
	For a $d \times d$ invertible matrix $A$ and vectors $\by,\,\by' \in \R^d$,
	\[
	\det\bigl(A + \by' \by^{\transp} \bigr) = \bigl(1 + \by^{\transp} A^{-1} \by' \bigr) \, \det(A)\,.
	\]
\end{lemma}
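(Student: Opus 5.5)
The plan is to reduce the statement to the rank-one identity $\det(\id{d} + \bu\bv^{\transp}) = 1 + \bv^{\transp}\bu$ by factoring out $A$, and then to prove that identity with a block-triangular factorization, which avoids any case distinction on the spectrum of the rank-one term.

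\textbf{Factoring out $A$.} Since $A$ is invertible, we write
\[
A + \by'\by^{\transp} = A\bigl(\id{d} + A^{-1}\by'\by^{\transp}\bigr).
\]
Multiplicativity of the determinant then gives
\[
\det\bigl(A + \by'\by^{\transp}\bigr) = \det(A)\,\det\bigl(\id{d} + \bu\bv^{\transp}\bigr),
\qquad \bu = A^{-1}\by', \quad \bv = \by.
\]
Since $\bv^{\transp}\bu = \by^{\transp}A^{-1}\by'$, it remains to show $\det(\id{d} + \bu\bv^{\transp}) = 1 + \bv^{\transp}\bu$ for arbitrary $\bu,\bv \in \R^d$.

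\textbf{Rank-one identity via block matrices.} We check the $(d+1)\times(d+1)$ block identity
\[
\begin{pmatrix} \id{d} & 0 \\ \bv^{\transp} & 1 \end{pmatrix}
\begin{pmatrix} \id{d} + \bu\bv^{\transp} & \bu \\ 0 & 1 \end{pmatrix}
\begin{pmatrix} \id{d} & 0 \\ -\bv^{\transp} & 1 \end{pmatrix}
=
\begin{pmatrix} \id{d} & \bu \\ 0 & 1 + \bv^{\transp}\bu \end{pmatrix}.
\]
The product of the first two factors is
\[
\begin{pmatrix} \id{d} + \bu\bv^{\transp} & \bu \\ \bv^{\transp} + (\bv^{\transp}\bu)\bv^{\transp} & \bv^{\transp}\bu + 1 \end{pmatrix}.
\]
Multiplying on the right by the third factor, the top-left block becomes $\id{d} + \bu\bv^{\transp} - \bu\bv^{\transp} = \id{d}$. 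The bottom-left block becomes $\bv^{\transp} + (\bv^{\transp}\bu)\bv^{\transp} - (1 + \bv^{\transp}\bu)\bv^{\transp} = 0$, and the right column is unchanged.

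All four matrices are block triangular. The outer two factors therefore have determinant $1$. The middle factor has determinant $\det(\id{d} + \bu\bv^{\transp})$, and the right-hand side has determinant $1 + \bv^{\transp}\bu$. Taking determinants of both sides yields the identity.

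\textbf{Conclusion and main difficulty.} Combining the two steps gives the claimed formula. The only delicate point is to avoid an eigenvalue argument, which needs care when $\bv^{\transp}\bu = 0$ because $\bu\bv^{\transp}$ is then nilpotent. The block factorization sidesteps this and holds for all $\bu,\bv$, so no genuine obstacle remains beyond the routine verification of the block product above.
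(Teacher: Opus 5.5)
Your proof is correct and complete. The factorization $A+\by'\by^{\transp} = A\bigl(\id{d} + A^{-1}\by'\by^{\transp}\bigr)$ and the three-factor block identity both check out, and the paper itself gives no proof, only citing Ding and Zhou (2007, Lemma~1.1); your argument (reduce to $A=\id{d}$, then take determinants in the block-triangular identity) is the standard proof of that result, so it supplies exactly what the paper takes for granted.
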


\begin{lemma}[{Determinant--trace inequality, see, e.g., \citealp[Lemma 10]{abbasi2011improved}}]
	\label{lm:determinant-trace-inequality-bis}
	With the notation and assumptions of Lemma~\ref{lm:bound-cum-sum-simplfied-bis},
	\[
	\det(V_t) \leq (\lambda + t/d)^d\,.
	\]
\end{lemma}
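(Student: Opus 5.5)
We prove the determinant--trace inequality by comparing the determinant of $V_t$ with its trace through the arithmetic--geometric mean inequality applied to its eigenvalues. Since $V_t = \lambda \id{d} + \sum_{\tau \leq t} \by_\tau \by_\tau^{\transp}$ is symmetric positive definite, it has positive eigenvalues $\mu_1, \ldots, \mu_d$. Its determinant is their product and its trace is their sum. The AM--GM inequality therefore gives
\[
\det(V_t) = \prod_{i=1}^{d} \mu_i \leq \biggl( \frac{1}{d} \sum_{i=1}^{d} \mu_i \biggr)^{\! d} = \biggl( \frac{\mathrm{tr}(V_t)}{d} \biggr)^{\! d}.
\]

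It remains to bound the trace. By linearity of the trace and the identity $\mathrm{tr}(\by \by^{\transp}) = \Arrowvert \by \Arrowvert_2^2$,
\[
\mathrm{tr}(V_t) = \lambda d + \sum_{\tau=1}^{t} \Arrowvert \by_\tau \Arrowvert_2^2 \leq \lambda d + t\,,
\]
where the inequality uses the assumption $\Arrowvert \by_\tau \Arrowvert_2 \leq 1$ of Lemma~\ref{lm:bound-cum-sum-simplfied-bis}. Substituting this into the previous display yields $\det(V_t) \leq \bigl( (\lambda d + t)/d \bigr)^{d} = (\lambda + t/d)^{d}$, which is the claim.

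There is no real obstacle in this argument. The only points to check are that positive definiteness (guaranteed by $\lambda > 0$) makes all eigenvalues positive, so that AM--GM applies, and that the case $t = 0$ is covered, where the bound holds with equality.
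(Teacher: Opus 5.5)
Your proof is correct and is essentially the approach the paper relies on: the paper does not reprove this lemma but cites Lemma~10 of Abbasi-Yadkori et al.\ (2011), whose proof is exactly your argument of AM--GM on the eigenvalues of $V_t$ followed by the trace bound $\lambda d + t$. Your observation that only $\lambda > 0$ is needed, rather than the standing assumption $\lambda \geq 1$, is also accurate.
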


\clearpage
\section{Challenges Overcome and Details on the Complex Statistical Dependencies at Stake}
\label{sec:challenge-overcome}
\label{app:other-techn-tool}

The most complex reward model~\eqref{eq:reward-our}
considers a direct dependency on the hidden state
in the reward, through the $\btheta^{\star}_{h_t}$
term. We first explain why and how this
model introduces complex statistical dependencies between
hidden states and actions taken, through the observed rewards.
We then highlight some of the technical challenges
that we faced, and we explain, at a high level, how we overcame them.

\subsection{The Complex Statistical Dependencies Induced by Model~\eqref{eq:reward-our}}

Consider any filtration $\cF_t$ such that $\sigma(\bx_{1:t}) \subseteq \cF_t \subseteq \cFobs_t$.
In model~\eqref{eq:reward-our}, we have, by the assumptions on the noise terms and by the tower rule:
for all $a \in \cA$,
\begin{equation}
\label{eq:core-difficulty}
\E\bigl[ r_t(a) \mid \cF_t \bigr] = \bphi(a, \bx_t)^{\transp} \sum_{h \in [H]} \P(h_t = h \mid \cF_t) \, \btheta^{\star}_h\,.
\end{equation}
This equality extends to $r_t(a_t)$ provided that $a_t$ is $\cF_t$--measurable,
which imposes a first constraint on $\cF_t$. When $a_t$ is not $\cF_t$--measurable,
it is actually difficult to relate
$\E\bigl[ r_t(a_t) \mid \cF_t \bigr]$ to the quantities~\eqref{eq:core-difficulty};
in particular, in general,
\[
\E\bigl[ r_t(a_t) \mid \bx_{1:t} \bigr] \qquad \mbox{and}
\qquad \bphi(a_t, \bx_t)^{\transp} \sum_{h \in [H]} \P(h_t = h \mid \bx_{1:t}) \, \btheta^{\star}_h
= \bphi(a_t, \bx_t)^{\transp} \sum_{h \in [H]} \bb_t(h) \, \btheta^{\star}_h
\]
are different quantities.

To mimic the reduction to linear contextual bandits that was possible
for the simplified reward model~\eqref{eq:nelson-model-gener},
the posterior probabilities $\P(h_t = h \mid \cF_t)$ read in~\eqref{eq:core-difficulty} should be estimated.
This is our second constraint.

There is a tension between the two constraints
and the two extreme candidates $\sigma(\bx_{1:t})$ and $\cFobs_t$.

First, the natural filtration to get measurability of $a_t$ is $\cFobs_t$; indeed, the actions picked $a_t$
depend on the rewards obtained in the past, not only on the contexts.
However, note that
\[
\P(h_t = h \mid \bx_{1:t},\,a_t)
\quad \mbox{and} \quad
\bb_t(h) = \P(h_t = h \mid \bx_{1:t})
\]
are different in general: the action $a_t$
depends on past rewards, and therefore depends on $h_{t-1}$,
which entails some dependency between $a_t$ and $h_t$.
More generally,
\[
\P(h_t = h \mid \cF_t)
\quad \mbox{and} \quad
\bb_t(h) = \P(h_t = h \mid \bx_{1:t})
\]
are different in general when $a_t$ is $\cF_t$--measurable.

Second, the natural filtration to estimate
easily the beliefs $\P(h_t = h \mid \cF_t)$ is $\sigma(\bx_{1:t})$:
we discard reward and may then open the toolbox of HMM estimation.
Also, the pseudo-regret is formulated in terms of beliefs, i.e.,
posterior distributions based on $\sigma(\bx_{1:t})$.

Our solution basically consists of introducing a carefully constructed filtration in the middle
of these two extremes; we denote it by $\cU_t$ and try now to get
an idea of some nice properties we expect from it.

\subsection{Caveats Encountered when Estimating Beliefs}
\label{sec:beliefs-estim-intro}

\textbf{Difficulties.} As explained above, it is handy that the action $a_t$ be $\cU_t$--measurable. Now, to make useful decisions,
the filtration $\cU_t$ cannot be reduced to just contexts and thus
needs to include at least some (quantities based on the) rewards. However, two issues arise.

First, posterior distributions with respect to $\cU_t$
are then difficult to compute even with the knowledge of the underlying HMM parameters, and thus are even
more difficult to estimate.
Indeed, beliefs are typically computed based on Bayes' update rules (see Appendix~\ref{sec:belief-estimation-error}),
and such rules work should then take rewards into account; this is what \citet{RegimeBandits-2021}
performed in the case of Bernoulli rewards, but finding a general and computationally efficient solution beyond
that case seems challenging.

Second, posterior distributions with respect to $\cU_t$ should be close enough to the beliefs
(i.e., from posterior distributions with respect to $\bx_{1:t}$).

\textbf{Ideas.} The first idea is estimate posteriori distributions independently, only based on contexts, and discard any
additional information coming from rewards;
the loss in efficiency should be acceptable given the definition~\eqref{eq:def-regret} of the pseudo-regret,
which relies on beliefs $\bb_t$, i.e., on quantities only based on the contexts.
It turns out that such belief estimation only based on contexts is standard
(see Appendix~\ref{sec:belief-estimation-error}): it suffices to estimate the HMM parameters,
for which standard procedures exist, and estimation errors on the beliefs
can be obtained as functions of the estimation errors of the HMM parameters.

The second, and main, idea is to design the filtration $\cU_t$ so that
\[
\bb_t(h) = \P(h_t = h \mid \bx_{1:t})
\qquad \mbox{and} \qquad
\overline{\bb}_t(h) = \P(h_t = h \mid \cU_t)
\]
are close, while maintaining the $\cU_t$--measurability of $a_t$,
i.e., including enough information on past rewards in $\cU_t$.

\textbf{Solution.}
The solution is essentially based on a typical property of HMMs:
that they forget exponentially fast the initial state. Put differently,
after a sufficient time, HMMs not initialized with the same distributions
over states are almost equal in distributions. This property suggests to
work in stages of sufficient length: we update only periodically the $\cU_t$ to include
a new sufficient statistic based on the rewards of the stage just passed.

These sufficient statistics consist of estimates of the parameters
$(\btheta^\star_h)_{h \in [H]}$ of the reward model. The actions $a_t$
are picked based on the latest such estimates available (and on the estimated beliefs),
in an optimistic fashion, through the consideration of an upper confidence bound.

\subsection{Other Technical Tool from \citet{nelson2022ContextLatent}}
\label{sec:other-tool}

A final ingredient in our solution, discussed next, is to leverage an approach by \citet{nelson2022ContextLatent}
for proving deviation inequalities in cases where the LinUCB approach by \citet{abbasi2011improved}
is not applicable. This approach relies
on an $\L^2$--Markov inequality (suited for random variables with
second-order moments, that are not necessarily sub-Gaussian).

Note that this approach was somewhat unnecessary in the setting of \citet{nelson2022ContextLatent}:
provided that they strengthen a bit their assumption on the noise
terms from a second-order moment assumption (\Cref{assumption:idio-noise})
to a well-accepted sub-Gaussian (\Cref{ass:SGnoise}),
\citet{nelson2022ContextLatent} could have resorted to the classic LinUCB tools,
with absolutely \emph{no need} of HMM forgetting properties (like \Cref{assumption:strong-mixing-hmm}), to get their
regret bound: we showed this in Appendix~\ref{appendix:simplified-analysis}.

However, as we explain in this section, this approach is particularly convenient in the case of our
more complex reward model~\eqref{eq:reward-our}. To do so, we
first introduce some stylized quantities to be controlled in the proofs.

\paragraph{Stylized quantities considered.}
The proofs---see, in particular, Appendix~\ref{sec:proof:lm:true-vs-belief} for the most complex reward model~\eqref{eq:reward-our}
and \Cref{lm:est-simple} for the simplified version~\eqref{eq:nelson-model-gener}
by \citet{nelson2022ContextLatent}---indicate
that they key quantities to be controlled are of the stylized form
\[
\Delta_T = \sum_{t=1}^T \left( r_t(a_t) - \bphi(a, \bx_t)^{\transp} \sum_{h \in [H]} \bb_t(h) \, \btheta^{\star}_h \right).
\]
The exact quantities appearing in the proofs of the regret
bounds are slightly more complicated (in particular, they are vector-valued) but we capture here
the essence of the arguments.

\paragraph{Simplified reward model.}
By definition of the simplified reward model~\eqref{eq:nelson-model-gener},
\[
\Delta_T = \sum_{t=1}^T \eta'_t(a_t)\,,
\]
where the $\eta'_t(a_t)$ are martingale increments, e.g., with respect to $\cFobsp_{t+1}$.
Indeed, on the one hand, $\eta'_t(a_t)$ is $\cFobsp_{t+1}$--measurable
as the difference between $r'_t(a_t)$,
which is one of the variables generating $\cFobsp_{t+1}$, and
some quantity measurable with respect to $\sigma(\bx_{1:t})$ and $a_t$,
where $a_t$ itself is $\cFobsp_{t}$--measurable.
On the other hand, the assumption that the noise term is independent from
the present and the past
entails that $\E\bigl[ \eta'_t(a) \mid \cFobsp_t \bigr] = 0$ for all $a \in \cA$,
so that, using again that $a_t$ is $\cFobsp_{t}$--measurable, we also have
\[
\E\bigl[ \eta'_t(a_t) \mid \cFobsp_t \bigr] = 0\,.
\]

Because the $\eta'_t(a_t)$ form bounded martingale increments,
with probability at least $1-\delta$,
the martingale $\Delta_T$
is smaller than something of the order of $\sqrt{T \ln(1/\delta)}$,
e.g., by the Hoeffding-Azuma inequality.

\paragraph{Most complex reward model: issue.}
The main issue in the most complex reward model~\eqref{eq:reward-our}
is that we do not deal with martingale increments,
due, in particular, to the direct dependencies of rewards on hidden states.
More precisely, we have, by the definition~\eqref{eq:reward-our}, that
\[
\Delta_T = \sum_{t=1}^T \bphi(a_t, \bx_t)^{\transp} \! \left( \btheta^{\star}_{h_t} - \sum_{h \in [H]} \bb_t(h) \, \btheta^{\star}_h
\right) + \sum_{t=1}^T \eta_t(a_t)\,.
\]
An argument similar to above shows that the $\eta_t(a_t)$ are martingale increments
with respect to $\cFall_{t+1}$, thus their sum is controlled.

The remaining question is thus to control
\[
\sum_{t=1}^T d_t\,, \qquad \mbox{where} \qquad
d_t = \bphi(a_t, \bx_t)^{\transp} \! \left( \btheta^{\star}_{h_t} -
\sum_{h \in [H]} \bb_t(h) \, \btheta^{\star}_h \right).
\]
However, the $d_t$ do not form martingale increments, nor are sufficiently close
to martingale increments.
The issue mostly lies in guaranteeing that $d_t$ is $\cF_{t+1}$--measurable;
it is straightforward to find filtrations such that $\E\bigl[ d_t \mid \cF_t \bigr]$ is close to $0$,
e.g., $\cF_t = \cU_t$:
\[
\E\bigl[ d_t \mid \cU_t \bigr] =
\bphi(a_t, \bx_t)^{\transp} \! \left( \sum_{h \in [H]} \bigl(
\overline{\bb}_t(h) - \bb_t(h) \bigr) \btheta^{\star}_h \right),
\]
which is small as all terms $\overline{\bb}_t(h) - \bb_t(h)$ are small,
see Appendix~\ref{sec:beliefs-estim-intro}.

Any natural filtration $\cF_t$ we would consider (and $\cU_t$ in particular)
includes contexts $\bx_{1:t}$ and is such that $a_t$ is $\cF_t$--measurable.
For such filtration, the requirement of $\cF_{t+1}$--measurability of $d_t$
entails that $\btheta^{\star}_{h_t}$ should be $\cF_{t+1}$--measurable.
Say, for simplicity, that $h_t$ is $\cF_{t+1}$--measurable.

With the same arguments as above,
the constraint $\E\bigl[ d_t \mid \cF_t \bigr]$ close to $0$ would then impose
that
\[
\bb_t(h) = \P(h_t = h \mid \bx_{1:t}) \qquad \mbox{and} \qquad
\P(h_t = h \mid \cF_t)
\]
should be close, but $\P(h_t = h \mid \cF_t)$
corresponds to a quantity of the form
$\P(h_t = h \mid h_{t-1},\,\bx_t)$
and is likely to differ much from $\bb_t(h)$.
Indeed, the former is mostly a function of $h_{t-1}$,
while the latter depends only weakly on the underlying states
due to the HMM forgetting properties.

\paragraph{Most complex reward model: solution.}
It turns out that we do not need that the $d_t$
are sufficiently close to martingale increments
to control their sum: via Markov's inequality,
it suffices that their sum is small in $\L^2$--norm,
e.g., of the order of $T^a$ for $a < 1$.
Then, with high probability, the sum itself is small:
with probability at least $1-\delta$
\[
\sum_{t=1}^T d_t \leq \sqrt{\frac{\E \Bigl[ \bigl( d_1^2 + \ldots + d_T^2 \bigr) \Bigr]}{\delta}}
\leq \sqrt{\frac{\O(T^a)}{\delta}}\,.
\]

\citet{nelson2022ContextLatent} illustrate this
for the sum of noise terms $\eta'_t(a)$ in
their simplified model,
when relaxing the sub-Gaussian noise \Cref{ass:SGnoise}
to \Cref{assumption:idio-noise} on
bounded conditional second-order moments.
This relaxation seems unimportant
and we provide a simple and straightforward-to-prove
regret bound under \Cref{ass:SGnoise}
(see Appendix~\ref{appendix:simplified-analysis}).
Yet, the analysis that \citet{nelson2022ContextLatent} developed,
while not fully useful for them, is powerful and
may be mimicked (and extended) to handle the sum of $d_t$.

Indeed, we write
\[
\E \! \left[ \left( \sum_{t=1}^T d_t^2 \right) \right]
= \underbrace{\sum_{t=1}^T \E\bigl[d_t^2\bigr]}_{\mbox{\small of order $T$}}
+ 2 \sum_{1 \leq t < t' \leq T} \E\bigl[d_t d_{t'}\bigr]\,,
\]
where we used \Cref{assumption:bound-R-linear}, i.e., the boundedness of $d_t$, to see that the sum of
expected square terms is of order $T$.
We get, by the tower rule
and by the fact that $\bb_t(h)$ and $\bb_{t'}(h)$ are $\cU_{t'}$--measurable:
\[
\E\bigl[d_t d_{t'}\bigr] = \E\Bigl[ \E\bigl[d_t d_{t'} \mid \cU_{t'} \bigr] \Bigr]
= \E\!\left[ \bphi(a_t, \bx_t)^{\transp} \btheta^{\star}_{h_t}
\bphi(a_{t'}, \bx_{t'})^{\transp} \! \biggl( \btheta^{\star}_{h_{t'}} -
\sum_{h \in [H]} \overline{\bb}_{t'}(h) \, \btheta^{\star}_h \biggr)
\right] + \psi\bigl(\overline{\bb}_t - \bb_{t'}\bigr)\,,
\]
where $\psi\bigl(\overline{\bb}_t - \bb_{t'}\bigr)$ is a term
bounded by something of the order of $\Arrowvert \overline{\bb}_t - \bb_{t'} \Arrowvert_1$
and is therefore small, by the HMM forgetting property.
The other term in the rewriting above involves differences between
\[
\P\bigl(h_\tau = h \an h_{\tau'}=h' \mid \cU_{\tau'} \bigr)
\qquad \mbox{and} \qquad \overline{\bb}_t\,,
\]
which, again, are small when $t$ and $t'$ are sufficiently
separated due to the HMM forgetting property.

\paragraph{Formal proofs.}
Formal proofs of all the vague statements of this appendix
will be provided in the course of the proof of \Cref{theorem:total-regret-bound},
in Appendix~\ref{appendix:complex-analysis}.

\clearpage
\section{Belief Estimation}
\label{sec:belief-estimation-error}

The aim of this section is to detail why the following assumption made
on the belief estimation subroutine is reasonable.

\boundbelief*

More precisely, we show that this assumption holds at least for the large class of HMMs
satisfying \Cref{assumption:regularity-hmm} below.
To state the latter, we let $\bE$ denote the emission matrix, indexed by $\cX \times [H]$ and obtained by concatenating the emission
distributions $\nu_h$ seen as $X$--dimensional column vectors as $h \in [H]$.
We will assume, among others, that $\bE$ has full column rank: this imposes, in particular,
that $H$ is smaller than the cardinality of $\cX$. Recall the notion of singular value:
$\sigma$ is a singular value of $\bE$ if $\sigma^2$ is an eigenvalue of the square matrix $\bE^{\transp}\bE$.

\regularityhmm*

To that end, we consider a combination of a so-called spectral method
(see its algorithmic statement in Appendix~\ref{sec:spectral}
and see references after the statement of \Cref{lm:beliefestimationerror})
to estimate the HMM parameters $\bM$ and $\bE$, i.e., the $\nu_h$ for $h \in [H]$,
together with a Bayes' update rule to deduce estimated belief from these estimated parameters.
We also add an alignment step (see Appendix~\ref{sec:align})
to keep track of the hidden states.

\paragraph{The Bayes' update rule.}
More precisely, for $t \geq 1$, we denote by $\hbM_{t}$ and $\hnu_{t,h}$
the estimates of $\bM$ and of the $\nu_h$ obtained based on the contexts
$\bx_1,\ldots,\bx_t$; we also consider some estimation $\hat{\bpi}$
of the distribution of $h_1$, for instance, the uniform
distribution over $\cH$.
The Bayes' update rule then works as follows.
For $\tau = 1$, we compute, for all $h \in \cH$:
\begin{equation}
	\label{eq:Bayes-est-1}
	\hbb_{t, 1}(h) = \frac{\hnu_{t,h}(\bx_1) \, \hat{\bpi}(h)}{\displaystyle{\sum_{h^{''} \in \cH} \hnu_{t,h^{''}}(\bx_1)\, \hat{\bpi}(h'')}}\,.
\end{equation}
We then compute successively, for all $2 \leq \tau \leq t$, for all $h \in \cH$,
\begin{equation}
\label{eq:Bayes-est-2}
\hbb_{t,\tau}(h) = \frac{\hnu_{t,h}(\bx_{\tau}) \displaystyle{\sum_{h' \in \cH} \hbb_{t,\tau-1}(h') \, \hM_{t, h', h}}}{\displaystyle{\sum_{h^{''} \in \cH} \hnu_{t,h^{''}}(\bx_{\tau}) \sum_{h' \in \cH} \hbb_{t,\tau-1}(h')\, \hM_{t, h', h''}}}\,.
\end{equation}
We finally issue
\begin{equation}
\label{eq:Bayes-est-3}
\hbb_{t} = \hbb_{t,t}\,.
\end{equation}
We call the successive updates above the Bayes' update rule.
When $\bM$, $\bpi$, and the $\nu_h$ are used instead of their estimates
in the formulas above, we obtain the true beliefs $\bb_t$.

We detail in the rest of this appendix how to obtain
the following result, which shows that \Cref{assumption:bound-belief} is reasonable.

\beliefestimationerror*

\paragraph{References for the spectral method.}
The HMM parameter estimation via spectral method is a standard procedure commonly used for bandits with latent states or for partially observable Markov decision processes [POMDP] where underlying states follow an HMM. It was proposed by \citet{spectral-2012} and further developed by \citet{anandkumar-2012-spectral-hmm} and \citet{anandkumar-2014-tensor-mixture}. For instance, \citet{RegimeBandits-2021} and \citet{spectral-pomdps-2016} apply this the method proposed by \citet[Section~4.2]{anandkumar-2012-spectral-hmm}, in combination with the power iteration method from \citealp{anandkumar-2014-tensor-mixture}. We restate the algorithm of \citet[Section~4.2]{anandkumar-2012-spectral-hmm} in Appendix~\ref{sec:spectral}.

\subsection{Proof of \Cref{lm:beliefestimationerror}: Belief-Estimation Error}
\label{sec:proof-beliefestimationerror}

We explain how the belief-estimation error bound from \Cref{lm:beliefestimationerror} follows from the application
of two known results on HMM parameter estimation (one for the estimation of the parameters
themselves, one for the guarantees induced on the estimation of the beliefs), together with two simple
additions:
an alignment step to ensure coherence of the labeling of hidden states and a
twist to get a fully known belief error function.

First, as detailed in Appendix~\ref{sec:details-how-HMM},
\citet[Theorem 3]{spectral-pomdps-2016} and \citet[Proposition~1, Appendix~B]{RegimeBandits-2021}
offer some estimator error guarantees, which can be instantiated under \Cref{assumption:regularity-hmm}
as follows (keeping the notation of the second reference), with constants
\begin{equation*}
C_1 = \frac{21}{\sigma^2} C_3
\quad \mbox{and} \quad
C_2 = \frac{4}{\sigma} \left( \sqrt{H} + \frac{21 H}{\sigma^2} \right) C_3\,,
\quad \mbox{where} \quad
C_3 = \frac{16}{\epsilon_{\bM}^{3/2}} \left( 1 + \frac{12}{\epsilon_{\bM}^2 \sigma^3} +  \frac{256}{\epsilon_{\bM}^2 \sigma^2}\right).
\end{equation*}
Based on \citet[Equation~28]{anandkumar-2014-tensor-mixture}, we also define
\[
C_0 = \min\left\{ (56\cdot9\cdot 102)^{-1}, (100\cdot 168)^{-1}, \Delta' \right\},
\]
where $\Delta' > 0$ is a numerical constant defined in \citet[Lemma~B.5 with $\Delta = 1/50$]{anandkumar-2014-tensor-mixture},
though not in closed-form.
Compared to the mentioned references,
we rather consider the Frobenius norm instead of the spectral norm,
which introduces an additional $\sqrt{H}$ factor
in the estimation bound for $\bM$.

\begin{proposition}[{Instantiation of \citealp[Proposition 1]{RegimeBandits-2021}, itself based on \citealp[Theorem 3]{spectral-pomdps-2016}}]
\label{proposition:bound-hmm-parameters}
Under \Cref{assumption:regularity-hmm}, the threshold
\begin{equation}
\label{eq:closedform-tildeT}
\widetilde{T}_{\cB,\bM,\nu} = 2 \left(\frac{12}{\epsilon_{\bM}^2 \sigma^2}\right)^{\!\! 2} \, \bigl(\ln\bigl(2X\bigr) + 1 \bigr)\, \max\left\{ \frac{16 H^{1/3}}{C_0^{2/3} \epsilon_{\bM}^{1/3}}, \,\,
\frac{3H}{C_0^{2} \epsilon_{\bM}\sigma^2}, \,\, 1 \right\}
\end{equation}
is such that
for all $t \geq \widetilde{T}_{\cB,\bM,\nu}\bigl(1 + \ln(1/\delta)\bigr)$,
with probability $1 - \delta$,
the estimates $\tilde{\bM}_t$ and $\tilde{\nu}_t$ from the spectral method can be well computed and satisfy, up to some permutation $\rho$ of $[H]$,
\[
\bigl\Arrowvert \tilde{\bM}_t^{(\rho)} - \bM \bigr\Arrowvert_2 \leq C_2 \sqrt{\frac{2H\ln(6X/\delta)}{t}}
\qquad\quad \mbox{and} \qquad\quad
\forall h \in \cH, \qquad \Arrowvert \tilde{\nu}_{t,\rho(h)} - \nu_h \Arrowvert_2 \leq C_1 \sqrt{\frac{2\ln(6X / \delta)}{t}}\,,
\]
where $\tilde{\bM}_t^{(\rho)} = \bigl[ \tilde{M}_{t,\rho(h),\rho(h')} \bigr]_{h,h' \in [H]}$.
\end{proposition}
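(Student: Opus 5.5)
This proposition is an instantiation of \citet[Proposition~1]{RegimeBandits-2021}, itself built on \citet[Theorem~3]{spectral-pomdps-2016}. The plan is therefore to check that the hypotheses of those results follow from \Cref{assumption:regularity-hmm}, and then to track constants and norms.

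\textbf{Hypotheses and concentration.} First, I would observe what \Cref{assumption:regularity-hmm} provides: full column rank of $\bE$ with $\sigma_{\min}(\bE)>0$, invertibility of $\bM$, and $\epsilon_{\bM}>0$. Together with stationarity of the initial distribution $\bpi$, these make the chain uniformly ergodic. Its mixing rate is controlled by $\epsilon_{\bM}$, since Doeblin's condition holds with one step. The three-view moment structure used by the spectral method of \citet[Section~4.2]{anandkumar-2012-spectral-hmm} is then well defined, with second-moment matrices of rank $H$.

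Next, I would concentrate the empirical second- and third-order moments of consecutive triples of contexts around their stationary counterparts. Because the observations are dependent, I would use a McDiarmid- or Hoeffding-type inequality for uniformly ergodic Markov chains (e.g., \citealp{kontorovich2014uniform}) applied to the joint chain $(h_t,\bx_t)$. The mixing coefficient is bounded through $\epsilon_{\bM}$, which yields deviations of order $\epsilon_{\bM}^{-1}\sqrt{\ln(6X/\delta)/t}$. The factor $6X$ comes from a union bound over the moment entries or components, and is where the powers of $\epsilon_{\bM}$ in $C_3$ originate.

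\textbf{Perturbation and the threshold.} I would then propagate this error through the whitening step (using $\sigma$ and $\sigma_{\min}$ of the second-moment matrix, bounded below via $\sigma_{\min}(\bE)$ and $\epsilon_{\bM}$) and through the robust tensor power method of \citet{anandkumar-2014-tensor-mixture}. The latter requires the whitened tensor perturbation to be below $C_0$ times the minimal eigenvalue. Solving this requirement for $t$ produces the threshold $\widetilde{T}_{\cB,\bM,\nu}(1+\ln(1/\delta))$, with the three terms of the $\max$ reflecting the separate conditions: whitening invertibility, the power-method accuracy $C_0$, and $t\geq1$. Below this threshold the estimates may not even be computable, which is why the statement is conditioned on it.

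\textbf{Recovery and norm conversion.} Finally, the recovered eigenvectors give $\tilde\nu_{t,\rho(h)}$ up to the unavoidable permutation $\rho$, with error bound $C_1\sqrt{2\ln(6X/\delta)/t}$. Recovering $\bM$ requires inverting the estimated $\bE$ in the pairwise moment. This costs the extra factors $\sqrt{H}+21H/\sigma^2$ and the factor $4/\sigma$ in $C_2$. Passing from the spectral norm to the Frobenius norm via $\Arrowvert A\Arrowvert_F\leq\sqrt{H}\Arrowvert A\Arrowvert_{\mathrm{op}}$ gives the extra $\sqrt{H}$ inside the square root.

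The main obstacle I expect is the bookkeeping: matching the constants of the cited propositions, including the non-explicit $\Delta'$ in $C_0$, and checking that the Markov-chain concentration step indeed applies under stationarity with the stated $\epsilon_{\bM}$-dependent constants. I would handle it by quoting the intermediate bounds of \citet[Appendix~B]{RegimeBandits-2021} verbatim rather than re-deriving them.
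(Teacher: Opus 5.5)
Your route matches the paper's. You quote \citet[Proposition~1]{RegimeBandits-2021}, which builds on \citealp[Theorem~3]{spectral-pomdps-2016}, instantiate its constants under \Cref{assumption:regularity-hmm}, and get the extra $\sqrt{H}$ from passing from the spectral to the Frobenius norm, exactly as you describe. Re-deriving the Markov-chain concentration, whitening and tensor-power perturbation steps is unnecessary: the paper does none of it, and you end up quoting Zhou et al.\ anyway. The step that actually has to be carried out is expressing the problem-dependent quantities of the cited bounds through \Cref{assumption:regularity-hmm}. This step is what \emph{defines} the $\sigma$ entering $C_1$, $C_2$, $C_3$ and $\widetilde{T}_{\cB,\bM,\nu}$. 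Your proposal leaves it unspecified, and the ingredients you list for it are insufficient.

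\textbf{The gap.} The cited results need lower bounds on the smallest singular values of four matrices:
\begin{itemize}
\item the three view matrices $\bA_1,\bA_2,\bA_3$, i.e., the conditional laws of $\bx_{t-1},\bx_t,\bx_{t+1}$ given $h_t$;
\item the co-occurrence matrix $\bA_4$ of $(\bx_{t+1},\bx_{t-1})$.
\end{itemize}
You bound these via $\sigma_{\min}(\bE)$ and $\epsilon_{\bM}$ alone, which cannot work. Take $H=2$ and $\bM$ with diagonal entries $1/2+\eta$ and off-diagonal entries $1/2-\eta$. Then $\epsilon_{\bM}\to 1/2$, yet $\sigma_{\min}(\bA_3)=\sigma_{\min}(\bE\bM^{\transp})\to 0$ as $\eta\to 0$ for fixed $\bE$, since the smallest singular value of $\bM$ is $2\eta$.

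\textbf{What the paper does.} It writes $\bA_2=\bE$ and $\bA_3=\bE\bM^{\transp}$. Using stationarity to reverse time, $\P(h_{t-1}=h'\mid h_t=h)=\bpi(h')M_{h',h}/\bpi(h)$, which gives $\bA_1=\bE\diag(\bpi)\bM\diag(\bpi)^{-1}$ and then $\bA_4=\bA_3\diag(\bpi)\bA_1^{\transp}$. It combines this with $\bpi(h)=\sum_{h'}\bpi(h')M_{h',h}\geq\epsilon_{\bM}$ and $\sigma_{\min}(AB)\geq\sigma_{\min}(A)\,\sigma_{\min}(B)$. The result is $\min_i\sigma_{\min}(\bA_i)\geq\sigma\eqdef\sigma_{\min}(\bE)\,\sigma_{\min}(\bM)\,\epsilon_{\bM}$ and $\sigma_{\min}(\bA_4)\geq\sigma^2$.

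The bound $\bpi(h)\geq\epsilon_{\bM}$ also replaces the minimal stationary probability appearing in the cited constants and threshold. Your proposal omits this source of $\epsilon_{\bM}$-dependence and attributes everything to mixing. The mixing input itself is the total-variation bound $4(1-\epsilon_{\bM})^{t-1}$ to $\bpi$. Without these identifications, quoting \citet[Appendix~B]{RegimeBandits-2021} verbatim does not produce the constants in the statement.
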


The result above is ``up to some permutation $\rho$ of $[H]$'': this underlines that
the spectral method does absolutely not guarantee that what was called ``state~$h$'' in round $t$ will
correspond to the same `state~$h$'' in round $t+1$.
In terms of beliefs, this means that the belief function
obtained from the estimates of \Cref{proposition:bound-hmm-parameters}
are good up to the labeling of the hidden states.
Since the end result is about the $\ell_1$--error
between the estimated beliefs and the true beliefs,
the ordering of hidden labels does not matter as long
as that ordering is constant over time.
This is what the alignment procedure described in Appendix~\ref{sec:align}
will guarantee.

For now, we move to controlling the $\ell_1$--error
between the estimated beliefs and the true beliefs,
which corresponds to the second statement
of \Cref{lm:beliefestimationerror}
and is independent of the ordering of hidden labels,
as it corresponds to some global evaluation.

\citet{de2017consistent} developed the following bound linking
the estimation errors of the HMM parameters to the estimation error
on the beliefs (again, the result holds up to permutations).

\begin{proposition}[{\citealp[Proposition 2.1]{de2017consistent}, see also \citealp[Proposition~3]{RegimeBandits-2021}}]
\label{proposition:bound-estimated-belief}
Under \Cref{assumption:regularity-hmm}, letting
\[
L_0 = 4 \, \frac{1 - \epsilon_{\bM}}{\epsilon_{\bM}^2}\,,
\qquad
L_1 = 4 \left(\frac{1 - \epsilon_{\bM}}{\epsilon_{\bM}}\right)^{\!\! 2} \frac{1}{e_{\nu, \min}}\,,
\qquad
L_2 = 4 \, \frac{(1 - \epsilon_{\bM})^2}{\epsilon_{\bM}^3}\,,
\]
the beliefs $\tilde{\bb}_t$ obtained from (a suitable permutation of) the estimates $\tilde{\bpi}$, and $\tilde\bM_{t}$ and $\tilde\nu_{t,h}$
through the Bayes' rule~\eqref{eq:Bayes-est-1}--\eqref{eq:Bayes-est-3} satisfy
\[
\bigl\Arrowvert \tilde{\bb}_t - \bb_t \bigr\Arrowvert_1 \leq
L_0 \left(1 -  \frac{\epsilon_{\bM}}{1-\epsilon_{\bM}}\right)^{\!\! t-1}\bigl\Arrowvert \tilde{\bpi}-\bpi \bigr\Arrowvert_2
+ L_1 \sum_{h \in \cH} \bigl\Arrowvert \tilde\nu_{t,h} - \nu_h \bigr\Arrowvert_1
+ L_2 \bigl\Arrowvert \tilde\bM_t - \bM \bigr\Arrowvert_2 \,.
\]
\end{proposition}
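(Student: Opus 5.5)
This is the Proposition 2.1 of \citet{de2017consistent}, a perturbation bound for the filtering recursion. I would prove it by viewing the Bayes' update rule \eqref{eq:Bayes-est-2} as the composition of random maps $F_\tau^{\theta}$, indexed by the parameter $\theta = (\bpi, \bM, \nu)$ and mapping a probability vector $\bb$ to the normalized vector $h \mapsto \nu_h(\bx_\tau)\sum_{h'} \bb(h') M_{h',h}$. Then $\bb_t = F_t^\theta \circ \cdots \circ F_2^\theta(\bb_1)$, and $\tilde\bb_t$ is the same composition with $\tilde\theta$; here $\tilde\theta$ is suitably permuted, and the permutation is harmless because the $\ell^1$ error is permutation-invariant once all estimates are relabeled consistently. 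I would telescope along the hybrid sequences
\[
\bb^{(k)}_t = F_t^{\theta} \circ \cdots \circ F_{k+1}^{\theta} \circ F_k^{\tilde\theta} \circ \cdots \circ F_2^{\tilde\theta}(\tilde\bb_1)\,,
\]
so that $\tilde\bb_t - \bb_t$ splits into an initial-condition term and, for each $k$, a one-step perturbation $F_k^{\tilde\theta}(\tilde\bb_{k-1}) - F_k^{\theta}(\tilde\bb_{k-1})$ propagated through the true filter from time $k$ to time $t$.

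\textbf{Step 1: contraction of the true filter.} Since $\epsilon_{\bM} > 0$, every row of $\bM$ dominates $\epsilon_{\bM}$ times the uniform vector, so the forward kernels satisfy a Doeblin minorization. The standard Dobrushin argument (as in \citealp{olivier-2005-inf-hmm}, Chapter~4) then gives $\Arrowvert F_t\circ\cdots\circ F_{k+1}(\bu) - F_t\circ\cdots\circ F_{k+1}(\bu')\Arrowvert_1 \leq \rho^{t-k}\,\Arrowvert \bu-\bu' \Arrowvert_1$, with a mixing rate of the form $\rho = 1 - \epsilon_{\bM}/(1-\epsilon_{\bM})$. This rate holds uniformly in the observations because the emission factors cancel in the ratio bounds. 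The initial-condition term gives directly $L_0\rho^{t-1}\Arrowvert\tilde\bpi - \bpi\Arrowvert$, with $\ell^1$ converted to $\ell^2$ at the cost of constants.

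\textbf{Step 2: one-step perturbation.} For a fixed input $\bb$, the difference between the two normalized vectors is bounded using that the normalizers are at least $e_{\nu,\min}$ times a mass bounded below. This gives a bound of the form $c_1 |\tilde\nu_h(\bx_k) - \nu_h(\bx_k)|/e_{\nu,\min} + c_2\Arrowvert \tilde\bM - \bM\Arrowvert$, which yields the $L_1$ and $L_2$ terms.

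\textbf{Main obstacle.} Summing the propagated perturbations $\sum_k \rho^{t-k}(\cdots)$ naively gives a geometric factor $1/(1-\rho) = (1-\epsilon_{\bM})/\epsilon_{\bM}$. That factor is what produces the powers of $(1-\epsilon_{\bM})/\epsilon_{\bM}$ in $L_1$ and $L_2$. The difficulty is that the emission error enters through the realized values $\tilde\nu_h(\bx_k)$ and not as a norm. To get $\sum_h\Arrowvert\tilde\nu_h-\nu_h\Arrowvert_1$, I would follow \citet{de2017consistent}: work with the smoothing/backward representation, which allows the pointwise emission errors to be averaged against the predictive law of $\bx_k$. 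Pointwise bounds alone would not suffice, and this is where I expect the technical effort to lie. Since the statement only cites that reference with its constants, in the paper I would invoke it directly after checking that \Cref{assumption:regularity-hmm} implies its hypotheses, namely minorization by $\epsilon_{\bM}$ and emissions bounded below by $e_{\nu,\min}$.
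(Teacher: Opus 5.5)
The paper gives no proof of this proposition. It imports it from \citet[Proposition~2.1]{de2017consistent} (see also \citet[Proposition~3]{RegimeBandits-2021}), so your bottom line, invoking the reference after checking its hypotheses, is the paper's route. Your sketch of why the bound holds, however, has a step that fails as written and a misdiagnosed obstacle.

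\textbf{Step~1 needs a prefactor.} The filter is not an $\ell^1$--contraction in its input, because the Bayes reweighting can expand $\ell^1$--distances even right after a transition step. Take $H=2$, $\bM$ symmetric with off-diagonal entries $\epsilon_{\bM}$, and a context $\bx$ with $\nu_2(\bx)/\nu_1(\bx)=(1-\epsilon_{\bM})/\epsilon_{\bM}$. Then $\Arrowvert F_{k+1}(\bu)-F_{k+1}(\bu')\Arrowvert_1/\Arrowvert\bu-\bu'\Arrowvert_1 \to \rho/(4\epsilon_{\bM})$ as $\bu'=(1-\eta,\eta)\to\bu=(1,0)$, which exceeds $\rho$ when $\epsilon_{\bM}<1/4$.

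What does hold is a backward-kernel bound. Write $F_t\circ\cdots\circ F_{k+1}(\bu)=\tilde\bu K_{k+1}\cdots K_t$ with $\tilde\bu\propto\bu\,\beta_k$, where $\beta_k(h)=\P(\bx_{k+1:t}\mid h_k=h)$. The max/min ratio of $\beta_k$ is at most $(1-\epsilon_{\bM})/\epsilon_{\bM}$, and each conditional kernel $K_s$ has Dobrushin coefficient at most $\rho$ (this is where the emission factors cancel, as you say). Hence, for $k<t$,
\[
\bigl\Arrowvert F_t\circ\cdots\circ F_{k+1}(\bu)-F_t\circ\cdots\circ F_{k+1}(\bu')\bigr\Arrowvert_1\leq 2\,\frac{1-\epsilon_{\bM}}{\epsilon_{\bM}}\,\rho^{t-k}\,\Arrowvert\bu-\bu'\Arrowvert_1\,.
\]
The geometric sum supplies only one power of $(1-\epsilon_{\bM})/\epsilon_{\bM}$ in $L_1$ and $L_2$. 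The second power comes from this prefactor, which also gives the factor $(1-\epsilon_{\bM})/\epsilon_{\bM}$ in $L_0$.

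\textbf{The emission-error obstacle is not one.} Since $\cX$ is finite, $|\tilde\nu_{t,h}(\bx_k)-\nu_h(\bx_k)|\leq\Arrowvert\tilde\nu_{t,h}-\nu_h\Arrowvert_1$ for every $k$. Pointwise one-step bounds therefore already have the required form uniformly in $k$, and no smoothing representation is needed. Averaging the pointwise errors against the law of $\bx_k$ would, moreover, only give a bound in expectation, whereas the statement is a pathwise inequality.

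With both corrections, your hybrid telescoping through the true filter reproduces the stated constants exactly:
\begin{itemize}
\item Each one-step perturbation is at most twice the $\ell^1$--distance between the unnormalized updates divided by the true normalizer $B$. This uses that the post-processed $\tilde\bM_t$ and $\tilde\nu_{t,h}$ are genuine transition and emission matrices.
\item The lower bounds $B\geq e_{\nu,\min}$ and $B\geq\epsilon_{\bM}\sum_h\nu_h(\bx_k)$ turn this into $2\sum_h\Arrowvert\tilde\nu_{t,h}-\nu_h\Arrowvert_1/e_{\nu,\min}+2\Arrowvert\tilde\bM_t-\bM\Arrowvert_2/\epsilon_{\bM}$.
\item Summing against the propagation bound above gives at most $L_1$ and $L_2$ times these errors.
\item The initial term gives $L_0\rho^{t-1}\Arrowvert\tilde\bpi-\bpi\Arrowvert_2$, using $\bpi(h)\geq\epsilon_{\bM}$. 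First you must move into the $L_1$ sum the time-$1$ emission error hidden in the first step of the estimated recursion; you currently attribute that term entirely to $\tilde\bpi$.
\end{itemize}
A side benefit of propagating through the true filter is that only the minorization of the true $\bM$ is used, which is all that \Cref{assumption:regularity-hmm} provides.
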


We now
combine \Cref{proposition:bound-hmm-parameters,proposition:bound-estimated-belief}
and perform some simple upper boundings, where the second follows
from the Cauchy-Schwarz inequality:
\[
\bigl\Arrowvert \bpi - \tilde{\bpi} \bigr\Arrowvert_2
= \sqrt{\sum_{h \in \cH} \smash{\bigl( \underbrace{\bpi(h) - \tilde{\bpi}(h)}_{\in [-1,1]} \bigr)^2}}
\leq \sqrt{\bigl\Arrowvert \bpi - \tilde{\bpi} \bigr\Arrowvert_1} \leq \sqrt{2}
\qquad \mbox{and} \qquad
\bigl\Arrowvert \tilde\nu_{t,h} - \nu_h \bigr\Arrowvert_1
\leq \sqrt{X} \,\, \bigl\Arrowvert \tilde\nu_{t,h} - \nu_h \bigr\Arrowvert_2\,.
\]
We also perform union bounds and use
\begin{equation}
\label{eq:UB-ref-PropZhou}
\frac{\delta}{t(t+1)} \qquad \mbox{instead of} \qquad \delta
\end{equation}
at each round $t \geq 1$:
this is to ensure that the result of \Cref{proposition:bound-hmm-parameters}
holds simultaneously for all $t \geq \widetilde{T}_{\cB,\bM,\nu}$
with probability at least $1-\delta$.

We get, from \Cref{proposition:bound-hmm-parameters,proposition:bound-estimated-belief} that
with probability $1 - \delta$,
\begin{multline}
\label{eq:Ubelief-almost}
\forall t \geq \widetilde{T}_{\cB,\bM,\nu}\Bigl(1 + \ln\bigl(t(t+1)/\delta\bigr)\Bigr)\,, \qquad \mbox{up to identifying
a correct permutation}, \\
\bigl\Arrowvert \tilde{\bb}_t - \bb_t \bigr\Arrowvert_1 \leq
\sqrt{2} L_0 \left(1 -  \frac{\epsilon_{\bM}}{1-\epsilon_{\bM}}\right)^{\!\! t-1}
+ \bigl( L_1 C_1 H \sqrt{X} + L_2 C_2 \sqrt{H} \bigr) \sqrt{\frac{2\ln\bigl(6Xt(t+1)/\delta\bigr)}{t}}\,.
\end{multline}
The right-hand side cannot be our $\Ubelief$ function, as it depends on unknown quantities $\epsilon_{\bM},C_1,C_2,L_0,L_1,L_2$ (the latter
depend on the unknown HMM parameters).
We do not follow the mitigations alluded at in \citet[Section~3.3]{RegimeBandits-2021} or \citet[Remark~3]{spectral-pomdps-2016},
consisting of estimating these quantities (this looks as difficult as estimating the HMM parameters)
or replacing them by some hyperparameters tuned by hand; we rather
bound them as functions of $t$ and increase the threshold $\widetilde{T}_{\cB,\bM,\nu}$ to
compensate for that.

We note that
\begin{multline*}
\left( 1 - \frac{\epsilon_{\bM}}{1-\epsilon_{\bM}} \right)^{\!\! t-1} \leq \e^{-\sqrt{t-1}}
\qquad \mbox{as soon as} \qquad
\epsilon'_{\bM}(t-1) \leq -\sqrt{t-1}\,, \\ \mbox{i.e.,} \qquad
t \geq 1 + 1/\bigl(\epsilon'_{\bM}\bigr)^2\,, \qquad
\mbox{where} \qquad
\epsilon'_{\bM} = \ln \! \left( 1 - \frac{\epsilon_{\bM}}{1-\epsilon_{\bM}} \right).
\end{multline*}
Thus, we let
\begin{align}
\label{eq:closedformT}
T_{\cB,\bM,\nu} = \max \biggl\{ 6 \, \widetilde{T}_{\cB,\bM,\nu} \Bigl(1 + \ln\bigl(\widetilde{T}_{\cB,\bM,\nu} \bigr)\Bigr)\,, \ \  \exp(L_1 C_1 + L_2 C_2)\,, \ \  & \exp\bigl(\sqrt{2} L_0\bigr)\,, \\
\nonumber
& 1 + 1/\bigl(\epsilon'_{\bM}\bigr)^2\,, \ \  6 \, T_\Delta \bigl(1 + \ln(T_\Delta )\bigr) \biggr\}\,,
\end{align}
where we recall that $\widetilde{T}_{\cB,\bM,\nu}$ is defined in \Cref{eq:closedform-tildeT}
and where $T_\Delta$ is defined in \Cref{eq:Tdelta} below. Then,
for $t \geq T_{\cB,\bM,\nu} \bigl(1 + \ln(1/\delta) \bigr)$, we have
\[
L_1 C_1 + L_2 C_2 \leq \ln\bigl(T_{\cB,\bM,\nu}\bigr) \leq \ln \Bigl(t/\bigl(1 + \ln(1/\delta) \bigr) \Bigr) \leq \ln(t), \qquad
\sqrt{2} L_0  \leq \ln(t)\,, \qquad
\left( 1 - \frac{\epsilon_{\bM}}{1-\epsilon_{\bM}} \right)^{\!\! t-1} \leq \e^{-\sqrt{t-1}}\,,
\]
so that the right-hand side of \Cref{eq:Ubelief-almost} is indeed smaller
than the quantity $\Ubelief(t, \delta)$ defined in \Cref{lm:beliefestimationerror}
for these $t \geq T_{\cB,\bM,\nu} \bigl(1 + \ln(1/\delta) \bigr)$.
We also need to make sure that these $t$ satisfy the condition of
\Cref{eq:Ubelief-almost}: this is the case as (see proof right below)
\begin{equation}
\label{eq:Tttgeq}
t \geq T_{\cB,\bM,\nu}\bigl(1 + \ln(1/\delta)\bigr)
\qquad \mbox{entails} \qquad
t \geq \widetilde{T}_{\cB,\bM,\nu}\Bigl(1 + \ln\bigl(t(t+1)/\delta\bigr)\Bigr)\,.
\end{equation}
This concludes the proof of the belief-estimation error part of the lemma
up to identifying the suitable permutations, a topic which we
discuss below in Appendix~\ref{sec:align}, and up to proving~\eqref{eq:Tttgeq},
which do next.

\paragraph{Proof of \Cref{eq:Tttgeq}.}
From the assumption $t \geq T_{\cB,\bM,\nu}\bigl(1 + \ln(1/\delta)\bigr)$ and from
the definition of $T_{\cB,\bM,\nu}$, which guarantees that
$T_{\cB,\bM,\nu} \geq 6 \, \widetilde{T}_{\cB,\bM,\nu} \bigl(1 + \ln(\widetilde{T}_{\cB,\bM,\nu})\bigr)$, we get
\begin{equation}
	\label{eq:2lnt+0}
	\frac{t}{\widetilde{T}_{\cB,\bM,\nu}} \geq 6 \Bigl(1 + \ln\bigl(\widetilde{T}_{\cB,\bM,\nu} \bigr)\Bigr)
	\bigl(1 + \ln(1/\delta)\bigr) \geq 6 \bigl(1 + \ln(1/\delta)\bigr)\,.
\end{equation}
From the intermediate inequality above, we also show later that
\begin{equation}
\label{eq:2lnt+1}
 \frac{t}{\widetilde{T}_{\cB,\bM,\nu}} \geq 3 \ln(t) \geq 2 \ln(t+1)\,,
\end{equation}
where the second inequality holds because $\ln(t+1) \leq 1.5\,\ln(t)$ for $t \geq 6$,
a condition that is satisfied in particular here.
The conclusion then follows from combining the three bounds established
above:
\[
	1 + \ln\biggl(\frac{t(t+1)}{\delta}\biggr) = \bigl( 1 + \ln(1/\delta) \bigr) + \ln(t) + \ln(t+1) \leq
\left( \frac{1}{6} + \frac{1}{3} + \frac{1}{2} \right) \frac{t}{\widetilde{T}_{\cB,\bM,\nu}} = \frac{t}{\widetilde{T}_{\cB,\bM,\nu}}\,.
\]
It only remain to prove the intermediate inequality of~\Cref{eq:2lnt+1}.

For the first inequality below, we use that $u \mapsto u/3 - \ln(u)$ is increasing for $u \geq 3$
and apply this property to the left inequality of \Cref{eq:2lnt+0},
and for the second inequality below, we use that $6 < \exp(2)$ and $1 + \ln(x) \leq x$ for $x > 0$:
\[
\frac{t}{3 \widetilde{T}_{\cB,\bM,\nu}} - \ln\biggl( \frac{t}{\widetilde{T}_{\cB,\bM,\nu}} \biggr) \geq
\frac{ 6 \Bigl(1 + \ln\bigl(\widetilde{T}_{\cB,\bM,\nu} \bigr)\Bigr)}{3} -
\ln\biggl( \underbrace{6 \Bigl(1 + \ln\bigl(\widetilde{T}_{\cB,\bM,\nu} \bigr)\Bigr)}_{\leq e^2 \widetilde{T}_{\cB,\bM,\nu}} \biggr) \geq \ln\bigl(\widetilde{T}_{\cB,\bM,\nu}\bigr)\,,
\]
which rewrites as
\[
\ln(t) = \ln\bigl(\widetilde{T}_{\cB,\bM,\nu}\bigr) + \ln\biggl(\frac{t}{ \widetilde{T}_{\cB,\bM,\nu}}\biggr)
\leq \frac{t}{3 \widetilde{T}_{\cB,\bM,\nu}}\,,
\]
which is exactly the intermediate inequality of \Cref{eq:2lnt+1}.

\subsection{Proof of \Cref{lm:beliefestimationerror}: Coherence Statement}
\label{sec:align}

In this section, we introduce an alignment step to ensure that, after a sufficiently large number of rounds, the latent-state labels of the estimated HMM parameters are consistent over time and thus can be mapped with some constant ordering of states.
This also implies that the corresponding estimated belief vectors are expressed in one common latent state coordinate system,
so that the words ``a suitable permutation of'' are not required anymore in \Cref{proposition:bound-estimated-belief}.

What follows was already alluded at, but not described in this level detail,
by \citet[Appendix~C, proof of Theorem~3, step~3]{spectral-pomdps-2016}.
The latter reference raises the issue that
``the columns of estimated matrices are up to different permutations over states, i.e., these
matrices have different columns ordering'', and suggests an alignment procedure.

\paragraph{Algorithmic statement.}
More formally, let $\mathfrak{S}$ denote the set of all permutations $\rho$ of $[H]$.
At step $t+1$, when obtaining the estimates
$\tilde\bM_{t+1}$ and $\tilde\nu_{t+1,h}$ considered in
\Cref{proposition:bound-hmm-parameters}, we transform
them into the estimates $\hat\bM_{t+1}$ and $\hat\nu_{t+1,h}$ by picking
a permutation $\rho_{t+1}$ and considering
\[
\forall h \in [H], \quad \hat\nu_{t+1,h} = \tilde\nu_{t+1,\rho_{t+1}(h)}
\qquad \mbox{and} \qquad
\hat\bM_{t+1} = \bigl[ \tilde\M_{t+1,\rho_{t+1}(h),\rho_{t+1}(h')} \bigr]_{h,h' \in [H]}\,,
\]
where the permutation is picked as $\displaystyle{\rho_{t+1} \in \argmin_{\pi \in \mathfrak{S}} \ \max_{h \in [H]} \bigl\Arrowvert \hat\nu_{t,h} - \tilde{\nu}_{t+1,\pi(h)} \bigr\Arrowvert_2}$.

At step $t=1$, we leave estimators unchanged, i.e., pick $\rho_1$ given by the identity.

\paragraph{Analysis.}
We consider the same union bound
as the one performed in \Cref{eq:UB-ref-PropZhou},
so that, in particular,
\begin{align*}
\P \Biggl( \forall t \geq \widetilde{T}_{\cB,\bM,\nu}\Bigl(1 & + \ln\bigl(t(t+1)/\delta\bigr)\Bigr)\,, \quad \exists \rho'_t \in \mathfrak{S}
\quad \mbox{s.t.} \\
& \smash{\forall h \in \cH, \qquad \Arrowvert \tilde{\nu}_{t,\rho'_t(h)} - \nu_h \Arrowvert_2
\leq C_1 \sqrt{\frac{2\ln\bigl(6X t(t+1)/ \delta\bigr)}{t}} \Biggr) \geq 1 - \delta\,.}
\end{align*}
Denote by
\[
\Delta = \min_{h \ne h'} \Arrowvert \nu_h - \nu_{h'} \Arrowvert_2 \,;
\]
this quantity is positive as, by \Cref{assumption:regularity-hmm},
$\bE$ is full rank and thus,
the emission distributions $\nu_h$ are, in particular, all different.
Now, consider a time $T_\Delta$ such that
\[
\forall t \geq T_\Delta \Bigl(1 + \ln\bigl(t(t+1)/\delta\bigr)\Bigr), \quad\qquad
C_1 \sqrt{\frac{2\ln\bigl(6X t(t+1)/ \delta\bigr)}{t}}  < \frac{\Delta}{4}\,;
\]
for instance,
\begin{equation}
\label{eq:Tdelta}
T_\Delta = 1+ \left\lceil \frac{32 \, C_1^2 \ln(6X)}{\Delta^2}\right\rceil
\end{equation}
is a suitable value, as for $t \geq T_\Delta \Bigl(1 + \ln\bigl(t(t+1)/\delta\bigr)\Bigr)$,
and using that $\ln(6X) \geq 1$,
\[
C_1 \sqrt{\frac{2\ln\bigl(6X t(t+1)/ \delta\bigr)}{t}}
\leq C_1 \sqrt{2 \,\, \frac{\ln(6X) \Bigl( 1 + \ln \bigl(t(t+1)/ \delta\bigr)\Bigr)}{ T_\Delta \Bigl(1 + \ln\bigl(t(t+1)/\delta\bigr)\Bigr)}}
= C_1 \sqrt{\frac{2\ln(6X)}{T_\Delta}} < \frac{\Delta}{4}\,.
\]
We consider the same threshold $T_{\cB,\bM,\nu}$ as in \Cref{eq:closedformT};
in particular, given the definition of $T_{\cB,\bM,\nu}$,
the proof of \Cref{eq:Tttgeq} with $T_\Delta$ instead of $\widetilde{T}_{\cB,\bM,\nu}$
guarantees that
\[
t \geq T_{\cB,\bM,\nu}\bigl(1 + \ln(1/\delta)\bigr) \qquad \mbox{entails} \qquad
t \geq T_\Delta \Bigl(1 + \ln\bigl(t(t+1)/\delta\bigr)\Bigr)\,.
\]
Together with~\Cref{eq:Tttgeq}, we thus have proved so far that
\begin{equation}
\label{eq:incevdelta}
\P \Bigl( \forall t \geq T_{\cB,\bM,\nu}\bigl(1 + \ln(1/\delta)\bigr)\,, \quad \exists \rho'_t \in \mathfrak{S} \quad \mbox{s.t.} \quad
\forall h \in \cH, \qquad \Arrowvert \tilde{\nu}_{t,\rho'_t(h)} - \nu_h \Arrowvert_2
< \Delta/4 \Bigr) \geq 1 - \delta\,.
\end{equation}
The claimed coherence can now be formally stated as follows.

\begin{lemma}
Under the same $1-\delta$ probability event considered
in the end of Appendix~\ref{sec:proof-beliefestimationerror}, which includes the event of \Cref{eq:incevdelta},
we have that for all $t \geq T_{\cB,\bM,\nu}\bigl(1 + \ln(1/\delta)\bigr)$,
the permutation $\rho'_{t}$ of \Cref{eq:incevdelta} is unique, and so
is the permutation $\rho_{t+1}$ defined in the algorithmic statement above.
In addition, there exists a permutation $\bar{\rho}$ such that
\[
\Delta/4 > \max_{h \in [H]} \bigl\Arrowvert \hat{\nu}_{t,h} - \nu_{\bar{\rho}(h)} \bigr\Arrowvert_2 \to 0
\qquad \mbox{while} \qquad
\forall h \in [H], \quad \forall h' \ne \bar{\rho}(h), \qquad
\bigl\Arrowvert \hat{\nu}_{t,h} - \nu_{h'} \bigr\Arrowvert_2 > 3\Delta/4\,;
\]
thus, the algorithm keeps track of the latent states
and uses a consistent hidden-state labeling
after~${T_{\cB,\bM,\nu}\bigl(1 + \ln(1/\delta)\bigr)}$.
\end{lemma}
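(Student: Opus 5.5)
Work throughout on the event of \Cref{eq:incevdelta}: for every $t \geq t_0 \eqdef T_{\cB,\bM,\nu}\bigl(1 + \ln(1/\delta)\bigr)$ there is a permutation $\rho'_t$ with $\max_h \Arrowvert \tilde{\nu}_{t,\rho'_t(h)} - \nu_h \Arrowvert_2 < \Delta/4$. The argument rests on the triangle inequality together with the separation $\Arrowvert \nu_h - \nu_{h'} \Arrowvert_2 \geq \Delta$ for $h \ne h'$.

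\emph{Uniqueness of $\rho'_t$.} Suppose some $\rho''$ also works and $\rho''(h) = \rho'_t(h') = k$ with $h \neq h'$. Then $\tilde{\nu}_{t,k}$ would lie within $\Delta/4$ of both $\nu_h$ and $\nu_{h'}$, giving $\Arrowvert \nu_h - \nu_{h'} \Arrowvert_2 < \Delta/2$, a contradiction. The same computation shows that each $\tilde{\nu}_{t,k}$ is at distance more than $3\Delta/4$ from every $\nu_{h'}$ other than its matched one.

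\emph{Induction on $t \geq t_0$.} The claim is that $\hat{\nu}_{t,h} = \tilde{\nu}_{t,\rho'_t(\bar{\rho}(h))}$ for a fixed permutation $\bar{\rho}$. In the base case, $\hat{\nu}_{t_0,\cdot}$ is some relabeling of $\tilde{\nu}_{t_0,\cdot}$. Define $\bar{\rho}(h)$ as the unique state $k$ with $\Arrowvert \hat{\nu}_{t_0,h} - \nu_k \Arrowvert_2 < \Delta/4$. This map is a bijection by the uniqueness argument above.

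For the step, assume $\Arrowvert \hat{\nu}_{t,h} - \nu_{\bar{\rho}(h)} \Arrowvert_2 < \Delta/4$ for all $h$. Consider the candidate $\pi = \rho'_{t+1}\circ\bar{\rho}$. It satisfies $\max_h \Arrowvert \hat{\nu}_{t,h} - \tilde{\nu}_{t+1,\pi(h)} \Arrowvert_2 < \Delta/2$.

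Any other permutation $\pi'$ differs from $\pi$ at some $h$. There, $\tilde{\nu}_{t+1,\pi'(h)}$ is within $\Delta/4$ of some $\nu_{h'}$ with $h' \ne \bar{\rho}(h)$. Hence
\[
\Arrowvert \hat{\nu}_{t,h} - \tilde{\nu}_{t+1,\pi'(h)} \Arrowvert_2 > \Delta - \Delta/4 - \Delta/4 = \Delta/2\,.
\]
Therefore the argmin $\rho_{t+1}$ is unique and equals $\pi$, so $\hat{\nu}_{t+1,h}$ is within $\Delta/4$ of $\nu_{\bar{\rho}(h)}$. The ``$>3\Delta/4$'' claim then follows from $\Delta - \Delta/4$.

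\emph{Convergence to $0$.} This comes from the $C_1\sqrt{2\ln(6Xt(t+1)/\delta)/t}$ bound of \Cref{proposition:bound-hmm-parameters}, under the union bound \eqref{eq:UB-ref-PropZhou}. Since $\bar{\rho}$ is fixed, the labeling is consistent. The permuted $\hat{\bM}_t$ uses the same $\rho_t$, so the belief guarantee of \Cref{proposition:bound-estimated-belief} applies with the single permutation $\bar{\rho}$.

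\emph{Main obstacle.} The step I expect to need the most care is the base case. The argmin at $t_0$ compares against $\hat{\nu}_{t_0-1}$, which may be arbitrary, so I would not rely on any property of $\rho_{t_0}$ itself. The induction above only needs that $\hat{\nu}_{t_0}$ is some relabeling of an accurate estimate, which holds whatever $\rho_{t_0}$ is.
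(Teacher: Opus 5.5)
Your proof is correct and follows essentially the same route as the paper: uniqueness of $\rho'_t$ from the $\Delta$-separation and the triangle inequality, a fixed $\bar{\rho}$ chosen at $t_0$ (your nearest-neighbour definition is exactly the paper's $(\rho'_{t_0})^{-1}\circ\rho_{t_0}$), and an induction showing that the candidate $\rho'_{t+1}\circ\bar{\rho}$ gives a maximum below $\Delta/2$ while every other permutation exceeds $\Delta/2$. You also make explicit the convergence to $0$ through the $C_1$ bound of the parameter-estimation proposition on the same event, which the paper leaves implicit.
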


\begin{proof}
Denote $t_0 =  T_{\cB,\bM,\nu}\bigl(1 + \ln(1/\delta)\bigr)$
and fix $t \geq t_0$.
Consider a suitable permutation $\rho'_{t}$.
We note that under the event of interest, by a triangle inequality,
\[
\forall m' \ne \rho'_{t}(h), \qquad
\Arrowvert \tilde{\nu}_{t,m'} - \nu_h \Arrowvert_2
\geq \Arrowvert \nu_m - \nu_h \Arrowvert_2 -
\Arrowvert \tilde{\nu}_{t,m'} - \nu_m \Arrowvert_2
> 3\Delta/4\,,
\]
where we introduced $m$ such that $\rho'_{t}(m) = m'$,
so that $\Arrowvert \tilde{\nu}_{t,m'} - \nu_m \Arrowvert_2 < \Delta/4$;
in particular, $m \ne h$, so that $\Arrowvert \nu_m - \nu_h \Arrowvert_2 \geq \Delta$.
This shows that $\rho'_{t}$ is unique.

Now define $\bar{\rho} = (\rho'_{t_0})^{-1} \circ \rho_{t_0}$.
We prove by induction on $t \geq t_0$ that
\[
\max_{h \in [H]} \bigl\Arrowvert \hat{\nu}_{t,h} - \nu_{\bar{\rho}(h)} \bigr\Arrowvert_2 < \Delta/4\,,
\]
together with the uniqueness of $\rho_{t+1}$.
For $t=t_0$, by definition, $\hat{\nu}_{t_0, h} = \tilde{\nu}_{t_0, \rho_{t_0}(h)}$, so that
\begin{align*}
\max_{h \in [H]} \bigl\Arrowvert \hat{\nu}_{t_0,h} - \nu_{\bar{\rho}(h)} \bigr\Arrowvert_2
& = \max_{h \in [H]} \bigl\Arrowvert \tilde{\nu}_{t_0, \rho_{t_0}(h)} -
\nu_{(\rho'_{t_0})^{-1}(\rho_{t_0}(h))} \bigr\Arrowvert_2 \\
& = \max_{h' \in [H]} \bigl\Arrowvert \tilde{\nu}_{t_0, h'} -
\nu_{(\rho'_{t_0})^{-1}(h')} \bigr\Arrowvert_2
= \max_{h' \in [H]} \bigl\Arrowvert \tilde{\nu}_{t_0, \rho'_{t_0}(h')} -
\nu_{h'} \bigr\Arrowvert_2 < \Delta/4\,,
\end{align*}
where we performed various re-indexations of $[H]$ based on permutations and where we
used the definition of $\rho'_{t_0}$ to get the final inequality.
Assume that the induction property holds at some $t \geq t_0$, and define
$\pi_{t+1} = \rho'_{t+1} \circ \bar{\rho}$.
Then, by the triangle inequality,
\begin{align*}
\max_{h \in [H]} \bigl\Arrowvert \hat{\nu}_{t,h} - \tilde{\nu}_{t+1, \pi_{t+1}(h)} \bigr\Arrowvert_2
& \leq \max_{h \in [H]} \bigl\Arrowvert \hat{\nu}_{t,h} - \nu_{\bar{\rho}(h)} \bigr\Arrowvert_2 +
\max_{h \in [H]} \bigl\Arrowvert \tilde{\nu}_{t+1, \rho'_{t+1}(\bar{\rho}(h))} - \nu_{\bar{\rho}(h)} \bigr\Arrowvert_2 \\
& \leq \max_{h \in [H]} \bigl\Arrowvert \hat{\nu}_{t,h} - \nu_{\bar{\rho}(h)} \bigr\Arrowvert_2
+ \max_{h \in [H]} \bigl\Arrowvert \tilde{\nu}_{t+1,\rho'_{t+1}(h)} - \nu_{h} \bigr\Arrowvert_2 < \Delta/2\,,
\end{align*}
where we used that the first maximum is smaller than $\Delta/4$ by the induction hypothesis,
and that the second maximum is also smaller than $\Delta/4$ by the definition of $\rho'_{t+1}$.
On the other hand, if $\pi \neq \pi_{t+1}$, then there exists $h \in [H]$ such that $\pi(h) \neq \rho'_{t+1}\bigl(\bar{\rho}(h)\bigr)$,
thus $(\rho'_{t+1})^{-1}\bigl(\pi(h)\bigr) \neq \bar{\rho}(h)$; using
a triangle inequality,
we have
\begin{align*}
\bigl\Arrowvert \tilde{\nu}_{t+1,\pi(h)} - \nu_{\bar{\rho}(h)} \bigr\Arrowvert_2 & \geq
\bigl\Arrowvert \nu_{(\rho'_{t+1})^{-1}(\pi(h))} - \nu_{\bar{\rho}(h)} \bigr\Arrowvert_2 - \bigl\Arrowvert \tilde{\nu}_{t+1,\pi(h)} - \nu_{(\rho'_{t+1})^{-1}(\pi(h))} \bigr\Arrowvert_2 \\
& =
\underbrace{\bigl\Arrowvert \nu_{(\rho'_{t+1})^{-1}(\pi(h))} - \nu_{\bar{\rho}(h)} \bigr\Arrowvert_2}_{> \Delta} -
\underbrace{\bigl\Arrowvert \tilde{\nu}_{t+1,\rho'_{t+1}(h')} - \nu_{h'} \bigr\Arrowvert_2}_{< \Delta/4}
> 3 \Delta/4\,,
\end{align*}
where the $< \Delta/4$ part comes from the definition and uniqueness of $\rho'_{t+1}$
used with $h' = (\rho'_{t+1})^{-1}(\pi(h))$, and the $> \Delta$ part from
the very definition of $\Delta$.
Therefore, by another triangle inequality and another use of the induction hypothesis,
	\[
		\bigl\Arrowvert \hat{\nu}_{t,h} - \tilde{\nu}_{t+1,\pi(h)} \bigr\Arrowvert_2
		\geq
		\bigl\Arrowvert \tilde{\nu}_{t+1, \pi(h)} - \nu_{\bar{\rho}(h)} \bigr\Arrowvert_2 - \bigl\Arrowvert \hat{\nu}_{t,h} - \nu_{\bar{\rho}(h)} \bigr\Arrowvert_2 >  3 \Delta/4 -\Delta/4 = \Delta/2\,.
	\]
This shows the uniqueness of $\rho_{t+1}$ and its closed-form expression $\rho_{t+1} = \pi_{t+1} =
\rho'_{t+1} \circ \bar{\rho}$.
Finally, substituting this expression and using the definition of $\rho'_{t+1}$,
\[
\forall h \in [H], \qquad
\bigl\Arrowvert \hat{\nu}_{t+1,h} - \nu_{\bar{\rho}(h)} \bigr\Arrowvert_2
=
\bigl\Arrowvert \tilde{\nu}_{t+1,\rho_{t+1}(h)} - \nu_{\bar{\rho}(h)} \bigr\Arrowvert_2
=
\bigl\Arrowvert \tilde{\nu}_{t+1,\rho'_{t+1}(\bar{\rho}(h))} - \nu_{\bar{\rho}(h)} \bigr\Arrowvert_2
< \Delta/4\,,
\]	
which closes the induction and completes the proof.
\end{proof}

\subsection{Details on How \Cref{proposition:bound-hmm-parameters} Follows from Existing Results}
\label{sec:details-how-HMM}

\Cref{proposition:bound-hmm-parameters}
follows from various results scattered throughout
\citet{spectral-2012}, which introduced the spectral method, \citet{anandkumar-2012-spectral-hmm} and \citet{anandkumar-2014-tensor-mixture}.
\citet{spectral-pomdps-2016} and \citet{RegimeBandits-2021} extended the method to more complex settings involving
Markov decision processes (of which HMMs are special cases) and they offered a synthetical view of the constants
involved in the estimation bound (though some of these constants are larger or depend on more complex quantities
due to the consideration of Markov decision processes, which involve actions for the learner).
More precisely, we follow below the exposition by
\citet[Lemma~5, Lemma~8, Theorem~3, Theorem~16]{spectral-pomdps-2016} and most importantly,
\citet[Proposition~1]{RegimeBandits-2021}, with the needed modifications:
the constants $C_1$, $C_2$, $C_3$ and the threshold $\widetilde{T}_{\cB,\bM,\nu}$ of~\Cref{sec:proof-beliefestimationerror}
are as in \citet[Appendix~B]{RegimeBandits-2021} except for removing an unnecessary term $\arrowvert \cA \arrowvert$ from $C_1$
(it only arises due to their more complex setting) and up to substituting upper or lower bounds on
some quantities, as detailed below.

\paragraph{First series of quantities.} \Cref{assumption:regularity-hmm} entails that the hidden states form an ergodic Markov chain, which thus admits a unique stationary distribution $\pi$,
is geometrically mixing in the following sense and with the following parameters
(see \citealp{kontorovich2014uniform}, \citealp[Theorems~2.7.2 and~2.7.4]{pomdp-krishnamurthy-2016},
\citealp[Appendix~B]{RegimeBandits-2021}):
\[
\forall t \geq 2, \qquad
\max_{h' \in [H]} \, \sum_{h \in [H]} \bigl\arrowvert \P(h_t = h \mid h_1 = h') - \pi(h) \bigr\arrowvert \leq 4 (1 - \epsilon_{\bM})^{t-1} \,.
\]
Further, due to the boundedness of $\bM$ and the fact that $h_1$ follows $\pi$, for all $h \in [H]$ and all $t \geq 1$,
\[
\P(h_t = h) = \pi(h) = \sum_{h' \in [H]} \pi(h') \, \underbrace{\bM_{h', h}}_{\geq \epsilon_{\bM}} \geq \epsilon_{\bM}\,.
\]

\paragraph{Second series of quantities.} We define the following multi-view matrices $\bA_1, \bA_2, \bA_3 \in [0, 1]^{X \times H}$ for $t \geq 2$,
\begin{equation*}
	\begin{split}
\forall x \in \cX, \ \forall h \in [H], \qquad \qquad & \bA_1(x, h) = \P(\bx_{t-1} = x \mid h_t = h)\,, \\
		& \bA_2(x, h) = \P(\bx_{t} = x \mid h_t = h)\,, \\
		& \bA_3(x, h) = \P(\bx_{t+1} = x \mid h_t = h)\,,
	\end{split}
\end{equation*}
and we are interested in $\min\bigl\{\sigma_{\min}(\bA_1), \sigma_{\min}(\bA_2), \sigma_{\min}(\bA_3)\bigr\}$, where $\sigma_{\min}(\bA_i)$
is the smallest singular value of the matrix $\bA_i$, for $i \in \{1,2,3\}$.
We have $\bA_2 = \bE$ and $\bA_3 = \bE \bM^{\transp}$. The closed-form expression for $\bA_1$
is slightly more complex as we have to go backwards in the HMM:
\begin{multline*}
\bA_1(x, h) = \P(\bx_{t-1} = x \mid h_t = h) = \sum_{h' \in \cH} \overbrace{\P(\bx_{t-1} = x \mid h_{t-1} = h', \ h_t=h)}^{\mbox{\tiny no dep. on $h$ and} \ = \bE_{x,h'}}
\P(h_{t-1}=h' \mid h_t=h)\,, \\
\mbox{where} \qquad \P(h_{t-1}=h' \mid h_t=h) =
\frac{\P(h_t=h \mid h_{t-1}=h')\,\P(h_{t-1}=h')}{\P(h_t=h)}
= \frac{\pi(h')\,\bM_{h', h}}{\pi(h)}
\end{multline*}
(we recall that the Marhov chain of hidden states is initialized with
the stationary distribution $\pi$),
so that
\[
\bA_1 = \bE \diag(\pi) \bM \diag(\pi)^{-1}\,,
\]
where $\diag$ transforms a vector into a square diagonal matrix
with diagonal coefficients given by the vector.
We now use that for two matrices $\bE,\bE'$ of compatible sizes, we have
$\sigma_{\min}(\bE \, \bE') \geq \sigma_{\min}(\bE) \, \sigma_{\min}(\bE')$.
Also, given that all components of $\pi$ are in the
interval $[\epsilon_{\bM},1]$,
\[
\sigma_{\min} \bigl( \diag( \pi) \bigr) \geq \epsilon_{\bM}
\qquad \mbox{and} \qquad
\sigma_{\min} \bigl( \diag(\bp_{t})^{-1} \bigr) \geq 1\,.
\]
We obtain $\sigma_{\min}(\bA_2) = \sigma_{\min}(\bE)$, as well as
\[
\sigma_{\min}(\bA_1) \geq \sigma_{\min}(\bE)\,\sigma_{\min}(\bM)\,\epsilon_{\bM}
\qquad \mbox{and} \qquad
\sigma_{\min}(\bA_3) \geq \sigma_{\min}(\bE)\,\sigma_{\min}(\bM)\,.
\]
Given that $\bM$ is a stochastic matrix, we have $\epsilon_{\bM} < 1$ and $\sigma_{\min}(\bM) \leq 1$, and thus
\[
\min\bigl\{\sigma_{\min}(\bA_1), \sigma_{\min}(\bA_2), \sigma_{\min}(\bA_3)\bigr\}
\geq \sigma \eqdef \sigma_{\min}(\bE)\,\sigma_{\min}(\bM)\,\epsilon_{\bM}\,.
\]

\paragraph{Third series of quantities.}
Finally, we also introduce the co-occurrence matrix $\bA_4 \in [0, 1]^{X \times X}$:
\[
\forall (i,j) \in \cX \times \cX, \qquad
\bA_4(i, j) = \E\bigl[\Ind{\bx_{t+1}=i} \, \Ind{\bx_{t-1}=j}\bigr] = \P(\bx_{t+1}=i, \bx_{t-1}=j)
\]
and are interested in $\sigma_{\min}(\bA_4)$.
In a HMM, $\bx_{t+1}$ and $\bx_{t-1}$ are conditionally independent given $h_t$,
as both are drawn independently conditional on $h_{t+1}$ and $h_{t-1}$;
therefore, for any $i,j \in \cX$,
\[
\P(\bx_{t+1}=i, \bx_{t-1}=j) = \sum_{h \in [H]} \P(h_t = h) \, \P(\bx_{t+1}=i \mid h_t = h) \, \P(\bx_{t-1}=j \mid h_t = h)\,,
\]
and thus, $\bA_4 = \bA_3 \diag(\pi) \bA_1^{\transp}$.
Similarly as above, this rewriting entails 	
\[
\sigma_{\min}(\bA_4) \geq \sigma_{\min}(\bA_3) \, \sigma_{\min}\bigl( \diag(\pi) \bigr) \, \sigma_{\min} \bigl(\bA_1^{\transp} \bigr)
\geq \bigl( \sigma_{\min}(\bE)\,\sigma_{\min}(\bM)\,\epsilon_{\bM} \bigr)^2 = \sigma^2\,.
\]

\subsection{Reminder on the Spectral Method for MHH Parameter Estimation}
\label{sec:spectral}

Finally, for the sake of self-completedness,
we recall the spectral method for HMM parameter estimation, which is the method considered
in \Cref{proposition:bound-hmm-parameters}.
The exposition follows closely \citet[Section~4.2]{anandkumar-2012-spectral-hmm}.

Remember that we assume that the context $\cX$ space is finite, with cardinality $X$. With no loss of generality, we
may therefore identify it with the canonical vectors in $\R^X$,
i.e., up to numbering the elements in $\cX$ and substituting the $i$--th element, where $i \in [X]$,
by the column vector $\be_i = (0,\ldots,0,1,0\ldots,0)^{\transp} \in \R^X$, where the unique element $1$ is in $i$--th position.

We extend the tensor product notation to products of three elements:
for all vectors $\bu, \, \bv, \, \bw \in \R^X$,
the two-dimensional matrix $\bu \otimes \bv$ and the three-dimensional
matrix $\bu \otimes \bv \otimes \bw$ are defined component wise by
\[
\forall (i,j,k) \in [X]^3, \qquad
(\bu \otimes \bv)_{i, j} = \bu_i \bv_j \quad \mbox{and} \quad
(\bu \otimes \bv \otimes \bw)_{i, j, k} = \bu_i \bv_j \bw_k\,.
\]
We may now restate the special case of the estimation of HMM parameters
as\footnote{The indexing conventions are slightly different
here and in their article, so that extra transpositions appear here.} detailed in \citet[Section~4.2]{anandkumar-2012-spectral-hmm}.

\begin{figure}[!hbt]
	\begin{nbox}[title={Box C: Spectral estimation of HMM parameters from observations $\bx_{1:t}$}]
		\textbf{Inputs:}
        known number $H$ of hidden states;
		observations ${\bx}_1, {\bx}_2, \ldots, {\bx}_t$, where $t \geq 3$, in the
        form of canonical vectors \smallskip
		
        \textbf{Parameter:} invertible matrix $\Gamma \in \R^{H \times H}$, e.g., a random rotation matrix; denote by $\Gamma_{h,\,\cdot\,} \in \R^H$ its $h$--th row transposed into a column-vector \smallskip

		\textbf{Output:} estimates $\tilde{\bE}_{t}$ and $\tilde{\bM}_{t}$
        of the emission and transition matrices $\bE$ and $\bM$ \medskip
		
		\begin{enumerate}
			\item Compute the empirical moments
			\[
				\tilde{P}^{(t)}_{3, 1} = \frac{1}{t - 2} \sum_{s=2}^{t-1} {\bx}_{s+1} \otimes {\bx}_{s-1}\,, \quad
				\tilde{P}^{(t)}_{3, 2} = \frac{1}{t - 2} \sum_{s=2}^{t-1} {\bx}_{s+1} \otimes {\bx}_{s}\,, \quad
				\tilde{P}^{(t)}_{3, 1, 2} = \frac{1}{t - 2} \sum_{s=2}^{t-1} {\bx}_{s+1} \otimes {\bx}_{s-1} \otimes {\bx}_{s}\,.
			\]
			
			\item Compute $\tilde{U}_3, \tilde{U}_1 \in \R^{X \times H}$, the matrices whose columns are, respectively, the left and right singular vectors of $\tilde{P}^{(t)}_{3, 1}$ associated with its largest $H$ singular values. \smallskip \\
            Compute $\tilde{U}_2 \in \R^{X \times H}$, the matrix whose columns are the right singular vectors of $\tilde{P}^{(t)}_{3, 2}$ associated with its largest $H$ singular values. \medskip
			
			\item Define the contraction of the third-order tensor $\tilde{P}^{(t)}_{3, 1, 2}(\bz)$ along its third mode
            for each vector $\bz \in \R^X$ by
			\[
			\bigl[\tilde{P}^{(t)}_{3, 1, 2}(\bz) \bigr]_{i, j} :=
			\sum_{k=1}^X \bigl[ \tilde{P}^{(t)}_{3, 1, 2}\bigr]_{i,j,k} \, \bz_k\,,
			\]
			and let
			\[
				\tilde{B}^{(t)}_{3, 1, 2}(\bz) =
				\Bigl(\tilde{U}_{3}^{\transp} \tilde{P}^{(t)}_{3, 1, 2} (\bz) \tilde{U}_{1}\Bigr)
				\Bigl(\tilde{U}_{3}^{\transp} \tilde{P}^{(t)}_{3, 1} \tilde{U}_{1}\Bigr)^{-1}\,;		
			\]
            then, compute a matrix $\tilde{R} \in \R^{H \times H}$, with columns of unit Euclidean norm, such that
			\[
				\tilde{R}^{-1} \, \tilde{B}^{(t)}_{3, 1, 2}(\tilde{U}_{2} \Gamma_{1,\,\cdot\,}) \, \tilde{R} =
				\diag\bigl( \tilde{\lambda}_{1, 1}, \tilde{\lambda}_{1, 2}, \dots, \tilde{\lambda}_{1, H} \bigr)\,;
			\]
			if this is not possible, redraw $\Gamma$ and repeat this step. \medskip
			
			\item For each $h \in [H]$, using the matrix $\tilde{R}$ computed in Step~3, define $\tilde{\lambda}_{h, 1}, \tilde{\lambda}_{h, 2}, \dots, \tilde{
\lambda}_{h, H}$ as the diagonal entries of
			\[
			\tilde{R}^{-1} \, \tilde{B}^{(t)}_{3, 1, 2}(\tilde{U}_{2} \Gamma_{h,\,\cdot\,}) \, \tilde{R}
			\]
		    and form the matrix $\tilde{L} \in \R^{H \times H}$ with entries $\tilde{L}_{h, j} := \tilde{\lambda}_{h, j}$ for all $h, j \in [H]$. \medskip
			
			\item Define $\tilde{O}_t = \tilde{U}_2 \Gamma^{-1} \tilde{L}$ and output
			\[
				\tilde{\bE}_{t} = \tilde{O}_t
				\qquad \mbox{and} \qquad
				\tilde{\bM}_{t} = \Bigl(\bigl(\tilde{U}_3^{\transp} \tilde{O}_{t} \bigr)^{-1} \tilde{R} \Bigr)^{\!\transp}\,.
			\]
		\end{enumerate}
	\end{nbox}
\end{figure}

\paragraph{Remark on the practical implementation.}
In our numerical experiments, we will post-process $\tilde{M}_{t}$ and $\tilde{E}_{t}$ by clipping small negative entries and performing
the needed normalizations so that they define valid emission and transition matrices.

\clearpage
\section{HMM Forgetting Properties and Related Reminders}
\label{appendix:general}

This appendix justifies the exponentially fast forgetting of the
initial distribution of the HMM stated in
\Cref{assumption:strong-mixing-hmm}.

\assexpfastmixing*

This assumption involves quantities of the form $\P_E(E' \mid \cG)$,
where $E$ and $E'$ are events and $\cG$ is a $\sigma$--algebra,
all defined on the same underlying probability
space $(\Omega,\cF)$: we
provide some reminders on such quantities---including their definitions---in
Appendix~\ref{sec:Bayes-cond-proba} below.

See also Appendix~\ref{app:hmm-obs-ctxs} for reminders
on why we only condition by $\bx_{s+1:t}$ in the probability distributions above.

For now, we compare the assumption above to the alternative forgetting property
used by \citet[see Corollary C.4.1 therein]{nelson2022ContextLatent}, which is of the form:
there exists $\gamma \in [0,1)$ and a constant $C$, both depending on the
HMM, such that
\begin{equation}
\label{eq:forgetting-nelson-E}
	\E \left[\sum_{j \in [H]} \Bigl\arrowvert \P_{\{h_s=h\}}(h_t = j \mid \bx_{s+1:t}) - \P(h_t = j \mid \bx_{s+1:t}) \Bigr\arrowvert \right] \leq C \exp\bigl(-\gamma (t - s)\bigr)\,.
\end{equation}
This constant $\gamma$ is given by $\e^{-\gamma'/2}$, where $\gamma'$
is the minimal mixing rate of the transition matrix $\bM$:
\[
\gamma' = \min_{h', h'' \in \cH} \sum_{h \in \cH} \min\{M_{h,h'},M_{h,h''}\}\,.
\]
\citet{nelson2022ContextLatent} apply some inequalities for Markov processes established by
\citealp[see, in particular, Theorem~3]{Boyen1998Stochastic} to show the
property stated in \Cref{eq:forgetting-nelson-E}.

The forgetting property used by \citet{nelson2022ContextLatent}
is (by far) less demanding as the condition is to be satisfied in expectation
compared to the one of \Cref{assumption:strong-mixing-hmm}.
However, this is perfectly consistent with the
fact that \citet{nelson2022ContextLatent}
only provide bounds in expectation while the present article
instead aims for high-probability bounds (see
the discussion in \Cref{sec:intro}).

\subsection{Bayes' Formula for Probabilities Conditional to $\sigma$--Algebras}
\label{sec:Bayes-cond-proba}

We assume that $P(E) > 0$ and let $\P_E = \P(\,\cdot \mid E)$
denote the conditional probability with respect to event $E$. This is a probability
distribution over $(\Omega,\cF)$ and its conditional probability $\P_E(\,\cdot\, \mid \cG)$
with respect to the $\sigma$--algebra $\cG \subseteq \cF$
is thus well defined.

We recall in the lemma below
how to apply rigourously Bayes' theorem in this context.

\begin{lemma}
	\label{lm:bayes-event-random-variable}
	With the notation above and under the condition $P(E) > 0$, we have
	\[
	\P(E \cap E' \mid \cG) = \P(E \mid \cG) \,\times\, \P_E(E' \mid \cG)\,.
	\]
\end{lemma}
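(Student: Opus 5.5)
The identity follows from the defining property of conditional probability: it is the almost surely unique $\cG$--measurable random variable with the right integrals over sets of $\cG$. I will show that the right-hand side $Z \eqdef \P(E \mid \cG)\,\P_E(E' \mid \cG)$ has both properties that characterize $\P(E\cap E' \mid \cG)$. First, $Z$ is $\cG$--measurable as a product of two $\cG$--measurable variables. Both factors lie in $[0,1]$, so $Z$ is bounded and all integrals below are well defined.

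Second, I must show that $\E[\IndEv{G} Z] = \P(E\cap E'\cap G)$ for every $G \in \cG$. The defining property of $Y \eqdef \P_E(E'\mid\cG)$, which is an object under $\P_E$, gives, by the usual extension from indicators to simple functions and then by monotone convergence,
\[
\E_{\P_E}[W Y] = \E_{\P_E}\bigl[W \IndEv{E'}\bigr]
\]
for every bounded $\cG$--measurable $W$. Since $\E_{\P_E}[\,\cdot\,] = \E[\,\cdot\,\IndEv{E}]/\P(E)$, and $\P(E)>0$, this becomes
\[
\E[W Y \IndEv{E}] = \E\bigl[W \IndEv{E\cap E'}\bigr].
\]

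Now fix $G\in\cG$ and take $W=\IndEv{G}$. By the tower rule, using that $\IndEv{G}Y$ is $\cG$--measurable and bounded,
\[
\E[\IndEv{G} Z] = \E\bigl[\IndEv{G} Y\, \E[\IndEv{E}\mid\cG]\bigr] = \E[\IndEv{G} Y \IndEv{E}] .
\]
By the previous display, this equals $\E[\IndEv{G}\IndEv{E\cap E'}] = \P(E\cap E'\cap G)$, as required. Uniqueness of conditional expectation then gives the claimed almost-sure equality.

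The only delicate point is that $\P_E(E'\mid\cG)$ is defined only $\P_E$--almost surely, i.e., only on $E$ up to null sets. I must check that this ambiguity does not affect $Z$. Two versions of it can differ only on a set $N\in\cG$ with $\P(N\cap E)=0$. On such a set, $\P(E\mid\cG)=0$ $\P$--a.s., since $\E[\IndEv{N}\P(E\mid\cG)] = \P(N\cap E)=0$ and the integrand is nonnegative. Hence the product $Z$ is unaffected $\P$--almost surely, so the statement is well posed.
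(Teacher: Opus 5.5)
Your proof is correct and follows the same route as the paper: you check the defining property of $\P(E \cap E' \mid \cG)$ against indicators of sets in $\cG$, use the tower rule to replace $\P(E \mid \cG)$ by $\IndEv{E}$, and then transfer the defining property of $\P_E(E' \mid \cG)$ from $\P_E$ to $\P$ via $\E_{\P_E}[\,\cdot\,] = \E[\,\cdot\,\IndEv{E}]/\P(E)$. Your closing check, that the product does not depend on which $\P_E$--a.s.\ version of $\P_E(E' \mid \cG)$ is used, is a worthwhile addition the paper omits; you also track the $\P(E)$ normalization more carefully than the paper's displayed chain, which drops that factor in an intermediate step (harmlessly, since it cancels).
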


\begin{proof}
	A characterization of the conditional expectation $\E[Z \mid \cG]$ of a nonnegative random variable $Z \geq 0$ is that it is a $\cG$--measurable random variable satisfying
	\begin{equation}
		\label{eq:charact-esp-cond}
		\forall A \in \cG, \qquad \E[Z \, \IndEv{A}] = \E\bigl[ E[Z\mid\cG] \, \IndEv{A}]\,.
	\end{equation}
	We thus should prove that for all events $A \in \cG$,
	\begin{equation}
		\label{eq:charact-esp-cond-2}
		\E\bigl[ \IndEv{A} \, \P(E \cap E' \mid \cG) \bigr] = \E\bigl[ \IndEv{A} \, \P(E \mid \cG)  \,\times\,  \P_{E}(E' \mid \cG) \bigr]\,.
	\end{equation}
	By \Cref{eq:charact-esp-cond} for the second equality, the left-hand side of \Cref{eq:charact-esp-cond-2} can be rewritten as
	\[
	\E\bigl[ \IndEv{A} \, \P(E \cap E' \mid \cG) \bigr] = \E \bigl[ \IndEv{A} \, \E[\IndEv{E} \, \IndEv{E'} \mid \cG] \bigr]
	= \E[\IndEv{A} \, \IndEv{E} \, \IndEv{E'}]\,.
	\]
	By $\cG$--measurability of $\P_{E}(E' \mid \cG)$ for the second equality and
	by~\Cref{eq:charact-esp-cond} for the third equality,
	the right-hand side of \Cref{eq:charact-esp-cond-2} can first be rewritten as
	\begin{align*}
		\E \bigl[ \IndEv{A} \, \P(E \mid \cG)  \,\times\,  \P_{E}(E' \mid \cG) \bigr]
		& = \E \bigl[ \IndEv{A} \, \E[\IndEv{E} \mid \cG]  \,\times\,  \P_{E}(E' \mid \cG) \bigr] \\
		& = \E \bigl[ \IndEv{A} \, \E[\IndEv{E} \, \P_{E}(E' \mid \cG) \mid \cG] \bigr]
		= \E \bigl[ \IndEv{A} \, \IndEv{E} \, \P_{E}(E' \mid \cG) \bigr]\,.
	\end{align*}
	We continue the calculation by noting, for the first and last equalities below, that, by definition,
	$\E[ \, \cdot \,\, \IndEv{E} ] = \P(E) \times \E_E[ \, \cdot \,\, \IndEv{E}]$,
	where $\E_E$ denotes the conditional expectation with respect to the event $E$,
	and, for the third equality, by resorting again to \Cref{eq:charact-esp-cond} with $\E_E$:
	\[
		\E \bigl[ \IndEv{A} \, \IndEv{E} \, \P_{E}(E' \mid \cG) \bigr]
		= \E_E \bigl[ \IndEv{A} \, \P_{E}(E' \mid \cG) \bigr]
		= \E_E \bigl[ \IndEv{A} \, \E_{E}[\IndEv{E'} \mid \cG] \bigr]
		= \E_E[ \IndEv{A} \, \IndEv{E'}]
		= \E[ \IndEv{A} \, \IndEv{E} \, \IndEv{E'}] \,.
	\]
	The proof is concluded by collecting all equalities.
\end{proof}

\subsection{Consequences of the Hidden Markov Model Formulation}
\label{app:hmm-obs-ctxs}

Before we discuss (and prove) that \Cref{assumption:strong-mixing-hmm}
is natural, it is useful to state a reminder on how some conditionings may be
simplified.

Recall the definition of $\cFall_{t}$ from \Cref{sec:problem-formulation}:
\[
\cFall_{t} = \sigma\biggl( \Bigl(  h_\tau,\, \bx_\tau, \, \bigl( \eta_\tau(a) \bigr)_{a \in \cA} \Bigr)_{\tau \leq t-1}, \, h_t, \, \bx_t \biggr)\,;
\]
this is the richer filtration we consider.

The HMM model implies that, for $\tau < \tau'$, conditionally on $h_\tau$, the distribution of
$h_{\tau'}$ is independent of past and present information, that is, $\cFall_{\tau}$ and $\eta_\tau(a)$, but of course, not from future information corresponding to rounds $\tau+1$ till $\tau'$, like
the contexts $\bx_{\tau+1:\tau'}$.

More formally and for example, we have, for $\tau < \tau'$,
\[
\P( h_{\tau'} = h' \mid \cFall_{\tau}, \, \eta_\tau(a_\tau), \, \bx_{\tau+1:\tau'})
= \P( h_{\tau'} = h' \mid h_{\tau}, \, \bx_{\tau+1:\tau'})\,.
\]
Via the same tools as in Appendix~\ref{sec:Bayes-cond-proba},
this entails, in particular, that for all $h \in [H]$,
\begin{equation}
	\label{eq:HHM-indep}
	\P_{\{h_\tau = h\}}( h_{\tau'} = h' \mid \cFall_{\tau}, \, \eta_\tau(a_\tau), \, \bx_{\tau+1:\tau'})
	= \P_{\{h_\tau = h\}}( h_{\tau'} = h' \mid \bx_{\tau+1:\tau'})\,.
\end{equation}

\subsection{Some Classic Condition Leading to \Cref{assumption:strong-mixing-hmm}}
\label{appendix:HMM-forgetting}

In this appendix, we show how \Cref{assumption:strong-mixing-hmm}
follows from the assumption below,
considered by \citet[Chapter~3]{olivier-2005-inf-hmm} and rewritten in our context.
Remember that we consider an homogeneous HMM (the distributions of transitions and emissions
do not depend on the round), which is why the assumption is only
stated with hidden states $h_1$ and $h_2$.

\begin{assumption}[{\citealp[Assumption~59, ``strong mixing condition'']{olivier-2005-inf-hmm}}]
	\label{assumption:strong-mixing-condition}
	There exists a transition kernel $K: \cX \times \cH \to (0, 1)$ and measurable functions $\zeta^{-}, \zeta^{+}: \cX \to (0, +\infty)$,
	with $\zeta^{-} \leq \zeta^{+}$, such that
	for all Borel sets $E$ of $\cX$,
	and all $h, h' \in [H]$,
	\[
	\int_E \zeta^{-}(\bx) \, K(\bx, h') \, \d \bx \leq \P(h_2 = h', \, \bx_2 \in E \mid h_1 = h) \leq \int_E \zeta^{+}(\bx) \, K(\bx, h') \,\d \bx \,.
	\]
\end{assumption}

\citet[Proposition~61]{olivier-2005-inf-hmm}
almost immediately entails the following lemma.

\begin{lemma}
	\Cref{assumption:strong-mixing-condition} entails \Cref{assumption:strong-mixing-hmm}, whenever
	\[
	\sup_{x \in \cX} \biggl( 1 - \frac{\zeta^{-}(\bx)}{\zeta^{+}(\bx)} \biggr) < 1\,.
	\]
	This is the case at least when $\cX$ is finite.
\end{lemma}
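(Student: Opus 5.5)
The statement to prove is the last lemma: \Cref{assumption:strong-mixing-condition}, together with $\sup_{\bx}\bigl(1-\zeta^-(\bx)/\zeta^+(\bx)\bigr)<1$, implies \Cref{assumption:strong-mixing-hmm}, and this ratio condition holds when $\cX$ is finite.

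The approach is to write the filtering recursion as a composition of Markov kernels that each satisfy a Doeblin-type minorization, and to contract the total variation distance one step at a time.

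\textbf{Step 1: the one-step kernel.} Fix $s<t$. Conditionally on $\{h_s=h\}$ and on $\bx_{s+1:t}$, the law of $h_t$ is obtained by applying forward filtering kernels. For each $u\in\{s+1,\dots,t\}$, the filter update from the law of $h_{u-1}$ to the law of $h_u$ given $\bx_u$ is the normalized kernel
\[
F_u(\bp)(h') \propto \sum_{h}\bp(h)\,q(h,h',\bx_u),
\]
where $q(h,\cdot,\cdot)$ is the density of $(h_2,\bx_2)$ given $h_1=h$.

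Normalization makes $F_u$ nonlinear in $\bp$. To handle this, I would use the backward-function representation, as in \citet[Proposition~61]{olivier-2005-inf-hmm}. Conditionally on $\bx_{s+1:t}$, the chain $(h_u)_{u\geq s}$ is a non-homogeneous Markov chain with forward kernels
\[
Q_u(h,h')\propto q(h,h',\bx_u)\,\beta_u(h'),
\]
where $\beta_u(h')$ is the likelihood of $\bx_{u+1:t}$ given $h_u=h'$. Consequently, $\P_{\{h_s=h\}}(h_t=\cdot\mid\bx_{s+1:t}) = \delta_h Q_{s+1}\cdots Q_t$, and the quantity in \Cref{assumption:strong-mixing-hmm} is the $\ell^1$ distance between $\delta_h Q_{s+1}\cdots Q_t$ and $\delta_{h'}Q_{s+1}\cdots Q_t$.

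\textbf{Step 2: minorization of each kernel.} Using the sandwich of \Cref{assumption:strong-mixing-condition}, $q(h,h',\bx)$ lies between $\zeta^-(\bx)K(\bx,h')$ and $\zeta^+(\bx)K(\bx,h')$. Hence for every $h$,
\[
Q_u(h,h')\;\geq\;\frac{\zeta^-(\bx_u)}{\zeta^+(\bx_u)}\,\mu_u(h'),
\qquad \mu_u(h')\propto K(\bx_u,h')\,\beta_u(h').
\]
Here $\mu_u$ is a probability distribution that does not depend on $h$. The ratio appears because the normalizing constants of different rows are bounded by the $\zeta^+$ and $\zeta^-$ integrals respectively.

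A Doeblin minorization with constant $c_u$ gives Dobrushin coefficient $\leq 1-c_u$. Contracting the total variation distance one step at a time then yields
\[
\|\delta_hQ_{s+1:t}-\delta_{h'}Q_{s+1:t}\|_1\leq 2\prod_{u=s+1}^{t}\Bigl(1-\frac{\zeta^-(\bx_u)}{\zeta^+(\bx_u)}\Bigr)\leq 2\gamma^{t-s},
\]
with $\gamma=\sup_{\bx}\bigl(1-\zeta^-(\bx)/\zeta^+(\bx)\bigr)<1$, which is exactly \Cref{assumption:strong-mixing-hmm}.

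\textbf{Step 3: the finite case.} When $\cX$ is finite, one may take $K(\bx,h')=1$ and define
\[
\zeta^\pm(\bx)=\max/\min_{h,h'} M_{h,h'}\,\nu_{h'}(\bx),
\]
with densities taken with respect to counting measure. The supremum in the ratio condition is then a maximum over finitely many $\bx$, each with ratio strictly positive, so it is $<1$.

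This choice requires $\zeta^->0$ wherever the data can occur. That holds under positivity, for instance $M_{h,h'}>0$ and $\nu_h(\bx)>0$. Points $\bx$ where every $\nu_h(\bx)$ vanishes have probability zero and can be removed from $\cX$.

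\textbf{Main obstacle.} The delicate point is Step 2: getting the minorization for the normalized, smoothing-reweighted kernel $Q_u$ rather than for the raw transition. This requires checking that $\beta_u$ cancels consistently between the upper and lower bounds. I would follow the argument of \citet[Proposition~61 and Lemma~4.3.x]{olivier-2005-inf-hmm} line by line, adapted to the finite state space $[H]$.
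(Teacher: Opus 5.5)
Your proposal is correct and follows the paper's route. The paper invokes \citet[Proposition~61]{olivier-2005-inf-hmm} after reducing to $s=1$ by homogeneity and dropping $\bx_1$ from the conditioning. Your Steps~1--2 reproduce the backward-kernel Doeblin/Dobrushin argument behind that proposition, applied directly to $\bx_{s+1:t}$, so you do not need that reduction.

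For the finite case you do not need to construct $\zeta^{\pm}$. The assumption already requires $\zeta^{-}>0$ pointwise, so each of the finitely many numbers $1-\zeta^{-}(\bx)/\zeta^{+}(\bx)$ is $<1$, and so is their maximum; this is the paper's one-line argument. Your particular choice would in any case need every $\nu_{h'}(\bx)>0$; removing only the points where all of them vanish is not enough.
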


\begin{proof}
	By homogeneity, it suffices to prove \Cref{assumption:strong-mixing-hmm} for $s = 1$ and $t \geq 2$.
	\citet[Proposition~61, based on Assumption~\ref{assumption:strong-mixing-condition} above]{olivier-2005-inf-hmm}
	guarantees that for all $t \geq 2$, for all pairs $h,h' \in [H]$ of hidden states,
	\begin{equation}
		\label{eq:Cappe-61}
		\sum_{j \in [H]} \Bigl| \P_{\{h_1=h\}}(h_t=j \mid \bx_{1:t}) - \P_{\{h_1=h'\}}(h_t=j \mid \bx_{1:t}) \Bigr|
		\leq 2 \prod_{\tau=2}^{t} \left(1 - \frac{\zeta^{-}(\bx_\tau)}{\zeta^{+}(\bx_\tau)}\right).
	\end{equation}
	Under the HMM property, $h_t$ is independent of $\bx_1$ given $h_1$; thus $\bx_{1}$ can be removed from the
	conditionings in the left-hand sides of the above inequality (see Appendix~\ref{app:hmm-obs-ctxs}). Each of the terms $1 - \zeta^{-}(\bx_\tau)/\zeta^{+}(\bx_\tau)$ is non-negative, since
	$\zeta^{-} \leq \zeta^{+}$ by assumption.
	We further bound the right-hand side of \Cref{eq:Cappe-61} by $2\gamma^{t-1}$ (which is the upper bound claimed
	by \Cref{assumption:strong-mixing-hmm}), where
	\[
	\gamma = \sup_{x \in \cX} \biggl( 1 - \frac{\zeta^{-}(\bx)}{\zeta^{+}(\bx)} \biggr).
	\]
	When $\cX$ is finite, \Cref{assumption:strong-mixing-condition} imposed that $\zeta^{-}(\bx) > 0$
	for all $\bx \in \cX$, thus each of the finitely many terms in the defining maximum of $\gamma$
	are strictly smaller than~$1$, therefore, so is $\gamma$.
\end{proof}

\clearpage
\section{Proof of \Cref{theorem:total-regret-bound}}
\label{appendix:complex-analysis}

The aim of this section is to prove the main result, which we restate below.

\tregretbound*

The analysis follows the same structure as the one
in Appendix~\ref{appendix:simplified-analysis}
for the simplified model.
Therein, the main piece in establishing the regret bound of~\Cref{theorem:total-regret-bound-simplified}
consisted of building confidence intervals in \Cref{lm:est-simple}:
the total regret bound was basically given by $2$ times the sum of the errors margins
of these confidence intervals.
The counterpart to \Cref{lm:est-simple} is the following.
The only difference is that union bounds must be performed with greater care,
hence the consideration of stage-varying confidence levels $1-\delta_s$.

\begin{lemma}
	\label{lm:rt-bound}
    Fix errors levels $\delta_s \in (0,1)$ for each $s \geq 1$.
	Under Assumptions~\ref{assumption:bound-R-linear}--\ref{assumption:idio-noise}--\ref{assumption:strong-mixing-hmm},
	for all $s \geq 1$, for all $t \in [(s-1) \ell + 1, \, s\ell]$, with probability
	at least $1 - 2\delta_s$,
	\begin{equation}
		\label{eq:lm-rt-bound}
		\forall a \in \cA, \qquad \left| \sum_{h \in \cH} \bb_t(h) \bphi(a, \bx_t)^{\transp} \btheta^{\star}_{h}
		- \sum_{h \in \cH} \hbb_t(h) \bphi(a, \bx_t)^{\transp} \hbtheta_{(s-1) \ell,h} \right| \leq \eps'_{t,s,\lambda,\delta_s,a}\,,
	\end{equation}
	where $\eps'_{t,1,\lambda,\delta_1,a} = 1+\sqrt{d}/\lambda$ for $s=1$, and for $s \geq 2$,
	\begin{multline*}
	\eps'_{t,s,\lambda,\delta_s,a} = \bigl\Arrowvert \bb_t - \hbb_t \bigr\Arrowvert_1 + \biggl\Arrowvert G_{(s -1) \ell}^{-1} \Bigl(\hbb_t \otimes \bphi(a, \bx_t)\Bigr) \biggr\Arrowvert_2 \, \Biggl( \lambda \sqrt{H} \, C_{\btheta^{\star}} + \sqrt{ \frac{4(s-1)(1+s\gamma)\ell}{\delta_s(1-\gamma)}}
			+ \sqrt{\frac{1}{\delta_s} C_{\eta} (s-1) \ell } \\
            + \frac{2 (s-1) \gamma}{1 - \gamma} + \sum_{\tau=1}^{(s-1)\ell} \Arrowvert \bb_{\tau} - \hbb_{\tau} \Arrowvert_1
			 \Biggr)\,.
	\end{multline*}
\end{lemma}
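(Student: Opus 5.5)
The plan mirrors the proof of \Cref{lm:est-simple}, with the LinUCB self-normalized martingale bound replaced by $\L^2$--Markov arguments and the filtration $\cU_t$ used to take care of the dependence between actions and hidden states. The case $s=1$ is the same computation as for $t=1$ in the simplified model: $\hbtheta_0=(1/\lambda)\bone$, so a triangle inequality, \Cref{assumption:bound-R-linear} and Cauchy--Schwarz give the bound $1+\sqrt{d}/\lambda$ with probability one.

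For $s\geq 2$, first split the left-hand side of \eqref{eq:lm-rt-bound} by a triangle inequality. One part is $\Arrowvert \bb_t-\hbb_t\Arrowvert_1$, using $|\bphi^{\transp}\btheta^\star_h|\leq 1$. The other part is $\bigl(\hbb_t\otimes\bphi(a,\bx_t)\bigr)^{\transp}(\btheta^\star-\hbtheta_{(s-1)\ell})$. Next, decompose each observed reward as
\[
r_\tau(a_\tau) = \bphi(a_\tau,\bx_\tau)^{\transp}\Bigl(\btheta^\star_{h_\tau}-\textstyle\sum_{h'}\bar\bb_\tau(h')\btheta^\star_{h'}\Bigr) + \textstyle\sum_{h'}\bphi(a_\tau,\bx_\tau)^{\transp}\btheta^\star_{h'}\bigl(\bar\bb_\tau(h')-\hbb_\tau(h')\bigr) + \bigl(\hbb_\tau\otimes\bphi(a_\tau,\bx_\tau)\bigr)^{\transp}\btheta^\star + \eta_\tau(a_\tau),
\]
where $\bar\bb_\tau(h)=\P(h_\tau=h\mid\cU_\tau)$. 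Substituting into the definition of $\hbtheta_{(s-1)\ell}$ writes $\btheta^\star-\hbtheta_{(s-1)\ell}$ as
\[
G_{(s-1)\ell}^{-1}\bigl(\lambda\btheta^\star-\Sdiffp{}-\Sbeliefp{}-\Setap{}\bigr),
\]
where each $S'$ is a sum over $\tau\leq (s-1)\ell$ of $\hbb_\tau\otimes\bphi(a_\tau,\bx_\tau)$ multiplied by one of the three scalar terms above. Cauchy--Schwarz in plain Euclidean norm then gives the factor $\Arrowvert G^{-1}_{(s-1)\ell}(\hbb_t\otimes\bphi(a,\bx_t))\Arrowvert_2$, times $\lambda\Arrowvert\btheta^\star\Arrowvert_2\leq\lambda\sqrt H C_{\btheta^\star}$ plus the three norms $\Arrowvert S'\Arrowvert_2$. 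This accounts for the Euclidean form and the unsquare-rooted $\lambda$ in the statement.

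The belief term is handled deterministically. Its norm is at most $\sum_\tau\Arrowvert\bar\bb_\tau-\bb_\tau\Arrowvert_1+\sum_\tau\Arrowvert\bb_\tau-\hbb_\tau\Arrowvert_1$. To bound $\Arrowvert\bar\bb_\tau-\bb_\tau\Arrowvert_1$, note that $\cU_\tau$ adds to $\bx_{1:\tau}$ only the estimates $\hbtheta$ from complete stages. These are functions of information up to the end of the previous stage, say round $u=(s_\tau-1)\ell$, together with noise. Condition on $h_u$ and use \eqref{eq:HHM-indep} and \Cref{lm:bayes-event-random-variable}: both $\bar\bb_\tau$ and $\bb_\tau$ become mixtures over $h$ of $\P_{\{h_u=h\}}(h_\tau=\cdot\mid\bx_{u+1:\tau})$. \Cref{assumption:strong-mixing-hmm} then gives $\Arrowvert\bar\bb_\tau-\bb_\tau\Arrowvert_1\leq 2\gamma^{\tau-u}$. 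Summing the geometric series within each of the $s-1$ stages yields $2(s-1)\gamma/(1-\gamma)$.

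The noise and $\Delta$ terms are handled by $\L^2$--Markov. For $\Setap{}$, compute $\E\Arrowvert\Setap{}\Arrowvert_2^2$. The cross terms vanish because $\E[\eta_{\tau'}(a_{\tau'})\mid\cFall_{\tau'}]=0$, and $a_{\tau'}$ and the earlier variables are $\cFall_{\tau'}$--measurable. The diagonal terms are at most $C_\eta$, so $\E\Arrowvert\Setap{}\Arrowvert_2^2\leq C_\eta(s-1)\ell$, and Markov's inequality gives the $\sqrt{C_\eta(s-1)\ell/\delta_s}$ term with probability $1-\delta_s$.

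The main obstacle is $\Sdiffp{}$, whose scalar weights are $z_\tau$ with $|z_\tau|\leq 2$. Expand $\E\Arrowvert\Sdiffp{}\Arrowvert_2^2\leq\sum_\tau 4+2\sum_{\tau<\tau'}|\E[z_\tau z_{\tau'}\langle v_\tau,v_{\tau'}\rangle]|$, where $v_\tau=\hbb_\tau\otimes\bphi(a_\tau,\bx_\tau)$. For each cross term, condition on $\cU_{\tau'}$ augmented by $h_\tau$. Everything except $h_{\tau'}$ is measurable here, thanks to staging: $a_{\tau'}$ depends only on the estimates from earlier stages. The conditional mean of $z_{\tau'}$ is then $\bphi(a_{\tau'},\bx_{\tau'})^{\transp}\sum_h\bigl(\P(h_{\tau'}=h\mid\cdot)-\bar\bb_{\tau'}(h)\bigr)\btheta^\star_h$. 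By forgetting, this is bounded by $2\gamma^{\tau'-\tau}$ when $\tau,\tau'$ lie in the same stage. When they lie in different stages, it is bounded by $2\gamma^{\tau'-u'}$, with $u'$ the last stage boundary, because the estimates in $\cU_{\tau'}$ already depend on $h_\tau$.

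The delicate step is making the conditioning rigorous. If it fails, I would first try conditioning on $\cFall_{u'+1}$ together with the noise, and invoke \eqref{eq:HHM-indep}. Summing gives at most $(s-1)\ell\cdot 2/(1-\gamma)$ from same-stage pairs. Cross-stage pairs contribute $\sum_{\tau'}(\text{number of earlier }\tau)\cdot 2\gamma^{\tau'-u'}\lesssim (s-1)\ell\cdot s\gamma/(1-\gamma)$. In total this is $4(s-1)(1+s\gamma)\ell/(1-\gamma)$. Markov's inequality at level $\delta_s$ then gives the stated term, and a union bound over the $\Sdiffp{}$ and $\Setap{}$ events gives probability at least $1-2\delta_s$.
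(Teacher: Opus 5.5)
Your proposal is correct and follows the paper's proof essentially step by step. It uses the same decomposition into $\Sdiffp{(s-1)\ell}$, $\Sbeliefp{(s-1)\ell}$, $\Setap{(s-1)\ell}$ with a Euclidean Cauchy--Schwarz, the deterministic forgetting bound $2(s-1)\gamma/(1-\gamma)$ for the belief term, $\L^2$--Markov for the noise and $\Delta$ terms with the same same-stage ($\gamma^{\tau'-\tau}$) and cross-stage ($\gamma^{\tau'-(s_{\tau'}-1)\ell}$) forgetting rates, and a final union bound. Conditioning the cross terms on $\sigma(\cU_{\tau'},h_\tau)$ instead of on $\cU_{\tau'}$ plus \Cref{lm:bayes-event-random-variable} is only a cosmetic variation. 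Your intermediate sums are somewhat garbled, but the correct bookkeeping (per-pair bound $4\gamma^{(\cdot)}$ from $|z_\tau|\leq 2$) gives $4(s-1)\ell + 4(s-1)s\ell\gamma/(1-\gamma) \leq 4(s-1)(1+s\gamma)\ell/(1-\gamma)$, exactly the bound the lemma needs.
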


Appendix~\ref{appendix:reward-estimation-bound} provides the proof of
\Cref{lm:rt-bound}, while
Appendix~\ref{appendix:proof-total-regret} proves \Cref{theorem:total-regret-bound}
based on \Cref{lm:rt-bound}.

\subsection{Proof of \Cref{lm:rt-bound}}
\label{appendix:reward-estimation-bound}

The proof adapts the one of \Cref{lm:est-simple}:
the very beginning is similar, up to considering
$\hbtheta_{(s-1) \ell,h}$ instead of $\hbtheta_{t-1,h}$,
but the core of the proof is significantly different,
as the LinUCB approach by \citet{abbasi2011improved}
cannot be followed anymore; see details on the reasons for this
non-applicability in Appendix~\ref{app:other-techn-tool}.

The proof below actually details the claims and proof structure
presented in Appendix~\ref{app:other-techn-tool},
partly based on some $\L^2$--Markov-based deviation
inequality by~\citet{nelson2022ContextLatent}.

\begin{proof}
The deterministic bound $1+\sqrt{d}/\lambda$ actually holds
for all $s$ and $t$, see the comments after the statement of \Cref{lm:est-simple}.
The rest of the proof thus only covers the case $s \geq 2$. By a triangle inequality
	and by leveraging again \Cref{assumption:bound-R-linear}, the target quantity can be bounded by
	\begin{align}
		\nonumber
		\lefteqn{\left| \sum_{h \in \cH} \bb_t(h) \bphi(a, \bx_t)^{\transp} \btheta^{\star}_{h}
			- \sum_{h \in \cH} \hbb_t(h) \bphi(a, \bx_t)^{\transp} \hbtheta_{(s-1) \ell,h} \right|} \\
		\nonumber
		& \leq \Biggl| \sum_{h \in \cH} \bigl(\bb_t(h) - \hbb_t(h) \bigr) \overbrace{\bphi(a, \bx_t)^{\transp} \btheta^{\star}_{h}}^{|\,\cdot\,| \leq 1} \Biggr|
		+ \left| \sum_{h \in \cH} \hbb_t(h) \bphi(a, \bx_t)^{\transp} \bigl(\btheta^{\star}_{h} - \hbtheta_{(s-1) \ell,h} \bigr) \right| \\
		\label{eq:rest-proof-eq}
		& \leq \underbrace{\sum_{h \in \cH} \bigl|(\bb_t(h) - \hbb_t(h)\bigr|}_{= \Arrowvert \bb_t - \hbb_t \Arrowvert_1} +
		\Biggl| \underbrace{\sum_{h \in \cH} \hbb_t(h) \bphi(a, \bx_t)^{\transp} \bigl(\btheta^{\star}_{h} - \hbtheta_{(s-1) \ell,h} \bigr)}_{=
			(\hbb_t \otimes \bphi(a, \bx_t))^{\transp} (\hbtheta_{(s -1) \ell} - \btheta^{\star})} \Biggr|.
	\end{align}
	The rest of the proof bounds the second term of the upper bound above. We first rewrite the differences
	$\btheta^{\star} - \hbtheta_{(s-1) \ell}$ in terms of the payoffs, using the definitions around~\Cref{eq:estimate-theta}:
	\begin{multline*}
		\btheta^{\star} - \hbtheta_{(s-1) \ell}
		= G_{(s -1) \ell}^{-1} \left( G_{(s -1) \ell} \, \btheta^{\star} -
		\sum_{\tau=1}^{(s -1) \ell} \bigl(\hbb_\tau \otimes \bphi(a_\tau, \bx_\tau)\bigr) r_\tau(a_\tau) \right) \\
		= G_{(s -1) \ell}^{-1} \left( \lambda \id{dH} \btheta^{\star} -
		\sum_{\tau=1}^{(s -1) \ell} \bigl(\hbb_\tau \otimes \bphi(a_\tau, \bx_\tau)\bigr) \Bigl(r_\tau(a_\tau)  -
		\bigl(\hbb_\tau \otimes \bphi(a_\tau, \bx_\tau)\bigr)^{\!\transp} \btheta^{\star}\Bigr) \right) .
	\end{multline*}
	We substitute $r_{\tau}(a_{\tau})$ in the expression above, but first rewrite it :
	\begin{align*}
		& r_{\tau}(a_{\tau}) \defeq \bphi(a_{\tau}, \bx_{\tau})^{\transp} \btheta^{\star}_{h_{\tau}} + \eta_{\tau}(a_{\tau}) \\
		& = \bphi(a_{\tau},\bx_{\tau})^{\transp} \biggl( \btheta^\star_{h_{\tau}}
		- \sum_{h' \in [H]} \bar{\bb}_{\tau}(h') \btheta^\star_{h'} \biggr)
		+ \sum_{h' \in [H]}  \bphi(a_{\tau},\bx_{\tau})^{\transp} \btheta^\star_{h'} \Bigl( \bar{\bb}_{\tau}(h') - \hbb_{\tau}(h') \Bigr)
        + \bigl(\hbb_\tau \otimes \bphi(a_\tau, \bx_\tau)\bigr)^{\!\transp} \btheta^{\star}
		+ \eta_{\tau}(a_{\tau})\,.
	\end{align*}
    The second term in \Cref{eq:rest-proof-eq} may therefore be rewritten as
    \begin{align*}
    \sum_{h \in \cH}  \hbb_t(h) \bphi(a, \bx_t)^{\transp} \bigl(\btheta^{\star}_{h} - \hbtheta_{(s-1) \ell,h} \bigr) & =
    \Bigl(\hbb_t \otimes \bphi(a, \bx_t)\Bigr)^{\transp} \Bigl(\hbtheta_{(s -1) \ell} - \btheta^{\star} \Bigr) \\
    & = \Bigl(\hbb_t \otimes \bphi(a, \bx_t)\Bigr)^{\transp} \, G_{(s -1) \ell}^{-1} \,
    \bigl( \Sdiffp{(s-1) \ell} + \Sbeliefp{(s-1) \ell} + \Setap{(s-1) \ell} - \lambda \id{dH} \btheta^{\star} \bigr)\,,
    \end{align*}
    where
    \begin{align*}
    \Sdiffp{(s-1) \ell} & = \sum_{\tau=1}^{(s-1) \ell} \bphi(a_\tau,\bx_\tau)^{\transp} \biggl( \btheta^\star_{h_\tau}
	- \sum_{h' \in [H]} \bar{\bb}_\tau(h') \btheta^\star_{h'} \biggr) \hbb_\tau \otimes \bphi(a_\tau,\bx_\tau) \,, \\
    \Sbeliefp{(s-1) \ell} & = \sum_{\tau=1}^{(s-1)\ell} \sum_{h' \in [H]}  \bphi(a_\tau,\bx_\tau)^{\transp} \btheta^\star_{h'} \bigl( \bar{\bb}_\tau(h')
	- \hbb_\tau(h')  \bigr) \hbb_\tau \otimes \bphi(a_\tau,\bx_\tau)\,, \\
    \Setap{(s-1) \ell} & = \sum_{\tau=1}^{(s-1)\ell} \eta_{\tau}(a_\tau) \, \hbb_\tau \otimes \bphi(a_\tau,\bx_\tau)\,.
    \end{align*}
    The Euclidean norm of each of these three term is bounded in a series of lemmas below:
    $\Sdiffp{(s-1) \ell}$ in \Cref{lm:true-vs-belief},
    $\Setap{(s-1) \ell}$ in \Cref{lm:bound-idiosyncratic}, and
    $\Sbeliefp{(s-1) \ell}$ in \Cref{lm:bound-belief-vs-estimated}.

    More precisely, we bound the second term in \Cref{eq:rest-proof-eq}
	by a Cauchy-Schwarz inequality and a triangle inequality:
	\begin{align*}
		\lefteqn{\left| \sum_{h \in \cH}  \hbb_t(h) \bphi(a, \bx_t)^{\transp} \bigl(\btheta^{\star}_{h} - \hbtheta_{(s-1) \ell,h} \bigr) \right|} \\
			& = \Biggl| \Bigl(\hbb_t \otimes \bphi(a, \bx_t)\Bigr)^{\transp} \, G_{(s -1) \ell}^{-1} \,
    \bigl( \Sdiffp{(s-1) \ell} + \Sbeliefp{(s-1) \ell} + \Setap{(s-1) \ell} - \lambda \id{dH} \btheta^{\star} \bigr) \Biggr| \\
		& \leq \biggl\Arrowvert G_{(s -1) \ell}^{-1} \Bigl(\hbb_t \otimes \bphi(a, \bx_t)\Bigr) \biggr\Arrowvert_2
		\, \Bigl( \bigl\Arrowvert \Sdiffp{(s-1) \ell} \bigr\Arrowvert_2 + \bigl\Arrowvert \Setap{(s-1) \ell} \bigr\Arrowvert_2 +
    \bigl\Arrowvert \Sbeliefp{(s-1) \ell} \bigr\Arrowvert_2 + \Arrowvert \lambda\btheta^{\star} \Arrowvert_2 \Bigr)\,,
	\end{align*}
    where $\Arrowvert \lambda\btheta^{\star} \Arrowvert_2 \leq \lambda \sqrt{H} \, C_{\btheta^{\star}}$
	by~\Cref{assumption:bound-R-linear}.
	\Cref{lm:true-vs-belief} ensures that with probability at least $1-\delta_s$,
	\[
	\bigl\Arrowvert \Sdiffp{(s-1) \ell} \bigr\Arrowvert_2
    = \Biggl\Arrowvert \sum_{\tau=1}^{(s-1) \ell} \bphi(a_\tau,\bx_\tau)^{\transp} \biggl( \btheta^\star_{h_\tau}
	- \sum_{h' \in [H]} \bar{\bb}_\tau(h') \btheta^\star_{h'} \biggr) \hbb_\tau \otimes \bphi(a_\tau,\bx_\tau) \Biggr\Arrowvert_2
	\leq \sqrt{ \frac{4(s-1)(1+s\gamma)\ell}{\delta_s(1-\gamma)}} \, ,
	\]
    where we performed some bounding to get a more compact bound.
	\Cref{lm:bound-idiosyncratic} ensures that with probability at least $1-\delta_s$,
	\[
	\bigl\Arrowvert \Setap{(s-1) \ell} \bigr\Arrowvert_2 =
    \Biggl\Arrowvert \sum_{\tau=1}^{(s-1)\ell} \eta_{\tau}(a_\tau) \, \hbb_\tau \otimes \bphi(a_\tau,\bx_\tau) \Biggr\Arrowvert_2
	\leq \sqrt{ \frac{1}{\delta_s} \, C_{\eta}  (s - 1) \ell } \, .
	\]
    Finally, \Cref{lm:bound-belief-vs-estimated} guarantees that with probability~$1$,
	\[
    \bigl\Arrowvert \Sbeliefp{(s-1) \ell} \bigr\Arrowvert_2 =
	\Biggl\Arrowvert \sum_{\tau=1}^{(s-1)\ell} \sum_{h' \in [H]}  \bphi(a_\tau,\bx_\tau)^{\transp} \btheta^\star_{h'} \bigl( \bar{\bb}_\tau(h')
	- \hbb_\tau(h')  \bigr) \hbb_\tau \otimes \bphi(a_\tau,\bx_\tau) \Biggr\Arrowvert_2  \leq \frac{2 (s-1) \gamma}{1 - \gamma} + \sum_{\tau=1}^{(s-1)\ell} \bigl\Arrowvert \bb_\tau - \hbb_\tau \bigr\Arrowvert_1\,.
	\]
	The proof is concluded by collecting all the bounds above
	and by applying a union bound.
\end{proof}

\subsubsection{Bound on $\bigl\Arrowvert  \Sdiffp{s\ell} \bigr\Arrowvert_2$}
\label{sec:proof:lm:true-vs-belief}

This is the most difficult term to bound
and we follow the approach described at a high level in Appendix~\ref{app:other-techn-tool}:
this approach constitutes the key technical contribution by \citet{nelson2022ContextLatent}.
In particular, we apply Markov's inequality in $\mathbb{L}^2$--norm;
this has the drawback that the associated high-probability bound
depends on the risk $\delta$ through $\sqrt{1/\delta}$ instead
of $\sqrt{\ln(1/\delta)}$ in the LinUCB approach (see Appendix~\ref{appendix:simplified-analysis}).

\begin{lemma}
	\label{lm:true-vs-belief}
	Under Assumptions~\ref{assumption:bound-R-linear} and~\ref{assumption:strong-mixing-hmm},
	for all $s \geq 1$,
	\[
	\E \Bigl[ \bigl\Arrowvert  \Sdiffp{s\ell} \bigr\Arrowvert_2^2 \Bigr] = \E \!\left[ \Biggl\Arrowvert \sum_{\tau=1}^{s \ell} \bphi(a_\tau,\bx_\tau)^{\transp} \biggl( \btheta^\star_{h_\tau}
	- \sum_{h \in [H]} \bar{\bb}_\tau(h) \btheta^\star_h \biggr) \hbb_\tau \otimes \bphi(a_\tau,\bx_\tau) \Biggr\Arrowvert_2^2 \right]
	\leq 4 s \ell + \frac{2  s(s + 1) \ell \gamma}{1 - \gamma} \, .
	\]
	Thus, for all $\delta \in (0, 1)$, for each $s \geq 1$, with probability at least $1-\delta$,
	\[
	\bigl\Arrowvert  \Sdiffp{s\ell} \bigr\Arrowvert_2 =
    \Biggl\Arrowvert \sum_{\tau=1}^{s \ell} \bphi(a_\tau,\bx_\tau)^{\transp} \biggl( \btheta^\star_{h_\tau}
	- \sum_{h \in [H]} \bar{\bb}_\tau(h) \btheta^\star_h \biggr) \hbb_\tau \otimes \bphi(a_\tau,\bx_\tau) \Biggr\Arrowvert_2
	\leq \sqrt{ \frac{1}{\delta} \biggl( 4 s \ell + \frac{2  s(s + 1) \ell \gamma}{1 - \gamma}  \biggr)} \, .
	\]
\end{lemma}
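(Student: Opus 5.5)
The plan is to prove the $\L^2$ bound by expanding the square and then to obtain the high-probability statement from Markov's inequality applied to $\bigl\Arrowvert \Sdiffp{s\ell} \bigr\Arrowvert_2^2$. Write $\by_\tau = \hbb_\tau \otimes \bphi(a_\tau,\bx_\tau)$, with $\Arrowvert \by_\tau \Arrowvert_2 \leq 1$ by \Cref{eq:bbbphi-norm}, and
\[
z_\tau = \bphi(a_\tau,\bx_\tau)^{\transp} \Bigl( \btheta^\star_{h_\tau} - \sum_{h \in [H]} \bar{\bb}_\tau(h) \btheta^\star_h \Bigr),
\]
with $|z_\tau| \leq 2$ by \Cref{assumption:bound-R-linear}. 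Then
\[
\E\Bigl[ \bigl\Arrowvert \Sdiffp{s\ell} \bigr\Arrowvert_2^2 \Bigr] = \sum_{\tau=1}^{s\ell} \E\bigl[ z_\tau^2 \Arrowvert \by_\tau \Arrowvert_2^2 \bigr] + 2 \sum_{\tau < \tau'} \E\bigl[ z_\tau z_{\tau'} \, \by_\tau^{\transp} \by_{\tau'} \bigr].
\]
The diagonal part is at most $4 s\ell$. Everything else reduces to showing that, for each fixed $\tau'$ in stage $u = s_{\tau'}$, the sum $\sum_{\tau<\tau'} \bigl| \E[ z_\tau z_{\tau'} \by_\tau^{\transp}\by_{\tau'} ] \bigr|$ is at most about $2u\gamma/(1-\gamma)$. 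Summing this over $\tau'$ and using $\sum_{u \leq s} u\ell = s(s+1)\ell/2$ produces the term $2s(s+1)\ell\gamma/(1-\gamma)$, after absorbing the factor $2$ and $|z_\tau| \leq 2$.

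For a cross term I would condition on $\cU_{\tau'}$. The quantities $\by_\tau$, $\by_{\tau'}$, $a_\tau$, $a_{\tau'}$, $\bar{\bb}_\tau$ and $\bar{\bb}_{\tau'}$ are all $\cU_{\tau'}$--measurable (staging is exactly what makes $a_\tau, a_{\tau'}$ measurable), whereas $z_\tau$ and $z_{\tau'}$ depend on the hidden states. I would split according to the value of $h_\tau$ and use \Cref{lm:bayes-event-random-variable}:
\[
\E\bigl[ z_\tau z_{\tau'} \mid \cU_{\tau'} \bigr] = \sum_{h} \P(h_\tau = h \mid \cU_{\tau'}) \, c_\tau(h) \, \bphi(a_{\tau'},\bx_{\tau'})^{\transp} \sum_{j} \Bigl( \P_{\{h_\tau = h\}}(h_{\tau'} = j \mid \cU_{\tau'}) - \bar{\bb}_{\tau'}(j) \Bigr) \btheta^\star_j ,
\]
where $c_\tau(h)$ is $z_\tau$ with $h_\tau$ replaced by $h$, so that $|c_\tau(h)| \leq 2$. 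The bracket is a difference of two posteriors on $h_{\tau'}$, one started from $h_\tau = h$ and one from the mixture $\P(h_\tau = \cdot \mid \cU_{\tau'})$. \Cref{assumption:strong-mixing-hmm} (together with convexity in the initial law) bounds its $\ell^1$--norm by $2\gamma^{\tau'-\tau}$, provided that, given $h_\tau$, the conditioning information about $h_{\tau'}$ reduces to contexts as in \Cref{eq:HHM-indep}.

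I expect this reduction to be the main obstacle. $\cU_{\tau'}$ contains the estimates $\hbtheta_{v\ell}$ for stage ends $v\ell$ with $\tau \leq v\ell < \tau'$, and these depend on rewards collected between $\tau$ and $\tau'$, hence on intermediate hidden states. My first attempt is to cut the chain at the stage boundaries lying in $(\tau, \tau']$: condition additionally on the hidden state at the last boundary $(u-1)\ell$ before $\tau'$ and apply forgetting over the contexts-only window $((u-1)\ell, \tau']$. For pairs separated by one or more boundaries this is paid for by a telescoping over the (at most $u$) boundaries, each contributing a geometric series $\sum_k 2\gamma^k \leq 2\gamma/(1-\gamma)$; this is where the per-$\tau'$ factor $u\gamma/(1-\gamma)$ comes from. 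The bookkeeping is delicate (which posterior is compared with which at each boundary), but all resulting bounds are summable geometric series, so the stated constant should follow after crude overbounding.

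Finally, Markov's inequality gives $\P\bigl( \Arrowvert \Sdiffp{s\ell} \Arrowvert_2 \geq x \bigr) \leq \E\bigl[ \Arrowvert \Sdiffp{s\ell} \Arrowvert_2^2 \bigr] / x^2$. Choosing $x$ so that the right-hand side equals $\delta$ yields the second claim.
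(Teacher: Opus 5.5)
Your architecture matches the paper's: expand the square, condition the cross terms on $\cU_{\tau'}$, split over $h_\tau$ with \Cref{lm:bayes-event-random-variable}, apply forgetting directly for same-stage pairs and, for cross-stage pairs, after conditioning on $h_{(s_{\tau'}-1)\ell}$, then apply Markov. Your ``first attempt'' is exactly the paper's key step. The accounting you attach to it, however, does not hold. Conditioning on $h_{(u-1)\ell}$ gives, for \emph{every} $\tau \leq (u-1)\ell$, the bound $\bigl|\E[z_\tau z_{\tau'} \mid \cU_{\tau'}]\bigr| \leq 2\gamma^{\tau'-(u-1)\ell}$, and this bound is uniform in $\tau$. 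No telescoping over earlier boundaries can make it decay in $\tau$. Relative to $\cU_{\tau'}$, the windows before $(u-1)\ell$ are not contexts-only, because their rewards enter through the $\hbtheta_{v\ell}$, $v \leq u-1$, so \Cref{assumption:strong-mixing-hmm} cannot be applied on them. Hence the argument does not deliver the per-$\tau'$ target you reduce to. For $\tau' = (u-1)\ell+1$, the earlier-stage pairs are only bounded by $2(u-1)\ell\gamma$, far above $2u\gamma/(1-\gamma)$ when $\ell$ is large. The geometric series must instead be taken over $\tau'$ within stage $u$. This gives $\sum_{\tau'=(u-1)\ell+1}^{u\ell} (u-1)\ell \cdot 2\gamma^{\tau'-(u-1)\ell} \leq 2(u-1)\ell\gamma/(1-\gamma)$, plus at most $2\ell\gamma/(1-\gamma)$ from same-stage pairs, i.e., $2u\ell\gamma/(1-\gamma)$ per stage. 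Summing over $u \leq s$ and doubling yields the stated $2s(s+1)\ell\gamma/(1-\gamma)$. Your final number is therefore right only as an average over each stage, not per $\tau'$.

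Separately, keeping $c_\tau(h)$ with $|c_\tau(h)| \leq 2$ gives $4\gamma^{(\cdot)}$ per pair and doubles the $\gamma$--term. You can avoid this. The $h$--independent part $\bphi(a_\tau,\bx_\tau)^{\transp} \sum_i \bar{\bb}_\tau(i) \btheta^\star_i$ of $c_\tau(h)$ is $\cU_{\tau'}$--measurable. It multiplies $\sum_h \P(h_\tau = h \mid \cU_{\tau'}) \bigl( \P_{\{h_\tau = h\}}(h_{\tau'} = j \mid \cU_{\tau'}) - \bar{\bb}_{\tau'}(j) \bigr) = 0$, so it can be dropped; this is the paper's use of $\E[z_{\tau'} \mid \cU_{\tau'}] = 0$. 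What remains is $\bphi(a_\tau,\bx_\tau)^{\transp} \btheta^\star_h$, bounded by $1$, and the constant comes out as stated.
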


\begin{proof}
	The second inequality follows from the first one via Markov's inequality. We thus only prove the first inequality below.
	
	\emph{Step 1: Preparation.}
	Introduce the scalar-valued random variables
	\[
	z_\tau = \bphi(a_\tau,\bx_\tau)^{\transp} \biggl( \btheta^\star_{h_\tau}
	- \sum_{h \in [H]} \bar{\bb}_\tau(h) \btheta^\star_h \biggr) \,, \qquad \mbox{where} \quad |z_\tau| \leq 2
	\]
	by \Cref{assumption:bound-R-linear} and the fact that $\bar{\bb}_\tau$ is a probability distribution.
	Therefore, by developing the squared norm and by applying the inequalities above, as well as the bound of~\Cref{eq:bbbphi-norm},
	to the diagonal terms only, the target quantity may be rewritten as and bounded by
	\begin{align}
		\nonumber
		\lefteqn{\E \!\left[ \biggl\Arrowvert \sum_{\tau=1}^{s \ell} z_\tau \,\, \hbb_\tau \otimes \bphi(a_\tau,\bx_\tau) \biggr\Arrowvert_2^2 \right]} \\
		\nonumber
		& = \sum_{\tau=1}^{s \ell} \sum_{\tau'=1}^{s \ell} \E \!\left[  z_\tau z_{\tau'} \, \biggl(\hbb_\tau \otimes \bphi(a_\tau,\bx_\tau)\biggr)^{\!\!\transp} \biggl(\hbb_{\tau'} \otimes \bphi(a_{\tau'},\bx_{\tau'})\biggr) \right] \\
		\label{eq:diag+offdiag}
		& \leq \underbrace{\sum_{\tau=1}^{s \ell} \E\bigl[ z_\tau^2 \bigr]}_{\leq 4 s \ell}
		+ 2 \sum_{1 \leq \tau < \tau' \leq s \ell}  \E \Biggl[  z_\tau z_{\tau'} \, \underbrace{\biggl(\hbb_\tau \otimes \bphi(a_\tau,\bx_\tau)\biggr)^{\!\!\transp} \biggl(\hbb_{\tau'} \otimes \bphi(a_{\tau'},\bx_{\tau'})\biggr)}_{\mbox{\tiny to be dealt with}} \Biggr]\,.
	\end{align}
    We recall that we introduced $\cU_t = \sigma\bigl( \bx_{1:t}, \, \bigl( \hbtheta_{s \ell} \bigr)_{s \leq s_t -1} \bigr)$
    for $t \geq 1$.
    Now, we note that the inner product in the cross terms above (marked as ``to be dealt with'')
	is $\sigma(\cU_{\tau'})$--measurable,
	as, in particular, actions $a_\tau$ and $a_{\tau'}$ (since the algorithm proceeds in stages) and estimated beliefs $\hbb_{\tau'}$ and $\hbb_{\tau'}$ are so.
	By the Cauchy-Schwarz inequality and the bound of \Cref{eq:bbbphi-norm}, it is also seen to be smaller than~$1$. Therefore, by the tower
	rule, we further bound the $\tau < \tau'$ cross term above by
	\begin{multline*}
		\E \Biggl[ z_\tau z_{\tau'} \, \overbrace{\biggl(\hbb_\tau \otimes \bphi(a_\tau,\bx_\tau)\biggr)^{\!\!\transp} \biggl(\hbb_{\tau'} \otimes \bphi(a_{\tau'},\bx_{\tau'})\biggr)}^{\mbox{\tiny is $\sigma(\cU_{\tau'})$--measurable}} \Biggr] \\
		= \E \Biggl[  \E \bigl[z_\tau z_{\tau'} \mid \cU_{\tau'} \bigr] \, \underbrace{\biggl(\hbb_\tau \otimes \bphi(a_\tau,\bx_\tau)\biggr)^{\!\!\transp} \biggl(\hbb_{\tau'} \otimes \bphi(a_{\tau'},\bx_{\tau'})\biggr)}_{\leq 1} \Biggr]
		\leq \E \biggl[ \Bigl| \E \bigl[z_\tau z_{\tau'} \mid \cU_{\tau'} \bigr] \Bigr| \biggr]\,.
	\end{multline*}
	To get the claimed bound, we prove that for each $1 \leq \tau < \tau' \leq s \ell$,
	\begin{numcases}{\Bigl| \E \bigl[z_\tau z_{\tau'} \mid \cU_{\tau'} \bigr] \Bigr| \leq }
		\label{eq:ztau}
		\nonumber 2 \, \gamma^{\tau' - \tau} & if $\tau \geq (s_{\tau'} - 1)\ell + 1$, \\
		& \qquad i.e., if $\tau$ belongs to the same stage as $\tau'$, \\
		2 \, \gamma^{\tau' -  (s_{\tau'} - 1) \ell} & if $\tau \leq (s_{\tau'} - 1)\ell$, \nonumber \\
		\nonumber
		& \qquad i.e., if $\tau$ and $\tau'$ belong to different stages,
	\end{numcases}
	which we do next in the subsequent steps of the proof. Then, based on \Cref{eq:ztau}, we obtain
	\begin{align*}
		\lefteqn{\sum_{1 \leq \tau < \tau' \leq s \ell} \Bigl| \E \bigl[z_\tau z_{\tau'} \mid \cU_{\tau'} \bigr] \Bigr|} \\
		& \quad\qquad \leq 2 \,\sum_{s'=1}^{s}  \sum_{\tau'= (s'-1) \ell + 1}^{s' \ell} \Biggl( \, \smash{\overbrace{\sum_{\tau = 1}^{(s_{\tau'} - 1) \ell} 	\, \gamma^{\tau' -  (s_{\tau'} - 1) \ell}}^{= (s' - 1) \ell \, \gamma^{\tau' -  (s' - 1) \ell}} + \overbrace{\sum_{\tau = (s_{\tau'} - 1) \ell + 1}^{\tau'-1} \, \gamma^{\tau' - \tau}}^{\leq \gamma/(1-\gamma)}} \, \Biggr) \\
		& \quad\qquad \leq 2 \,\sum_{s'=1}^{s}  \sum_{\tau'= (s'-1) \ell + 1}^{s' \ell} \biggl((s' - 1) \ell \, \gamma^{\tau' -  (s' - 1) \ell} + \frac{\gamma}{1 - \gamma} \biggr) \leq \frac{2 \gamma}{1 - \gamma} \sum_{s'=1}^{s} s' \ell  = \frac{s(s + 1) \ell \gamma}{1 - \gamma}\,,
	\end{align*}
	from which the first inequality of the lemma follows by \Cref{eq:diag+offdiag}.
    It only remains to show \Cref{eq:ztau}.
	
	\emph{Step 2: Proof of \Cref{eq:ztau}, part 1.}
	In this step, we show that
	\begin{equation}
		\label{eq:Eztau-part1}
		\Bigl| \E \bigl[z_\tau z_{\tau'} \mid \cU_{\tau'} \bigr] \Bigr| \leq
		\sum_{h \in [H]} \P(h_\tau = h \mid \cU_{\tau'}) \sum_{h' \in [H]} \biggl|
		\P_{\{h_\tau = h\}}( h_{\tau'} = h' \mid \cU_{\tau'}) - \P \bigl(h_{\tau'} = h' \mid \cU_{\tau'} \bigr) \biggr| .
	\end{equation}
	In the closed-form expression for $z_\tau z_{\tau'}$, the only quantities that
	are not $\cU_{\tau'}$--measurable are $\btheta^\star_{h_\tau}$ and $\btheta^\star_{h_{\tau'}}$;
	the other terms are $\cU_{\tau'}$--measurable: $\bphi(a_\tau,\bx_\tau)$
	and $\bphi(a_{\tau'},\bx_{\tau'})$, as well as $\bar{\bb}_\tau(h)$ and $\bar{\bb}_{\tau'}(h)$.
	Also, by definition of $\bar{\bb}_{\tau'}$,
	\[
	\E \bigl[ \btheta^\star_{h_{\tau'}} \mid \cU_{\tau'} \bigr]
	= \sum_{h \in [H]} \bar{\bb}_{\tau'}(h) \, \btheta^\star_{h}\,.
	\]
	Therefore,
	\begin{align*}
		\lefteqn{\E \bigl[z_\tau z_{\tau'} \mid \cU_{\tau'} \bigr]} \\
		& = \E\!\left[ \bphi(a_\tau,\bx_\tau)^{\transp} \biggl( \btheta^\star_{h_\tau}
		- \sum_{h \in [H]} \bar{\bb}_\tau(h) \btheta^\star_h \biggr)
		\bphi(a_{\tau'},\bx_{\tau'})^{\transp} \biggl( \btheta^\star_{h_{\tau'}}
		- \sum_{h \in [H]} \bar{\bb}_{\tau'}(h) \btheta^\star_h \biggr) \midl \cU_{\tau'} \right] \\
		& = \E\!\left[ \bphi(a_\tau,\bx_\tau)^{\transp} \btheta^\star_{h_\tau} \,
		\bphi(a_{\tau'},\bx_{\tau'})^{\transp} \biggl( \btheta^\star_{h_{\tau'}}
		- \sum_{h \in [H]} \bar{\bb}_{\tau'}(h) \btheta^\star_h \biggr) \midl \cU_{\tau'} \right] .
	\end{align*}
	We continue the calculation by applying formulas of the form:
	for all $\cU_{\tau'}$--measurable functions $F$,
	\[
	\E \Bigl[ F\bigl(\btheta^\star_{h_\tau},\btheta^\star_{h_{\tau'}}\bigr) \midB  \cU_{\tau'} \Bigr]
	= \sum_{h,h' \in [H]} \P\bigl(h_\tau = h \an h_{\tau'}=h' \mid \cU_{\tau'} \bigr)\,
	F\bigl(\btheta^\star_{h},\btheta^\star_{h'}\bigr)\,.
	\]
	To do so, we consider the functions
	\begin{align*}
		G : \bigl(\btheta^\star_{h},\btheta^\star_{h'}\bigr) & \longmapsto
		\bphi(a_\tau,\bx_\tau)^{\transp} \btheta^\star_{h} \,
		\bphi(a_{\tau'},\bx_{\tau'})^{\transp} \btheta^\star_{h'} \\
		\mbox{and} \qquad \qquad \qquad \qquad
		\btheta^\star_{h} & \longmapsto {-}
		\bphi(a_\tau,\bx_\tau)^{\transp} \btheta^\star_{h} \,
		\bphi(a_{\tau'},\bx_{\tau'})^{\transp} \sum_{j \in [H]} \bar{\bb}_{\tau'}(j) \btheta^\star_j\,,
	\end{align*}
	and get the rewriting
	\begin{align*}
		\E \bigl[z_\tau z_{\tau'} \mid \cU_{\tau'} \bigr]
		= \sum_{h,h' \in [H]} \P\bigl(h_\tau = h & \an h_{\tau'}=h' \mid \cU_{\tau'} \bigr) \, G\bigl(\btheta^\star_h,\btheta^\star_{h'}\bigr) \\[-.25cm]
		& - \sum_{h \in [H]} \P\bigl(h_\tau = h \mid \cU_{\tau'} \bigr) \sum_{j \in [H]} \bar{\bb}_{\tau'}(j) \, G\bigl(\btheta^\star_h,\btheta^\star_j\bigr)\,.
	\end{align*}
	The inequality claimed in \Cref{eq:Eztau-part1}
	follows by noting that \smash{$\bigl| G\bigl(\btheta^\star_h,\btheta^\star_{h'}\bigr) \bigr| \leq 1$}
	(by \Cref{assumption:bound-R-linear}) and by applying \Cref{lm:bayes-event-random-variable}
    (Bayes' formula with expectations conditional to $\sigma$--algebras).
	
	\emph{Step 3: Proof of \Cref{eq:ztau}, part 2.}
	Given the bound of \Cref{eq:Eztau-part1}, it suffices to show that for all $h \in [H]$,
	\begin{multline}
		\label{eq:step3}
		\sum_{h' \in [H]} \biggl|
		\P_{\{h_\tau = h\}}( h_{\tau'} = h' \mid \cU_{\tau'}) - \P \bigl(h_{\tau'} = h' \mid \cU_{\tau'} \bigr) \biggr| \\ \leq
		\begin{cases}
			2 \, \gamma^{\tau' - \tau} & \mbox{if} \ \tau \geq (s_{\tau'} - 1)\ell + 1, \ \mbox{i.e., if $\tau$ belongs to the same stage as $\tau'$}, \\
			2 \, \gamma^{\tau' -  (s_{\tau'} - 1) \ell} & \mbox{if} \ \tau \leq (s_{\tau'} - 1)\ell, \ \mbox{i.e., if $\tau$ and $\tau'$ belong to different stages.}
		\end{cases}
	\end{multline}
	In the case when $\tau \geq (s_{\tau'} - 1)\ell + 1$, i.e., when $s_{\tau} = s_{\tau'}$, we combine a law of total probability with
	\Cref{lm:bayes-event-random-variable} to get the decomposition
	\[
	\P \bigl(h_{\tau'} = h' \mid \cU_{\tau'} \bigr)
	= \sum_{j \in [H]} \P \bigl(h_{\tau} = j \mid \cU_{\tau'} \bigr) \, \P_{\{h_\tau = j\}} \bigl(h_{\tau'} = h' \mid \cU_{\tau'} \bigr)\,.
	\]
	Also, by the HMM conditional independence discussed around \Cref{eq:HHM-indep}, since $\cU_{\tau'}$ is generated by estimates $\hbtheta_{s \ell}$ with $s \leq s_{\tau'} - 1 = s_{\tau - 1}$, 	which are therefore more in the past than $h_\tau$, and by the contexts $\bx_{1:\tau'}$,
we have
	\[
	\P_{\{h_\tau = h\}}( h_{\tau'} = h' \mid \cU_{\tau'}) =
	\P_{\{h_\tau = h\}}( h_{\tau'} = h' \mid \bx_{\tau+1:\tau'})\,.
	\]
	Using successively these equalities (together with a triangle inequality), the quantity of interest may be upper bounded by
	\begin{align*}
		\lefteqn{\sum_{h' \in [H]} \biggl|
			\P_{\{h_\tau = h\}}( h_{\tau'} = h' \mid \cU_{\tau'}) - \P \bigl(h_{\tau'} = h' \mid \cU_{\tau'} \bigr) \biggr|} \\
		& \leq \sum_{j \in [H]} \P \bigl(h_{\tau} = j \mid \cU_{\tau'} \bigr) \sum_{h' \in [H]}
		\biggl|
		\P_{\{h_\tau = h\}}( h_{\tau'} = h' \mid \cU_{\tau'}) - \P_{\{h_\tau = j\}} \bigl(h_{\tau'} = h' \mid \cU_{\tau'} \bigr) \biggr| \\
		& = \sum_{j \in [H]} \P \bigl(h_{\tau} = j \mid \cU_{\tau'} \bigr) \underbrace{\sum_{h' \in [H]}
			\biggl|
			\P_{\{h_\tau = h\}}( h_{\tau'} = h' \mid \bx_{\tau+1:\tau'}) - \P_{\{h_\tau = j\}} \bigl(h_{\tau'} = h' \mid \bx_{\tau+1:\tau'} \bigr) \biggr|}_{
			\leq 2 \gamma^{\tau'-\tau}} \,,
	\end{align*}
	where the $\leq 2 \gamma^{\tau'-\tau}$ bound follows from \Cref{assumption:strong-mixing-hmm}.
	This proves \Cref{eq:step3} in the first case, when $\tau$ belongs to the same stage as $\tau'$.
	
	For the second case, when $\tau \leq (s_{\tau'} - 1)\ell$, i.e., $\tau$ belongs to an stage earlier than the one of $\tau'$,
	we adapt the argument above by also introducing $h_{(s_{\tau'} - 1)\ell}$.
	Two combinations of a law of total probability together with \Cref{lm:bayes-event-random-variable} and a triangle inequality
	entail the following bound on the quantity of interest:
	\begin{align*}
		\lefteqn{\sum_{h' \in [H]} \biggl|
			\P_{\{h_\tau = h\}}( h_{\tau'} = h' \mid \cU_{\tau'}) - \P \bigl(h_{\tau'} = h' \mid \cU_{\tau'} \bigr) \biggr|} \\
		& \leq \sum_{h' \in [H]} \sum_{i \in [H]} \sum_{j \in [H]} \P \bigl(h_{(s_{\tau'} - 1)\ell} = i \mid \cU_{\tau'} \bigr) \,
		\P \bigl(h_{(s_{\tau'} - 1)\ell} = j \mid \cU_{\tau'} \bigr) \\
		& \qquad \qquad \qquad \; \times \biggl|
		\P_{\{h_\tau = h \ \mbox{\tiny and} \ h_{(s_{\tau'} - 1)\ell} = i\}}( h_{\tau'} = h' \mid \cU_{\tau'})
		- \P_{\{h_{(s_{\tau'} - 1)\ell} = j\}} \bigl(h_{\tau'} = h' \mid \cU_{\tau'} \bigr) \biggr| \,.
	\end{align*}
	Given that $\cU_{\tau'}$ is generated by estimates $\hbtheta_{s \ell}$ with $s \leq s_{\tau'} - 1$ which
	only depend on information till round $(s_{\tau'} - 1)\ell$, and by the contexts $\bx_{1:\tau'}$, we have,
	by the HMM conditional independence discussed around \Cref{eq:HHM-indep}, that for all $j,h' \in [H]$,
	\[
	\P_{\{h_{(s_{\tau'} - 1)\ell} = j\}}( h_{\tau'} = h' \mid \cU_{\tau'})
	= \P_{\{h_{(s_{\tau'} - 1)\ell} = j\}}( h_{\tau'} = h' \mid \bx_{(s_{\tau'}-1) \ell +1 :\tau'})\,.
	\]
	Actually, since $\tau \leq (s_{\tau'} -1) \ell$, we even have, with the same arguments, for all $j,h,h' \in [H]$,
	\[
	\P_{\{h_\tau = h \ \mbox{\tiny and} \ h_{(s_{\tau'} - 1)\ell} = j\}}( h_{\tau'} = h' \mid \cU_{\tau'})
	= \P_{\{h_{(s_{\tau'} - 1)\ell} = j\}}( h_{\tau'} = h' \mid \bx_{(s_{\tau'}-1) \ell +1:\tau'})\,.
	\]
	Substituting these equalities in the bound established above, and resorting to \Cref{assumption:strong-mixing-hmm}
	entails
	\begin{align*}
		\lefteqn{\sum_{h' \in [H]} \biggl|
			\P_{\{h_\tau = h\}}( h_{\tau'} = h' \mid \cU_{\tau'}) - \P \bigl(h_{\tau'} = h' \mid \cU_{\tau'} \bigr) \biggr|} \\
		& \leq \sum_{j \in [H]} \sum_{i \in [H]} \P \bigl(h_{(s_{\tau'} - 1)\ell} = i \mid \cU_{\tau'} \bigr) \,
		\P \bigl(h_{(s_{\tau'} - 1)\ell} = j \mid \cU_{\tau'} \bigr) \\
		& \qquad \times \underbrace{\sum_{h' \in [H]} \biggl|
			\P_{\{h_{(s_{\tau'} - 1)\ell} = i\}}( h_{\tau'} = h' \mid \bx_{(s_{\tau'}-1) \ell + 1:\tau'})
			- \P_{\{h_{(s_{\tau'} - 1)\ell} = j\}} \bigl(h_{\tau'} = h' \mid \bx_{(s_{\tau'}-1) \ell + 1:\tau'} \bigr) \biggr|}_{
			\leq 2 \gamma^{\tau'-(s_{\tau'} - 1)\ell}} \,.
	\end{align*}
	This proves \Cref{eq:step3} in the second case, and concludes the proof of
	the lemma.
\end{proof}

\subsubsection{Bound on $\bigl\Arrowvert  \Setap{s\ell} \bigr\Arrowvert_2$}
\label{sec:proof:lm:bound-idiosyncratic}

To bound the term $\bigl\Arrowvert \Setap{s\ell} \bigr\Arrowvert_2$,
we mimic, and simplify, the proof conducted right before
for \Cref{lm:true-vs-belief}: we adapt its Step~1 (and do not need Steps~2 and~3).
Actually, under the stronger noise \Cref{ass:SGnoise},
a LinUCB-type approach as in Appendix~\ref{appendix:simplified-analysis}
could have been followed (i.e., \Cref{lm:linucb-dev}
could have been applied).
We however prefer to mimic and simplify the proof
of \Cref{lm:true-vs-belief}.

\begin{lemma}
	\label{lm:bound-idiosyncratic}
	Under the Assumptions~\ref{assumption:bound-R-linear} and~\ref{assumption:idio-noise}, for all $s \geq 1$,
	\[
	\E \Bigl[ \bigl\Arrowvert \Setap{s\ell} \bigr\Arrowvert_2^2 \Bigr] = \E \!\left[ \Biggl\Arrowvert \sum_{\tau=1}^{s\ell} \eta_{\tau}(a_\tau) \, \hbb_\tau \otimes \bphi(a_\tau,\bx_\tau) \Biggr\Arrowvert_2^2 \right]
	\leq C_{\eta} s \ell \, .
	\]
	Thus, for all $\delta \in (0, 1)$, for each $s \geq 1$, with probability at least $1-\delta$,
	\[
	\bigl\Arrowvert  \Setap{s\ell} \bigr\Arrowvert_2 =
    \Biggl\Arrowvert \sum_{\tau=1}^{s\ell} \eta_{\tau}(a_\tau) \, \hbb_\tau \otimes \bphi(a_\tau,\bx_\tau) \Biggr\Arrowvert_2
	\leq \sqrt{ \frac{1}{\delta} \, C_{\eta}  s \ell } \, .
	\]
\end{lemma}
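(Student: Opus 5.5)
The plan mirrors Step~1 of the proof of \Cref{lm:true-vs-belief}: expand the squared Euclidean norm and show that the cross terms vanish by a martingale-type argument, so that only diagonal terms remain. The high-probability statement will then follow from Markov's inequality applied to $\bigl\Arrowvert \Setap{s\ell} \bigr\Arrowvert_2^2$ at level $C_\eta s\ell/\delta$, exactly as in \Cref{lm:true-vs-belief}. So the work is in the first inequality.

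Writing $X_\tau = \hbb_\tau \otimes \bphi(a_\tau,\bx_\tau)$, I expand
\[
\E\Bigl[\bigl\Arrowvert \Setap{s\ell}\bigr\Arrowvert_2^2\Bigr] = \sum_{\tau=1}^{s\ell} \E\bigl[\eta_\tau(a_\tau)^2 \Arrowvert X_\tau\Arrowvert_2^2\bigr] + 2 \sum_{1\leq \tau<\tau'\leq s\ell} \E\bigl[\eta_\tau(a_\tau)\,\eta_{\tau'}(a_{\tau'})\, X_\tau^{\transp} X_{\tau'}\bigr].
\]

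\emph{Measurability.} The key point is that $a_{\tau'}$ and $X_{\tau'}$ are measurable with respect to $\cFall_{\tau'}$, enlarged if needed by any independent internal randomization. Indeed, $a_{\tau'}$ is $\cFobs_{\tau'}$--measurable. In turn, $\cFobs_{\tau'} \subseteq \cFall_{\tau'}$, because each past observed reward $r_\tau(a_\tau)$ is a function of $h_\tau$, $\bx_\tau$, $a_\tau$ and $(\eta_\tau(a))_{a\in\cA}$, and all of these are $\cFall_{\tau'}$--measurable for $\tau<\tau'$. Likewise, $\hbb_{\tau'}$ depends only on $\bx_{1:\tau'}$.

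\emph{Zero conditional mean.} Since $a_{\tau'}$ takes finitely many values and is $\cFall_{\tau'}$--measurable,
\[
\E[\eta_{\tau'}(a_{\tau'}) \mid \cFall_{\tau'}] = \sum_{a} \Ind{a_{\tau'}=a}\, \E[\eta_{\tau'}(a)\mid \cFall_{\tau'}] = 0
\]
by \Cref{assumption:idio-noise}. The same decomposition gives $\E[\eta_{\tau'}(a_{\tau'})^2 \mid \cFall_{\tau'}] \leq C_\eta$.

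\emph{Cross terms.} For $\tau<\tau'$, the quantity $\eta_\tau(a_\tau)\, X_\tau^{\transp}X_{\tau'}$ is $\cFall_{\tau'}$--measurable. By the tower rule, each cross term therefore equals $\E\bigl[\eta_\tau(a_\tau)\, X_\tau^{\transp}X_{\tau'}\, \E[\eta_{\tau'}(a_{\tau'})\mid\cFall_{\tau'}]\bigr] = 0$. Integrability is fine: $\Arrowvert X_\tau\Arrowvert_2\leq 1$ by \Cref{eq:bbbphi-norm}, and the noise has bounded conditional second moments, so the Cauchy--Schwarz inequality applies.

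\emph{Diagonal terms.} Using $\Arrowvert X_\tau\Arrowvert_2\leq 1$ again and conditioning on $\cFall_\tau$, each diagonal term is at most $C_\eta$. Summing the $s\ell$ diagonal terms gives the bound $C_\eta s\ell$.

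The only delicate point is checking the measurability of $a_{\tau'}$ with respect to $\cFall_{\tau'}$, which rests on the inclusion $\cFobs_{\tau'}\subseteq\cFall_{\tau'}$. Once that holds, the rest is routine.
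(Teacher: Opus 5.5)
Your proposal is correct and follows essentially the same route as the paper: you expand the squared norm, make the cross terms vanish by conditioning on $\cFall_{\tau'}$ (using that $a_{\tau'}$, $\hbb_{\tau'}$ and all round-$\tau$ quantities are $\cFall_{\tau'}$--measurable and that $\E[\eta_{\tau'}(a)\mid\cFall_{\tau'}]=0$ for each fixed $a$), bound each diagonal term by $C_\eta$ via $\Arrowvert X_\tau\Arrowvert_2\leq 1$, and conclude with Markov's inequality. Your explicit justification of $\cFobs_{\tau'}\subseteq\cFall_{\tau'}$ (implicitly an induction on $\tau'$) and your integrability check for the cross terms are details the paper takes for granted, so they only make the argument more complete.
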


\begin{proof}
	The second inequality follows from the first one via Markov's inequality.
	For the first inequality, we develop the squared norm and apply the bound of \Cref{eq:bbbphi-norm}:
	\begin{multline*}
		\E \!\left[ \biggl\Arrowvert \sum_{\tau=1}^{s \ell} \eta_{\tau}(a_\tau) \,\, \hbb_\tau \otimes \bphi(a_\tau,\bx_\tau) \biggr\Arrowvert_2^2 \right] \\
		\leq \sum_{\tau=1}^{s \ell} \E\bigl[\eta_{\tau}(a_\tau)^2 \bigr]
		+ 2 \sum_{1 \leq \tau < \tau' \leq s \ell}  \E \! \left[ \eta_{\tau}(a_\tau) \eta_{\tau'}(a_\tau') \Bigl(\hbb_\tau \otimes \bphi(a_\tau,\bx_\tau)\Bigr)^{\!\transp} \Bigl(\hbb_{\tau'} \otimes \bphi(a_{\tau'},\bx_{\tau'})\Bigr)\right] \,.
	\end{multline*}
	
	For the first component of above inequality,
    by \Cref{assumption:idio-noise} and using that the selected action $a_t$ is $\cFall_{t}$--measurable,
    we have, for all $\tau \geq 1$,
    \[
    \E\Bigl[\eta_{\tau}(a_\tau)^2 \,\Big|\, \cFall_{\tau} \Bigr]
    = \sum_{a \in \cA} \Ind{a_t = a} \,\, \E\Bigl[\eta_{\tau}(a)^2 \,\Big|\, \cFall_{\tau} \Bigr] \leq C_{\eta}\,,
    \]
    so that, by the tower rule,
	\[
	\sum_{\tau=1}^{s \ell} \E\Bigl[\eta_{\tau}(a_\tau)^2 \Bigr]
	= \sum_{\tau=1}^{s \ell}\E \biggl[\E\Bigl[\eta_{\tau}(a_\tau)^2 \,\Big|\, \cFall_{\tau} \Bigr] \biggr]
	\leq s \ell C_{\eta} \,.
	\]
	
	For the second sum, fix a pair $1 \leq \tau < \tau' \leq s \ell$.
	We use that $a_\tau$ is measurable w.r.t.\ $\cFall_{\tau}$, and that $\cFall_{\tau'}$
	is generated by $\bx_{\tau}$, the $\eta_\tau(a)$, and other variables, to show that
	the random variables
	$\eta_{\tau}(a_\tau)$ and $\bphi(a_{\tau}, \bx_{\tau})$ are all $\cFall_{\tau'}$--measurable.
	We also have that $\hbb_{\tau}$ and $\hbb_{\tau'}$ are measurable w.r.t.\ $\bx_1,\ldots,\bx_{\tau'}$,
	thus w.r.t.\ $\cFall_{\tau'}$, and by similar arguments, $a_{\tau'}$ and $\bphi(a_{\tau'}, \bx_{\tau'})$
    are also $\cFall_{\tau'}$--measurable.
	Therefore, by the tower rule and by~\Cref{assumption:idio-noise},
	\begin{multline*}
		\E \! \left[ \eta_{\tau}(a_\tau) \eta_{\tau'}(a_\tau') \Bigl(\hbb_\tau \otimes \bphi(a_\tau,\bx_\tau)\Bigr)^{\!\transp} \Bigl(\hbb_{\tau'} \otimes \bphi(a_{\tau'},\bx_{\tau'})\Bigr)\right]  \\
		= \E \Biggl[  \eta_{\tau}(a_\tau) \Bigl(\hbb_\tau \otimes \bphi(a_\tau,\bx_\tau)\Bigr)^{\!\transp} \Bigl(\hbb_{\tau'} \otimes \bphi(a_{\tau'},\bx_{\tau'})\Bigr) \sum_{a \in \cA} \Ind{a_\tau' = a} \,\, \smash{\underbrace{\E \biggl[\eta_{\tau'}(a) \mid \cFall_{\tau'} \biggr]}_{=0}}\Biggr] = 0\,.
	\end{multline*}
	The proof is concluded by collecting all (in)equalities.
\end{proof}

\subsubsection{Bound on $\bigl\Arrowvert \Sbeliefp{s\ell} \bigr\Arrowvert_2$}
\label{sec:proof-lm:belief-vs-estimated}

To bound the term $\bigl\Arrowvert \Sbeliefp{s\ell} \bigr\Arrowvert_2$,
we also mimic, and simplify, the proof
of \Cref{lm:true-vs-belief} conducted in Appendix~\ref{sec:proof:lm:true-vs-belief}:
we do not need its Steps~1 and~2 and we adapt its Step~3.

\begin{lemma}
	\label{lm:bound-belief-vs-estimated}
	Under Assumptions~\ref{assumption:bound-R-linear} and~\ref{assumption:strong-mixing-hmm}, for all $s \geq 1$,
    with probability~$1$,
	\[
    \bigl\Arrowvert  \Sbeliefp{s\ell} \bigr\Arrowvert_2 =
	\Biggl\Arrowvert \sum_{\tau=1}^{s\ell} \sum_{h \in [H]}  \bphi(a_\tau,\bx_\tau)^{\transp} \btheta^\star_h \Bigl( \bar{\bb}_\tau(h)
	- \hbb_\tau(h)  \Bigr) \hbb_\tau \otimes \bphi(a_\tau,\bx_\tau) \Biggr\Arrowvert_2  \leq \frac{2 s \gamma}{1 - \gamma} + \sum_{\tau=1}^{s\ell} \bigl\Arrowvert \bb_\tau - \hbb_\tau \bigr\Arrowvert_1 \,.
	\]
\end{lemma}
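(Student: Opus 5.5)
The bound is deterministic, so I would argue pathwise. First apply the triangle inequality to the vector sum. Each summand has Euclidean norm at most
\[
\Biggl| \sum_{h \in [H]} \bphi(a_\tau,\bx_\tau)^{\transp} \btheta^\star_h \bigl( \bar{\bb}_\tau(h) - \hbb_\tau(h) \bigr) \Biggr| \times \bigl\Arrowvert \hbb_\tau \otimes \bphi(a_\tau,\bx_\tau) \bigr\Arrowvert_2 \leq \bigl\Arrowvert \bar{\bb}_\tau - \hbb_\tau \bigr\Arrowvert_1 ,
\]
using $|\bphi^{\transp}\btheta^\star_h| \leq 1$ from \Cref{assumption:bound-R-linear} and \Cref{eq:bbbphi-norm}. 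A further triangle inequality $\Arrowvert \bar{\bb}_\tau - \hbb_\tau \Arrowvert_1 \leq \Arrowvert \bar{\bb}_\tau - \bb_\tau \Arrowvert_1 + \Arrowvert \bb_\tau - \hbb_\tau \Arrowvert_1$ already produces the second sum in the claim. It therefore remains to show
\[
\sum_{\tau=1}^{s\ell} \Arrowvert \bar{\bb}_\tau - \bb_\tau \Arrowvert_1 \leq \frac{2s\gamma}{1-\gamma}.
\]

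Next, control $\Arrowvert \bar{\bb}_\tau - \bb_\tau \Arrowvert_1$ for a fixed $\tau$ in stage $s_\tau$. Set $u = (s_\tau-1)\ell$. If $s_\tau = 1$, then $\cU_\tau = \sigma(\bx_{1:\tau})$, so $\bar{\bb}_\tau = \bb_\tau$ and the term vanishes. Otherwise, I adapt Step~3 of the proof of \Cref{lm:true-vs-belief}. By the law of total probability together with \Cref{lm:bayes-event-random-variable}, write
\[
\bar{\bb}_\tau(h') = \sum_i \P(h_u = i \mid \cU_\tau)\, \P_{\{h_u=i\}}(h_\tau = h' \mid \cU_\tau),
\]
and similarly $\bb_\tau(h') = \sum_j \P(h_u = j \mid \bx_{1:\tau})\, \P_{\{h_u=j\}}(h_\tau = h' \mid \bx_{1:\tau})$. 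The estimates $\hbtheta_{s\ell}$ generating $\cU_\tau$ only involve information up to round $u$. Hence the HMM conditional independence \eqref{eq:HHM-indep} shows that both inner conditional probabilities reduce to $\P_{\{h_u=\cdot\}}(h_\tau = h' \mid \bx_{u+1:\tau})$.

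Then compare the two mixtures. Expand the difference as a double sum weighted by $\P(h_u=i\mid\cU_\tau)\,\P(h_u=j\mid\bx_{1:\tau})$, whose total mass is one. Applying the triangle inequality and \Cref{assumption:strong-mixing-hmm} gives $\Arrowvert \bar{\bb}_\tau - \bb_\tau \Arrowvert_1 \leq 2\gamma^{\tau-u}$. Summing over each stage, $\sum_{k=1}^{\ell} 2\gamma^k \leq 2\gamma/(1-\gamma)$, and summing over at most $s$ stages yields $2s\gamma/(1-\gamma)$.

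The main obstacle is justifying the conditional-independence reductions rigorously. This concerns the $\P(\cdot\mid\bx_{1:\tau})$ side, which needs the contexts $\bx_{1:u}$ to be droppable given $h_u$, and the $\cU_\tau$ side, where the $\hbtheta$'s depend on rewards and thus on hidden states up to round $u$. Both should follow from the Markov structure exactly as in \Cref{lm:true-vs-belief}, conditioning on $h_u$ to screen off the past.
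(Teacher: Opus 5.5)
Your proposal is correct and follows the paper's proof essentially step for step. You reduce, via the triangle inequality with \Cref{assumption:bound-R-linear} and \Cref{eq:bbbphi-norm}, to $\sum_{\tau} \Arrowvert \bar{\bb}_\tau - \hbb_\tau \Arrowvert_1$, split through $\bb_\tau$, and then derive the stagewise bound of \Cref{lm:belief-vs-estimated} exactly as the paper does: decompose over $h_{(s_\tau-1)\ell}$, apply \eqref{eq:HHM-indep} and \Cref{assumption:strong-mixing-hmm}, and sum a geometric series. Your separate treatment of the first stage, where $\cU_\tau = \sigma(\bx_{1:\tau})$ and hence $\bar{\bb}_\tau = \bb_\tau$, is a small welcome addition, since the paper's argument implicitly uses an undefined $h_0$ there.
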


\begin{proof}
	By the triangle inequality and the bounds indicated by \Cref{assumption:bound-R-linear} and~\Cref{eq:bbbphi-norm},
	\[
		\Biggl\Arrowvert \sum_{\tau=1}^{s\ell} \sum_{h \in [H]}
		\overbrace{\bphi(a_\tau,\bx_\tau)^{\transp} \btheta^\star_h}^{|\,\cdot\,| \leq 1} \, \Bigl( \bar{\bb}_\tau(h) \,
		- \hbb_\tau(h) \Bigr) \, \overbrace{\hbb_\tau \otimes \bphi(a_\tau,\bx_\tau)}^{\Arrowvert\,\cdot\,\Arrowvert_2 \leq 1}
		\Biggr \Arrowvert_2
		\leq \sum_{\tau=1}^{s\ell} \sum_{h \in [H]} \bigl| \bar{\bb}_\tau(h) - \hbb_\tau(h) \bigr| = \sum_{\tau=1}^{s\ell}
		\bigl\Arrowvert \bar{\bb}_\tau - \hbb_\tau \bigr\Arrowvert_1 \,.
	\]
	The claimed bound is obtained by another triangle inequality and by \Cref{lm:belief-vs-estimated} below: for all $s \geq 1$,
	\[
	\sum_{\tau=1}^{s\ell}
	\bigl\Arrowvert \bar{\bb}_\tau - \hbb_\tau \bigr\Arrowvert_1
	\leq \sum_{\tau=1}^{s\ell} \Bigl( \bigl\Arrowvert \bar{\bb}_\tau - \bb_\tau \bigr\Arrowvert_1 +
	\bigl\Arrowvert \bb_\tau - \hbb_\tau \bigr\Arrowvert_1 \Bigr) \leq
	\frac{2 s \gamma}{1 - \gamma} + \sum_{\tau=1}^{s\ell} \Arrowvert \bb_\tau - \hbb_\tau \bigr\Arrowvert_1 \,. \\[-.7cm]
	\]
\end{proof}

\begin{lemma}
	\label{lm:belief-vs-estimated}
	Under \Cref{assumption:strong-mixing-hmm} (exponentially fast forgetting of initial condition),
	for each $s \geq 1$, with probability~$1$,
	\[
	\sum_{\tau=(s-1)\ell+1}^{s\ell} \bigl\Arrowvert {\bb}_\tau - \bar{\bb}_\tau \bigr\Arrowvert_1
	\leq \frac{2 \gamma}{1 - \gamma} \, .
	\]
\end{lemma}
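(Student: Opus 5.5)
Fix a stage $s \geq 1$ and a round $\tau$ in it, $(s-1)\ell+1 \leq \tau \leq s\ell$. Write $t_0 = (s-1)\ell$ for the last round of the previous stage, with the convention that for $s=1$ there is no previous stage. The plan is to prove the per-round bound
\[
\bigl\Arrowvert \bb_\tau - \bar{\bb}_\tau \bigr\Arrowvert_1 \leq 2\gamma^{\tau - t_0}
\]
and then sum a geometric series: $\sum_{\tau=t_0+1}^{t_0+\ell} 2\gamma^{\tau-t_0} \leq 2\gamma/(1-\gamma)$. For $s=1$, $\cU_\tau = \sigma(\bx_{1:\tau})$, so $\bar{\bb}_\tau = \bb_\tau$ and the bound is trivial.

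For $s \geq 2$, both $\bb_\tau$ and $\bar{\bb}_\tau$ will be decomposed over the value of $h_{t_0}$, using the law of total probability together with \Cref{lm:bayes-event-random-variable}:
\[
\bar{\bb}_\tau(h') = \sum_{i \in [H]} \P\bigl(h_{t_0} = i \mid \cU_\tau\bigr)\, \P_{\{h_{t_0}=i\}}\bigl(h_\tau = h' \mid \cU_\tau\bigr),
\]
and likewise for $\bb_\tau(h')$ with $\sigma(\bx_{1:\tau})$ in place of $\cU_\tau$. The key reduction is the HMM conditional independence of \Cref{eq:HHM-indep}. The $\sigma$--algebra $\cU_\tau$ is generated by $\bx_{1:\tau}$ and by the estimates $\hbtheta_{s'\ell}$ with $s' \leq s-1$, and all of these are functions of variables up to round $t_0$ (contexts, actions, noises, and estimated beliefs from $\bx_{1:t_0}$), together with $\bx_{t_0+1:\tau}$. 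Given $h_{t_0}$, the state $h_\tau$ therefore depends on $\cU_\tau$ only through $\bx_{t_0+1:\tau}$, which gives
\[
\P_{\{h_{t_0}=i\}}\bigl(h_\tau = h' \mid \cU_\tau\bigr) = \P_{\{h_{t_0}=i\}}\bigl(h_\tau = h' \mid \bx_{t_0+1:\tau}\bigr).
\]
The same identity holds with $\sigma(\bx_{1:\tau})$ in place of $\cU_\tau$. Writing $p_i$ and $q_j$ for the two mixing weights on $h_{t_0}$, both $\bb_\tau$ and $\bar{\bb}_\tau$ become mixtures of the same kernels $K_i(\cdot) = \P_{\{h_{t_0}=i\}}(h_\tau = \cdot \mid \bx_{t_0+1:\tau})$, differing only in their weights.

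The main obstacle is justifying this measurability and independence step rigorously. One has to argue that the $\hbtheta_{s'\ell}$ are measurable with respect to $\cFall_{t_0+1}$ augmented by the noises up to $t_0$, and then apply \Cref{eq:HHM-indep} with $\tau = t_0$ and $\tau' = \tau$. One must also handle events with $\P(h_{t_0}=i)=0$; under the forgetting setting all states are reachable, or such terms contribute zero weight anyway.

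Given these mixtures, I would write $\bb_\tau - \bar{\bb}_\tau = \sum_{i,j} p_i q_j (K_i - K_j)$, which holds because both weight vectors sum to one. The triangle inequality and \Cref{assumption:strong-mixing-hmm} then give $\Arrowvert K_i - K_j\Arrowvert_1 \leq 2\gamma^{\tau-t_0}$, hence the per-round bound holds almost surely. Summing over the stage concludes the proof.
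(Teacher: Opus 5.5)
Your proposal is correct and follows the paper's proof essentially step for step. In both, $\bb_\tau$ and $\bar{\bb}_\tau$ are decomposed over $h_{(s-1)\ell}$ via the law of total probability and \Cref{lm:bayes-event-random-variable}; \Cref{eq:HHM-indep} reduces both inner kernels to conditioning on $\bx_{(s-1)\ell+1:\tau}$; and the argument concludes with a triangle inequality over pairs $(i,j)$, \Cref{assumption:strong-mixing-hmm}, and a geometric sum. Your separate treatment of $s=1$, where $\bar{\bb}_\tau = \bb_\tau$, cleanly handles a case the paper glosses over.

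One small index slip: for \Cref{eq:HHM-indep} to apply, the $\hbtheta_{s'\ell}$ must be measurable w.r.t.\ $\sigma\bigl(\cFall_{(s-1)\ell}, (\eta_{(s-1)\ell}(a))_{a \in \cA}\bigr)$. Measurability w.r.t.\ $\cFall_{(s-1)\ell+1}$ is not enough, since that $\sigma$--algebra already contains $h_{(s-1)\ell+1}$, and conditioning on it would break the reduction.
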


\begin{proof}
	The proof is a mere adaptation of Step~3 of the
	proof of \Cref{lm:true-vs-belief} located in Appendix~\ref{sec:proof:lm:true-vs-belief}.
	By two applications of the law of total probability and \Cref{lm:bayes-event-random-variable}
	for the second equality,
	by the conditional independence discussed around \Cref{eq:HHM-indep} for the third equality,
	and by a triangle inequality together with \Cref{assumption:strong-mixing-hmm}
	for the final inequality,
	\begin{align*}
		\lefteqn{\sum_{\tau=(s-1)\ell+1}^{s\ell} \bigl\Arrowvert {\bb}_\tau - \bar{\bb}_\tau \bigr\Arrowvert_1} \\
		& = \sum_{\tau = (s-1) \ell + 1}^{s \ell} \sum_{h \in [H]} \,\, \Bigl| \P(h_\tau = h \mid \bx_{1:\tau})
		- \P(h_\tau = h \mid \cU_\tau) \Bigr| \\
		& = \sum_{\tau = (s-1) \ell + 1}^{s \ell} \sum_{h \in [H]} \,\, \Bigg|
		\sum_{i \in [H]} \P \bigl( h_{(s-1)\ell} = i \mid \bx_{1:\tau} \bigr) \, \P_{\{ h_{(s-1)\ell} = i \}}(h_\tau = h \mid \bx_{1:\tau}) \\
		& \qquad \qquad \qquad \qquad \qquad \qquad \smash{- \sum_{j \in [H]} \P \bigl( h_{(s-1)\ell} = j \mid \cU_{\tau} \bigr) \, \P_{\{ h_{(s-1)\ell} = j \}}(h_\tau = h \mid \cU_\tau)
			\Biggl|} \\[.2cm]
		& = \sum_{\tau = (s-1) \ell + 1}^{s \ell} \sum_{h \in [H]} \,\, \Biggl|
		\sum_{i \in [H]} \P \bigl( h_{(s-1)\ell} = i \mid \bx_{1:\tau} \bigr) \, \P_{\{ h_{(s-1)\ell} = i \}}\bigl(h_\tau = h \mid \bx_{(s-1)\ell+1:\tau}\bigr) \\
		& \qquad \qquad \qquad \qquad \qquad \qquad \smash{-  \sum_{j \in [H]} \P \bigl( h_{(s-1)\ell} = j \mid \cU_{\tau} \bigr) \,
			\P_{\{ h_{(s-1)\ell} = j \}}\bigl(h_\tau = h \mid \bx_{(s-1)\ell+1:\tau}\bigr)
			\Biggr|} \\[.2cm]
		& \leq \sum_{\tau = (s-1)\ell + 1}^{s \ell} \sum_{i \in [H]} \sum_{j \in [H]} \P \bigl(h_{(s-1)\ell} = i \mid \bx_{1:\tau} \bigr) \,
		\P \bigl(h_{(s-1)\ell} = j \mid \cU_{\tau} \bigr) \\
		& \qquad \qquad \; \times \underbrace{\sum_{h \in [H]} \biggl|
			\P_{\{h_{(s-1)\ell} = i\}}( h_{\tau} = h \mid \bx_{(s-1) \ell + 1:\tau})
			- \P_{\{h_{(s-1)\ell} = j\}} \bigl(h_{\tau} = h \mid \bx_{(s-1)\ell+1:\tau} \bigr) \biggr|}_{
			\leq 2 \gamma^{\tau-(s-1)\ell}} \,,
	\end{align*}
	from which the stated bound follows, by the formula for geometric sums.
\end{proof}

\subsection{Proof of \Cref{theorem:total-regret-bound}}
\label{appendix:proof-total-regret}
\label{sec:bound-cum-sum}

This section now proves \Cref{theorem:total-regret-bound}
based on \Cref{lm:rt-bound}: as in Appendix~\ref{appendix:simplified-analysis}---namely,
the proof of \Cref{theorem:total-regret-bound-simplified} based on \Cref{lm:est-simple}---,
the final regret bound is basically given by $2$ times the sum of the upper confidence bounds
stated in \Cref{lm:rt-bound}.
We adapt proof of \Cref{theorem:total-regret-bound-simplified} first, to take into account
the staged nature of the strategy of Box~A, and second,
to carefully take care of unions bounds. Indeed, \Cref{lm:est-simple}
offered a deviation bound uniform over time rounds $t \geq 1$
and with a low $\sqrt{\ln(1/\delta)}$ dependency on the
risk level $\delta \in (0,1)$.
On the contrary, \Cref{lm:rt-bound} only provides
deviation bounds for each stage $s \geq 1$ with a $1/\sqrt{\delta_s}$ dependency
on the risk level $\delta_s$ used for that stage.

The confidence bonuses $\epsilon_{t,a}$ considered in \Cref{theorem:total-regret-bound}
correspond to the upper bounds of
\Cref{lm:rt-bound} up to the replacements of $\delta_s$ by $\delta/(4s_T)$ and
of the unknown $\Arrowvert \bb_{\tau} - \hbb_{\tau} \Arrowvert_1$
by their high-probability bounds $\Ubelief(\tau,\delta/2)$.

\begin{proof}
We do not substitute yet the specific values of $\lambda$ and $\ell$ considered
and recall that $T$ is assumed to be known.
We denote by $\epsilon^\dag_{t,a}$ the upper bound read in \Cref{lm:rt-bound} for
the risk $\delta/(4s_T)$ and by substituting the stage $s_t$ to which a round $t \geq 1$ belongs, i.e.,
$\epsilon^\dag_{t,a} = 1+\sqrt{d}/\lambda$ for $1 \leq t \leq \ell$, and for $\ell+1 \leq t \leq T$,
\begin{multline*}
\epsilon^\dag_{t,a} =
\bigl\Arrowvert \bb_t - \hbb_t \bigr\Arrowvert_1 + \biggl\Arrowvert G_{(s_t -1) \ell}^{-1} \Bigl(\hbb_t \otimes \bphi(a, \bx_t)\Bigr) \biggr\Arrowvert_2 \, \Biggl( \lambda \sqrt{H} \, C_{\btheta^{\star}} + 4\sqrt{ \frac{s_T(s_t-1)(1+s_t\gamma)\ell}{\delta(1-\gamma)}}
			+ \sqrt{\frac{4s_T}{\delta} C_{\eta} (s_t-1) \ell } \\
            + \frac{2 (s_t-1) \gamma}{1 - \gamma} + \sum_{\tau=1}^{(s_t-1)\ell} \Arrowvert \bb_{\tau} - \hbb_{\tau} \Arrowvert_1
			 \Biggr)\,.
\end{multline*}
By \Cref{lm:rt-bound} and a union bound, we have the following high-probability
uniform deviation bound: with probability at least $1-\delta/2$,
\[
\forall 1 \leq t \leq T, \qquad\quad
\forall a \in \cA, \qquad \left| \sum_{h \in \cH} \bb_t(h) \bphi(a, \bx_t)^{\transp} \btheta^{\star}_{h}
		- \sum_{h \in \cH} \hbb_t(h) \bphi(a, \bx_t)^{\transp} \hbtheta_{(s_t-1) \ell,h} \right|
\leq \epsilon^\dag_{t,a}\,.
\]
From the guarantee above, we get similar guarantees as in Equations~\eqref{eq:csq-lmsimple-1}--\eqref{eq:csq-lmsimple-2}:
with probability at least $1-\delta/2$,
\begin{align*}
\max_{a \in \cA} \sum_{h \in \cH} \bb_t(h) \, \bphi(a, \bx_t)^{\transp} \btheta^{\star}_{h}
& \leq \max_{a \in \cA} \left\{ \epsilon^\dag_{t,a} + \sum_{h \in \cH} \hbb_t(h) \, \bphi(a, \bx_t)^{\transp} \hbtheta_{(s_t-1) \ell,h} \right\} \\
\mbox{and} \qquad\qquad
\sum_{h \in \cH} \hbb_t(h) \, \bphi(a_t, \bx_t)^{\transp} \hbtheta_{(s_t-1) \ell,h}
& \leq \epsilon^\dag_{t,a_t} + \sum_{h \in \cH} \bb_t(h) \, \bphi(a_t, \bx_t)^{\transp} \btheta^{\star}_{h}\,.
\end{align*}
We now want to replace the terms $\Arrowvert \bb_{\tau} - \hbb_{\tau} \Arrowvert_1$
by $\Ubelief(\tau,\delta/2)$ and to do so, we adapt the results
developed in \Cref{eq:T0-Ubelief-1,eq:T0-Ubelief-2,eq:T0-Ubelief-3},
which only depend on the belief estimation subroutine
and only require \Cref{assumption:bound-belief}.
More precisely, \Cref{eq:T0-Ubelief-1} remains valid:
with probability at least $1-\delta/2$,
\[
\forall t \in [ T_0, \,\, T], \qquad
\bigl\Arrowvert \bb_t - \hbb_t \bigr\Arrowvert_1 \leq \Ubelief(t,\delta/2)\,,
\qquad \mbox{where} \qquad
T_0 \eqdef \max\Bigl\{2, \,\, \lceil T_{\cB,\bM,\nu}\bigl(1 + \ln(2 /\delta)\bigr)
\rceil \Bigr\}\,.
\]
Thus, given that $G_{(s_t -1) \ell} \succeq \lambda \id{dH}$ for $t \geq \ell+1$ and by \Cref{eq:bbbphi-norm},
we also have, $t \geq \ell+1$,
\[
\biggl\Arrowvert G_{(s_t -1) \ell}^{-1} \Bigl(\hbb_t \otimes \bphi(a, \bx_t)\Bigr) \biggr\Arrowvert_2
\leq \frac{1}{\lambda} \, \Bigl\Arrowvert \hbb_t \otimes \bphi(a, \bx_t) \Bigr\Arrowvert_2 \leq \frac{1}{\lambda}\,.
\]
Finally,
taking into account that $\epsilon^\dag_{t,a} = 1+\sqrt{d}/\lambda = \epsilon_{t,a}$ for $1 \leq t \leq \ell$,
we have (whether $T_0$ is larger or smaller than $\ell$) that
with probability at least $1-\delta/2$,
\[
\forall t \in [ T_0, \,\, T], \qquad
\epsilon^\dag_{t,a} \leq \epsilon_{t,a} + 2(T_0 - 1)/\lambda\,,
\]
where the $\epsilon_{t,a}$ are the confidence bonuses considered in
the statement of \Cref{theorem:total-regret-bound}.

The bounds above, together with the same arguments as in \Cref{eq:T0-Ubelief-3,eq:pointer-2-T0REGR}
and the definition of the Box~B algorithm as picking arms $a_t$ maximizing some empirical
upper confidence bounds,
entail that with probability at least $1-\delta$,
\[
R_T = \sum_{t=1}^{T} \left( \max_{a \in \cA} \sum_{h \in [H]} \bb_t(h) \, \bphi(a, \bx_t)^{\transp} \btheta^{\star}_{h} -
			\sum_{h \in [H]} \bb_t(h) \, \bphi(a_t, \bx_t)^{\transp} \btheta^{\star}_{h} \right)
\leq 2 (T_0 - 1) + \sum_{t=T_0}^T \bigl(2\eps_{t, a_t} + 4 (T_0 - 1)/\lambda\bigr)\,.
\]
We now substitute bounds on the $\eps_{t, a_t}$, by replacing $f_t$ in its definition
by the upper bound $f_T$ and by bounding $(s_T-1)\ell < T$ therein,
and also substitute the closed-form expression for $T_0$:
with probability at least $1-\delta$,
\begin{multline}
	\label{eq:3terms}
	R_T \leq (2 + 4T/\lambda) \, T_{\cB,\bM,\nu}\bigl(1 + \ln(2 /\delta)\bigr)
	+  2 \sum_{t=1}^T \Ubelief(t, \delta/2) \\
	\qquad  + 2 \Gsum
    \Biggl( \lambda \sqrt{H} \, C_{\btheta^{\star}}
    + 4 \sqrt{ \frac{T s_T (1+s_T \gamma)}{\delta(1 - \gamma)} }
			+ \sqrt{\frac{4 s_T}{\delta} T C_{\eta}} + \frac{2 s_T \gamma}{1 - \gamma}
			+ \sum_{\tau=1}^{T} \Ubelief(\tau,\delta/2) \Biggr) \,,
\end{multline}
where
\[
\Gsum = \sum_{s=1}^{s_T} \sum_{t = (s-1) \ell + 1}^{\min\{s \ell,T\}} \,\,
\biggl\Arrowvert G_{(s -1) \ell}^{-1} \Bigl(\hbb_t \otimes \bphi(a_t, \bx_t)\Bigr) \biggr\Arrowvert_2\,.
\]
We bound $\Gsum$ by applying \Cref{lm:bound-cum-sum} below to vectors $\by_\tau = \hbb_\tau \otimes \bphi(a_\tau, \bx_\tau)$,
of dimension $dH$ and with Euclidean norm smaller than~$1$
as indicated in \Cref{eq:bbbphi-norm}, till stage $S = s_T$: we get
the deterministic upper bound
\begin{equation}
	\label{eq:Gsum}
	\Gsum \leq \frac{1}{\sqrt{\lambda}}\sqrt{2 \, d H s_T \ell \, (1 + \ell/\lambda) \, \ln\Bigl( 1 + s_T \ell/ (d H \lambda) \Bigr)}\,.
\end{equation}
\Cref{eq:3terms,eq:Gsum} provide the closed-form regret bound claimed in the statement of \Cref{theorem:total-regret-bound}.

It now suffices to show that it is of order $T^{7/8}$ up to logarithmic factors for the
choices $\ell = \lceil T^{3/4} \rceil$ and $\lambda = T^{3/4}$.
Actually, taking $\ell = \lceil T^{a} \rceil$ (thus
$s_T = T/\ell$ is of order $T^{1-a}$) and $\lambda = T^{b}$,
recalling that $T_{\cB,\bM,\nu}$ is a constant,
we have that the regret bound of \Cref{eq:3terms}
is of order, up to logarithmic terms,
\[
T/\lambda + \sqrt{T} + \sqrt{T/\lambda \, (1+\ell/\lambda)} \Bigl( \lambda + T^{3/2}/\ell + \smash{\underbrace{T/\sqrt{\ell} + T/\ell}_{\leq T^{3/2}/\ell}}
+ \sqrt{T} \Bigr)\,,
\]
i.e., of order $T^c$ where $(\,\cdot\,)_+$ denotes the non-negative part and
\[
c = \max\bigl\{ 1-b, \,\, 1/2, \,\, (1-b)/2 + (a-b)_+/2 + \max\{b, \,\, 3/2-a, \,\, 1/2\} \bigr\}\,;
\]
an optimization over $a \in [0,1]$ and $b \in [0,1]$
leads to $a = b = 3/4$ and $c = 7/8$,
which concludes the proof.
\end{proof}

\paragraph{Elliptic potential with staged updates.}
It only remains to prove the following extension of the
classic elliptic potential lemma (see \Cref{lm:bound-cum-sum-simplfied-bis}
in Appendix~\ref{appendix:simplified-analysis}) to updates in stages.

\begin{lemma}
	\label{lm:bound-cum-sum}
	Consider vectors $\by_t \in \R^d$ with $\Arrowvert \by_t \Arrowvert_2 \leq 1$, a parameter $\lambda \geq 1$, and the Gram matrices
	$V_0 = \lambda \id{d}$ and
	\[
	V_t = \lambda \id{d} + \sum_{\tau=1}^{t} \by_\tau \by_\tau^{\transp} \quad \mbox{for} \quad t \geq 1 \,.
	\]
	For all integers $S \geq 1$ and $\ell \geq 1$, we have:
	\begin{align*}
		\sum_{s=1}^{S} \sum_{\tau = (s-1) \ell + 1}^{s \ell}  \bigl\Arrowvert V_{(s -1) \ell}^{-1} \, \by_{\tau} \bigr\Arrowvert_2
		& \leq \frac{1}{\sqrt{\lambda}}\sum_{s=1}^{S} \sum_{\tau = (s-1) \ell + 1}^{s \ell}  \bigl\Arrowvert V_{(s -1) \ell}^{-1/2} \, \by_{\tau} \bigr\Arrowvert_2 \\
		& \leq \frac{1}{\sqrt{\lambda}}\sqrt{2 \, d S \ell \, (1 + \ell/\lambda) \, \log\bigl( 1 + S\ell/ (d\lambda) \bigr)}\,.
	\end{align*}
\end{lemma}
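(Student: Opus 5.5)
The first inequality is immediate. Since $V_{(s-1)\ell} \succeq \lambda \id{d}$, we have $V_{(s-1)\ell}^{-1/2} \preceq \lambda^{-1/2}\id{d}$. Hence
\[
\bigl\Arrowvert V_{(s-1)\ell}^{-1}\by_\tau\bigr\Arrowvert_2 = \bigl\Arrowvert V_{(s-1)\ell}^{-1/2}\,V_{(s-1)\ell}^{-1/2}\by_\tau\bigr\Arrowvert_2 \leq \lambda^{-1/2}\bigl\Arrowvert V_{(s-1)\ell}^{-1/2}\by_\tau\bigr\Arrowvert_2 .
\]
The second inequality then follows from a Cauchy--Schwarz step, a comparison of stale and fresh Gram matrices, and the determinant argument of \Cref{lm:bound-cum-sum-simplfied-bis}.

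\textbf{Cauchy--Schwarz.} Applying Cauchy--Schwarz over all $S\ell$ terms gives
\[
\sum_{s,\tau}\bigl\Arrowvert V_{(s-1)\ell}^{-1/2}\by_\tau\bigr\Arrowvert_2 \leq \sqrt{S\ell\sum_{s,\tau}\by_\tau^\transp V_{(s-1)\ell}^{-1}\by_\tau}.
\]
It therefore suffices to show
\[
\sum_{s=1}^S\sum_{\tau=(s-1)\ell+1}^{s\ell}\by_\tau^\transp V_{(s-1)\ell}^{-1}\by_\tau \leq (1+\ell/\lambda)\, 2\ln\frac{\det V_{S\ell}}{\det V_0},
\]
and then to conclude with \Cref{lm:determinant-trace-inequality-bis}, exactly as in \Cref{lm:bound-cum-sum-simplfied-bis}.

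\textbf{Stale versus fresh Gram matrices.} This is the main step. For $\tau$ in stage $s$, the stale matrix $V_{(s-1)\ell}$ and the fresh matrix $V_{\tau-1}$ differ by at most $\ell-1$ rank-one terms, each of norm at most $1$. So $V_{\tau-1}\preceq V_{(s-1)\ell}+(\ell-1)\id{d}$. Since $V_{(s-1)\ell}\succeq\lambda\id{d}$, we have $(\ell-1)\id{d}\preceq (\ell/\lambda)V_{(s-1)\ell}$, and hence $V_{\tau-1}\preceq(1+\ell/\lambda)V_{(s-1)\ell}$. Inverting reverses the order, giving $V_{(s-1)\ell}^{-1}\preceq(1+\ell/\lambda)V_{\tau-1}^{-1}$. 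Therefore
\[
\by_\tau^\transp V_{(s-1)\ell}^{-1}\by_\tau\leq(1+\ell/\lambda)\,\by_\tau^\transp V_{\tau-1}^{-1}\by_\tau .
\]

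\textbf{Determinant argument.} The sum is now over fresh matrices $V_{\tau-1}$, $\tau=1,\ldots,S\ell$. Since $\lambda\geq1$ and $\Arrowvert\by_\tau\Arrowvert_2\leq 1$, each term satisfies $\by_\tau^\transp V_{\tau-1}^{-1}\by_\tau\leq1$. Using $u\leq2\ln(1+u)$ on $[0,1]$, the matrix determinant lemma (\Cref{lm:matrix-determinant-bis}), and telescoping as in \Cref{eq:reminder-end}, the sum is at most $2\ln(\det V_{S\ell}/\lambda^d)$. By \Cref{lm:determinant-trace-inequality-bis}, this is at most $2d\ln(1+S\ell/(d\lambda))$. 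Combining everything yields
\[
\frac{1}{\sqrt\lambda}\sqrt{2dS\ell(1+\ell/\lambda)\ln\bigl(1+S\ell/(d\lambda)\bigr)},
\]
which is the claimed bound.

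One could avoid the factor $(1+\ell/\lambda)$ with a determinant-ratio argument, but the Loewner comparison above is simple and gives exactly the stated form, so I would use it.
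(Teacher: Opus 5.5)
Your proof is correct and is essentially the paper's argument. It uses the same bound $\Arrowvert V_{(s-1)\ell}^{-1}\by_\tau\Arrowvert_2\leq\lambda^{-1/2}\Arrowvert V_{(s-1)\ell}^{-1/2}\by_\tau\Arrowvert_2$, the same Loewner comparison $V_{(s-1)\ell}^{-1}\preceq(1+\ell/\lambda)\,V_{\tau-1}^{-1}$, and the same determinant telescoping; the only difference is that you apply one Cauchy--Schwarz over all $S\ell$ terms, whereas the paper applies it within each stage (telescoping $\ln(\det V_{s\ell}/\det V_{(s-1)\ell})$) and then across stages, which gives the identical constant.

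Drop your closing aside, though: the factor $1+\ell/\lambda$ cannot simply be removed. For $d=S=1$, $\lambda=1$, $\ell=3$ and $\by_\tau=1$, the left-hand side equals $3$, which exceeds $\sqrt{6\ln 4}$, the right-hand side without that factor.
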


The sum in \Cref{lm:bound-cum-sum} differs from the sum bounded in \Cref{lm:bound-cum-sum-simplfied-bis}
in two ways: first, it involves terms of the form
\[
\bigl\Arrowvert V_{t-1}^{-1} \, \by_t \bigr\Arrowvert_2 \quad \mbox{instead of} \quad
\bigl\Arrowvert V_{t-1}^{-1/2} \, \by_t \bigr\Arrowvert_2\,,
\]
which leads to an additional $1/\sqrt{\lambda}$ multiplicative term in our bound,
and second, the matrices $V$ are actually ``frozen'' within stages,
which entails the other additional multiplicative factor $\sqrt{1 + \ell/\lambda}$.
The proof below focuses on these two modifications.

\begin{remark}
\citet{Elliptical2020} provide some general study of the sums
\[
\sum_{t=1}^{T} \bigl\Arrowvert V_{t-1}^{-p/2} \by_{t} \bigr\Arrowvert_2\,,
\]
for $p \in (0,+\infty)$.
For $p=2$, as in \Cref{lm:bound-cum-sum} up to staging, they obtain an upper bound of order $\sqrt{Td/\lambda}$.
This corresponds, up to logarithmic factors and up to the $\sqrt{1 + \ell/\lambda}$ term due to staging, to the right-most term of \Cref{lm:bound-cum-sum}.
Therefore, the first inequality of \Cref{lm:bound-cum-sum}, while relying on the simple lower bound $V_{t-1} \succeq \lambda \id{d}$ (see the proof above), looks sharp enough.
\end{remark}

\begin{proof}
We use $V_{t-1} \succeq \lambda \id{d}$ for all $t \geq 1$ to get,
for all $s \geq 1$ and all $\tau \geq 1$,
\begin{equation}
	\label{eq:bd-V-l}
	\bigl\Arrowvert V_{(s -1) \ell}^{-1} \, \by_{\tau} \bigr\Arrowvert_2
	= \bigl\Arrowvert V_{(s -1) \ell}^{-1/2} \, V_{(s -1) \ell}^{-1/2} \, \by_{\tau} \bigr\Arrowvert_2
	\leq \frac{1}{\sqrt{\lambda}} \bigl\Arrowvert V_{(s -1) \ell}^{-1/2} \, \by_{\tau} \bigr\Arrowvert_2\,,
\end{equation}
which yields the first inequality stated in the lemma.
We prove below that for all $s \geq 1$,
\begin{equation}
	\label{eq:bound-within-stage}
	\sum_{\tau = (s-1) \ell + 1}^{s \ell} \bigl\Arrowvert V_{(s -1) \ell}^{-1/2} \,\, \by_{\tau} \bigr\Arrowvert_2  \leq \sqrt{2 \ell \, (1 + \ell/\lambda) \,
		\ln \frac{\det(V_{s \ell})}{\det\bigl(V_{(s -1) \ell}\bigr)}}\,.
\end{equation}
The second inequality then follows from \Cref{eq:bound-within-stage} and the application
of a Cauchy-Schwarz inequality:
\begin{align*}
	\sum_{s=1}^{S} \sum_{\tau = (s-1) \ell + 1}^{s \ell}  \bigl\Arrowvert V_{(s -1) \ell}^{-1/2} \, \by_{\tau} \bigr\Arrowvert_2
	& \leq \sum_{s=1}^{S}
	\sqrt{2 \ell \, (1 + \ell/\lambda) \,
		\ln \frac{\det(V_{s \ell})}{\det\bigl(V_{(s -1) \ell}\bigr)}} \\
	& \leq \sqrt{2 S \ell \, (1 + \ell/\lambda) \,
		\sum_{s=1}^{S} \ln \frac{\det(V_{s \ell})}{\det\bigl(V_{(s -1) \ell}\bigr)}}
= \sqrt{2 S \ell \, (1 + \ell/\lambda) \,
		 \ln \frac{\det(V_{S \ell})}{\det\bigl(V_{0}\bigr)}} \,,
\end{align*}
together with the fact that
$\det(V_{0}) = \lambda^d$ and that the upper bound
$d \log(\lambda + S\ell/ d)$
on $\ln\bigl(\det(V_{S\ell})\bigr)$ is given by
\Cref{lm:determinant-trace-inequality-bis}.
We are thus only left to prove \Cref{eq:bound-within-stage}.

To do so, we show below that
\begin{equation}
	\label{eq:V-1}
	\forall j \in [\ell-1], \qquad
	V_{(s -1) \ell}^{-1} \preceq (1 + \ell/\lambda) \, V^{-1}_{(s -1) \ell + j}\,,
\end{equation}
which, keeping in mind that $\Arrowvert V^{-1/2} \by \Arrowvert_2 = \sqrt{\by^{\transp} V \by}$, directly entails that
\[
\sum_{\tau = (s-1) \ell + 1}^{s \ell} \bigl\Arrowvert V_{(s -1) \ell}^{-1/2} \,\, \by_{\tau} \bigr\Arrowvert_2
\leq \sqrt{1 + \ell/\lambda}
\sum_{\tau = (s-1) \ell + 1}^{s \ell} \bigl\Arrowvert V_{\tau-1}^{-1/2} \,\, \by_{\tau} \bigr\Arrowvert_2\,.
\]
The bound of
\Cref{eq:bound-within-stage} is then obtained via
the arguments between
\Crefrange{eq:sum-per-round-raw}{eq:reminder-end} in the proof of \Cref{lm:bound-cum-sum-simplfied-bis}, applied
within a stage, i.e., within the $\ell$ rounds from $(s-1) \ell + 1$ to $s\ell$.

To prove~\Cref{eq:V-1}, we recall that
\[
V_{(s -1) \ell + j} - V_{(s -1) \ell} = \sum_{\tau=(s -1) \ell+1}^{(s -1) \ell+j} \by_\tau \by_\tau^{\transp}\,,
\]
where $\by_\tau \by_\tau^{\transp} \preceq \id{d}$ since $\Arrowvert \by_\tau \Arrowvert_2 \leq 1$.
We also recall that by definition, $V_{(s -1) \ell} \succeq \lambda \id{d}$.
Therefore,
\[
V_{(s -1) \ell + j} - V_{(s -1) \ell} \preceq j \, \id{d} \preceq (j/\lambda) \, V_{(s -1) \ell}\,,
\qquad \mbox{thus} \quad
V_{(s -1) \ell + j} \preceq (1+\ell/\lambda) \, V_{(s -1) \ell}\,,
\]
which implies \Cref{eq:V-1} after inverting both sides.
\end{proof}

\subsection{Handling Regret Defined in Terms of Actual Rewards}
\label{app:true-regret}

The end of Section~\ref{sec:regret-def} stated that
the results achieved in this article go beyond
the mere case of the pseudo-regret
\[
R_T = \sum_{t=1}^{T} \max_{a \in \cA} \sum_{h \in [H]} \bb_t(h) \, \bphi(a, \bx_t)^{\transp} \btheta^{\star}_{h} - \\
\sum_{t=1}^{T} \sum_{h \in [H]} \bb_t(h) \, \bphi(a_t, \bx_t)^{\transp} \btheta^{\star}_{h}
\]
and also yield a control of a regret defined in terms of actual rewards:
\[
R_T^{\mbox{\rm \tiny actual}} = \sum_{t=1}^{T} r_t(a^{\star}_t) - \sum_{t=1}^{T} r_t(a_t)\,,
\qquad \mbox{where} \qquad 
	a^{\star}_t  \in \argmax_{a \in \cA} \sum_{h \in [H]} \bb_t(h) \, \bphi(a, \bx_t)^{\transp} \btheta^{\star}_{h}\,.
\]
Below, we actually sketch the proof that $R_T^{\mbox{\rm \tiny actual}}$ is close to $R_T$
with high probability, up to an additive term of order $T^{5/8}$.
That proof sketch actually follows the (long and complex) proof
provided above for \Cref{theorem:total-regret-bound}.

\paragraph{First step: noise terms.}
We first control the noise terms. 
It suffices to mimic \Cref{lm:bound-idiosyncratic} and get a scalar version thereof, where 
terms of the form $\hbb_\tau \otimes \bphi(a_\tau,\bx_\tau)$ are replaced by the scalar multiplier $1$.
This shows that the sums of the noise terms are small, and more precisely, that 
with probability at least $1-\delta/3$, 
\begin{align*}
\sum_{t=1}^{T} r_t(a^{\star}_t)
\qquad \mbox{is } & \sqrt{T/ \delta} \mbox{--close to} \qquad
\sum_{t=1}^{T} \bphi(a^{\star}_t, \bx_t)^{\transp} \btheta^{\star}_{h_t} \\
\mbox{and} \qquad 
\sum_{t=1}^{T} r_t(a_t)
\qquad \mbox{is } & \sqrt{T/ \delta} \mbox{--close to} \qquad
\sum_{t=1}^{T} \bphi(a_t, \bx_t)^{\transp} \btheta^{\star}_{h_t} \,.
\end{align*}

\paragraph{Second step: conditional expectations of $h_t$.}
A scalar version of \Cref{lm:true-vs-belief}, based on stage lengths $\ell = \lceil T^{3/4} \rceil$,
ensures that with probability at least $1-\delta/3$, 
\[
\sum_{t=1}^{T} \bphi(a_t, \bx_t)^{\transp} \btheta^{\star}_{h_t}
\qquad \mbox{is } T^{5/8}/\sqrt{\delta} \mbox{--close to} \qquad
\sum_{t=1}^{T} \bphi(a_t, \bx_t)^{\transp} \sum_{h \in [H]} \bar{\bb}_t(h) \, \btheta^{\star}_{h}\,.
\]
The same proof technique as in \Cref{lm:true-vs-belief}, but without stages (and in a scalar fashion), 
similarly entails that with probability at least $1-\delta/3$, 
\[
\sum_{t=1}^{T} \bphi(a^{\star}_t, \bx_t)^{\transp} \btheta^{\star}_{h_t}
\qquad \mbox{is } \sqrt{T/\delta} \mbox{--close to} \qquad
\sum_{t=1}^{T} \bphi(a^{\star}_t, \bx_t)^{\transp} \sum_{h \in [H]} \bb_t(h) \, \btheta^{\star}_{h}\,;
\]
we use here that $a^{\star}_t$ is $\sigma(\bx_{1:t})$--measurable.

\paragraph{Third step: relating posterior probabilities.}
Finally, the difference between
\[
\sum_{t=1}^{T} \bphi(a_t, \bx_t)^{\transp} \sum_{h \in [H]} \bar{\bb}_t(h) \, \btheta^{\star}_{h}
\qquad \mbox{and} \qquad
\sum_{t=1}^{T} \bphi(a_t, \bx_t)^{\transp} \sum_{h \in [H]} \bb_t(h) \, \btheta^{\star}_{h}
\]
is bounded by
\[
\sum_{t=1}^T \bigl\Arrowvert {\bb}_t - \bar{\bb}_t \bigr\Arrowvert_1\,,
\]
which is exactly the quantity that \Cref{lm:belief-vs-estimated} controls: it is of order $T^{1/4}$ given the value
$\ell = \lceil T^{3/4} \rceil$ picked. 

\paragraph{Conclusion.}
Collecting all bounds, we see that
$R_T^{\mbox{\rm \tiny actual}}$
is $T^{5/8}\sqrt{\delta}$--close to $R_T$, with probability at least $1-\delta$.

\clearpage
\section{Numerical Simulations}
\label{appendix:simulation}

The focus of this article is primarily theoretical: the simulations are intended mainly to illustrate the practical behavior of the proposed algorithms, specifically their convergence and their performance relative to relevant baselines. Accordingly, the purpose of this appendix is threefold.

\paragraph{Goal~1: Illustrate the impact of taking into account the latent dynamics.}
First, we compare the belief-based LinUCB strategies (Box~A, and its special case Box~B) with a baseline given by plain LinUCB (introduced by \citealp{abbasi2011improved} and restated in Box~D of this appendix).
This baseline ignores the latent state dynamics and treats the observed contexts as if they were directly sampled from a stochastic environment
and the rewards as if they depended only on the observed contexts and actions.
Alternative baselines we could most immediately think of are based on latent-state bandit models but do not incorporate contextual information into the reward model:
this includes, for example, the works by \citet{RegimeBandits-2021} and \citet{spectral-pomdps-2016}, as well as
\citealp{nelson2022ContextLatent} in their original form.

\paragraph{Goal~2: Illustrate the impact of the hyper-parameters $\ell$ and confidence bonuses $\epsilon_{t,a}$.}
The belief-based strategies considered in this article (stated in Box~A, with Box~B being a special case for stages of length $\ell = 1$)
depend on two hyper-parameters: the stage lengths $\ell \geq 1$, and the form of the confidence bonuses.
We considered two forms in this article: one in \Cref{eq:epsilonta-mostcomplex} to deal with rewards
stemming from the most complex reward model~\eqref{eq:reward-our}, where rewards depend directly on the latent states,
and one in Appendix~\ref{appendix:simplified-analysis}
suited to the simpler reward model~\eqref{eq:nelson-model-orig},
where rewards are functions of the beliefs.
While the theory developed in this article did not consider the case of $\ell \geq 2$ and
the confidence bonuses of Appendix~\ref{appendix:simplified-analysis},
in the experiments, we go beyond these original designs and provide a more comprehensive study of performance
according to these hyper-parameters.
The goal is to disentangle the effect of the form of the confidence bonuses $\epsilon_{t,a}$ from the effect of the update schedule $\ell$.
Since both the staging scheme and the larger confidence bonuses of \Cref{eq:epsilonta-mostcomplex}
are introduced in the theory to cope with the direct dependence of rewards on the latent states, these simulations help determine whether they are merely technical tools for the analysis or whether they also provide a practical advantage.

\paragraph{Goal~3: Provide a realistic application with a larger-scale HMM.}
Third, these simulations also illustrate that the proposed algorithms can be run in a realistic application setting. The simulations are carried out on a light computing setup ($8$ cores, $16$ threads, $32$~GB RAM) and without using a GPU. Even so, the hidden state dynamics in our simulations are already nontrivial: the underlying HMM (described in detail below) features $2$ hidden states and emission distributions over $20$ different values. By comparison, \citet{nelson2022ContextLatent} consider a smaller simulated setting with $2$ hidden states and $4$ emission values. Similarly, small emission spaces are also used in the simulations of \citet{RegimeBandits-2021} and \citet{spectral-pomdps-2016}.

\subsection{Data Preparation; Variables; Latent States and Contexts}
\label{sec:simulation-data-prep}

We use partially simulated but realistic data. A brief summary of the hypothetical simulation background in banking industry is the following: a bank aims to optimize its marketing strategy for a credit product. Each potential client is described by a context (a client profile), and the bank chooses one of three marketing actions: \emph{Call}, \emph{Email}, or \emph{No action}. Rewards are generated by a latent-state-dependent linear function with additive noise. We assume that the environment switches between two unobserved economic latent states: \emph{inflation} and \emph{recession}. The latent state affects both the distribution of client profiles and the reward function. For instance, during a recession, clients tend to have lower revenue and may respond differently to marketing actions than during inflation. The objective is to maximize cumulative rewards by adapting the actions over time.

\paragraph{Data set.}
Our simulations are based on the ``Default of Credit Card Clients'' dataset from UCI Machine Learning Repository \citep{UCI2016DefaultCR}, originally provided by \citet{Yeh2009creditcard}. The dataset is distributed under the Creative Commons Attribution 4.0 International (CC BY 4.0) license and was designed to benchmark algorithms for predicting credit card default probabilities. It contains socio-demographic variables, debt levels, payment histories, and a binary target indicating whether a client defaulted in the next month.

For the purposes of these simulations, each row is interpreted as a potential marketing opportunity. We discard some variables and create additional features as described below.  Our reprocessing is close to the simulation setup of \citet[Appendix~F]{CBwK-LP-2022}, who study a ``market share expansion for loans'' application based on the same dataset.

\paragraph{Variables kept.} We keep the following variables, with mild preprocessing:.
\begin{itemize}
	\item \emph{Age}---client age in years at the time of the campaign, discretized into five levels using cutoffs $27$, $31$, $37$, and $43$ (level~1
denoting the younger age category and level~5 the oldest);
	\item \emph{Education}---client's education level, regrouped into four levels (others, high school, university, and graduate school);
	\item \emph{Marital status}---client's marital status with three levels (single, married, and other).
\end{itemize}

\paragraph{Variables created.} We construct two additional variables:
\begin{itemize}
	\item \emph{Revenue}---a proxy of client's revenue derived from the current debt level in the original dataset by multiplying the latter by $0.2$; the value obtained is further discretized into four levels with cutoffs 10K, 36K, and 54K (level~1 denoting the lowest revenue category and level~$4$ being the highest). \smallskip
	
	\item \emph{Risk score}---following \citet[Appendix~F]{CBwK-LP-2022}, we fit a probability of default using a XGBoost model \citep{Chen2016XGBoost} on the original dataset to estimate the probability that a client defaults on the loan in the following month. Full details of the XGBoost hyper-parameters are provided in \citet[Appendix~F]{CBwK-LP-2022} and the number of trees selected by cross validation equals now $1,\!111$ (this number is slightly different than in the reference
due to differences in the versions of Python and of some packages).
We then discretize the predicted probabilities of default into five equal quantile bins to obtain a finite risk score, with level~$1$ denoting the lowest risk and level~$5$ the highest. (It is unnecessary here to recalibrate the predicted probabilities by dividing them by $4$ and cap the resulting value to $20\%$, as in the reference, since we are only interested in the resulting quantile-based discretization.)
\end{itemize}

\paragraph{Latent states.}
We consider two latent states, referred to as \emph{Inflation} ($h=1$) and \emph{Recession} ($h=2$). As in the theoretical model, the latent state affects both the context distribution and the reward function.

The contexts $\bx_t$ are given by the variables described above, i.e., are 5--uples
of the form (Age, Education, Marital status, Revenue, Risk score).
We assume that the latent state only influences the last two components $\bx'_t$, formed by Revenue and Risk score,
and that the first three components, denoted by $\bw_t$, are independent of the
latent state $h_t$ conditional to $\bx'_t$.
We also assume that the learner is aware of this fact, which entails (by an application of the Bayes' rule) that
\[
\forall t \geq 1, \ \
\forall h \in [2], \qquad
\P(h_t=h \mid \bx_{1:t}) =  \P(h_t=h \mid \bx'_{1:t})\,,
\]
so that belief estimates should be computed based solely on the subcontexts $\bx'_{1:t}$.
The estimation procedures and regret guarantees remain unchanged up to this adaptation.
Assuming this piece of knowledge is reasonable in practice, as the bank usually has prior domain knowledge about which
covariates are likely to reflect the latent economic regime.

However, rewards may and will depend on the entire context $\bx_t = (\bw_t,\bx_t')$.
The modeling above is handy for computational reasons: subcontexts $\bx'_t$ only take $4 \times 5 = 20$ different values
while full contexts $\bx_t$ take $5 \times 4 \times 3 \times 20 = 1,\!200$ different values;
the belief estimation procedure would be computationally expensive to implement
if all five components of the contexts truly depended on the latent states.

We now indicate how we pick the first three components $\bw_t$ independently of the latent states $h_t$
conditional to the last two components $\bx'_t$.

\paragraph{Context generation.}
The generation of the last two components $\bx'_t$ through a HMM
is described in Appendix~\ref{sec:simu-HMM-param} below.
We rather explain here how we generated $\bw_t$ based on $\bx'_t$:
we do so only based on the value of $\bx'_t$, which justifies
the conditional independence to the latent state $h_t$.

More precisely,
the dataset constructed above contains about $30,\!000$ statistical units
and therefore preserves realistic empirical dependence among the original covariates.
At round $t \geq 1$, given the reduced context $\bx'_t$ generated, we
identify the (thousands of) statistical units in the data set with the same values
of Revenue and Risk score and sample (with replacement) one such unit: its values
for Age, Education and Marital status form the first three components $\bw_t$
of the full context $\bx_t = (\bw_t,\bx'_t)$.

\subsection{Parameters for the HMM and the Reward Functions}
\label{sec:simu-HMM-param}

We picked somewhat arbitrary HMM and reward parameters: the goal is not to claim that these parameters are calibrated to a specific market, but rather to create a simulated environment in which the latent state has a visible effect on both contexts and rewards.

\paragraph{HMM Parameters.}
We recall that there are two values for the latent state,
\emph{Inflation} ($h=1$) and \emph{Recession} ($h=2$).
We consider the transition matrix
\[
\bM =
\begin{bmatrix}
	\P(\text{Inflation} \to \text{Inflation}) & \P(\text{Inflation} \to \text{Recession}) \\
	\P(\text{Recession} \to \text{Inflation}) & \P(\text{Recession} \to \text{Recession})
\end{bmatrix}
=
\begin{bmatrix}
	0.85 & 0.15\\
	0.2 & 0.8
\end{bmatrix}
\]
with unique stationary distribution $\bpi = (4/7, \, 3/7)$,
as well as the emission distributions $\nu_1,\,\nu_2$ reported in~\Cref{tab:simu-emission-prob}.
\begin{table}[t!]
	\centering
	\caption{Emission probabilities $\nu_h$ by latent state $h \in [2]$:
for readability, the $20$ values are presented through $5 \times 4$ tables,
with rows corresponding to Risk score levels and columns, to Revenue levels. \\[-.5cm] }
	\label{tab:simu-emission-prob}
	\begin{tabular}{lcr}
    \begin{minipage}{0.48\textwidth}
		\centering
		\[
		\begin{array}{ccccc}
			\multicolumn{5}{c}{\text{\emph{Inflation} ($h=1$)}}\\
			\hline
			\text{Risk} \, \backslash \, \text{Revenue}
			& 1 & 2 & 3 & 4 \\
			\hline
			1 & 0.0407 & 0.1242 & 0.0956 & 0.0665 \\
			2 & 0.0048 & 0.0760 & 0.0628 & 0.0661 \\
			3 & 0.0697 & 0.0109 & 0.0664 & 0.1634 \\
			4 & 0.0010 & 0.0058 & 0.0025 & 0.0967 \\
			5 & 0.0050 & 0.0310 & 0.0107 & 0.0002 \\
            \hline
		\end{array}
		\]
	\end{minipage} & \ \ &
	\begin{minipage}{0.48\textwidth}
		\centering
		\[
		\begin{array}{ccccc}
			\multicolumn{5}{c}{\text{\emph{Recession} ($h=2$)}}\\
			\hline
			\text{Risk} \, \backslash \, \text{Revenue}
			& 1 & 2 & 3 & 4 \\
			\hline
			1 & 0.0093 & 0.0373 & 0.0202 & 0.0089 \\
			2 & 0.0420 & 0.0320 & 0.0098 & 0.0361 \\
			3 & 0.0855 & 0.0341 & 0.0265 & 0.0070 \\
			4 & 0.1101 & 0.0636 & 0.0913 & 0.0228 \\
			5 & 0.0986 & 0.0480 & 0.1423 & 0.0746 \\
            \hline
		\end{array}
		\]
	\end{minipage}
    \end{tabular}
\end{table}

Given the expression for $\bM$, the latent state is persistent: remaining in the same latent state is substantially more likely than switching
to the other state.
The emission probabilities reflect the intended economic interpretation: the inflation state is associated more strongly with lower risk and higher revenue profiles, whereas the recession state places more probability mass on higher risk and lower revenue profiles.

We draw the initial latent state $h_1 \sim \bpi$, and then, successively, for all $t \geq 1$,
draw the reduced context $\bx'_t$ from the emission distribution $\nu_{h_t}$ and
draw the next latent state $h_{t+1}$ according to the distribution
read in the row $h_t$ of $\bM$.

\paragraph{Reward model.}
We recall that the action space $\cA$ contains three actions:
$\cA = \{\text{No action},\ \text{Call},\ \text{Email} \}$.
At round $t \geq 1$,
given a latent state $h_t$, a context $\bx_t$, and an action $a$, the realized reward is generated
according to \Cref{eq:reward-our}, in the specific form
\[
r_t(a) = \underbrace{\bphi(a, \bx_t)^{\transp} \btheta^{\star}_{h_t}}_{= \mu_{h_t}(a, \bx_t)} + \eta_t(a)\,,
\qquad \mbox{where} \qquad \eta_t(a) \sim \mathcal{N}(0, 0.2)\,.
\]
The linear expected part $\mu_{h_t}(a, \bx_t)$ of the reward is defined through coefficients
listed in \Cref{table:reward-inflation-coef,table:reward-recession-coef}, as follows:
for all $h \in [2]$, action $a \in \cA$, and context $\bx$,
\begin{align*}
	\mu_h(a, \bx) = & \ \ \beta^{h}_{0}
	+ \beta^{h}_{\text{age}}\bigl(\text{Age}\bigr)
	+ \beta^{h}_{\text{edu}}\bigl(\text{Education}\bigr)
	+ \beta^{h}_{\text{mar}}\bigl(\text{Marital Status}\bigr)
	+ \beta^{h}_{\text{rev}}\bigl(\text{Revenue}\bigr)
	+ \beta^{h}_{\text{risk}}\bigl(\text{Risk score}\bigr)\\
	& + \Ind{a=\text{Call}}\beta^{h}_{\text{call}}
	+ \Ind{a=\text{Email}}\beta^{h}_{\text{email}} \\
	& + \Ind{a=\text{Call}}\beta^{h}_{\text{call} \times \text{risk}}\bigl(\text{Risk score}\bigr)
	+ \Ind{a=\text{Email}}\beta^{h}_{\text{email} \times \text{risk}}\bigl(\text{Risk score}\bigr) \\
	& + \Ind{a=\text{Call}}\beta^{h}_{\text{call} \times \text{rev}}\bigl(\text{Revenue}\bigr)
	+ \Ind{a=\text{Email}}\beta^{h}_{\text{email} \times \text{rev}}\bigl(\text{Revenue}\bigr)\,.
\end{align*}
For a context $\bx = (i_1,i_2,i_3,i_4,i_5)$,
coefficients marked like $\beta^{h}_{\text{rev}}\bigl(\text{Revenue}\bigr)$
and $\beta^{h}_{\text{risk}}\bigl(\text{Risk Score}\bigr)$
refer to $\beta^{h}_{\text{rev}}(i_4)$
and $\beta^{h}_{\text{risk}}(i_5)$, respectively.

The coefficients are listed in \Cref{table:reward-inflation-coef} and \Cref{table:reward-recession-coef}.
In words, the tables specify, for each latent state, an intercept, additive effects for the context variables,
additive effects for the active marketing actions, and action-specific additive interactions with Risk score and Revenue.
For the passive action (No action), only the intercept and additive context effects remain.

\begin{table}[p]
	\caption{Coefficients for the linear reward model under inflation $h=1$.}
	\label{table:reward-inflation-coef}
	\centering
	\begin{tabular}{lccccc}
		\toprule
		Intercept                              &  \multicolumn{5}{c}{$0.05$} \\
		\midrule
		Context variables                  &  \multicolumn{5}{c}{Coefficients for each level} \\
		&  Level 1  &  Level 2   &  Level 3     &  Level 4     &  Level 5 \\
		\cline{2-6} \\
		\var{Risk score}                 &  $0.067$       &  $0.05$       &   $0.033$  &  $0.017$   &  $0$ \\
		\var{Revenue}       &  $0$         &  $0.017$         &  $0.025$    &  $0.033$  &  \\
		\var{Age}            &  $0$        &  $0.008$       & $0.017$   &  $0.008$  &   $0$ \\
		\var{Education}                 &  $0$         & $0$        & $0.017$    & $0.033$   &   \\
		\var{Marital status}           &  $0$        &  $0$      &    $0.033$  &  & \\
		\midrule
		Action Variables                   &  \multicolumn{5}{c}{Single coefficient} \\
		\var{Call}     &  \multicolumn{5}{c}{$-0.2$} \\
		\var{Email}     &  \multicolumn{5}{c}{$0.025$} \\
		\midrule
		Action $\times$ Risk Score                  &  \multicolumn{5}{c}{Coefficients for each level} \\
		& Risk Score: 1  &  Risk Score: 2   &  Risk Score: 3     &  Risk Score: 4    &  Risk Score: 5 \\
		\cline{2-6} \\
		\var{Call}       &  $-0.2$         &  $-0.1$         &  $0$    &  $0.3$  &  $0.4$ \\
		\var{Email}      &  $0.05$       &  $0.025$       &   $0$  &  $-0.2$   &  $-0.25$ \\
		\midrule
		Action $\times$ Revenue                  &  \multicolumn{5}{c}{Coefficients for each level} \\
		& Revenue: 1  & Revenue: 2   &  Revenue: 3     &  Revenue: 4\\
		\cline{2-6} \\
		\var{Call}       &  $0.2$         &  $0$         &  $0$    &  $-0.2$ \\
		\var{Email}      &  $-0.1$       &  $0$       &   $0$  &  $0.025$ \\
		\bottomrule
	\end{tabular}
\end{table}

\begin{table}[p]
	\caption{Coefficients for the linear reward model under recession $h=2$.}
	\label{table:reward-recession-coef}
	\centering
	\begin{tabular}{lccccc}
		\toprule
		Intercept                              &  \multicolumn{5}{c}{$0.017$} \\
		\midrule
		Context variables                  &  \multicolumn{5}{c}{Coefficients for each level} \\
		&  Level 1  &  Level 2   &  Level 3     &  Level 4     &  Level 5 \\
		\cline{2-6} \\
		\var{Risk score}                 &  $0.067$       &  $0.05$       &   $0.033$  &  $0.017$   &  $0$ \\
		\var{Revenue}       &  $0$         &  $0.017$         &  $0.025$    &  $0.033$  &  \\
		\var{Age}            &  $0$        &  $0.008$       & $0.017$   &  $0.008$  &   $0$ \\
		\var{Education}                 &  $0$         & $0$        & $0.017$    & $0.033$   &   \\
		\var{Marital status}           &  $0$        &  $0$      &    $0.033$  &  & \\
		\midrule
		Action Variables                   &  \multicolumn{5}{c}{Single coefficient} \\
		\var{Call}     &  \multicolumn{5}{c}{$0.15$} \\
		\var{Email}     &  \multicolumn{5}{c}{$-0.1$} \\
		\midrule
		Action $\times$ Risk Score                  &  \multicolumn{5}{c}{Coefficients for each level} \\
		& Risk Score: 1  &  Risk Score: 2   &  Risk Score: 3     &  Risk Score: 4    &  Risk Score: 5 \\
		\cline{2-6} \\
		\var{Call}       &  $0.25$         &  $0.2$         &  $0$    &  $-0.1$ & $-0.3$\\
		\var{Email}      &  $0.15$       &  $0.1$       &   $0$  &  $0.1$  & $0.05$\\
		\midrule
		Action $\times$ Revenue                  &  \multicolumn{5}{c}{Coefficients for each level} \\
		& Revenue: 1  & Revenue: 2   &  Revenue: 3     &  Revenue: 4\\
		\cline{2-6} \\
		\var{Call}       &  $-0.2$         &   $0$         &  $0$    &  $0.1$ \\
		\var{Email}      &  $0.1$       &   $0$     &   $0$  &  $0.05$ \\
		\bottomrule
	\end{tabular}
\end{table}

\subsection{Algorithms and Hyper-parameters Thereof}
We consider $T = 50,000$ and use the first $250$ rounds as a warm start for all algorithms considered,
during which actions are drawn uniformly at random.
We compare the belief-based LinUCB strategies (Box~A, and its special case Box~B) with a baseline given by plain LinUCB (introduced by \citealp{abbasi2011improved} and restated next in Box~D).

\subsubsection{Staged LinUCB Strategies on Estimated Beliefs}
\label{sec:grids}

These strategies correspond to Box~A and Box~B.

Three hyper-parameters need to be
set: the stage lengths $\ell \geq 1$ (with $\ell=1$ corresponding to per-round
updates as in Box~B), the regularization parameter $\lambda >0$,
and the form of the confidence bonuses $\epsilon_{t,a}$.
We take as belief-estimation subroutine~$\cB$ the spectral method described in Box~C of \Cref{sec:spectral} to estimate HMM parameters, together with the Bayes' updates rules of \Cref{eq:Bayes-est-1,eq:Bayes-est-2,eq:Bayes-est-3}.

\paragraph{Stage length $\ell \geq 1$.}
The theory suggests a reference stage length of order $T^{3/4}$. Since $T=50,000$ is relatively small from the viewpoint of the asymptotic analysis, we rather report results for $\ell \in \{1,\,15,\, 37,\, 224\}$, which are integer-rounded values of $T^{0}, T^{1/4}, T^{1/3}$, and $T^{1/2}$.

\paragraph{Regularization parameter $\lambda > 0$.}
We tune $\lambda$ over the logarithmic grid $\bigl\{10^k : k \in \{{-9},{-3},{-2},{-1},0,1,2\}\bigr\}$. The range of this base--$10$ grid is chosen to cover several orders of magnitude and to ensure that the best-performing value of $\lambda$ is not attained at one endpoint of the grid for the strategies implemented. Importantly, the marginal effect of further decreasing $\lambda$ below $10^{-3}$ becomes negligible. Thus, we include $\lambda = 10^{-9}$ as a representative value at this order of magnitude to approximate the case of a very weak regularization while avoiding the numerical instability suffered when inverting the Gram matrix for even smaller values of $\lambda$.

\paragraph{Confidence bonuses $\epsilon_{t,a}$: two forms.}
We considered two forms of confidence bonuses in this article: one in \Cref{eq:epsilonta-mostcomplex} to deal with rewards
stemming from the most complex reward model~\eqref{eq:reward-our}, where rewards depend directly on the latent states,
and one in Appendix~\ref{appendix:simplified-analysis}
suited to the simpler reward model~\eqref{eq:nelson-model-orig}.
We will respectively refer to these two forms as the ``complex form'' and the ``simplified form''.

\paragraph{Confidence bonuses $\epsilon_{t,a}$: ``complex form''.}
We recall the expression stated in \Cref{theorem:total-regret-bound}:
\begin{multline*}
\eps_{t,a} = \Ubelief(t, \delta/2)
+ f_t \biggl\Arrowvert G_{(s_t -1) \ell}^{-1} \Bigl(\hbb_t \otimes \bphi(a, \bx_t)\Bigr) \! \biggr\Arrowvert_2\,,
\qquad \mbox{where} \\ f_t = \lambda \sqrt{H} \, C_{\btheta^{\star}}
		+ 4 \sqrt{ \frac{s_T(s_t -1) (1+s_t \gamma) \ell}{\delta(1 - \gamma)} } \ +
		\sqrt{\frac{4 s_T}{\delta} C_{\eta} (s_t-1) \ell } + \frac{2 (s_t-1) \gamma}{1 - \gamma}
		+ \!\!\! \sum_{\tau=1}^{(s_t-1)\ell} \!\! \Ubelief(\tau,\delta/2)\,.
\end{multline*}
We actually omit the $\sqrt{s_T/\delta}$ terms coming from the union bounds mentioned
at the beginning of Appendix~\ref{appendix:proof-total-regret}, so that the dominant contribution in
the formula above for
$\eps_{t, a}$ is
\[
	4 \sqrt{ \frac{(s_t -1) (1+s_t \gamma) \ell}{\delta(1 - \gamma)} } \, \biggl\Arrowvert G_{(s_t -1) \ell}^{-1} \Bigl(\hbb_t \otimes \bphi(a, \bx_t)\Bigr) \! \biggr\Arrowvert_2\,.
\]
In the simulation, we therefore use confidence bonuses $\epsilon^{\cplx}_{t,a}$ of the ``complex form''
\begin{equation}
	\label{eq:simplified-boxa}
\epsilon^{\cplx}_{t,a} = C \, s_t \sqrt{\ell} \, \biggl\Arrowvert G_{(s_t -1) \ell}^{-1} \Bigl(\hbb_t \otimes \bphi(a, \bx_t)\Bigr) \biggr\Arrowvert_2\,,
\end{equation}
where the multiplicative exploration constant $C$ controls the exploration level and is tuned over the logarithmic grid
$\bigl\{5 \times 10^k : k \in \{{-6},{-5},{-4},{-3},{-2},{-1}\}\bigr\}$.
This grid is chosen to cover a wide range of exploration strengths and to ensure that the best-performing value of $C$ lies in the interior of the grid for all the strategies implemented.

\paragraph{Confidence bonuses $\epsilon_{t,a}$: ``simplified form''.}
These confidence bonuses are derived, for the value $\ell=1$ (and only for this value),
from~\Cref{lm:est-simple} together with some crude boundings.
Their dominant term is of the original form
given by the left-hand side below (looking at the proof),
even though we rather stated an upper bound thereof in \Cref{lm:est-simple}
(based on $G_{t-1} \succeq \lambda \id{dH}$),
given by the right-hand side below:
\[
	\biggl\Arrowvert G_{t-1}^{-1} \Bigl(\hbb_t \otimes \bphi(a, \bx_t)\Bigr) \Bigr\Arrowvert_2 \,\, \sum_{\tau=1}^{t-1} \bigl\Arrowvert \bb_\tau - \hbb_\tau \bigr\Arrowvert_1
	\leq \Bigl\Arrowvert \hbb_t \otimes \bphi(a, \bx_t) \Bigr\Arrowvert_{G_{t-1}^{-1}} \,
\frac{1}{\sqrt{\lambda}}\sum_{\tau=1}^{t-1} \Arrowvert \bb_{\tau} - \hbb_{\tau} \Arrowvert_1 \,.
\]
The right-hand side provides a simpler and more readable expression to derive the
confidence bonuses $\epsilon_{t,a}$ in \Cref{theorem:total-regret-bound-simplified}
but for a fairer comparison with the ``complex form'' of confidence bonuses~\eqref{eq:simplified-boxa},
and due to their similar expressions, we prefer resorting to the tighter left-hand side above.
Replacing the cumulative belief-estimation error by its order $\sqrt{t}$,
we thus consider, in the case $\ell = 1$ only, confidence bonuses proportional to
\[
\sqrt{t}\,\biggl\Arrowvert G_{t-1}^{-1} \Bigl(\hbb_t \otimes \bphi(a, \bx_t)\Bigr) \Bigr\Arrowvert_2\,.
\]
The analysis in Appendix~\ref{appendix:simplified-analysis} was only performed for the case $\ell=1$ of
no stages, but we extend it in the simulations to staged updates, by considering
\[
	\epsilon^{\simpl}_{t,a} = C \, \sqrt{s_t \ell} \,\biggl\Arrowvert G_{(s_t-1)\ell}^{-1} \Bigl(\hbb_t \otimes \bphi(a, \bx_t)\Bigr) \biggr\Arrowvert_2
\]
as the confidence bonuses of the ``simplified form''. The multiplicative exploration constant $C$ is tuned over the same grid as above.

\subsubsection{Plain LinUCB Strategy by \citet{abbasi2011improved}}
As a baseline, we consider the LinUCB strategy by \citet{abbasi2011improved} in its standard form, see Box~D. This baseline ignores the latent-state
dynamics altogether and therefore does not exploit either the HMM structure or the belief estimates. In particular, the rewards in the simulation are still generated by the true latent-state-dependent model, but the plain version of LinUCB treats them as if they arose from a standard linear contextual bandit model based only on the observed context and action. Thus, the comparison of
the LinUCB strategies exploiting estimated beliefs to this plain version of LinUCB
indicates whether latent-state-aware models bring a practical benefit.

We recall in Box~D the plain LinUCB strategy of \citet{abbasi2011improved},
as slightly adapted by \citet[Appendix~E]{CBwK-LP-2022}
to take care of the existence of a transfer function $\varphi$ taking into account the action;
in particular, the confidence bonuses used therein are of the form
\[
C \ln(t) \bigl\Arrowvert\bphi(a, \bx_t) \bigr\Arrowvert_{V_{t-1}^{-1}}\,,
\]
where the matrices $V_{t-1}$ are defined in Box~D.
Plain LinUCB relies on per-round updates. We use the same grids of exploration constants $C$ and regularization parameters $\lambda$ as for
the staged LinUCB on estimated beliefs; see Appendix~\ref{sec:grids}.
	
\begin{figure}[t]
	\begin{nbox}[title={\small Box D: Plain LinUCB strategy (Abbasi-Yadkori et al., 2011)}]
		\label{nbox:linUCB}
		\textbf{Known parameters:} finite action set $\cA$; context set $\cX$; transfer function $\bphi: \cA \times \cX \to \R^d$;
		
		\textbf{Unknown parameters:}
		HMM parameters, given by a transition matrix $\bM = (M_{h,h'})_{(h,h') \in [H]}$
		and emission distributions $(\nu_h)_{h \in [H]}$ over $\cX$;
		reward parameters $\btheta^{\star}_{h} \in \R^d$, for $h \in [H]$ \smallskip
		
		\textbf{Inputs:}
		regularization parameter $\lambda > 0$; closed-form
		expression for the confidence bonuses $\eps_{t,a}$ \smallskip
		
		\textbf{Initialization:} the learner sets $\hbtheta_0 = (1/\lambda) \, \bone \in \R^{d}$ \medskip

		\textbf{For} rounds $t \geq 1$, \textbf{the learner:} \smallskip
		\begin{enumerate}
			\item Observes the context $\bx_{t}$, drawn independently by the environment from $\nu_{h_{t}}$; \smallskip
			\item Computes the estimated mean rewards \quad $\displaystyle{\hr_{t}(a) = \bphi(a, \bx_t)^{\transp} \hbtheta_{t-1}}$ \quad for all $a \in \cA$; \smallskip
			\item Picks an action $\displaystyle{a_{t} \in \argmax_{a \in \cA} \bigl\{ \hat{r}_{t}(a) + \eps_{t, a}} \bigr\}$\,; \smallskip
			\item Obtains and observes the reward \quad $\displaystyle{r'_t(a_t) = \sum_{h \in [H]} \bb_{t}(h) \, \bphi(a_t, \bx_t)^{\transp} \btheta^\star_{h} + \eta_t(a_t)}$\,;
			\item Computes \quad $\displaystyle{\hbtheta_t  = V_t^{-1}
			\sum_{\tau=1}^{t} \bphi(a_\tau, \bx_\tau) r_\tau(a_\tau)}$ \quad where \quad $\displaystyle{V_t = \sum_{\tau=1}^{t} \bphi(a_\tau, \bx_\tau)
			\bphi(a_\tau, \bx_\tau)^{\transp} + \lambda \id{d}}$\,.
		\end{enumerate}
		\textbf{end}
	\end{nbox}
\end{figure}

\subsection{Performance Reported: Empirical Averages of Pseudo-Regrets}

\paragraph{Disclaimer.}
We ran $N = 100$ independent simulation, using random seeds $1951, \, \dots, \, 2050$. Because $N = 100$ is relatively small, the results below should be interpreted as illustrative only. This choice also reflects our computational budget: the simulations were run on a modest CPU-only setup, and the goal here is to visualize the practical behavior of the algorithms rather than to provide an extensive empirical benchmark.

\paragraph{Additional indexations by runs.}
For run $i \in [N]$, let $h^{(i)}_t$, $\bx^{(i)}_t$, $\bb^{(i)}_t$, and $a^{(i)}_t$ denote, respectively,
the realized latent state, the realized context, the true belief,
and the action selected by the policy at round $t$. In particular,
\[
\forall h \in [H], \qquad
	\bb^{(i)}_t(h) = \P\bigl(h^{(i)}_t = h \,\big|\, \bx^{(i)}_{1:t}\bigr)\,.
\]
In the simulation, these true beliefs are computed via the Bayes' update rule of \Cref{eq:Bayes-est-1,eq:Bayes-est-2,eq:Bayes-est-3},
performed with the true HMM parameters and the realized contexts $\bx^{(i)}_{1:t}$. They are used only for the evaluation
of the strategies, not by the strategies themselves.

\paragraph{Empirical average of pseudo-regrets.}
We report pseudo-regrets rather than cumulative rewards for three reasons.
First, as shown next in Appendix~\ref{sec:outcome-simulation}, the performance of the variants of the staged LinUCB strategy on estimated
beliefs is often close to each other, while pseudo-regret makes their difference easier to visualize. Second, since the results are averaged over only $N = 100$ runs, realized rewards would include additional Gaussian noise. Third, the consideration of cumulative pseudo-regrets directly indicates whether an algorithm exhibits sublinear or approximately linear regret over time.

That being said, we thus report pseudo-regrets.
The pseudo-regret of run $i$ up to round $t$ is defined, with the notation above
and given the definition of \Cref{eq:def-regret}, by
\[
	R^{(i)}_T = \sum_{\tau=1}^{t} \max_{a \in \cA} \sum_{h \in [H]} \bb^{(i)}_\tau(h) \, \bphi\bigl(a, \bx^{(i)}_t\bigr)^{\transp} \btheta^{\star}_{h} -
	\sum_{\tau=1}^{t} \sum_{h \in [H]} \bb^{(i)}_t(h) \, \bphi\bigl(a^{(i)}_t, \bx^{(i)}_t\bigr)^{\transp} \btheta^{\star}_{h} \, .
\]
In the simulations, we report the empirical averages
\[
	\bar{R}_t = \frac{1}{N} \sum_{i=1}^{N} R^{(i)}_t
\]
over time, together with bands equal to $\pm 2$ times the standard errors
of the series $\bigl( R^{(i)}_t \bigr)_{i \in [N]}$.

\subsection{Outcomes of Simulations}
\label{sec:outcome-simulation}

\paragraph{Overview of the performance by strategies.}
\Cref{fig:overall-performance} compares the pseudo-regret of the baseline strategy, \emph{Plain LinUCB}
to the one of the strategies introduced in this article:
the staged LinUCB strategy on estimated beliefs with complex-form confidence bonuses, abbreviated as \emph{LinUCB-Belief-Complex}
on the pictures and tables, and of
the staged LinUCB strategy on estimated beliefs with simplified-form confidence bonuses, abbreviated as \emph{LinUCB-Belief-Simplified}.
We do so for $\ell \in \{1, 15, 37, 224\}$, using for
each value of $\ell$ the best exploration constants $C$ and regularization parameters $\lambda$ selected in hindsight from the grids considered.

The first observation is that the baseline \emph{Plain LinUCB}, which does not leverage the latent-state structure, exhibits approximately linear pseudo-regret, even with the best $C$ and $\lambda$ in hindsight. By contrast, both \emph{LinUCB-Belief-Complex} and \emph{LinUCB-Belief-Simplified} achieve clearly sublinear pseudo-regret for all values of $\ell$ considered: this highlights the importance of exploiting the latent-state dynamics in the algorithm design.

The second set of observations is that \emph{LinUCB-Belief-Complex} performs generally better than \emph{LinUCB-Belief-Simplified}. Moreover, for a fixed strategy, the value of $\ell$ is not too influential. That being said, for \emph{LinUCB-Belief-Complex}, the staged variants generally perform slightly (but not significantly) better than the per-round version $\ell=1$.

\begin{figure}[!ht]
	\centering
	\includegraphics[width=\textwidth]{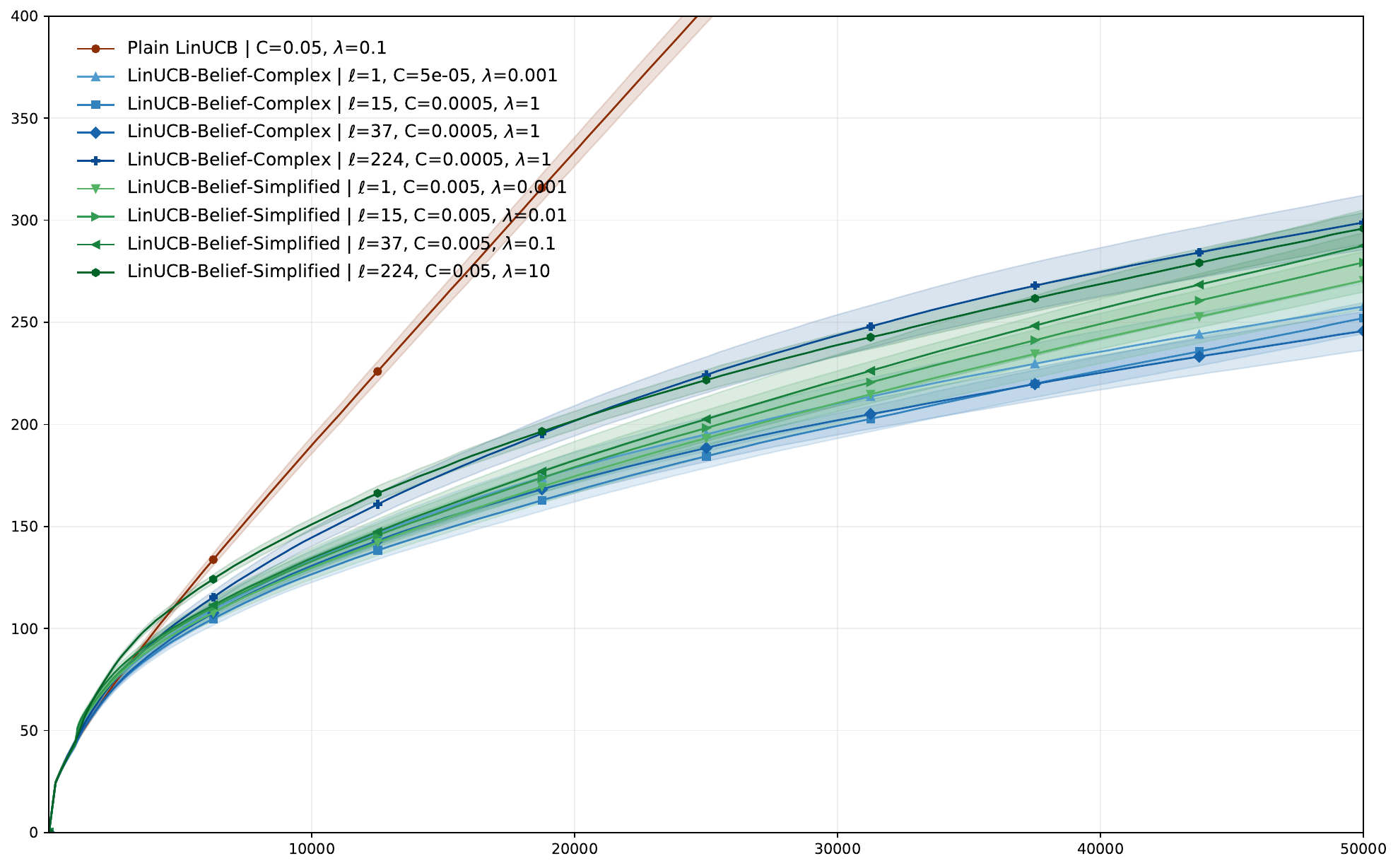}
	\caption{\label{fig:overall-performance} Pseudo-regrets averaged over $100$ runs. Solid lines correspond to averages and shaded areas to $\pm 2$ standard errors. The figure compares \emph{Plain LinUCB}, \emph{LinUCB-Belief-Complex}, and \emph{LinUCB-Belief-Simplified}, for $\ell \in \{1, 15, 37, 224\}$, using the best exploration constants $C$ and regularization parameters $\lambda$ selected in hindsight from the grids considered.}
\end{figure}

\paragraph{Sensitivity to exploration constants.}
\Cref{fig:plain-linucb-by-c,fig:linucb-by-c} compare the pseudo-regrets of \emph{Plain LinUCB}, \emph{LinUCB-Belief-Complex}, and \emph{LinUCB-Belief-Simplified} over grids of $\ell$ and $C$, using, for each pair $(\ell, C)$, the best regularization parameter $\lambda$ selected in hindsight from its grid.
We see that, no matter the stage lengths $\ell$, too large values of $C$ lead to much larger pseudo-regrets, that have a nearly-linear behavior for the initial values of $T$ (and must later exhibit a sublinear behavior). The parameter $C$ is thus critical to tune.

\begin{figure}[t]
	\centering
	\includegraphics[width=0.6\linewidth]{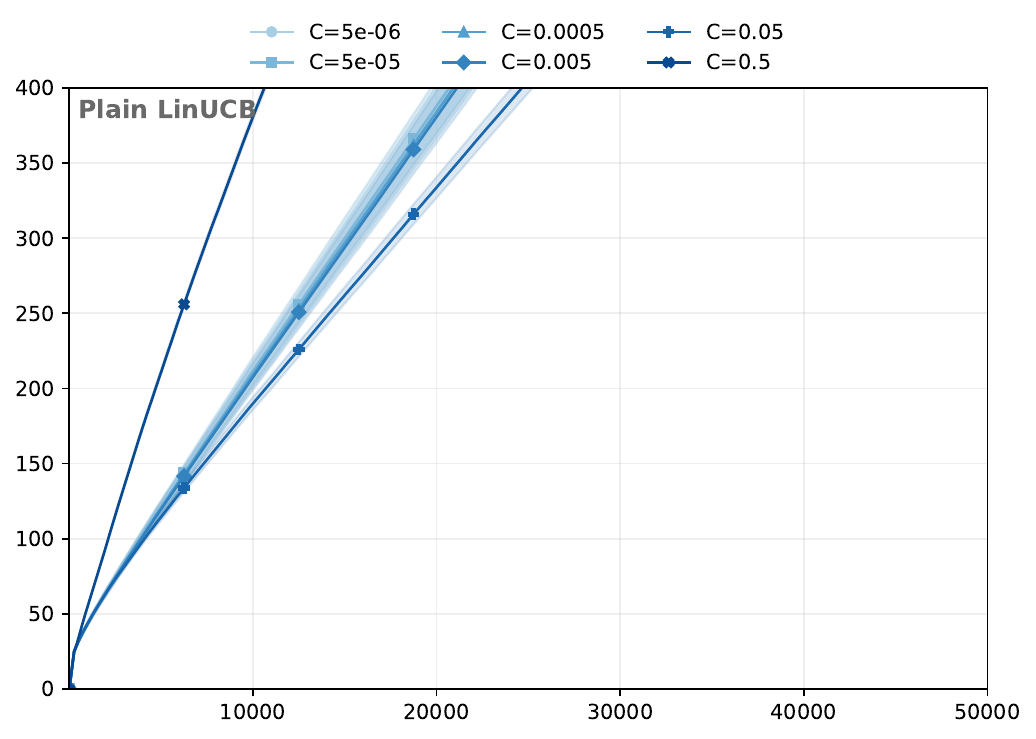}
	\caption{\label{fig:plain-linucb-by-c}
    Pseudo-regrets averaged over $100$ runs. Solid lines correspond to averages and shaded areas to $\pm 2$ standard errors. The figure compares \emph{Plain LinUCB} for different values of $C$, using, for each $C$, the best regularization parameter $\lambda$ selected in hindsight from its grid.}
\end{figure}

\begin{figure}[p]
	\centering
	
	\includegraphics[width=0.85\linewidth]{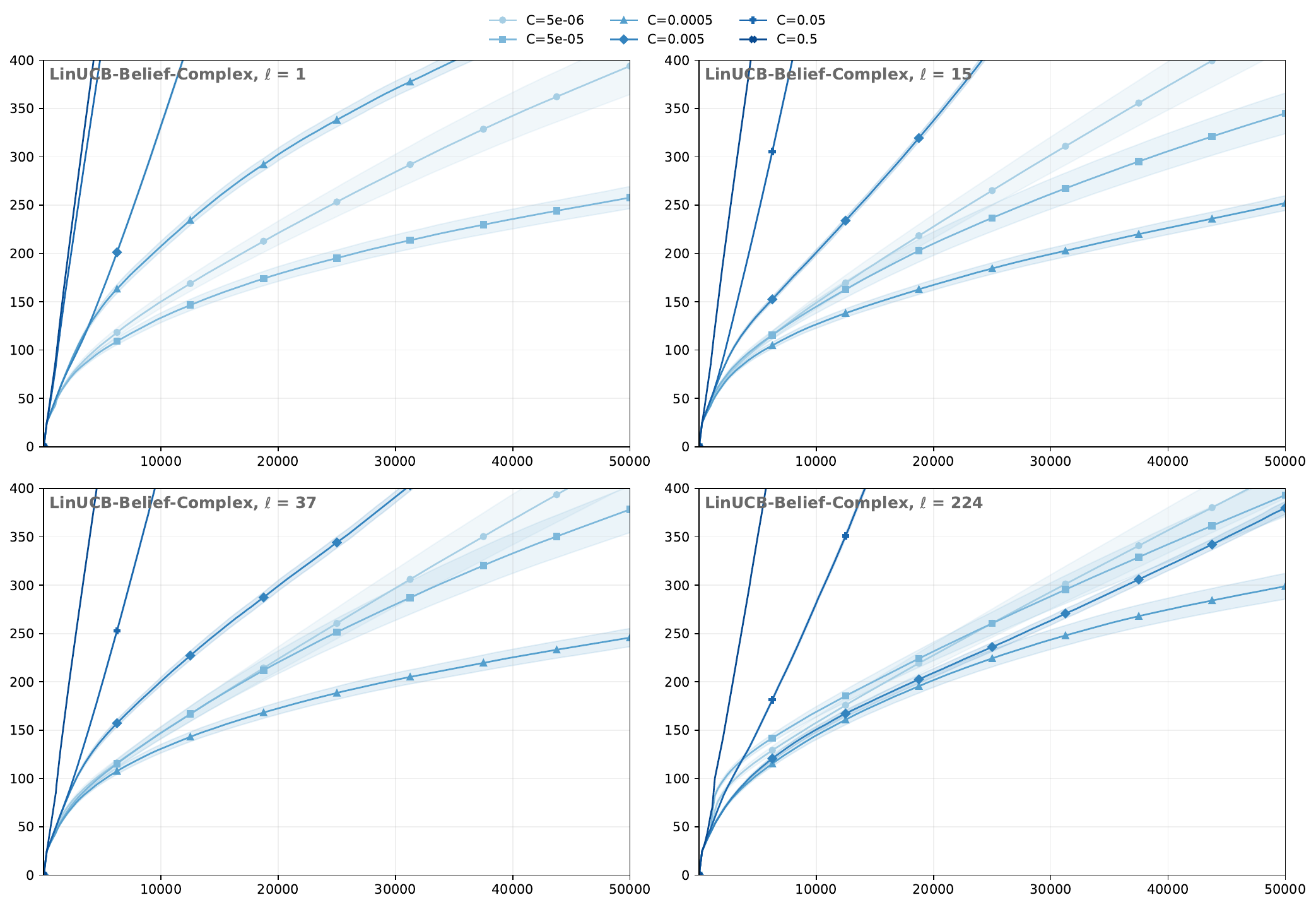}
	\caption*{(a) LinUCB-Belief-Complex}
	
	\includegraphics[width=0.85\linewidth]{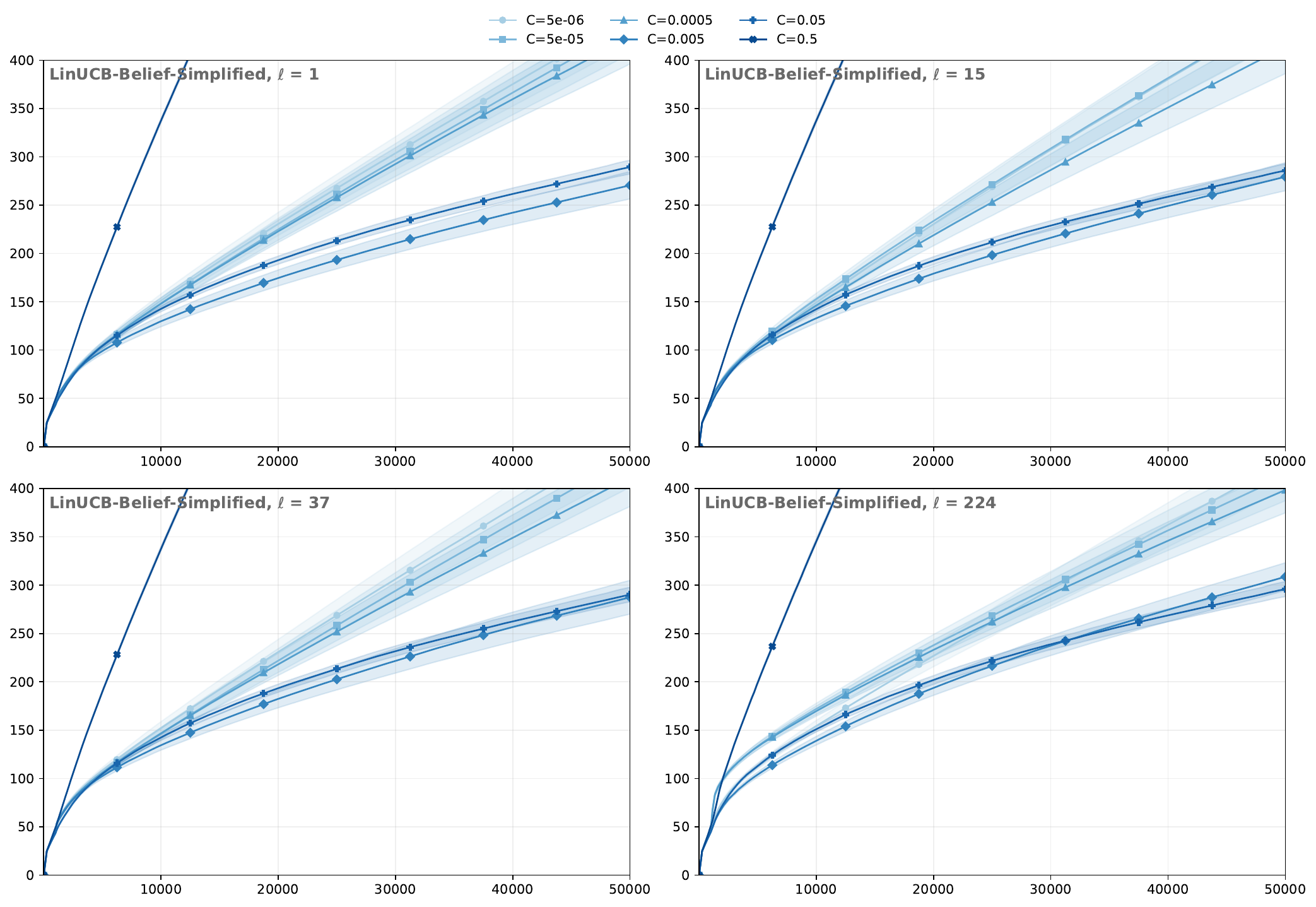}
	\caption*{(b) LinUCB-Belief-Simplified}
	
	\caption{Pseudo-regrets averaged over $100$ runs. Solid lines correspond to averages and shaded areas to $\pm 2$ standard errors. The figures compare
    LinUCB-Belief-Complex (top graphs) and LinUCB-Belief-Simplified (bottom graphs)
    for different values of $\ell$ and $C$, using, for each pair $(\ell,C)$, the best regularization parameter $\lambda$ selected in hindsight from its grid.}
	\label{fig:linucb-by-c}
\end{figure}

\clearpage
\begin{table}[t]
	\caption{\label{tab:plain-linucb-table-regret}
    Pseudo-regrets of \emph{Plain LinUCB} averaged over $100$ runs for each pair $(\lambda, C)$. The table reports averages (with $\pm 2$ standard errors in parentheses).}
	\centering
	\small
	\begin{tabular}{lcccccc}
		\toprule
		\multicolumn{7}{c}{\emph{Plain LinUCB}} \\
		\toprule
		$\lambda$ / $C$ & 5e-06 & 5e-05 & 0.0005 & 0.005 & 0.05 & 0.5 \\
		\midrule
		1e-09 & 917 (45) & 915 (44) & 911 (43) & 911 (44) & 760 (15) & 1458 (6) \\
		0.001 & 914 (44) & 916 (45) & 909 (43) & 910 (44) & 758 (14) & 1455 (6) \\
		0.01 & 921 (44) & 917 (42) & 915 (43) & 916 (44) & 765 (16) & 1456 (6) \\
		0.1 & 931 (44) & 927 (44) & 918 (43) & 899 (43) & \textbf{754 (16)} & 1454 (6) \\
		1 & 930 (41) & 933 (40) & 930 (42) & 909 (39) & 757 (15) & 1453 (6) \\
		10 & 915 (33) & 916 (33) & 920 (32) & 916 (31) & 788 (17) & 1445 (6) \\
		100 & 1224 (52) & 1221 (53) & 1211 (53) & 1179 (51) & 946 (13) & 1403 (6) \\
		\bottomrule
		\addlinespace[0.8em]
	\end{tabular}
\end{table}

\begin{table}[t]
	\caption{\label{tab:linucb-complex-table-regret}
    Pseudo-regrets of \emph{LinUCB-Belief-Complex} averaged over $100$ runs for each triplet $(\ell,\,\lambda, \, C)$. The table reports averages (with
    $\pm 2$~standard errors in parentheses).}
    \centering
	\begin{tabular}{@{}c l cccccc@{}}
		\toprule
		\multicolumn{7}{c}{\emph{LinUCB-Belief-Complex}} \\
		\toprule
		 $\ell$ & $\lambda$ / $C$ & 5e-6 & 5e-5 & 5e-4 & 5e-3 & 5e-2 & 5e-1 \\
		\midrule
		\makebox[1.2em][c]{%
			\smash{\raisebox{-3.0\normalbaselineskip}{\rotatebox{90}{$\ell=1$}}}%
		}
		& 1e-09 & 394 (29) & 259 (13) & 521 (7) & 2180 (8) & 4000 (6) & 4614 (6) \\
		& 0.001 & 416 (28) & \textbf{258 (11)} & 521 (7) & 2181 (8) & 3999 (6) & 4614 (5) \\
		& 0.01 & 432 (34) & 259 (11) & 520 (8) & 2182 (8) & 4001 (6) & 4615 (6) \\
		& 0.1 & 417 (30) & 262 (13) & 519 (7) & 2179 (8) & 4000 (6) & 4615 (6) \\
		& 1 & 490 (31) & 262 (12) & 514 (7) & 2174 (8) & 3999 (6) & 4616 (5) \\
		& 10 & 678 (26) & 457 (18) & 494 (6) & 2147 (8) & 4004 (6) & 4635 (5) \\
		& 100 & 658 (13) & 650 (11) & 488 (9) & 1965 (7) & 4001 (5) & 4709 (6) \\
		\addlinespace[0.8em]
		\midrule
		\makebox[1.2em][c]{%
			\smash{\raisebox{-3.0\normalbaselineskip}{\rotatebox{90}{$\ell=15$}}}%
		}
		& 1e-09 & 443 (29) & 359 (21) & 261 (9) & 1038 (8) & 3041 (7) & 4377 (5) \\
		& 0.001 & 448 (27) & 354 (22) & 266 (8) & 1037 (8) & 3039 (7) & 4377 (5) \\
		& 0.01 & 443 (31) & 345 (21) & 264 (9) & 1035 (8) & 3039 (7) & 4377 (5) \\
		& 0.1 & 473 (33) & 348 (21) & 256 (8) & 1035 (8) & 3039 (7) & 4377 (5) \\
		& 1 & 564 (31) & 387 (21) & \textbf{252 (8)} & 1028 (8) & 3036 (7) & 4378 (5) \\
		& 10 & 678 (24) & 644 (22) & 319 (8) & 997 (8) & 3020 (7) & 4391 (5) \\
		& 100 & 657 (13) & 655 (12) & 598 (13) & 842 (7) & 2906 (6) & 4436 (5) \\
		\addlinespace[0.8em]
		\midrule
		\makebox[1.2em][c]{%
			\smash{\raisebox{-3.0\normalbaselineskip}{\rotatebox{90}{$\ell=37$}}}%
		}
		& 1e-09 & 448 (31) & 383 (26) & 255 (11) & 754 (8) & 2639 (8) & 4226 (6) \\
		& 0.001 & 437 (33) & 378 (25) & 253 (11) & 755 (7) & 2639 (7) & 4226 (6) \\
		& 0.01 & 471 (33) & 382 (29) & 255 (11) & 753 (8) & 2639 (7) & 4227 (6) \\
		& 0.1 & 450 (28) & 378 (24) & 250 (9) & 753 (7) & 2639 (7) & 4226 (6) \\
		& 1 & 556 (32) & 433 (26) & \textbf{246 (9)} & 747 (7) & 2634 (7) & 4227 (5) \\
		& 10 & 670 (28) & 653 (23) & 367 (11) & 719 (7) & 2613 (7) & 4237 (5) \\
		& 100 & 659 (12) & 657 (12) & 637 (11) & 610 (8) & 2459 (7) & 4263 (5) \\
		\addlinespace[0.8em]
		\midrule
		\makebox[1.2em][c]{%
			\smash{\raisebox{-3.0\normalbaselineskip}{\rotatebox{90}{$\ell=224$}}}%
		}
		& 1e-09 & 426 (28) & 393 (22) & 323 (15) & 410 (8) & 1842 (9) & 3794 (6) \\
		& 0.001 & 420 (30) & 425 (24) & 338 (14) & 417 (8) & 1840 (9) & 3796 (6) \\
		& 0.01 & 424 (32) & 410 (29) & 337 (14) & 417 (8) & 1840 (8) & 3795 (6) \\
		& 0.1 & 451 (30) & 401 (28) & 305 (16) & 413 (8) & 1839 (9) & 3795 (6) \\
		& 1 & 525 (29) & 474 (27) & \textbf{299 (13)} & 395 (8) & 1832 (8) & 3794 (6) \\
		& 10 & 651 (20) & 655 (21) & 512 (17) & 380 (7) & 1793 (8) & 3789 (6) \\
		& 100 & 660 (12) & 659 (12) & 654 (11) & 490 (11) & 1586 (8) & 3745 (5) \\
		\bottomrule
		\addlinespace[0.8em]
	\end{tabular}
\end{table}

\paragraph{Detailed results for triplets $(\ell,\,C,\,\lambda)$.}
\Cref{tab:plain-linucb-table-regret,tab:linucb-complex-table-regret,tab:linucb-simplified-table-regret} report the detailed pseudo-regret results for each strategy over the full grids of $\ell,\,C,\,\lambda$.
What we wanted to check is that the grids of $C$ and $\lambda$ were sufficiently large in the sense that, for each strategy and each value of stage length~$\ell$, the best-performing pair $(\lambda, C)$ is not achieved at a boundary of the grids.

\begin{table}[t]
	\caption{\label{tab:linucb-simplified-table-regret}
    Pseudo-regrets of \emph{LinUCB-Belief-Simplified} averaged over $100$ runs for each triplet $(\ell,\,\lambda, \, C)$. The table reports averages (with
    $\pm 2$~standard errors in parentheses).}
	\centering
	\begin{tabular}{@{}c l cccccc@{}}
		\toprule
		\multicolumn{7}{c}{\emph{LinUCB-Belief-Simplified}} \\
		\toprule
		$\ell$ & $\lambda$ / $C$ & 5e-6 & 5e-5 & 5e-4 & 5e-3 & 5e-2 & 5e-1 \\
		\midrule
		\makebox[1.2em][c]{%
			\smash{\raisebox{-3.0\normalbaselineskip}{\rotatebox{90}{$\ell=1$}}}%
		}
		& 1e-09 & 451 (30) & 462 (32) & 430 (30) & 279 (14) & 318 (7) & 1508 (9) \\
		& 0.001 & 446 (29) & 455 (29) & 424 (28) & \textbf{271 (14)} & 318 (8) & 1505 (9) \\
		& 0.01 & 455 (30) & 436 (29) & 435 (31) & 284 (13) & 317 (8) & 1504 (9) \\
		& 0.1 & 513 (34) & 509 (37) & 441 (30) & 276 (14) & 316 (8) & 1504 (8) \\
		& 1 & 571 (35) & 568 (32) & 521 (29) & 288 (15) & 311 (8) & 1499 (9) \\
		& 10 & 688 (26) & 688 (25) & 680 (27) & 536 (20) & 289 (7) & 1464 (9) \\
		& 100 & 659 (13) & 659 (13) & 658 (13) & 651 (11) & 430 (12) & 1249 (8) \\
		\addlinespace[0.8em]
		\midrule
		\makebox[1.2em][c]{%
			\smash{\raisebox{-3.0\normalbaselineskip}{\rotatebox{90}{$\ell=15$}}}%
		}
		& 1e-09 & 453 (31) & 459 (30) & 414 (28) & 280 (13) & 320 (8) & 1506 (9) \\
		& 0.001 & 452 (28) & 453 (30) & 418 (30) & 285 (15) & 321 (8) & 1506 (8) \\
		& 0.01 & 483 (29) & 475 (31) & 432 (30) & \textbf{279 (15)} & 319 (8) & 1508 (9) \\
		& 0.1 & 535 (38) & 495 (38) & 443 (33) & 287 (17) & 319 (8) & 1506 (9) \\
		& 1 & 571 (32) & 575 (33) & 514 (28) & 287 (16) & 311 (8) & 1499 (9) \\
		& 10 & 679 (25) & 682 (24) & 673 (24) & 524 (22) & 286 (7) & 1464 (8) \\
		& 100 & 657 (12) & 657 (12) & 657 (12) & 651 (10) & 426 (12) & 1247 (8) \\
		\addlinespace[0.8em]
		\midrule
		\makebox[1.2em][c]{%
			\smash{\raisebox{-3.0\normalbaselineskip}{\rotatebox{90}{$\ell=37$}}}%
		}
		& 1e-09 & 452 (34) & 440 (31) & 419 (31) & 291 (17) & 322 (8) & 1509 (9) \\
		& 0.001 & 459 (34) & 433 (32) & 411 (30) & 290 (16) & 325 (8) & 1510 (9) \\
		& 0.01 & 476 (30) & 451 (33) & 419 (32) & 291 (16) & 323 (8) & 1508 (9) \\
		& 0.1 & 506 (30) & 481 (31) & 424 (29) & \textbf{288 (17)} & 320 (8) & 1508 (8) \\
		& 1 & 567 (35) & 560 (33) & 510 (34) & 295 (19) & 311 (7) & 1503 (9) \\
		& 10 & 670 (31) & 669 (25) & 662 (23) & 532 (19) & 290 (7) & 1465 (8) \\
		& 100 & 659 (12) & 659 (12) & 657 (12) & 648 (10) & 428 (13) & 1247 (8) \\
		\addlinespace[0.8em]
		\midrule
		\makebox[1.2em][c]{%
			\smash{\raisebox{-3.0\normalbaselineskip}{\rotatebox{90}{$\ell=224$}}}%
		}
		& 1e-09 & 437 (27) & 413 (26) & 398 (24) & 320 (15) & 343 (9) & 1528 (9) \\
		& 0.001 & 428 (29) & 434 (27) & 417 (22) & 340 (18) & 347 (9) & 1528 (9) \\
		& 0.01 & 455 (29) & 432 (31) & 430 (24) & 339 (17) & 347 (9) & 1529 (9) \\
		& 0.1 & 496 (33) & 456 (35) & 403 (27) & 330 (16) & 346 (9) & 1528 (9) \\
		& 1 & 529 (28) & 524 (30) & 457 (25) & 309 (14) & 337 (9) & 1521 (9) \\
		& 10 & 652 (21) & 649 (21) & 651 (21) & 525 (20) & \textbf{296 (8)} & 1482 (9) \\
		& 100 & 660 (12) & 659 (12) & 660 (12) & 652 (11) & 437 (12) & 1252 (8) \\
		\bottomrule
		\addlinespace[0.8em]
	\end{tabular}
\end{table}

\paragraph{Computational costs; link to the code.}
\emph{LinUCB-Belief-Complex} and \emph{LinUCB-Belief-Simplified} have comparable computational costs in our implementation. For each triplet $(\ell, \lambda, C)$, one series of $N=100$ runs takes approximately $15$ minutes. The full implementation is available at \url{https://github.com/zhenli1989/bandits_latent_states}.

\end{document}